\theoremstyle{plain}
\newtheorem{theorem}{Theorem}[section]
\newtheorem{proposition}[theorem]{Proposition}
\newtheorem{lemma}[theorem]{Lemma}
\newtheorem{corollary}[theorem]{Corollary}
\theoremstyle{definition}
\newtheorem{definition}[theorem]{Definition}
\newtheorem{remark}[theorem]{Remark}
\def\eqref#1{equation~\ref{#1}}
\def\1{\bm{1}}
\def\vb{{\bm{b}}}
\def\vh{{\bm{h}}}
\def\vp{{\bm{p}}}
\def\vu{{\bm{u}}}
\def\vv{{\bm{v}}}
\def\vx{{\bm{x}}}
\def\vz{{\bm{z}}}
\def\mA{{\bm{A}}}
\def\mD{{\bm{D}}}
\def\mH{{\bm{H}}}
\def\mI{{\bm{I}}}
\def\mJ{{\bm{J}}}
\def\mL{{\bm{L}}}
\def\mM{{\bm{M}}}
\def\mO{{\bm{O}}}
\def\mP{{\bm{P}}}
\def\mS{{\bm{S}}}
\def\mT{{\bm{T}}}
\def\mW{{\bm{W}}}
\DeclareMathAlphabet{\mathsfit}{\encodingdefault}{\sfdefault}{m}{sl}
\SetMathAlphabet{\mathsfit}{bold}{\encodingdefault}{\sfdefault}{bx}{n}
\newcommand{\tens}[1]{\bm{\mathsfit{#1}}}
\def\tP{{\tens{P}}}
\def\tX{{\tens{X}}}
\def\gG{{\mathcal{G}}}
\def\gM{{\mathcal{M}}}
\def\gN{{\mathcal{N}}}
\def\gP{{\mathcal{P}}}
\newcommand*{\ldblbrace}{\{\mskip-5mu\{}
\newcommand*{\rdblbrace}{\}\mskip-5mu\}}
\newcommand*{\tr}{\mathsf{tr}}
\newcommand*{\hash}{\mathsf{hash}}
\newcommand*{\atp}{\mathsf{atp}}
\newcommand*{\diag}{\mathsf{diag}}
\newcommand*{\twist}{\mathsf{twist}}
\newcommand*{\meta}{\mathsf{Meta}}
\icmltitlerunning{On the Expressive Power of Spectral Invariant Graph Neural Networks}
\begin{document}

\twocolumn[
\icmltitle{On the Expressive Power of Spectral Invariant Graph Neural Networks}




\begin{icmlauthorlist}
\icmlauthor{Bohang Zhang}{pku}
\icmlauthor{Lingxiao Zhao}{cmu}
\icmlauthor{Haggai Maron}{technion,nvidia}
\end{icmlauthorlist}

\icmlaffiliation{pku}{Peking University}
\icmlaffiliation{cmu}{Carnegie Mellon University}
\icmlaffiliation{technion}{Technion}
\icmlaffiliation{nvidia}{NVIDIA Research}

\icmlcorrespondingauthor{Bohang Zhang}{zhangbohang@pku.edu.cn}
\icmlcorrespondingauthor{Haggai Maron}{hmaron@nvidia.com}

\icmlkeywords{Machine Learning, ICML}

\vskip 0.3in
]



\printAffiliationsAndNotice{}  

\begin{abstract}
Incorporating spectral information to enhance Graph Neural Networks (GNNs) has shown promising results but raises a fundamental challenge due to the inherent ambiguity of eigenvectors. Various architectures have been proposed to address this ambiguity, referred to as spectral invariant architectures. Notable examples include GNNs and Graph Transformers that use spectral distances, spectral projection matrices, or other invariant spectral features.
However, the potential expressive power of these spectral invariant architectures remains largely unclear. The goal of this work is to gain a deep theoretical understanding of the expressive power obtainable when using spectral features. We first introduce a unified message-passing framework for designing spectral invariant GNNs, called Eigenspace Projection GNN (EPNN). A comprehensive analysis shows that EPNN essentially unifies all prior spectral invariant architectures, in that they are either strictly less expressive or equivalent to EPNN. A fine-grained expressiveness hierarchy among different architectures is also established. On the other hand, we prove that EPNN itself is bounded by a recently proposed class of Subgraph GNNs, implying that all these spectral invariant architectures are strictly less expressive than 3-WL. Finally, we discuss whether using spectral features can gain additional expressiveness when combined with more expressive GNNs.
\end{abstract}

\begin{figure*}[t]
    \centering
    \vspace{-5pt}
    \includegraphics[width=0.88\textwidth]{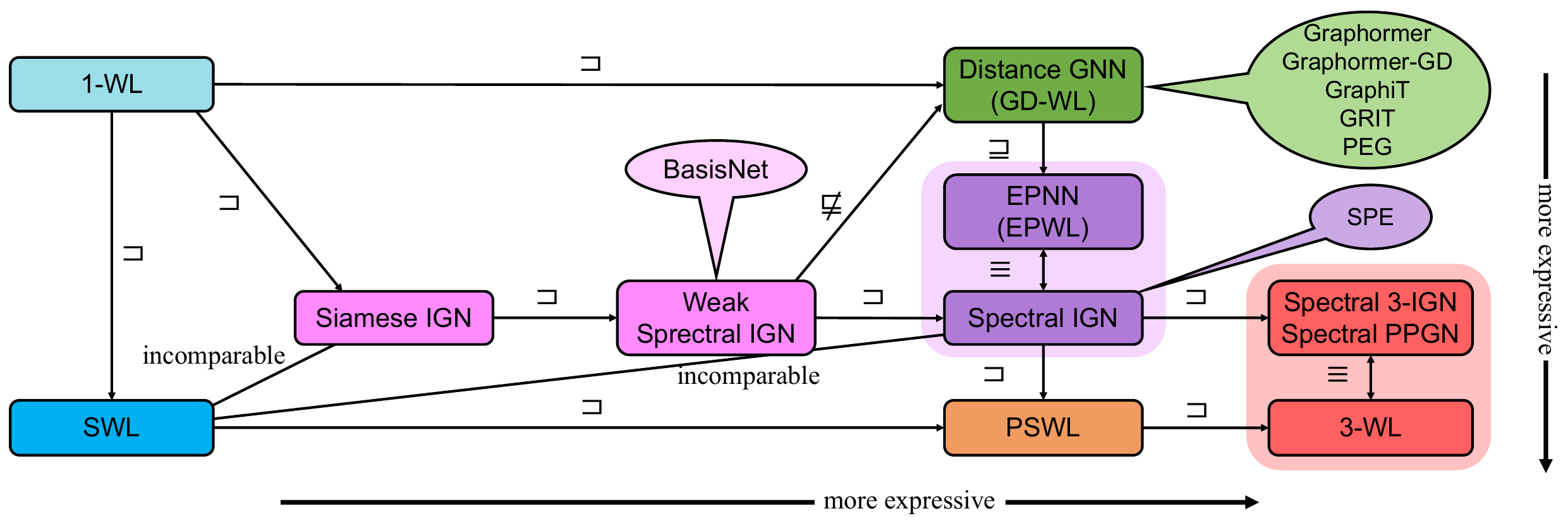}
    \vspace{-8pt}
    \caption{Expressive hierarchy for all GNN architectures studied in this paper. Here, the symbol ``$\equiv$'' means that the two GNNs being compared have the same expressive power; ``$\sqsupset$'' means that the latter GNN is \emph{strictly} more expressive than the former one; ``$\sqsupseteq$'' means that the latter GNN is either strict more expressive than or as expressive as the former one; ``$\not\sqsubseteq$'' means that the latter GNN is (strictly) not less expressive than the former one. Finally, ``incomparable'' means that either GNN is (strictly) not more expressive than the other. The dialog bubbles list literature architectures that can be seen as instantiations of the corresponding GNN class.}
    \label{fig:hierarchy}
    \vspace{-5pt}
\end{figure*}

\section{Introduction}

Recent works have demonstrated the promise of using spectral graph features, particularly the eigenvalues and eigenvectors of the graph Laplacian or functions thereof, as positional and structural encodings in Graph Neural Networks (GNNs) and Graph Transformers (GTs) \cite{dwivedi2020benchmarkgnns,dwivedi2020generalization,kreuzer2021rethinking, rampasek2022GPS,kim2022pure}. These spectral features encapsulate valuable information about graph connectivity, inter-node distances, node clustering patterns, and more. When using eigenvectors as inputs for machine learning models, a major challenge arises due to the inherent eigenespace symmetry \citep{lim2023sign} --- eigenvectors are not unique. Specifically, for any eigenvector $\vv$, $-\vv$ is also a valid eigenvector. The ambiguity becomes worse in the case of repeated eigenvalues; here, any orthogonal transformation of the basis vectors in a particular eigenspace yields alternative but equivalent input representations. 
%

To address the ambiguity problem, a major line of recent works leverages \emph{invariant} features derived from eigenvectors and eigenvalues to design spectral invariant architectures. Popular choices for such features include eigenspace projection matrices\footnote{See \cref{sec:preliminary} for a formal definition of projection matrices.} \citep{lim2023sign,huang2024stability}, spectral node distances (e.g., those associated with random walks or graph diffusion) \citep{li2020distance,zhang2023rethinking,feldman2023weisfeiler}, or other invariant spectral characteristics \cite{wang2022equivariant}. All of these features can be easily integrated into GNNs to enhance edge features or function as relative positional encoding of GTs. However, on the theoretical side, while the expressive power of GNNs has been studied extensively \cite{xu2019powerful, maron2019provably, morris2021weisfeiler,geerts2022expressiveness, zhang2024beyond}, there remains little understanding of the important category represented by spectral invariant GNNs/GTs.


%

\textbf{Current work.} The goal of this work is to gain deep insights into the expressive power of spectral invariant architectures and establish a complete expressiveness hierarchy. We begin by presenting \emph{Eigenspace Projection GNN} (EPNN), a novel GNN framework that unifies the study of all the aforementioned spectral invariant methods. EPNN is very simple: it encodes all spectral information for a node pair $(u,v)$ as a \emph{set} containing the values of all projection matrices on that node pair, along with the associated eigenvalues. It then computes and refines node representations using the spectral information as edge features within a standard message-passing framework on a fully connected graph. 

Our first theoretical result establishes a tight expressiveness upper bound for EPNN, showing that it is strictly less expressive than an important class of Subgraph GNNs proposed in \citet{zhang2023complete}, called PSWL. This observation is intriguing for two reasons. First, it connects spectral invariant GNNs and GTs with the seemingly unrelated research direction of Subgraph GNNs \citep{cotta2021reconstruction, bevilacqua2022equivariant, frasca2022understanding, qian2022ordered, zhao2022stars} — a line of research studying expressive GNNs from a structural and permutation symmetry perspective. Second, combined with recent results \citep{frasca2022understanding,zhang2023complete}, it implies that EPNN is strictly bounded by 3-WL. As an implication, bounding previously proposed spectral invariant methods by EPNN would readily indicate that they are all strictly less expressive than 3-WL. 

We then explore how EPNNs are related to GNNs/GTs that employ \emph{spectral distances} as positional encoding \cite{ying2021transformers,mialon2021graphit,zhang2023rethinking,ma2023graph,wang2022equivariant,li2020distance}. We prove that under the general framework proposed in \citet{zhang2023rethinking}, all commonly used spectral distances give rise to models with an expressive power bounded by EPNNs. This highlights an inherent expressiveness limitation of distance-based approaches in the literature. Moreover, our analysis underscores the crucial role of message-passing in enhancing the expressive power of spectral features.

Our next step aims to draw connections between EPNNs and two important spectral invariant architectures that utilize projection matrices, known as Basisnet \cite{lim2023sign} and SPE \cite{huang2024stability}. This is achieved by a novel symmetry analysis for eigenspace projections, which yields a theoretically-inspired architecture called Spectral IGN. Surprisingly, we prove that Spectral IGN is \emph{as expressive as} EPNN. On the other hand, SPE and BasisNet can be easily upper bounded by either Spectral IGN or its weaker variant.

Finally, we discuss the potential of using spectral features to boost the expressive power of higher-order GNNs. We show using the projection matrices alone does not provide any additional expressive power advantage when combined with highly expressive GNNs such as PPGN and $k$-IGN \cite{maron2019invariant,maron2019provably}. Nevertheless, we propose a possible solution towards further expressiveness gains: we hypothesize that stronger expressivity could be achieved through higher-order extensions of graph spectra, such as projection tensors.
Overall, our theoretical results characterize an expressiveness hierarchy across basis invariant GNNs, distance-based GNNs, GTs with spectral encoding, subgraph-based GNNs, and higher-order GNNs. The resulting hierarchy is illustrated in \cref{fig:hierarchy}.

\section{Related GNN Models}

\subsection{Spectrally-enhanced GNNs}
\label{sec:related_work_spectral_gnn}
In recent years, a multitude of research has emerged to develop spectrally-enhanced GNNs/GTs, integrating graph spectral information into either GNN node features or the subsequent message-passing process. These endeavors can be categorized into the following three groups.

\textbf{Laplacian eigenvectors as absolute positional encoding.} One way to design spectrally-enhanced GNNs involves encoding Laplacian eigenvectors. This approach treats each eigenvector as a 1-dimensional node feature and incorporates the top $k$ eigenvectors as a type of absolute positional encoding, which can be used to enhance any message-passing GNNs and GTs \citep{dwivedi2020benchmarkgnns,dwivedi2020generalization,kreuzer2021rethinking, rampasek2022GPS,maskey2022generalized,dwivedi2022graph,kim2022pure}. However, one main drawback of using Laplacian eigenvectors arises from the \emph{ambiguity} problem. 
Such ambiguity creates severe issues regarding training instability and poor generalization \citep{wang2022equivariant}. While this problem can be partially mitigated through techniques like randomly flipping eigenvector signs or employing a canonization method \citep{ma2023laplacian}, it becomes much more complicated when eigenvalues have higher multiplicities.

\textbf{Spectral invariant architectures.} A better approach would be to design GNNs that are \emph{invariant} w.r.t. the choice of eigenvectors. For example, SignNet \citep{lim2023sign} transforms each eigenvector $\vv$ to $\phi(\vv)+\phi(-\vv)$ for some permutation equivariant function $\phi$, which guarantees the invariance when all eigenvalues have a multiplicity of 1. In case of higher multiplicity, BasisNet \citep{lim2023sign} achieves spectral invariance for the first time by utilizing the projection matrix. Specifically, given an eigenvalue $\lambda$ with multiplicity $k$, the projection matrix defined as $\sum_{i=1}^k \vv_i\vv_i^\top$ is invariant w.r.t. the choice of (unit) eigenvectors $\vv_1,\cdots,\vv_k$ as long as they form an orthogonal basis of the eigenspace associated with $\lambda$. Therefore, BasisNet simply feeds the projection matrix into a permutation equivariant model $\rho:\mathbb R^{n\times n}\to\mathbb R^n$ (e.g., 2-IGN \citep{maron2019invariant}) to generate spectral invariant node features $\rho(\sum_{i=1}^k \vv_i\vv_i^\top)$. The node features generated for different eigenspaces are concatenated together. While the authors proved that BasisNet can universally represent any graph functions when $\rho$ is universal (e.g., using $n$-IGN), the empirical performance is generally unsatisfactory when employing a practical model $\rho$ (i.e., 2-IGN). Recently, \citet{huang2024stability} further generalized BasisNet by proposing SPE, which performs a soft aggregation across different eigenspaces rather than a hard separation implemented in BasisNet. Specifically, let $\vv_1,\cdots,\vv_n$ be an orthogonal basis of (unit) eigenvectors associated with eigenvalues $\lambda_1,\cdots,\lambda_n$, respectively; then, each 1-dimensional node feature generated by SPE has the form $\rho(\sum_{i=1}^n \psi_j(\lambda_i)\vv_i \vv_i^\top)$, where $\psi_j:\mathbb R\to\mathbb R$ is a parameterized function associated with feature dimension $j$. The authors demonstrated that SPE can enhance the stability and generalization of GNNs, yielding much better empirical performance compared with BasisNet. 


\looseness=-1 \textbf{Spectral distances as invariant relative positional encoding.} In contrast to encoding Laplacian eigenvectors, an alternative approach to achieving spectral invariance involves utilizing (spectral) distances. Previous studies have identified various distances, spanning from the basic shortest path distance \citep{feng2022powerful,abboud2022shortest} to more advanced ones such as PageRank distance, resistance distance, and distances associated with random walks and graph diffusion \citep{li2020distance,zhang2023rethinking,mialon2021graphit,feldman2023weisfeiler}. Notably, all of these distances have a deep relation to the graph Laplacian while being more interpretable than eigenvectors and not suffering from ambiguity problems. The work of PEG \citep{wang2022equivariant} designed an invariant \emph{relative} positional encoding based on Laplacian eigenvectors, which can also be treated as a distance between nodes. Distances can be easily encoded in GNN models by either serving as edge features in message-passing aggregations \citep{wang2022equivariant,velingker2022affinity} or as relative positional encoding in Graph Transformers \citep{ying2021transformers,zhang2023complete,ma2023graph}.

\subsection{Expressive GNNs}



The expressive power of GNNs has been studied in depth in the recent few years. Early works \cite{xu2019powerful,morris2019weisfeiler} have pointed out a fundamental limitation of GNNs by establishing an equivalence between message-passing neural networks and the 1-WL graph isomorphism test \citep{weisfeiler1968reduction}. To develop more expressive models, several studies leveraged high-dimensional variants of the WL test \citep{cai1992optimal,grohe2017descriptive}. Representative models include $k$-IGN \citep{maron2019invariant}, PPGN \citep{maron2019provably}, and $k$-GNN \citep{morris2019weisfeiler,morris2020weisfeiler}. However, these models suffer from severe computational costs and are generally not suitable in practice. Currently, one mainstream approach to designing simple, efficient, practical, and expressive architectures is the Subgraph GNNs \citep{cotta2021reconstruction,bevilacqua2022equivariant,bevilacqua2023efficient,you2021identity,zhang2021nested,zhao2022stars,kong2023mag}. In particular, the expressive power of Subgraph GNNs as well as their relation to the WL tests are well-understood in recent studies \citep{frasca2022understanding,qian2022ordered,zhang2023complete,zhang2024beyond}. These results will be used to analyze spectrally-enhanced GNNs in this paper.

\subsection{Expressive power of spectral invariant GNNs}

While spectrally-enhanced GNNs have been extensively studied in the literature, much less is known about their expressive power. \citet{balcilar2021analyzing,wang2022powerful} delved into the expressive power of specific spectral filtering GNNs, but their expressive power is inherently limited by 1-WL. Another line of works studied the expressive power of the \emph{raw} spectral invariants (e.g., projection matrices) in relation to the Weisfeiler-Lehman algorithms \cite{furer1995graph,furer2010power,rattan2023weisfeiler}. However, their analysis does not consider any aggregation or refinement procedures over spectral invariants, and thus, it does not provide explicit insights into the expressive power of the corresponding GNNs. \citet{lim2023sign} proposed a concrete spectral invariant GNN called BasisNet, but their expressiveness analysis still largely focuses on \emph{raw} eigenvectors and projection matrices. To our knowledge, none of the prior works addresses the crucial problem of whether/how the design of \emph{GNN layers} contributes to the model's expressiveness. In this paper, we will answer this question by showing that $(\mathrm{i})$ a suitable aggregation procedure can strictly improve the expressive power beyond raw spectral features, and $(\mathrm{ii})$ different aggregation schemes can lead to considerable variations in the models' expressiveness.



\section{Preliminaries}
\label{sec:preliminary}

\looseness=-1 We use $\{\ \}$ and $\ldblbrace\ \rdblbrace$ to denote sets and multisets, respectively. Given a (multi)set $S$, its cardinality is denoted as $|S|$. In this paper, we consider finite, undirected, simple graphs with no isolated vertices. Let $G=(V_G,E_G)$ be a graph with vertex set $V_G$ and edge set $E_G$, where each edge in $E_G$ is represented as a set $\{u,v\}\subset V_G$ of cardinality two. The \emph{neighbors} of a vertex $u\in V_G$ is denoted as $N_G(u)=\{v\in V_G:\{u,v\}\in E_G\}$, and the degree of $u$ is denoted as $\deg_G(u)=|N_G(u)|$. Given vertex pair $(u,v)\in V_G^2$, denote by $\atp_G(u,v)$ its atomic type, which encodes whether $u=v$, $\{u,v\}\in E_G$, or $u$ and $v$ are not adjacent. Given vertex tuple $\vu\in V_G^k$, the rooted graph $G^\vu$ is a graph obtained from $G$ by marking vertices $u_1,\cdots,u_k$ sequentially. We denote by $\gG$ the set of all graphs and by $\gG_k$ the set of all rooted graphs marking $k$ vertices. It follows that $\gG_0=\gG$.

\textbf{Graph invariant.} Two (rooted) graphs $G^\vu,H^\vv\in \gG_k$ are called \emph{isomorphic} (denoted by $G^\vu\simeq H^\vv$) if there is a bijection $f:V_G\to V_H$ such that $f(u_i)=v_i$ for all $i\in [k]$, and for all vertices $w_1,w_2\in V_G$, $\{w_1,w_2\}\in E_G$ iff $\{f(w_1),f(w_2)\}\in E_H$. A function $f$ defined on graphs $\gG_k$ is called a graph invariant if it is invariant under isomorphism, i.e., $f(G^\vu)=f(H^\vv)$ if $G^\vu\simeq H^\vv$. In the context of graph learning, any GNN that outputs a graph representation should be a graph invariant over $\gG_0$; similarly, any GNN that outputs a representation for each node/each pair of nodes should be a graph invariant over $\gG_1$/$\gG_2$, respectively.

\textbf{Graph vectors and matrices.} Any real-valued graph invariant $x$ defined over $\gG_1$ corresponds to a graph vector $\vx_G: V_G\to \mathbb R$ when restricting on a specific graph $G\in \gG$. Without ambiguity, we denote the elements in $\vx_G$ as $\vx_G(u)$ for each $u\in V_G$, which is equal to $x(G^u)$. Similarly, any real-valued graph invariant $M$ defined over $\gG_2$ corresponds to a graph matrix $\mM_G: V_G^2\to \mathbb R$ when restricting on $G\in \gG$, where element $\mM_G(u,v)$ equals to $M(G^{uv})$. For ease of reading, we will drop the subscript $G$ when there is no ambiguity of the graph used in context. One can generalize all basic linear algebras from classic vectors/matrices to those defined on graphs. For example, the matrix product is defined as $(\mM_1\mM_2)(u,v)=\sum_{w\in V_G}\mM_1(u,w)\mM_2(w,v)$. Several basic graph matrices include the adjacency matrix $\mA$, degree matrix $\mD$, Laplacian matrix $\mL:=\mD-\mA$, and normalized Laplacian matrix $\hat \mL:=\mD^{-1/2}\mL\mD^{-1/2}$. Note that all these matrices are symmetric.

\textbf{Graph spectra and projection.} Let $\mM$ be any symmetric graph matrix (e.g., $\mA$, $\mL$, or $\hat \mL$). The graph spectrum is the set of all eigenvalues of $\mM$, which is a graph invariant over $\gG_0$. In addition to eigenvalues, the spectral information of a graph also includes eigenvectors or eigenspaces, which contain much more fine-grained information. Unfortunately,
eigenvectors have inherent ambiguity and cannot serve as a valid graph invariant over $\gG_1$. Instead, we focus on the eigenspaces characterized by their projection matrices. Concretely, there is a \emph{unique} projection matrix $\mP_i$ for each eigenvalue $\lambda_i$, which can be obtained via the eigen-decomposition $\mM=\sum_{i\in[m]}\lambda_i\mP_i$, where $m$ is the number of different eigenvalues. It follows that these projection matrices are symmetric, idempotent ($\mP_i^2=\mP_i$), ``orthogonal'' ($\mP_i\mP_j=\mO$ for all $i\neq j$), and sum to identity ($\sum_{i\in[m]}\mP_i=\mI$). There is a close relation between projection matrix $\mP_i$ and any orthogonal basis of unit eigenvectors $\{\vz_{i,1},\cdots,\vz_{i,J_i}\}$ that spans the eigenspace associated with $\lambda_i$: specifically, $\mP_i=\sum_{j=1}^{J_i}\vz_{i,j}\vz_{i,j}^\top$. The projection matrices naturally define a graph invariant $\gP^\mM$ over $\gG_2$:
\begin{equation*}
    \gP^\mM_G(u,v):=\ldblbrace (\lambda_1,\mP_1(u,v)),\cdots,(\lambda_m,\mP_m(u,v))\rdblbrace.
\end{equation*}
We call $\gP^\mM$ the eigenspace projection invariant (associated with graph matrix $\mM$).

\section{Eigenspace Projection Network}

This section introduces a simple GNN design paradigm based on the invariant $\gP^\mM$ defined above, called Eigenspace Projection GNN (EPNN). The idea of EPNN is very simple:  $\gP_G^\mM(u,v)$ essentially encodes the relation between vertices $u$ and $v$ in graph $G$ and can thus be treated as a form of ``edge feature''. In light of this, one can naturally define a message-passing GNN that updates the node representation of each vertex $u$ by iteratively aggregating the representations of other nodes $v$ along with edge features associated with $(u,v)$. Formally, consider a $K$-layer EPNN and denote by $\vh^{(l)}_G(u)$ the node representation of $u\in V_G$ computed by an EPNN after the $l$-th layer. Then, we can write the update rule of each EPNN layer as follows:
\begin{equation}
\label{eq:epnn_aggregation}
\begin{aligned}
    &\vh^{(l+1)}_G(u)=g^{(l+1)}(\vh^{(l)}_G(u),\\
    &\qquad\qquad\ldblbrace (\vh^{(l)}_G(v),\gP^\mM_G(u,v)):v\in V_G\rdblbrace),
\end{aligned}
\end{equation}
where all node representations $\vh_G^{(0)}(u)$ are the same at initialization. Here, $g^{(l+1)}$ can be any parameterized function representing the $(l+1)$-th layer. In practice, it can be implemented in various ways such as GIN-based aggregation \citep{xu2019powerful} or Graph Transformers \citep{ying2021transformers}, and we highlight that it is particularly suited for Graph Transformers as the graph becomes fully connected with this edge feature. Finally, after the $K$-th layer, a global pooling is performed over all vertices in the graph to obtain the graph representation $\mathsf{POOL}(\ldblbrace\vh^{(K)}_G(u):u\in V_G\rdblbrace)$.

EPNN is well-defined. First, the graph representation computed by an EPNN is permutation invariant w.r.t.  vertices, as $\gP^\mM$ is a graph invariant over $\gG_2$. Second, EPNN does not suffer from the eigenvector ambiguity problem, as $\gP_G^\mM$ is uniquely determined by graph $G$. Later, we will show that EPNN can serve as a simple yet unified framework for studying the expressive power of spectral invariant GNNs. 



\subsection{Expressive power of EPNN}
\label{sec:expressvieness_bound}

The central question we would like to study is: \emph{what is the expressive power of EPNN?} 
To formally study this question, this subsection introduces EPWL (Eigenspace Projection Weisfeiler-Lehman), an abstract color refinement algorithm tailored specifically for graph isomorphism test. Compared with \cref{eq:epnn_aggregation}, in EPWL the node representation $\vh^{(l+1)}_G(u)$ is replaced by a color $\chi^{(l+1)}_G(u)$, and the aggregation function $g^{(l+1)}$ is replaced by a perfect \emph{hash function} $\hash$, as presented below:
\begin{equation}
\label{eq:epwl}
    \chi^{(l+1)}_G(u)\!=\!\hash(\chi^{(l)}_G(u),\ldblbrace (\chi^{(l)}_G(v),\gP^\mM_G\!(u,v))\!:\!v\!\in\! V_G\rdblbrace).
\end{equation}
Initially, all node colors $\chi^{(0)}_G(u)$ are the same. For each iteration $l$, the color mapping $\chi_G^{(l)}$ induces an equivalence relation over vertex set $V_G$, and the relation gets \emph{refined} with the increase of $l$. Therefore, with a sufficiently large number of iterations $l\le |V_G|$, the relations get \emph{stable}. The graph representation is then defined to be the multiset of stable colors. EPWL distinguishes two non-isomorphic graphs iff the computed graph representations are different. We have the following result (which can be easily proved following standard techniques, see e.g., \citet{zhang2023complete}):
\begin{proposition}
    The expressive power of EPNN is bounded by EPWL in terms of graph isomorphism test. Moreover, with sufficient layers and proper functions $g^{(l)}$, EPNN can be as expressive as EPWL.
\end{proposition}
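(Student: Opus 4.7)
The plan is to prove both directions by a standard induction on the number of refinement/message-passing iterations, mirroring the argument used for the equivalence between GIN-style message passing and 1-WL in \citet{xu2019powerful}, adapted to the richer aggregation in \cref{eq:epnn_aggregation,eq:epwl}.

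For the upper bound (EPNN bounded by EPWL), I would show by induction on $l$ that for any two rooted graphs $G^u, H^v\in\gG_1$ and any choice of layer functions $g^{(1)},\ldots,g^{(l)}$, if $\chi^{(l)}_G(u)=\chi^{(l)}_H(v)$ then $\vh^{(l)}_G(u)=\vh^{(l)}_H(v)$. The base case is immediate since both initial colors and initial node representations are constant. For the inductive step, the fact that $\hash$ in \cref{eq:epwl} is a perfect hash function means that $\chi^{(l+1)}_G(u)=\chi^{(l+1)}_H(v)$ forces equality of the full tuple $(\chi^{(l)}_G(u),\ldblbrace(\chi^{(l)}_G(w),\gP^\mM_G(u,w)):w\in V_G\rdblbrace)$ with its counterpart in $H$. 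Unpacking the multiset equality gives a bijection $\pi:V_G\to V_H$ matching nodes with identical colors and identical values of $\gP^\mM$; applying the inductive hypothesis to each matched pair turns this into equality of the multisets appearing in \cref{eq:epnn_aggregation}, so $g^{(l+1)}$ produces the same value on both sides. Using this at the stable iteration and propagating through $\mathsf{POOL}$ yields the claimed bound on graph-level representations.

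For the lower bound (EPNN matches EPWL with proper $g^{(l)}$), the argument is dual. The inductive goal is that the finer partition also holds: if $\vh^{(l)}_G(u)=\vh^{(l)}_H(v)$ then $\chi^{(l)}_G(u)=\chi^{(l)}_H(v)$. To achieve this, one uses the observation that, for any fixed countable universe of graphs, the set of possible inputs to $g^{(l+1)}$ across all nodes of all graphs in the universe is itself countable. One can then invoke Lemma~5 of \citet{xu2019powerful} to choose $g^{(l+1)}$ as the sum of an injective embedding of its tuple-and-multiset input, yielding an injective multiset aggregator on the relevant domain. Composing such injective layers, together with an injective $\mathsf{POOL}$ implemented by the same Deep Sets recipe, makes $\vh^{(l)}$ at least as refined as $\chi^{(l)}$ at every layer; combining this with the upper bound gives equivalence.

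The only genuine technical subtlety — and the step I would flag as the main obstacle — is injectivity over a domain that includes the continuous-valued eigenvalues and projection-matrix entries inside $\gP^\mM$. This is resolved exactly as in prior expressivity proofs by restricting attention to a countable universe of graphs (which suffices for a graph isomorphism statement), so that all tuples fed to $g^{(l+1)}$ live in a countable set and the Deep Sets/GIN construction of an injective sum aggregator applies verbatim. Everything else is routine bookkeeping: carrying the two inductive hypotheses through \cref{eq:epnn_aggregation,eq:epwl} and checking that the graph-level pooling preserves distinguishability in both directions.
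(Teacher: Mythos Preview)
Your proposal is correct and matches the approach the paper has in mind: the paper does not spell out a proof but simply says the result ``can be easily proved following standard techniques, see e.g., \citet{zhang2023complete}'', and your two-direction induction (EPWL colors determine EPNN features for the upper bound; injective $g^{(l)}$ via the Deep Sets/GIN construction for the lower bound) is precisely that standard technique. Your handling of the continuous-valued $\gP^\mM$ entries by restricting to a countable universe is the usual fix and is adequate here; an equally acceptable alternative is to note that for any fixed finite family of graphs (in particular, any two graphs $G,H$ being compared) the multiset domain is finite, so injective $g^{(l)}$ exist trivially.
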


In subsequent analysis, we will bound the expressive power of EPWL by building relations to the standard Weisfeiler-Lehman hierarchy. First, it is easy to see that EPWL is \emph{lower bounded} by the classic 1-WL defined below:
\begin{equation}
    \tilde\chi^{(l+1)}_G(u)\!=\!\hash(\tilde\chi^{(l)}_G(u),\ldblbrace (\tilde\chi^{(l)}_G(v),\atp_G(u,v))\!:\!v\!\in\! V_G\rdblbrace),
\end{equation}
where $\tilde\chi^{(l)}_G(u)$ is the 1-WL color of $u$ in graph $G$ after $l$ iterations. This result follows from the fact that the EPWL color mapping $\chi^{(l+1)}$ always induces a \emph{finer} relation than the 1-WL color mapping $\tilde\chi^{(l+1)}$, as $\atp_G(u,v)$ is fully encoded in $\gP_G^M(u,v)$ (see \cref{thm:epwl_atp}). Besides, it is easy to give 1-WL indistinguishable graphs that can be distinguished via spectral information (see \cref{fig:1wl_indistinguishable}). Putting these together, we arrive at the following conclusion:
\begin{proposition}
\label{thm:epwl_1wl}
    For any graph matrix $\mM\in\{\mA,\mL,\hat\mL\}$, the corresponding EPWL is strictly more expressive than 1-WL in distinguishing non-isomorphic graphs.
\end{proposition}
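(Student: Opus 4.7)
The claim has two parts: that EPWL is at least as expressive as 1-WL for each $\mM\in\{\mA,\mL,\hat\mL\}$, and that the inclusion is strict. For the first part, the plan is to prove by induction on the layer index $l$ that the EPWL color mapping $\chi_G^{(l)}$ is a refinement of the 1-WL color mapping $\tilde\chi_G^{(l)}$, in the sense that $\chi_G^{(l)}(u)=\chi_H^{(l)}(v)$ implies $\tilde\chi_G^{(l)}(u)=\tilde\chi_H^{(l)}(v)$. The base case is immediate since both colorings start constant. For the inductive step, the key observation, which is exactly \cref{thm:epwl_atp}, is that the atomic type $\atp_G(u,v)$ is a function of $\gP_G^\mM(u,v)$. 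Granted this, from each element $(\chi_G^{(l)}(v),\gP_G^\mM(u,v))$ of the EPWL aggregation multiset one can extract $(\tilde\chi_G^{(l)}(v),\atp_G(u,v))$ (using the inductive hypothesis for the first coordinate), so the EPWL multiset determines the 1-WL multiset, giving the refinement at layer $l+1$.

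The recovery of $\atp_G(u,v)$ from $\gP_G^\mM(u,v)$ rests on two elementary facts about the eigendecomposition $\mM=\sum_{i\in[m]}\lambda_i\mP_i$. First, summing the second coordinates of $\gP_G^\mM(u,v)$ gives $\sum_i \mP_i(u,v)=\mI(u,v)$, which equals $1$ exactly when $u=v$. Second, in the case $u\neq v$, computing $\sum_i\lambda_i\mP_i(u,v)=\mM(u,v)$ determines adjacency: for $\mM=\mA$ directly, for $\mM=\mL$ via $\mL(u,v)=-\mA(u,v)$, and for $\mM=\hat\mL$ via $\hat\mL(u,v)=-\mA(u,v)/\sqrt{\deg_G(u)\deg_G(v)}$, which is nonzero iff $\{u,v\}\in E_G$. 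Combining with the first fact recovers $\atp_G(u,v)$ in all three cases.

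For the strict separation, the plan is to exhibit a pair of non-isomorphic graphs that are 1-WL indistinguishable but have different spectra; the natural candidate, matching the referenced \cref{fig:1wl_indistinguishable}, is the $6$-cycle $C_6$ versus two disjoint triangles $2C_3$. Both are $2$-regular on $6$ vertices, so 1-WL never refines the initial constant coloring and fails to distinguish them. However, $\mathrm{spec}(\mA(C_6))=\{2,1,1,-1,-1,-2\}$ while $\mathrm{spec}(\mA(2C_3))=\{2,2,-1,-1,-1,-1\}$; since the graphs are regular the same separation occurs for $\mL$ and $\hat\mL$. The multiset of first coordinates in $\gP_G^\mM(u,v)$ is exactly the set of distinct eigenvalues, and moreover the diagonal multiplicities $\mP_i(u,u)$ sum (over $u$) to the multiplicity of $\lambda_i$, so already the initial EPWL colors $\chi_G^{(1)}(u)$ encode the full spectrum with multiplicities. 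Hence EPWL distinguishes $C_6$ from $2C_3$ after one iteration.

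The only mildly technical step is the atomic-type recovery lemma underlying \cref{thm:epwl_atp}; once that is in hand, both the refinement induction and the spectral separation via $C_6$ versus $2C_3$ are routine, so I do not anticipate a substantive obstacle. The uniform treatment across $\mM\in\{\mA,\mL,\hat\mL\}$ is handled by the case split above.
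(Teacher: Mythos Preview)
Your proposal is correct and follows essentially the same approach as the paper: both rely on the key observation (\cref{thm:epwl_atp}) that $\gP^\mM_G(u,v)$ determines $\atp_G(u,v)$ via $\sum_i\mP_i(u,v)=\mI(u,v)$ and $\sum_i\lambda_i\mP_i(u,v)=\mM(u,v)$, from which the refinement over 1-WL follows, and both establish strictness by exhibiting a pair of 1-WL indistinguishable regular graphs with different spectra (your choice of $C_6$ versus $2C_3$ works for all three matrices since the graphs are $2$-regular). The only cosmetic difference is that you spell out the layerwise induction explicitly, whereas the paper packages the same step into its abstract color-refinement framework via the implication $\gP^\mM\preceq\atp\Rightarrow T_{\mathsf{EP},\mM}\preceq T_\mathsf{WL}$.
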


On the other hand, the question becomes more intriguing when studying the \emph{upper bound} of EPWL. Below, we will approach the problem by building fundamental connections between EPWL and an important class of expressive GNNs known as Subgraph GNNs. The basic form of Subgraph GNN is very simple: given a graph $G$, it treats $G$ as a set of rooted graphs $\ldblbrace G^u:u\in V_G\rdblbrace$ (known as node marking), independently runs 1-WL for each $G^u$, and finally merges their graph representations. We call the above algorithm SWL, and the refinement rule can be formally written as
\begin{equation}
\begin{aligned}
    &\chi^{\mathsf{S},(l+1)}_G(u,v)=\hash(\chi^{\mathsf{S},(l)}_G(u,v),\\
    &\qquad\ldblbrace (\chi^{\mathsf{S},(l)}_G(u,w),\atp_G(v,w)):w\in V_G\rdblbrace).
\end{aligned}
\end{equation}
where $\chi^{\mathsf{S},(l)}_G(u,v)$ is the SWL color of vertex $v$ in graph $G^u$ after $l$ iterations, and the initial color $\chi^{\mathsf{S},(0)}_G(u,v)=\mathbb I[u=v]$ distinguishes the marked vertex $u$ in $G^u$. Recently, \citet{zhang2023complete,frasca2022understanding} significantly generalized Subgraph GNNs by enabling interactions among different subgraphs and built a complete design space. Among them, \citet{zhang2023complete} proposed the PSWL algorithm, which adds a cross-graph aggregation to SWL as shown below:
\begin{equation}
\begin{aligned}
    &\chi^{\mathsf{PS},(l+1)}_G(u,v)=\hash(\chi^{\mathsf{PS},(l)}_G(u,v),\chi^{\mathsf{PS},(l)}_G(v,v),\\
    &\qquad\ldblbrace (\chi^{\mathsf{PS},(l)}_G(u,w),\atp_G(v,w)):w\in V_G\rdblbrace),
\end{aligned}
\end{equation}
where $\chi^{\mathsf{PS},(l)}_G(u,v)$ is the PSWL color of $(u,v)\in V_G^2$ after $l$ iterations. We now present our main result, which reveals a fundamental connection between PSWL and EPWL:

\begin{theorem}
\label{thm:epwl_pswl}
    For any graph matrix $\mM\in\{\mA,\mL,\hat\mL\}$, the expressive power of EPWL is strictly bounded by PSWL in distinguishing non-isomorphic graphs.
\end{theorem}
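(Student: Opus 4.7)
The statement has two parts: PSWL dominates EPWL, and the dominance is strict. The plan for the domination direction is to show that the stable PSWL colors encode enough information to reconstruct the eigenspace projection invariant $\gP^\mM_G(u,v)$ at every pair of vertices, after which a routine induction on EPWL iterations yields PSWL $\sqsupseteq$ EPWL. For strictness, the plan is to exhibit an explicit separating pair of graphs.

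The core recovery argument proceeds in stages. Since PSWL dominates SWL, the stable PSWL color at $(u,v)$ determines the 1-WL color of $v$ inside the rooted graph $G^u$. Because $u$ is uniquely marked, this color in turn determines the number of walks of every length $k$ from $v$ to $u$, i.e.\ $\mA^k(u,v)$ for all $k \geq 0$ (walks to $u$ correspond to tree-homomorphisms with a leaf pinned to the uniquely colored vertex). Vertex degrees are captured after one 1-WL iteration and degree-labeled walks in subsequent iterations, so via the identities $\mL = \mD - \mA$ and $\hat\mL = \mI - \mD^{-1/2}\mA\mD^{-1/2}$ the stable PSWL color also determines $\mL^k(u,v)$ and $\hat\mL^k(u,v)$. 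Hence, for any $\mM \in \{\mA,\mL,\hat\mL\}$, the stable color encodes the full power sequence $\bigl(\mM^k(u,v)\bigr)_{k\geq 0}$.

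This sequence is precisely the moment sequence of the discrete spectral measure $\mu_{u,v} = \sum_i \mP_i(u,v)\,\delta_{\lambda_i}$, which has finite support and is therefore uniquely determined by its moments; this recovers all pairs $(\lambda_i,\mP_i(u,v))$ with $\mP_i(u,v)\neq 0$. The remaining eigenvalues (those with $\mP_i(u,v)=0$) are obtained from the full graph spectrum, which is determined by $\mathrm{tr}(\mM^k) = \sum_v \mM^k(v,v)$ and hence by the pooled multiset of diagonal PSWL colors $\ldblbrace \chi^{\mathsf{PS},(\infty)}_G(v,v) : v \in V_G\rdblbrace$. Combining these recoveries yields the full multiset $\gP^\mM_G(u,v)$, and a straightforward induction on EPWL iterations (using that EPWL's update is a function of these multisets) then completes the domination direction.

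For strictness, the plan is to use two non-isomorphic strongly regular graphs with identical parameters, e.g.\ the Shrikhande graph and the $4\times 4$ rook graph (both $\mathrm{srg}(16,6,2,2)$). In any SRG the adjacency matrix satisfies a quadratic identity in $\mA$, $\mI$, and the all-ones matrix, which forces every polynomial in $\mA$, and in particular each projection matrix $\mP_i$, to have entries depending only on $\atp_G(u,v)$. Hence $\gP^\mM_G$ collapses to a function of $\atp_G$ alone, EPWL reduces to 1-WL with edge types, and the two SRGs are EPWL-indistinguishable. PSWL, by contrast, separates this pair, which is a standard benchmark separation for subgraph methods. The main obstacle is making the moment-recovery step fully rigorous: one must argue that the stable PSWL color at $(u,v)$ encodes the \emph{entire} infinite sequence $(\mM^k(u,v))_k$, which is handled by observing that the 1-WL stable color on $G^u$ is already an invariant strong enough to read off homomorphism counts from all paths (and from bounded-depth trees carrying degree labels), hence every $\mM^k(u,v)$ for each $\mM \in \{\mA,\mL,\hat\mL\}$.
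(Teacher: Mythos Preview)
Your recovery argument (stable PSWL colors determine the power sequence $(\mM^k(u,v))_k$, hence the spectral measure, hence $\gP^\mM_G(u,v)$) is sound and is essentially the Rattan--Seppelt result that the paper invokes. Your strictness example via cospectral strongly regular graphs is also correct and is a cleaner alternative to the paper's F{\"u}rer construction: since the projections are polynomials in $\mA,\mI,\mJ$, $\gP^\mM$ collapses to $\atp$ and EPWL degenerates to 1-WL, while PSWL separates the pair because the local neighborhood structures around a marked vertex differ.

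However, the ``straightforward induction'' hides a real gap, and it is exactly the point where PSWL is needed rather than SWL. Your recovery of $\gP^\mM_G(u,v)$ uses only SWL-level information (walk counts in the rooted graph $G^u$), so if the induction were truly automatic your argument would equally prove that SWL dominates EPWL --- which is false (the paper exhibits graphs EPWL separates but SWL does not). Concretely, to push the induction from step $l$ to $l{+}1$ at vertex $u$, you need
\[
\ldblbrace (\chi^{(l)}_G(v),\gP^\mM_G(u,v)):v\in V_G\rdblbrace
\]
to be determined by the PSWL data at $u$. Matching stable colors $\chi^{\mathsf{PS}}_G(u,v)=\chi^{\mathsf{PS}}_H(x,y)$ gives you $\gP^\mM_G(u,v)=\gP^\mM_H(x,y)$, but to invoke the inductive hypothesis for $\chi^{(l)}_G(v)$ you must also show that $\chi^{\mathsf{PS}}_G(u,v)$ determines the subgraph-pooled color $\ldblbrace\chi^{\mathsf{PS}}_G(v,w):w\rdblbrace$ at the \emph{second} coordinate $v$. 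That step is precisely what the PSWL update supplies and SWL does not: the term $\chi^{\mathsf{PS}}_G(v,v)$ in the PSWL aggregation means the stable color at $(u,v)$ determines the diagonal color at $(v,v)$, which in turn (via the global multiset over $w$) determines the pooled color at $v$. The paper makes exactly this move (its $T_{\mathsf{Dv}}$ followed by $T_{\mathsf{Gu}}$ step) and flags it as the key technical contribution. You should make this explicit; otherwise the induction does not close.
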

The proof of \cref{thm:epwl_pswl} is deferred to \cref{sec:proof_epwl_sswl}, which is based on the recent graph theory result established by \citet{rattan2023weisfeiler}. Specifically, given any graph $G$ and vertices $u,v\in V_G$, each projection element $\mP_i(u,v)$ in $\gP^\mM$ is determined by the SWL stable color $\chi_G^\mathsf{S}(u,v)$ for any symmetric ``equitable'' matrix $\mM$ defined in \citet{rattan2023weisfeiler}, and the eigenvalues are also determined by the SWL graph representation. Notably, all matrices studied in this paper (e.g., $\mA,\mL,\hat\mL$) are equitable. Based on this result, one may guess that EPWL can be bounded by SWL. However, we show this is actually not the case when further taking the message-passing aggregation into account. The key technical contribution in our proof is to relate the refinement procedure in EPWL to the additional cross-graph aggregation $\chi^{\mathsf{PS},(l)}_G(v,v)$ in PSWL. To this end, we show the stable color $\chi^{\mathsf{PS}}_G(u,u)$ is strictly finer than the stable color $\chi_G(u)$, thus concluding the proof.

\begin{remark}
\label{remark:epwl_pswl}
    Based on \cref{thm:epwl_pswl}, one can also bound the expressiveness of EPWL by other popular GNNs in literature, such as SSWL \citep{zhang2023complete}, Local 2-GNN \citep{morris2020weisfeiler,zhang2024beyond}, $\mathsf{ReIGN(2)}$ \citep{frasca2022understanding}, ESAN \citep{bevilacqua2022equivariant}, and GNN-AK \citep{zhao2022stars}, as all these architectures are more expressive than PSWL \citep{zhang2023complete}. However, EPWL is incomparable to the vanilla SWL, where we give counterexamples in \cref{sec:proof_counterexample}.
\end{remark}

The significance of \cref{thm:epwl_pswl} is twofold. First, it reveals a surprising relation between GNNs augmented with spectral information and the ones grounded in structural message-passing, which represents two seemingly unrelated research directions. Our result thus offers insights into how previously proposed expressive GNNs can encode spectral information. Second, \cref{thm:epwl_pswl} points out a fundamental limitation of EPNN. Indeed, combined with the result that PSWL is strictly bounded by 3-WL \citep{zhang2023complete,zhang2024beyond}, we obtain the concluding corollary:

\begin{corollary}
\label{thm:epwl_3wl}
    For any graph matrix $\mM\in\{\mA,\mL,\hat\mL\}$, the expressive power of EPWL is strictly bounded by 3-WL.
\end{corollary}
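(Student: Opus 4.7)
The plan is to derive \cref{thm:epwl_3wl} as an essentially immediate corollary of \cref{thm:epwl_pswl} combined with the previously established bound of PSWL by 3-WL. Concretely, I would argue by transitivity of the ``strictly bounded'' relation on color refinement algorithms, using the two links (EPWL $\sqsubset$ PSWL) and (PSWL $\sqsubset$ 3-WL).

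First, I would invoke \cref{thm:epwl_pswl}, which states that for any $\mM\in\{\mA,\mL,\hat\mL\}$, EPWL is strictly less expressive than PSWL in distinguishing non-isomorphic graphs. This gives one half of the chain: whenever EPWL distinguishes two graphs $G$ and $H$, PSWL also distinguishes them, and there exists at least one pair of non-isomorphic graphs that PSWL separates but EPWL does not. Second, I would cite the result of \citet{zhang2023complete,zhang2024beyond}, which establishes that PSWL is strictly bounded by 3-WL; that is, any pair of graphs distinguished by PSWL is also distinguished by 3-WL, and there is at least one pair separated by 3-WL but not by PSWL.

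Composing these two bounds, I would conclude that if EPWL distinguishes $G$ and $H$, then so does PSWL, and hence so does 3-WL. This gives the upper bound direction. For strictness, it suffices to exhibit (or invoke the existing counterexample used in either step) a single pair of non-isomorphic graphs distinguishable by 3-WL but not by EPWL. The easiest route is to reuse the counterexample witnessing strictness in \cref{thm:epwl_pswl}: a pair of graphs that PSWL separates but EPWL does not; since PSWL $\sqsubset$ 3-WL, such a pair is automatically distinguished by 3-WL.

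The main ``obstacle'' here is not really an obstacle but a bookkeeping point: the transitivity argument only works because both $\sqsubset$-links have been individually verified, and the strictness of the composition only needs one of the two strict gaps to survive. Since \cref{thm:epwl_pswl} already produces a pair witnessing the strict gap between EPWL and PSWL, and since PSWL is in turn strictly 3-WL-bounded (with 3-WL still distinguishing the PSWL-distinguishable pair), no additional counterexample construction is required. Thus the proof essentially reduces to citing the two results and noting the transitive composition.
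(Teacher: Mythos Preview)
Your proposal is correct and matches the paper's own argument essentially verbatim: the corollary is obtained by combining \cref{thm:epwl_pswl} (EPWL $\sqsubset$ PSWL) with the cited result that PSWL is strictly bounded by 3-WL, and your handling of strictness via the counterexample from \cref{thm:epwl_pswl} is exactly the right bookkeeping.
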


\section{Distance GNNs and Graph Transformers}
\label{sec:distance_gnn}

In this section, we will show that EPNN unifies all distance-based spectral invariant GNNs. Here, we adopt the framework proposed in \citet{zhang2023rethinking}, known as Generalized Distance (GD) GNNs. The aggregation formula of the corresponding GD-WL can be written as follows:
\begin{equation}
\begin{aligned}
    \chi^{\mathsf{D},(l+1)}_G(u)=&\hash(\chi^{\mathsf{D},(l)}_G(u),\\
    &\ldblbrace (\chi^{\mathsf{D},(l)}_G(v),d_G(u,v)):v\in V_G\rdblbrace).
\end{aligned}
\end{equation}
where $\chi^{\mathsf{D},(l)}_G(u)$ is the GD-WL color of vertex $u\in V_G$ after $l$ iterations, and $d$ can be any valid distance metric. By choosing different distances, GD-WL incorporates various prior works listed below.
\begin{itemize}[topsep=0pt,leftmargin=20pt]
    \setlength{\itemsep}{0pt}
    \item \textbf{Shorest-path distance (SPD).} This is the most basic distance metric and has been extensively used in designing expressive GNNs \citep[e.g., ][]{li2020distance,ying2021transformers,abboud2022shortest,feng2022powerful}.
    \item \textbf{Resistance distance (RD).} It is defined to be the effective resistance between two nodes when treating the graph as an electrical network where each edge has a resistance of $1\Omega$. Recently, \citet{zhang2023rethinking} showed that incorporating RD can significantly improve the expressive power of GNNs for biconnectivity problems such as identifying cut vertices/edges. Besides, RD has been extensively studied in other areas in the GNN community, such as alleviating oversquashing problems \citep{arnaiz2022diffwire}.
    \item \textbf{Distances based on random walk.} The hitting-time distance (HTD) between two vertices $u$ and $v$ is defined as the expected number of steps in a random walk starting from $u$ and reaching $v$ for the first time, which is an asymmetric distance. Instead, the commute-time distance (CTD) is defined as the expected number of steps for a round-trip starting at $u$ to reach $v$ and then return to $u$, which is a symmetrized version of HTD. These distances are fundamental in graph theory and have been used to develop/understand expressive GNNs \citep{velingker2022affinity,zhang2023complete}.
    \item \textbf{PageRank distance (PRD).} Given weight sequence $\gamma_0,\gamma_1,\cdots$, the PageRank distance between vertices $u$ and $v$ is defined as $\sum_{k=0}^\infty \gamma_k\mW^k(u,v)$, where $\mW=\mD^{-1}\mA$ is the random walk probability matrix. It is a generalization of the $p$-step landing probability distance, which corresponds to setting $\gamma_p=1$ and $\gamma_k=0$ for all $k\neq p$. \citet{li2020distance} first proposed to use PRD-WL to boost the expressive power of GNNs.
    \item \textbf{Other distances.} We also study the (normalized) diffusion distance \citep{coifman2006diffusion} and the biharmonic distance \citep{lipman2010biharmonic}. Due to space limit, please refer to \cref{sec:proof_distance} for more details.
\end{itemize}

Our main result is stated as follows:
\begin{theorem}
\label{thm:distance}
    For any distance listed above, the expressive power of GD-WL is upper bounded by EPWL with the normalized graph Laplacian matrix $\hat\mL$.
\end{theorem}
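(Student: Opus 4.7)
The plan is to prove \cref{thm:distance} by a standard inductive refinement argument, reducing to the claim that for each listed distance $d$, the value $d_G(u,v)$ can be computed from information that EPWL on $\hat\mL$ makes available, namely $\gP^{\hat\mL}_G(u,u)$, $\gP^{\hat\mL}_G(v,v)$, $\gP^{\hat\mL}_G(u,v)$, and the degrees $\deg_G(u),\deg_G(v)$. The proof would proceed in three steps.

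\textbf{Step 1: Spectral formulas for each distance.} Writing $\hat\mL = \sum_i \lambda_i \mP_i$, the functional calculus gives $f(\hat\mL)(u,v) = \sum_i f(\lambda_i)\mP_i(u,v)$ for any scalar $f$. I would apply this identity to each listed distance in turn. For SPD, the relation $(\mI-\hat\mL)^k(u,v) = \mA^k(u,v)/\sqrt{\deg(u)\deg(v)}$ yields $\mathrm{SPD}(u,v) = \min\{k\ge 0 : \sum_i (1-\lambda_i)^k \mP_i(u,v)\neq 0\}$, a function of $\gP^{\hat\mL}_G(u,v)$ alone. For RD, write $\mL^+ = \mD^{-1/2}\hat\mL^+\mD^{-1/2}$ with $\hat\mL^+(u,v) = \sum_{\lambda_i>0} \mP_i(u,v)/\lambda_i$, so $\mathrm{RD}(u,v) = \mL^+(u,u)+\mL^+(v,v)-2\mL^+(u,v)$ depends only on the five listed quantities. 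For HTD/CTD, apply the classical spectral formulas on the normalized Laplacian, which again involve only diagonal/pair projection data and degrees (plus the graph-level quantity $2|E|=\sum_w\deg(w)$, which is recoverable from node-level information after an additional EPWL round). For PRD, $\mW^k(u,v) = \sqrt{\deg(v)/\deg(u)}\sum_i (1-\lambda_i)^k \mP_i(u,v)$ makes the series $\sum_k\gamma_k\mW^k(u,v)$ a function of the listed data. The diffusion distance reduces to the listed quantities via $e^{-t\hat\mL}(u,v) = \sum_i e^{-t\lambda_i}\mP_i(u,v)$, and the biharmonic distance is analogous to RD with $\lambda_i^{-2}$ weights.

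\textbf{Step 2: EPWL captures the needed data.} Because the EPWL aggregation for $u$ includes the $v=u$ summand $(\chi^{(l)}(u),\gP^{\hat\mL}(u,u))$, after one iteration $\chi^{(1)}(u)$ determines $\gP^{\hat\mL}(u,u)$. Moreover, by \cref{thm:epwl_1wl} EPWL refines 1-WL, so $\chi^{(1)}(u)$ also determines $\deg_G(u)$. \textbf{Step 3: Offset induction on iteration count.} I would prove by induction on $\ell$ that EPWL after $\ell+c$ iterations refines GD-WL after $\ell$ iterations, for some small constant $c$ depending on $d$. The base case is trivial. For the inductive step, suppose $\chi^{(\ell+c)}_G(u)=\chi^{(\ell+c)}_H(u')$. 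Unpacking the $\hash$ equality yields $\chi^{(\ell+c-1)}_G(u)=\chi^{(\ell+c-1)}_H(u')$ together with a bijection $\sigma:V_G\to V_H$ such that $(\chi^{(\ell+c-1)}_G(v),\gP^{\hat\mL}_G(u,v)) = (\chi^{(\ell+c-1)}_H(\sigma(v)),\gP^{\hat\mL}_H(u',\sigma(v)))$ for all $v$. The $\atp$-component encoded in $\gP^{\hat\mL}(u,u)$ forces $\sigma(u)=u'$, hence $\gP^{\hat\mL}_G(u,u)=\gP^{\hat\mL}_H(u',u')$. From $\chi^{(\ell+c-1)}(v)=\chi^{(\ell+c-1)}(\sigma(v))$ together with Step 2 we obtain $\gP^{\hat\mL}_G(v,v)=\gP^{\hat\mL}_H(\sigma(v),\sigma(v))$ and $\deg_G(v)=\deg_H(\sigma(v))$, and similarly $\deg_G(u)=\deg_H(u')$. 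Step 1 then delivers $d_G(u,v)=d_H(u',\sigma(v))$; combined with the inductive hypothesis $\chi^{\mathsf{D},(\ell-1)}_G(v)=\chi^{\mathsf{D},(\ell-1)}_H(\sigma(v))$, the two GD-WL multisets match and hence $\chi^{\mathsf{D},(\ell)}_G(u)=\chi^{\mathsf{D},(\ell)}_H(u')$.

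The main obstacle I expect is Step 1: uniformly deriving clean spectral formulas for all the listed distances and carefully tracking which quantities each one depends on. For SPD one must justify that $\sum_i(1-\lambda_i)^k\mP_i(u,v)$ and $\mA^k(u,v)$ share the same zero pattern; for HTD the spectral formula is less standard than RD's and requires care with the random-walk normalization and the role of $2|E|$; for PRD one must verify convergence of the series under the appropriate weight assumptions on $\gamma_k$. These are routine but tedious computations that I would relegate to a separate lemma per distance in an appendix, after which the uniform inductive argument in Step 3 finishes the proof.
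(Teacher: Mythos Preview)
Your approach is essentially the paper's: derive a spectral formula for each distance, show the quantities it depends on are encoded by EPWL colors, and conclude by a refinement argument. The paper packages the last step slightly differently—rather than an offset induction, it works with the \emph{stable} EPWL coloring $\chi=T_{\mathsf{EP},\hat\mL}^\infty(\chi^\mathsf{C})$ and proves the single claim $\bar\chi\preceq\mM$ where $\bar\chi_G(u,v)=(\chi_G(u),\chi_G(v),\gP^{\hat\mL}_G(u,v))$—but this is equivalent to your Step~3.

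There is, however, a real gap in Step~1 for HTD. Your opening reduction asserts that each $d_G(u,v)$ is determined by the five quantities $\gP^{\hat\mL}_G(u,u),\gP^{\hat\mL}_G(v,v),\gP^{\hat\mL}_G(u,v),\deg_G(u),\deg_G(v)$ (plus $2|E_G|$). This is false for hitting time: solving $\mL\mM=\mD\mJ-2|E_G|\mI$ with $\diag(\mM)=\mO$ gives
\[
\mathrm{HTD}_G(u,v)=\sum_{w}\mL^\dag_G(u,w)\deg_G(w)-\sum_{w}\mL^\dag_G(v,w)\deg_G(w)+2|E_G|\bigl(\mL^\dag_G(v,v)-\mL^\dag_G(u,v)\bigr),
\]
and the first two sums are node-level aggregates that are \emph{not} functions of your five-tuple. (They cancel for CTD, which is why CTD collapses to $2|E_G|\cdot\mathrm{RD}$ and your claim is fine there.) The fix, which the paper carries out, is to show that each such aggregate is determined by the stable EPWL color of a single node: since $\chi\equiv T_{\mathsf{EP},\hat\mL}(\chi)$, the equality $\chi_G(u)=\chi_H(x)$ forces $\ldblbrace(\chi_G(w),\gP^{\hat\mL}_G(u,w)):w\rdblbrace=\ldblbrace(\chi_H(z),\gP^{\hat\mL}_H(x,z)):z\rdblbrace$, hence $\ldblbrace(\deg_G(w),\hat\mL^\dag_G(u,w)):w\rdblbrace$ matches, hence the sum does. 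In your offset-induction framework this amounts to increasing $c$ and extracting the $v$-aggregate from $\chi^{(\ell+c-1)}_G(v)$ via one more unrolling; but Step~3 as you wrote it only invokes the five quantities from Step~2, so the HTD case does not go through without this amendment.
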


The proof of \cref{thm:epwl_pswl} is highly technical and is deferred to \cref{sec:proof_distance}, with several important remarks made as follows. For the case of SPD, the proof is based on the key finding that $\gP^{\hat\mL}_G(u,v)$ determines the shortest path distance between $u$ and $v$ for any graph $G$ and vertices $u,v\in V_G$. Unfortunately, this property does not transfer to other distances listed above. For general distances, the reason why EPWL is still more expressive lies in the entire \emph{message-passing process} (or color refinement procedure). Concretely, the refinement continuously enriches the information embedded in node colors $\chi^{(l)}_G(v)$, so that the tuple $(\chi^{(l)}_G(u),\chi^{(l)}_G(v),\gP^{\hat\mL}_G(u,v))$ eventually encompasses sufficient information to determine any distance $d_G(u,v)$ (although $\gP^{\hat\mL}_G(u,v)$ alone may not determine it). Our proof thus emphasizes the critical role of message-passing aggregation in enhancing the expressiveness of spectral information. Note that this is also justified in the proof of \cref{thm:epwl_pswl}, where the message-passing process boosts the expressive power of EPWL beyond SWL. Moreover, we emphasize that these distances listed above cannot be well-encoded when using weaker message-passing aggregations, as will be elucidated in \cref{sec:basisnet}.

\textbf{Implications.} \cref{thm:distance} has a series of consequences. First, it implies that all the power of distance information is possessed by EPWL. As an example, we immediately have the following corollary based on the relation between distance and biconnectivity of a graph established in \citet{zhang2023rethinking}, which significantly extends the classic result that Laplacian spectrum encodes graph connectivity \cite{brouwer2011spectra}. 
\begin{corollary}
\label{thm:biconnectivity}
    EPWL is fully expressive for encoding graph biconnectivity properties, such as identifying cut vertices and cut edges, determining the number of biconnected components, and distinguishing graphs with non-isomorphic block cut-vertex trees and block cut-edge trees.
\end{corollary}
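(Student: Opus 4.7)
The plan is to derive the corollary by composing two ingredients: the biconnectivity expressiveness results of \citet{zhang2023rethinking} for GD-WL, and \cref{thm:distance} from this paper. First I would recall the relevant results from \citet{zhang2023rethinking}: they prove that GD-WL equipped with the resistance distance (or with the combination of shortest-path distance and resistance distance) is fully expressive for each of the biconnectivity invariants listed in the corollary --- it produces distinct node colors for cut vertices versus non-cut vertices, distinct pair colors for cut edges versus non-cut edges, distinct graph representations for graphs with different numbers of biconnected components, and distinct graph representations for any two graphs whose block cut-vertex or block cut-edge trees are non-isomorphic.

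Next, I would apply \cref{thm:distance}: EPWL with the normalized Laplacian $\hat\mL$ is at least as expressive as GD-WL for every distance considered in \cref{sec:distance_gnn}, and in particular for SPD and RD. Phrased as a color-refinement comparison, the stable EPWL colors refine the stable GD-WL colors. Consequently any two graphs (or any two vertices or pairs within a graph) that GD-WL with RD distinguishes are also distinguished by EPWL, so the biconnectivity expressiveness transfers verbatim from GD-WL to EPWL.

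The main point that needs a bit of care is granularity: cut-vertex and cut-edge identification are local predicates, not global graph invariants, so I would make sure that the reduction in \cref{thm:distance} is used at the level of node/pair colors rather than only at the level of the pooled graph representation. Since the proof of \cref{thm:distance} is naturally a refinement statement about colors, this is essentially free, but I would spell out the translation once so that each of the four biconnectivity properties in the corollary statement maps cleanly to a distinguishability claim for EPWL colors. Beyond this bookkeeping, there is no substantive obstacle: the corollary is a direct consequence of chaining the two results together, and the hard graph-theoretic work has already been done in \citet{zhang2023rethinking}.
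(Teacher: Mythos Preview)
Your proposal is correct and matches the paper's approach exactly: the paper states this corollary as an immediate consequence of \cref{thm:distance} combined with the biconnectivity results of \citet{zhang2023rethinking} (SPD-WL for cut edges, RD-WL for cut vertices and block cut trees), without giving further detail. Your observation about checking the node/pair-color level of the refinement is a reasonable bit of hygiene, and it is indeed supported by the proof of \cref{thm:distance}, whose core step establishes $T_{\mathsf{GD},\mM}\circ T_{\mathsf{EP},\hat\mL}^\infty\equiv T_{\mathsf{EP},\hat\mL}^\infty$ and hence node-level refinement before any pooling.
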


\begin{remark}
    \cref{thm:biconnectivity} offers a novel understanding of the work by \citet{zhang2023rethinking} on why ESAN can encode graph biconnectivity, thereby thoroughly unifying their analysis. Essentially, this is just because ESAN is more powerful than EPWL (\cref{remark:epwl_pswl}), and EPWL itself is already capable of encoding both SPD and RD.
\end{remark}

\begin{remark}
    Combining \cref{thm:epwl_pswl,thm:distance} resolves an open question posed by \citet{zhang2023complete}, confirming that PSWL can encode resistance distance.
\end{remark}

As a second implication, combined with \cref{thm:epwl_3wl}, \cref{thm:distance} highlights a fundamental limitation of all distance-based GNNs as stated below:

\begin{corollary}
    For any distance defined above, the expressive power of GD-WL is strictly bounded by 3-WL.
\end{corollary}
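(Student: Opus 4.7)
The plan is to chain together the two results already stated in the excerpt, with no new graph-theoretic input needed. First, I would invoke \cref{thm:distance}, which says that for every distance $d$ in the enumerated list (SPD, RD, HTD, CTD, PRD, diffusion distance, biharmonic distance), the associated GD-WL is at most as expressive as EPWL with the normalized Laplacian $\hat\mL$. Second, I would invoke \cref{thm:epwl_3wl}, which says that EPWL with $\hat\mL$ is strictly less expressive than 3-WL. Composing these two inclusions immediately gives the soft bound: GD-WL is at most as expressive as 3-WL.

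To upgrade the soft bound to a strict one, I would transport a separating example from \cref{thm:epwl_3wl}. Since that corollary asserts strict containment, there exists a pair of non-isomorphic graphs $G,H$ that 3-WL distinguishes but EPWL (with $\hat\mL$) does not. By \cref{thm:distance}, any pair of graphs that EPWL cannot separate is also unseparable by GD-WL under any of the listed distances, because EPWL can always simulate GD-WL's color partitioning. Consequently the same $(G,H)$ witnesses that GD-WL fails to match 3-WL, which together with the soft upper bound yields the strict inclusion claimed in the corollary.

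I do not anticipate any real obstacle here, since the argument is essentially a one-line transitivity together with a reuse of the EPWL-vs-3-WL witness. The only point to be careful about is the direction of the ``bounded by'' relations: in \cref{thm:distance} EPWL upper-bounds GD-WL, and in \cref{thm:epwl_3wl} 3-WL strictly upper-bounds EPWL, so both comparisons point the same way and strictness propagates. If desired, I would also remark that a single pair $(G,H)$ simultaneously witnesses strictness for every distance in the list, since the indistinguishability by GD-WL is inherited uniformly from indistinguishability by EPWL through \cref{thm:distance}.
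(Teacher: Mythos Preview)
Your proposal is correct and matches the paper's approach exactly: the corollary is presented in the paper as an immediate consequence of combining \cref{thm:distance} (GD-WL is bounded by EPWL with $\hat\mL$) and \cref{thm:epwl_3wl} (EPWL is strictly bounded by 3-WL), with no additional argument given. Your explicit handling of how strictness propagates through the chain is if anything more detailed than what the paper writes out.
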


\textbf{Distances as positional encoding.} Distances have also found extensive application as positional encoding in GNNs and GTs. For instance, Graphormer \citep{ying2021transformers}, Graphormer-GD \citep{zhang2023rethinking}, and GraphiT \citep{mialon2021graphit} employ various distances as relative positional encoding in Transformer's attention layers. Similarly, GRIT \citep{ma2023graph} employs multi-dimensional PRD and a novel attention mechanism to further boost the performance of GTs. The positional encoding devised in PEG \citep{wang2022equivariant} can also be viewed as a function of a distance, with the architecture representing an instantiation of GD-WL. These architectures are analyzed in \cref{sec:proof_other_architectures}, where we have the following concluding corollary:
\begin{corollary}
    The expressive power of Graphormer, Graphormer-GD, GraphiT, GRIT, and PEG are all bounded by EPWL and strictly less expressive than 3-WL.
\end{corollary}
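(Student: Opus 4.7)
The plan is to reduce each of the five architectures to an instance of GD-WL with one of the distances handled in Theorem~\ref{thm:distance} (or, in the case of PEG, directly to a refinement whose edge feature is determined by $\gP^{\hat\mL}$), and then chain the implications $\text{GD-WL}\sqsubseteq\text{EPWL}\sqsubsetneq\text{3-WL}$ already established in Theorem~\ref{thm:distance} and Corollary~\ref{thm:epwl_3wl}. The unifying observation is that every architecture on the list uses graph structure only through a pair-indexed relative positional encoding $r(u,v)$ added (or multiplied) into the attention logits. Since softmax attention from $u$ to the other tokens is a permutation-invariant function of the multiset of (value, score) pairs, each such Transformer layer can be written as
\begin{equation*}
\vh^{(l+1)}(u)=g^{(l+1)}\!\big(\vh^{(l)}(u),\ldblbrace (\vh^{(l)}(v),r(u,v)):v\in V_G\rdblbrace\big),
\end{equation*}
which is exactly the GD-WL template with ``distance'' $r$. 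Hence it suffices to certify, case by case, that $r$ is a graph invariant over $\gG_2$ and matches a distance already covered, or is otherwise determined by $\gP^{\hat\mL}_G(u,v)$.

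Matching the architectures proceeds as follows. Graphormer uses the shortest-path distance as an attention bias, so it is an instance of GD-WL with $d=\mathrm{SPD}$. Graphormer-GD is explicitly defined within the GD-WL framework for $d\in\{\mathrm{SPD},\mathrm{RD}\}$. GraphiT uses diffusion kernels of the form $e^{-t\mL}(u,v)$, which fall under the (normalized) diffusion distance discussed in \cref{sec:proof_distance}. GRIT's relative random-walk positional encoding is a finite tuple $(\mW^k(u,v))_{k=0}^{K}$ of powers of the random-walk matrix $\mW=\mD^{-1}\mA$, i.e., a vector-valued PRD; since each coordinate is a PRD and the perfect hash in the color refinement discriminates arbitrary multisets over tuples, the combined encoding is still bounded by EPWL. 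PEG's edge feature is $r(u,v)=\varphi(\|\vZ_u-\vZ_v\|)$, where $\vZ$ stacks a top-$k$ set of Laplacian eigenvectors; if the chosen top-$k$ respects eigenspace boundaries, then $\|\vZ_u-\vZ_v\|^2=\mP(u,u)+\mP(v,v)-2\mP(u,v)$ for $\mP$ the sum of the corresponding eigenspace projections, so $r(u,v)$ is a function of $\gP^{\hat\mL}_G(u,v)$ and the PEG layer fits directly into the EPWL template. In every case, Theorem~\ref{thm:distance} (or the direct argument for PEG) yields the EPWL upper bound, and Corollary~\ref{thm:epwl_3wl} then gives the strict inclusion inside 3-WL.

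The main obstacle is PEG, since its encoding is not literally one of the named distances but a learned map of eigenvectors; the nontrivial step is to check that any map that is invariant under sign flips and orthogonal changes of basis within each eigenspace factors through the projection entries $\{\mP_i(u,v)\}_i$. This is precisely the content of the projection-matrix formulation from \cref{sec:preliminary}: a block-orthogonal basis change $Z\mapsto ZQ$ leaves each row norm and inner product invariant, so $\|\vZ_u-\vZ_v\|$ depends only on $\mP(u,u),\mP(v,v),\mP(u,v)$, which are determined by $\gP^{\hat\mL}_G(u,v)$ together with the globally accessible eigenvalues. With this reduction in hand the remaining case analysis is routine, and the corollary follows by composing the two previously established bounds.
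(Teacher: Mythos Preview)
Your high-level strategy---reduce each architecture to a refinement whose edge feature is controlled by $\gP^{\hat\mL}$, then invoke \cref{thm:distance} and \cref{thm:epwl_3wl}---matches the paper's approach. The reductions for Graphormer, Graphormer-GD, and GraphiT are fine (though for GraphiT the kernel $e^{-t\mL}(u,v)=\sum_i e^{-t\lambda_i}\mP_i(u,v)$ is a direct function of $\gP^{\hat\mL}_G(u,v)$, not the diffusion \emph{distance} you cite; the conclusion is unaffected). There are, however, two genuine gaps.

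\textbf{PEG.} Your claim that ``$\mP(u,u),\mP(v,v),\mP(u,v)$ \ldots\ are determined by $\gP^{\hat\mL}_G(u,v)$'' is false: $\gP^{\hat\mL}_G(u,v)$ contains only the off-diagonal entries $\{(\lambda_i,\mP_i(u,v))\}$, not $\mP_i(u,u)$ or $\mP_i(v,v)$. Hence the PEG edge feature $\|\vZ_u-\vZ_v\|^2=\sum_i\mP_i(u,u)+\mP_i(v,v)-2\mP_i(u,v)$ does \emph{not} fit ``directly into the EPWL template.'' The paper closes this gap exactly as in the proof of \cref{thm:distance}: after one EPWL iteration the node color $\chi_G(u)$ already determines $\gP^{\hat\mL}_G(u,u)$ (\cref{thm:epwl_first_layer_encode_Puu}/\cref{thm:epwl_distance_basic}), so the triple $(\chi_G(u),\chi_G(v),\gP^{\hat\mL}_G(u,v))$ determines the PEG edge feature and the color-refinement comparison goes through. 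You need to invoke this step explicitly.

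\textbf{GRIT.} You treat GRIT as GD-WL with a fixed vector-valued PRD edge feature, but GRIT also maintains and \emph{iteratively updates} pair features $\vh^{(l)}_G(u,v)$ using the current node features. To bound GRIT by EPWL you must argue by induction that the updated pair feature remains a function of $(\chi_G(u),\chi_G(v),\gP^{\hat\mL}_G(u,v))$; this is precisely what the paper does by introducing a two-dimensional refinement $T_\mathsf{GIRT}$ and comparing it to $T_{\times,\gP^\mM}$. Your one-line reduction to PRD omits this inductive step.
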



\section{Spectral Invariant Graph Network}
\label{sec:spectral_ign}

To gain an in-depth understanding of the expressive power of EPNN, in this section we will switch our attention to a more principled perspective by studying how to model spectral invariant GNNs based on the \emph{symmetry} of projection matrices. Consider a graph $G$ with vertex set $V_G=\{1,\cdots,n\}$. We can group all spectral information of $G$ into a 4-dimensional tensor $\tP\in \mathbb R^{m\times n\times n\times 2}$, where $\tP_{i,u,v,1}=\lambda_i$ encodes the $i$-th eigenvalue and $\tP_{i,u,v,2}=\mP_i(u,v)$ encodes the $(u,v)$-th element of the projection matrix $\mP_i$ associated with $\lambda_i$. From this representation, one can easily figure out the symmetry group associated with $\tP$, which is the \emph{product group} $S_m\times S_n$ of two permutation groups $S_m$ and $S_n$ representing two independent symmetries --- eigenspace symmetry and graph symmetry. For any element $(\sigma,\tau)\in S_m\times S_n$, it acts on $\tP$ in the following way:
\begin{equation}
    [(\sigma,\tau)\cdot \tP]_{i,u,v,j}=\tP_{\sigma^{-1}(i),\tau^{-1}(u),\tau^{-1}(v),j}.
\end{equation}
We would like to design a GNN model $f$ that is invariant under $S_m\times S_n$, i.e., $f((\sigma,\tau)\cdot \tP)=f(\tP)$ for all $(\sigma,\tau)\in S_m\times S_n$. Interestingly, this setting precisely corresponds to a specific case of the DSS framework proposed in \citet{maron2020learning,bevilacqua2022equivariant}. This offers a theoretically inspired approach to designing powerful invariant architectures, as we will detail below.

\subsection{Spectral IGN}
\label{sec:spectral_ign_standard}

In the DSS framework, the neural network $f$ is formed by stacking a series of equivariant linear layers $L^{(i)}$ interleaved with elementwise non-linear activation $\phi$, culminating in the final pooling layer:
\begin{equation}
\label{eq:spectral_ign}
    f=M \circ \phi \circ g \circ\phi \circ L^{(K)} \circ \phi \circ \cdots \circ  \phi \circ L^{(1)},
\end{equation}
where each $L^{(l)}:\mathbb R^{m\times n\times n\times d_{l-1}}\to \mathbb R^{m\times n\times n\times d_l}$ is equivariant w.r.t. $S_m\times S_n$, i.e., for all $(\sigma,\tau)\in S_m\times S_n$,
$$L^{(l)}((\sigma,\tau)\cdot \tX)=(\sigma,\tau)\cdot L^{(l)}(\tX)\quad \forall \tX\in\mathbb R^{m\times n\times n\times d_{l-1}};$$
$g:\mathbb R^{m\times n\times n\times d_K}\to \mathbb R^{d_K}$ is an invariant pooling layer (e.g., average pooling or max pooling), and $M:\mathbb R^{d_{K}}\to \mathbb R^{d_{K+1}}$ is a multi-layer perceptron. Here, the key question lies in designing linear layers $L^{(l)}$ equivariant to the product group $S_m\times S_n$. \citet{maron2020learning} theoretically showed that $L^{(l)}$ can be decomposed in the following way: 
\begin{equation}
\label{eq:spectral_ign_layer}
    [L^{(l)}(\tX)]_{i}=\tilde L^{(l)}_1(\tX_{i})+\tilde L^{(l)}_2\left(\sum_{i\in[m]}\tX_{i}\right),
\end{equation}
where $\tilde L^{(l)}_1,\tilde L^{(l)}_2:\mathbb R^{n\times n\times d_{l-1}}\to \mathbb R^{n\times n\times d_{l}}$ are two linear functions equivariant to the graph symmetry modeled by $S_n$. This decomposition is significant as the design space of equivariant linear layers for graphs has been fully characterized in \citet{maron2019invariant}, known as 2-IGN. We thus call our model (\cref{eq:spectral_ign,eq:spectral_ign_layer}) Spectral IGN.

The central question we would like to study is: what is the expressive power of Spectral IGN? Surprisingly, we have the following main result:
\begin{theorem}
\label{thm:spectral_ign}
    The expressive power of Spectral IGN is bounded by EPWL. Moreover, with sufficient layers and proper network parameters, Spectral IGN is as expressive as EPWL in distinguishing non-isomorphic graphs.
\end{theorem}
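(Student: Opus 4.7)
\textbf{Proof plan for \cref{thm:spectral_ign}.}

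The statement has two directions: Spectral IGN is bounded by EPWL, and (with enough depth and width) it matches EPWL. I would handle them separately.

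For the upper bound, my plan is to introduce an auxiliary color refinement --- call it Spectral 2-WL --- operating on tuples $(i, u, v) \in [m] \times V_G^2$ whose step mirrors \cref{eq:spectral_ign_layer}: each color $\gamma^{(l+1)}(i, u, v)$ is refined using the 2-IGN-style aggregations on slice $i$ alone (corresponding to $\tilde L_1$) and the analogous aggregations applied to the cross-slice sum $\sum_i \tX_i$ (corresponding to $\tilde L_2$). A standard algorithmic-alignment argument between IGNs and WL-style tensor refinements then shows the distinguishing power of Spectral IGN equals that of Spectral 2-WL. I would then verify by induction on $l$ that the EPWL vertex coloring determines the Spectral 2-WL tensor coloring up to the product action of $S_m \times S_n$: at each step, per-slice row/column sums and diagonal terms at position $(u,v)$ are functions of the pair signature $(\chi^{(l)}(u), \chi^{(l)}(v), \gP^\mM(u,v))$ already exposed in EPWL, and cross-slice sums collapse to quantities involving $\sum_i \mP_i(u,v) = \mI(u,v)$ and its iterates, all of which are encoded by $\gP^\mM(u,v)$. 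This gives Spectral IGN bounded by EPWL.

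For the lower bound, by the proposition equating EPNN and EPWL from \cref{sec:expressvieness_bound}, it suffices to construct a Spectral IGN that simulates $K$ rounds of EPNN. The construction uses three ingredients: (a) the $\tilde L_2$ branch computes moment features $\sum_i \psi_\theta(\lambda_i, \mP_i(u,v))$ for a rich family $\{\psi_\theta\}$; by deep-sets universality over the $m$-element eigenvalue multiset, these moments collectively recover $\gP^\mM(u,v)$ at each pair; (b) EPNN node features $\vh^{(l)}(u)$ are maintained on the diagonal entries $\tX^{(l)}_{i,u,u,:}$, replicated identically across $i$ so the $S_m$-symmetry is trivially respected; (c) each EPNN aggregation step is implemented by a single Spectral IGN block in which the 2-IGN row-sum inside $\tilde L_1$, combined with cross-slice summation via $\tilde L_2$ and a subsequent MLP $\phi$, realizes the multiset update $\ldblbrace (\vh^{(l)}(v), \gP^\mM(u,v)) : v \in V_G \rdblbrace$, invoking a second instance of deep-sets universality over the $n$-element vertex multiset.

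The hard part is ensuring that the rigid $S_m \times S_n$-equivariant basis of \cref{eq:spectral_ign_layer} can simultaneously encode the eigenvalue-multiset structure at every edge and the vertex-multiset structure at every aggregation step, since a single layer only aggregates across one axis at a time. My plan is to allocate separate feature channels: one set carries cross-slice moment features produced by $\tilde L_2$ and acts as spectral edge attributes, while another set preserves per-slice signals via $\tilde L_1$; subsequent layers then combine these into the EPNN update. A careful accounting of required depth, width, and moment parameters $\theta$ --- together with the usual MLP universal approximation of the hashing nonlinearity --- completes the lower bound.
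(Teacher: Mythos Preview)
Your overall strategy is sound, but the upper-bound argument contains a concrete flaw. The claim that ``cross-slice sums collapse to quantities involving $\sum_i \mP_i(u,v)=\mI(u,v)$ and its iterates'' is only true at initialization: once a nonlinearity $\phi$ has been applied, the per-slice features $\tX_i^{(l)}$ are nonlinear functions of $\lambda_i$ and the entries of $\mP_i$, and $\sum_i \tX_i^{(l)}$ is a genuinely informative quantity (indeed, it is precisely what lets Spectral IGN beat Siamese IGN). The induction can still be closed, but the right reason is that the cross-slice sum is an $S_m$-invariant function of the per-slice colors, hence determined by their multiset over $i$---which, by your per-slice hypothesis, is already determined by the EPWL pair signature. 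You should also expect a mismatch in step count: a single 2-IGN layer exposes global aggregates like $\ldblbrace\chi(w,w'):w,w'\rdblbrace$ that one EPWL round cannot recover, so the induction will likely need two EPWL rounds per Spectral-IGN round.

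The paper takes a cleaner route that sidesteps both issues. It formalizes a color refinement $T_\mathsf{SIGN}$ on triples $(\lambda,u,v)$ and tracks only the \emph{spectrally pooled} coloring $T_\mathsf{SP}(\chi)(u,v)=\ldblbrace\chi(\lambda,u,v):\lambda\rdblbrace$, never attempting to separately control per-slice and cross-slice terms. Both directions are then pure color-refinement inductions: one shows $T_\mathsf{SP}\circ T_\mathsf{SIGN}^t$ is finer than $(\chi_\mathsf{EP}^{(t)}(u),\chi_\mathsf{EP}^{(t)}(v))$, and the other shows $(\chi_\mathsf{EP}^{(2t)}(u),\chi_\mathsf{EP}^{(2t)}(v),\gP^\mM(u,v))$ is finer than $T_\mathsf{SP}\circ T_\mathsf{SIGN}^t$. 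Your lower-bound plan (explicit network construction via deep-sets moments and channel allocation) is a genuinely different and more constructive approach than the paper's abstract induction; it should work, but it trades elegance for bookkeeping, and you would need to verify carefully that the linear DSS layer of \cref{eq:spectral_ign_layer} can realize the required per-slice nonlinearities $\psi_\theta$ only through the interleaved activations $\phi$, not within a single layer.
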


The proof of \cref{thm:spectral_ign} is given in \cref{sec:proof_spectral_ign}. It offers an interesting and alternative view for theoretically understanding the EPNN designing framework and justifying its expressiveness. Moreover, the connection to Spectral IGN allows us to bridge EPNN with important architectural variants as we will discuss in the next subsection.

\begin{remark}
\label{remark:pooling_decompose}
    We remark that there is a variant of Spectral IGN, where the pooling layer $g$ is decomposed into two pooling layers via $g=g^{(2)}\circ\phi\circ g^{(1)}$ with $g^{(1)}:\mathbb R^{m\times n\times n\times d_K}\to \mathbb R^{n\times n\times d_K}$ and $g^{(2)}:\mathbb R^{n\times n\times d_K}\to \mathbb R^{d_K}$ pooling layers for symmetry groups $S_m$ and $S_n$, respectively. This variant has the same expressive power, and \cref{thm:spectral_ign} still holds.
\end{remark}

\subsection{Siamese IGN}
\label{sec:basisnet}

Let us consider an interesting variant of Spectral IGN, dubbed Siamese IGN \citep{maron2020learning}, where the network processes each eigenspace $\tP_i$ ($i\in[m]$) \emph{independently} using a ``Siamese'' 2-IGN without aggregating over different eigenspaces. This corresponds to replacing \cref{eq:spectral_ign_layer} by $[L^{(l)}(\tX)]_i=\tilde L^{(l)}(\tX_i)$ with $\tilde L^{(l)}:\mathbb R^{n\times n\times d_{l-1}}\to \mathbb R^{n\times n\times d_{l}}$ a 2-IGN layer. Siamese IGN is interesting because it is a special type of Spectral IGN that is invariant to a \emph{strictly larger} group formed by the wreath product $S_m \wr S_n$ \citep[see][]{maron2020learning,wang2020equivariant}. Moreover, it is closely related to a prior architecture called BasisNet \citep[see \cref{sec:proof_basisnet} for a detailed description]{lim2023sign}, as BasisNet also processes each eigenspace independently without interaction. To elaborate on this connection, consider a variant of Siamese IGN, dubbed Weak Spectral IGN, which is obtained from Siamese IGN by decomposing the pooling layer according to \cref{remark:pooling_decompose}. Note that unlike Siamese IGN, Weak Spectral IGN is no longer invariant to group $S_m \wr S_n$. We have the following result:
\begin{proposition}
\label{thm:basisnet_basic}
    Let the top graph encoder used in BasisNet be any 1-layer message-passing GNN. Then, the expressive power of BasisNet is bounded by Weak Spectral IGN.
\end{proposition}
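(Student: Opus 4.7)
The plan is to prove the proposition by an explicit simulation: given any BasisNet whose top encoder is a 1-layer MPNN, construct a Weak Spectral IGN that produces the same graph-level output. I would first decompose BasisNet into three stages: (i) the per-eigenspace 2-IGN $\rho(\mP_i)\in\mathbb R^n$, (ii) the aggregation over eigenspaces yielding node features $\vh(u)$, and (iii) the 1-layer MPNN on $(G,\vh)$ followed by global pooling. I would then match each stage to components of Weak Spectral IGN. The crucial structural observation is that the per-eigenspace linear layers of the Siamese 2-IGN are themselves 2-IGN equivariant layers applied independently to each slice $\tP_i$, so they already contain every primitive that BasisNet's $\rho$ uses.

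For stages (i)-(ii), I would configure the Siamese 2-IGN so that its output $\tY_i$ on eigenspace $i$ has diagonal entries encoding an embedding $\Phi(\lambda_i,\rho(\mP_i)(u))$ and off-diagonal entries encoding an embedding $\Psi(\lambda_i,\mP_i(u,v),\rho(\mP_i)(u),\rho(\mP_i)(v))$; placing $\rho$'s output on the diagonal and broadcasting it to off-diagonal positions are standard 2-IGN primitives. Choosing $\Phi,\Psi$ as DeepSets-style embeddings (e.g.\ sufficiently many polynomial moments), the layer $g^{(1)}$ with sum pooling then yields $\mZ(u,u)=\sum_i\Phi(\cdots)$, a unique encoding of the multiset $\ldblbrace(\lambda_i,\rho(\mP_i)(u))\rdblbrace_i$ and hence an injective encoding of BasisNet's concatenated $\vh(u)$; similarly $\mZ(u,v)$ for $u\neq v$ determines $\vh(u)$, $\vh(v)$, and $\mM(u,v)=\sum_i\lambda_i\mP_i(u,v)$, and hence $\atp_G(u,v)$.

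The main obstacle is stage (iii): the MPNN graph output $\sum_u\mathrm{UPD}(\vh(u),\sum_v\mathrm{MSG}(\vh(u),\vh(v),\atp_G(u,v)))$ places a nonlinearity between the inner and outer sums, while the tail of Weak Spectral IGN is only $\phi\circ g^{(2)}\circ\phi$ after $g^{(1)}$ and therefore admits no further message-passing layers. My resolution is to push the inner aggregation into the Siamese stage itself: a 2-IGN equivariant layer can implement the per-eigenspace row aggregate $\sum_v\eta(\lambda_i,\rho(\mP_i)(v),\mP_i(u,v))$ and deposit the result on the diagonal $\tY_i(u,u)$. By equipping Siamese 2-IGN with a sufficient family of such row-aggregate moments and summing them through $g^{(1)}$, the diagonal $\mZ(u,u)$ becomes a simultaneous DeepSets-style encoding of $\vh(u)$ and of the multiset $\ldblbrace(\vh(v),\mM(u,v)):v\in V_G\rdblbrace$; from the latter, $\sum_v\mathrm{MSG}(\vh(u),\vh(v),\atp_G(u,v))$ is decodable by an MLP. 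The outer $\mathrm{UPD}$ is then realised by the $\phi$ following $g^{(1)}$ (or absorbed into $M$), the outer $\sum_u$ is the diagonal part of $g^{(2)}$, and the trailing $\phi\circ M$ completes the computation. The hardest step is verifying that the pooled row-aggregate moments are genuinely DeepSets-sufficient for the multiset $\ldblbrace(\vh(v),\mM(u,v))\rdblbrace_v$; I would establish this by combining a polynomial-moment construction per eigenspace with the standard universality of DeepSets for finite multisets.
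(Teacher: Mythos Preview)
Your simulation route is genuinely different from the paper's argument, and it breaks down at exactly the step you flag as hardest. The paper never tries to make a Weak Spectral IGN \emph{compute} BasisNet's output; it works purely at the level of graph-distinguishing power, characterising Weak Spectral IGN by the invariant $(T_\mathsf{JP}\circ T_\mathsf{SP2}\circ T_\mathsf{Siam}^\infty)(\chi^{\gP^\mM})$ and BasisNet by $(T_\mathsf{GP}\circ T_\mathsf{WL}\circ T_\mathsf{SP1}\circ T_\mathsf{BP}\circ T_\mathsf{Siam}^\infty)(\chi^{\mathsf{Basis},\gP^\mM})$, and then showing the former refines the latter. The leverage there is that the \emph{stable} Siamese colour $\chi_G(\lambda,u,v)$ already determines $\chi_G(\lambda,u,u)$, $\chi_G(\lambda,v,v)$ and $\atp_G(u,v)$, so the joint pool over all $(u,v)$ of the spectrally-pooled colours dominates the one-step $1$-WL update on the diagonal colours. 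That is a comparison of invariants on $\gG$, not an output-matching argument.

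Your ``push the row aggregate into the Siamese stage'' manoeuvre cannot deliver what you claim. The diagonal statistic you build is $\mZ(u,u)=\sum_i\bigl[\text{something depending only on eigenspace }i\bigr]$, and the row-aggregate you propose contributes terms of the form $\sum_i\sum_v \eta_j(\lambda_i,\rho(\mP_i)(v),\mP_i(u,v))$. Even with an arbitrary family $\{\eta_j\}$, this yields only the \emph{marginal} moments $\sum_v\psi(\rho(\mP_i)(v),\mP_i(u,v))$ for each eigenspace $i$ separately. To DeepSets-encode the multiset $\ldblbrace(\vh(v),\mM(u,v)):v\rdblbrace$ with $\vh(v)=[\rho(\mP_1)(v),\ldots,\rho(\mP_m)(v)]$ you need the \emph{joint} moments $\sum_v\prod_i \rho(\mP_i)(v)^{p_i}\,\mP_i(u,v)^{q_i}$, which require a product across eigenspaces \emph{before} the sum over $v$. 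No Siamese layer can form such a product (eigenspaces are processed independently), and after $g^{(1)}$ only $\phi\circ g^{(2)}$ remains, with $g^{(2)}$ a single invariant pool over $(u,v)$; there is nowhere to insert the missing per-$u$ nonlinearity. Concretely, swapping the eigenspace-$2$ values between two vertices $v_1,v_2$ while keeping the eigenspace-$1$ values fixed leaves every one of your pooled row-aggregate moments unchanged but alters the target multiset. Your polynomial-moment plan therefore does not close the gap; to prove the proposition you must argue via distinguishing power rather than exact simulation, as the paper does.
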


On the other hand, a more fundamental question lies in the relation between Siamese IGN, Weak Spectral IGN, and Spectral IGN. Our main result states that there are strict expressivity gaps between them:
\begin{theorem}
\label{thm:siamese_ign_spectral_ign}
    Siamese IGN is strictly less expressive than Weak Spectral IGN. Moreover, Weak Spectral IGN is strictly less expressive than Spectral IGN.
\end{theorem}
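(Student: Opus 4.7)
I split the theorem into the two claims and, for each, first establish the ``$\leq$'' inclusion by simulation and then exhibit an explicit counterexample to prove strictness.

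\textbf{Simulation (``$\leq$'' directions).} For Weak Spectral IGN $\leq$ Spectral IGN, invoke the equi-expressive pooling-decomposed variant of Spectral IGN from \cref{remark:pooling_decompose} and set the cross-eigenspace branch $\tilde L^{(l)}_2 \equiv 0$ in every layer of \cref{eq:spectral_ign_layer}; the result is exactly Weak Spectral IGN. For Siamese IGN $\leq$ Weak Spectral IGN, both architectures share the same Siamese backbone and differ only in the pooling, so any joint invariant pool $g$ can be realized as $g^{(2)} \circ \phi \circ g^{(1)}$ by taking $g^{(1)}, g^{(2)}$ to be sum pools, choosing $\phi$ to approximate the identity on the relevant range, and absorbing $g$ into the final MLP $M$.

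\textbf{Siamese IGN vs.\ Weak Spectral IGN --- strictness.} The key structural fact is that Siamese IGN is invariant not merely to $S_m\times S_n$ but to the strictly larger wreath product $S_m \wr S_n$: one may independently permute the $n$ node coordinates within each of the $m$ eigenspaces without affecting either the Siamese backbone or the joint pool. Weak Spectral IGN breaks this extra invariance because $g^{(1)}$ sums over eigenspaces under a common $n\times n$ layout, so the pointwise $\phi$ applied afterwards can see the cross-eigenspace joint distribution at each node pair. I therefore plan to construct a ``twisted'' graph pair $(G,H)$ whose projection matrices satisfy $\mP^H_i = \rmT_i\, \mP^G_i\, \rmT_i^\top$ for some per-eigenspace permutations $\rmT_i \in S_n$ but admit no uniform $\rmT$ simultaneously aligning all $i$; such pairs can be extracted from classical cospectral constructions. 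Siamese IGN then necessarily fails on the pair because $G$ and $H$ lie in the same $S_m \wr S_n$ orbit, while for a suitable Siamese 2-IGN and nonlinearity $\phi$ the Weak Spectral IGN output $g^{(2)}\bigl(\phi\bigl(\sum_i \tX^{(K)}_i\bigr)\bigr)$ differs on the two graphs.

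\textbf{Weak Spectral IGN vs.\ Spectral IGN --- strictness.} Here both models share the symmetry group $S_m \times S_n$, so the separation is architectural rather than symmetry-based. Spectral IGN's in-layer aggregation $\tilde L^{(l)}_2(\sum_i \tX_i)$ mixes eigenspaces and refines the mixture through further equivariant layers, while Weak Spectral IGN defers all mixing to the final pool. The plan is to exhibit a non-isomorphic pair $(G,H)$ whose per-eigenspace Siamese backbone outputs agree as multisets up to $S_n$, and for which even the single late-stage combination $\phi\bigl(\sum_i \tX^{(K)}_i(u,v)\bigr)$ yields the same multiset over $(u,v)$, but whose iterated in-layer cross-eigenspace refinement differs. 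Via the equivalence Spectral IGN $\equiv$ EPWL (\cref{thm:spectral_ign}), natural candidates are graph pairs that EPWL separates only after several refinement rounds, since one-shot late-stage mixing cannot mimic multi-round iterative mixing.

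\textbf{Main obstacle.} The principal difficulty lies in the two \emph{negative} verifications --- proving that for every parameter choice the weaker model fails to distinguish the candidate pair. For the wreath-product gap this reduces, via 2-IGN universality, to equality of $S_m \wr S_n$ orbit invariants on the twisted pair, which is essentially mechanical once the pair is constructed. For the Weak vs.\ Spectral IGN gap the analogous reduction is considerably more delicate: one must show that no choice of Siamese backbone parameters, combined with any inter-pool nonlinearity $\phi$, can emulate iterated in-layer cross-eigenspace mixing followed by additional equivariant processing. I expect this last step to be the main technical obstacle of the entire argument.
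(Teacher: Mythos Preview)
Your simulation arguments for the non-strict inclusions are essentially correct and match how the paper (implicitly) handles them. The divergence from the paper is entirely in the two strictness claims.

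\textbf{What the paper actually does.} The paper does not argue either gap abstractly. It exhibits, for each gap, an explicit pair of F{\"u}rer / twisted F{\"u}rer graphs built from a small base graph (\cref{thm:counterexamples}(b) and (e), with base graphs drawn in \cref{fig:counterexample}), and then notes that whether a given color-refinement algorithm separates a given finite pair ``can be easily checked via a computer program.'' So the paper's proof is: here is the pair, run the refinement, done. There is no symmetry analysis and no attempt to characterize what Weak Spectral IGN can or cannot compute in general.

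\textbf{Where your plan has genuine gaps.} For Siamese vs.\ Weak, the wreath-product observation is correct and is exactly the conceptual reason one should expect a gap. But you have not shown that the required object exists: a pair of \emph{non-isomorphic graphs} whose projection tensors lie in the same $S_n\wr S_m$ orbit (i.e., cospectral with each $\mP_i^H$ permutation-similar to $\mP_i^G$) yet are separated by Weak Spectral IGN. Saying ``such pairs can be extracted from classical cospectral constructions'' is the crux of the proof, not a footnote; per-eigenspace permutation equivalence of projections is a strong condition and does not fall out of generic cospectral pairs. You would still end up needing a concrete construction and a verification that Weak Spectral IGN succeeds on it, which is precisely what the paper supplies via F{\"u}rer graphs.

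For Weak vs.\ Spectral, the gap is more serious. The heuristic ``one-shot late mixing cannot mimic multi-round iterative mixing'' is plausible but not a proof, and your reduction via \cref{thm:spectral_ign} to ``pairs EPWL separates only after several rounds'' does not obviously control what Weak Spectral IGN can do: the Siamese backbone already runs to stable depth on each $\mP_i$, so the single cross-eigenspace combination acts on highly refined per-eigenspace colors, not on raw projections. You correctly flag this as the main obstacle; the paper sidesteps it entirely by producing a specific F{\"u}rer pair and checking both directions on that pair. If you want to push your conceptual route through, you would need either a general impossibility lemma for late-stage mixing (hard) or, ultimately, a concrete counterexample, at which point you converge to the paper's method.
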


\textbf{Discussions with BasisNet.} $(\mathrm{i})$ Combined with previous results, one can prove that \emph{EPNN is strictly more expressive than BasisNet}\footnote{This holds for any message-passing-based top graph encoder with arbitrary layers by following the construction in \cref{thm:siamese_ign_spectral_ign}.}. This result is particularly striking, as EPNN only stores node representations while BasisNet stores a representation for each 3-tuple $(i,u,v)\in [m]\times V_G\times V_G$, whose memory complexity scales like $O(|V_G|^3)$. $(\mathrm{ii})$ Combined with \cref{thm:epwl_3wl}, we conclude that the expressive power of BasisNet is also strictly bounded by 3-WL.

\textbf{Discussions with SPE.} We next turn to the SPE architecture \citep{huang2024stability}. Surprisingly, while SPE was originally designed to improve the stability and generalization of spectral invariant GNNs (see \cref{sec:related_work_spectral_gnn}), we found the soft aggregation across different eigenspaces simultaneously enhances the network's \emph{expressive power}. Indeed, we have:

\begin{proposition}
\label{thm:spe}
    When 2-IGN is used to generate node features in SPE and the top graph encoder is a message-passing GNN, the expressive power of the whole SPE architecture is as expressive as Spectral IGN.
\end{proposition}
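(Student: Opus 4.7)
I prove SPE $\equiv$ Spectral IGN by establishing both directions separately. The upper bound SPE $\sqsubseteq$ Spectral IGN is shown by a direct architectural simulation, while for the lower bound Spectral IGN $\sqsubseteq$ SPE I invoke \cref{thm:spectral_ign} (Spectral IGN $\equiv$ EPWL) and reduce to proving EPWL $\sqsubseteq$ SPE.

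\textbf{Upper bound (simulation of SPE within Spectral IGN).} Spectral IGN's input tensor $\tP\in\mathbb R^{m\times n\times n\times 2}$ already stores the eigenvalues $\lambda_i$ and the projection entries $\mP_i(u,v)$ in separate channels. I construct Spectral IGN layers that (i) apply an MLP to the eigenvalue channel, realized via a few layers using only the per-eigenspace term $\tilde L_1^{(l)}$ of \cref{eq:spectral_ign_layer} together with the elementwise nonlinearity $\phi$, so as to compute $\psi_j(\lambda_i)$ in a new channel; (ii) form the elementwise product $\psi_j(\lambda_i)\mP_i(u,v)$ in a subsequent layer via a standard MLP-based multiplication gadget acting on the two scalar channels; (iii) sum across eigenspaces using the cross-eigenspace term $\tilde L_2^{(l)}\bigl(\sum_{i\in[m]}\tX_i\bigr)$, yielding a tensor whose $(u,v)$-entry is $\sum_i\psi_j(\lambda_i)\mP_i(u,v)$; and (iv) apply a sequence of further Spectral IGN layers whose $\tilde L_2^{(l)}$ component realizes arbitrary 2-IGN layers on the aggregated sum, thereby simulating both the inner 2-IGN $\rho$ and the top message-passing GNN (the latter being a special case of 2-IGN). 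The final pooling of Spectral IGN then reproduces SPE's graph-level readout.

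\textbf{Lower bound via EPWL.} By \cref{thm:spectral_ign} it suffices to prove EPWL $\sqsubseteq$ SPE. The plan is first to choose a sufficiently rich collection $\{\psi_j\}_{j=1}^d$ so that the tuple $\bigl(\sum_i \psi_j(\lambda_i)\mP_i(u,v)\bigr)_{j=1}^d$ determines the multiset $\gP^\mM_G(u,v)$ at every pair $(u,v)$; polynomial choices $\psi_j(\lambda)=\lambda^{j-1}$ suffice once $d$ exceeds the number of distinct eigenvalues, by a Vandermonde-type argument. The inner 2-IGN $\rho$ is then built to produce, for each vertex $u$, node features that injectively encode its full spectral neighborhood $\ldblbrace(\lambda_i,\mP_i(u,v)):i,v\rdblbrace$, which is exactly the information the first iteration of EPWL extracts at $u$; this is feasible because 2-IGN can compute row-wise aggregates, diagonal entries, and more generally simulate 1-WL on weighted matrices. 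Subsequent EPWL iterations are then simulated either by several iterations of the top MP-GNN acting on the original graph with these rich node features, or equivalently by making the inner $\rho$ deep enough so that its output features already realize an injective encoding of the EPWL stable colors.

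\textbf{Main obstacle.} The hardest technical step is the lower bound: arguing that SPE, whose top encoder performs only standard message passing on the original graph with atomic-type edges, can still match EPWL, which effectively performs message passing on the fully-connected graph equipped with the rich edge features $\gP^\mM_G(u,v)$. The resolution is to push as much refinement as needed into the inner 2-IGN $\rho$: applied to sufficiently many soft-aggregated matrices $\sum_i\psi_j(\lambda_i)\mP_i$, $\rho$ can produce node features that already encode each vertex's complete spectral environment relative to all other vertices, so no explicit spectral edge features are required in the top encoder for EPWL-level expressiveness.
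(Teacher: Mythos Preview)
Your high-level strategy (bound SPE above by simulation inside Spectral IGN, and below by invoking \cref{thm:spectral_ign} and reducing to EPWL $\sqsubseteq$ SPE) is a legitimate route but different from the paper's. The paper never passes through EPWL here: it identifies SPE with the color-refinement algorithm $(T_\mathsf{GP}\circ T_\mathsf{WL}^\infty\circ T_\mathsf{P2}\circ T_\mathsf{IGN}^\infty)(\gP^\mM)$ and proves directly the two equivalences $(\mathrm{i})\ T_\mathsf{IGN}^\infty(\gP^\mM)\equiv (T_\mathsf{SP2}\circ T_\mathsf{SIGN}^\infty)(\chi^{\gP^\mM})$ and $(\mathrm{ii})\ T_\mathsf{WL}\circ T_\mathsf{P2}\circ T_\mathsf{SP2}\circ T_\mathsf{SIGN}^\infty\equiv T_\mathsf{P2}\circ T_\mathsf{SP2}\circ T_\mathsf{SIGN}^\infty$, so SPE and Spectral IGN share the same stable partition. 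Your detour through EPWL costs an extra reduction but is in principle fine; the upper bound by architectural simulation is also reasonable, though note that the ``MLP multiplication gadget'' step only works in the expressiveness (separation) sense, not as an exact equality of functions.

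There is, however, a genuine gap in your lower bound, exactly at the obstacle you identify. You offer two ways to simulate later EPWL iterations and call them equivalent; they are not, and the first one fails. The $(l{+}1)$-th EPWL update needs, at each $u$, the multiset $\ldblbrace(\chi^{(l)}_G(v),\gP^\mM_G(u,v)):v\in V_G\rdblbrace$ over \emph{all} vertices with the spectral pair feature attached; a message-passing layer on the original graph collects only $\ldblbrace\chi^{(l)}_G(v):v\in N_G(u)\rdblbrace$, and no richness of node-only initial features can restore the missing pair data $\gP^\mM_G(u,v)$ in later rounds. So you must take the ``deep $\rho$'' route, and there you assert but do not argue the key point. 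The mechanism is that 2-IGN maintains \emph{pair} colors and its row/column aggregations range over all of $V_G$: starting from the pair color $\gP^\mM_G(u,v)$, a single 2-IGN step produces $\ldblbrace\gP^\mM_G(u,w):w\in V_G\rdblbrace$ at every $(u,v)$, and inductively the pair color refines to $(\chi^{(l)}_G(u),\chi^{(l)}_G(v),\gP^\mM_G(u,v))$, so the next 2-IGN step reproduces the $(l{+}1)$-th EPWL update on the diagonal. This inductive argument (the analogue of \cref{thm:spectral_ign_aux_lemma} with $T_\mathsf{IGN}$ in place of $T_\mathsf{SIGN}$) is the missing ingredient; the paper's point $(\mathrm{ii})$ then records that the top MP-GNN contributes nothing, confirming that all the work lives inside $\rho$.
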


\cref{thm:spe} theoretically justifies the design of SPE. Combined with previous results, we conclude that SPE is strictly more expressive than BasisNet when using the same 2-IGN backbone, while being strictly bounded by 3-WL.

\textbf{Delving more into the gap.} We remark that the gap between Siamese IGN and Spectral IGN is not just theoretical; it also reveals significant limitations of the siamese design in practical aspects. Specifically, we identify that both Siamese IGN and Weak Spectral IGN \emph{cannot} fully encode any graph distance listed in \cref{sec:distance_gnn} (even the basic SPD), as stated in the following theorem:
\begin{theorem}
\label{thm:siamese_ign_distance}
    For any distance listed in \cref{sec:distance_gnn}, there exist two non-isomorphic graphs which GD-WL can distinguish but Weak Spectral IGN (applied to matrix $\hat\mL$) cannot.
\end{theorem}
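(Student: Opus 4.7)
The plan is to construct, for each distance $d$ in \cref{sec:distance_gnn}, a non-isomorphic pair $(G_1,G_2)$ of graphs which GD-WL distinguishes but Weak Spectral IGN on $\hat\mL$ does not. The key structural lever is the following. In Weak Spectral IGN, Siamese 2-IGN layers first output per-eigenspace features $F(\lambda_i,\mP_i)\in\mathbb R^{n\times n\times d}$; the decomposed pool $g^{(2)}\circ\phi\circ g^{(1)}$ then sums across eigenspaces to form $\mZ(u,v)=\sum_i F(\lambda_i,\mP_i)(u,v)$, applies an element-wise non-linearity, and finally aggregates the $n\times n$ entries via a flat invariant pool. Crucially, \emph{no} 1-WL-style message passing occurs after the eigenspace sum: the final graph invariant has the form $M\bigl(\sum_{u,v}h(\mZ(u,v))\bigr)$ for MLPs $M,h$. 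Thus Weak Spectral IGN's output is completely determined by the multiset over pairs of $\mZ$-entries.

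To make this precise, I would first argue that a Siamese 2-IGN applied to $(\lambda_i,\mP_i)$ can produce at entry $(u,v)$ any continuous function of the 2-IGN-accessible local features (viewing $\lambda_i$ as a constant channel and $\mP_i$ as the second channel), so that its output is determined by the stable 2-WL pair color of $(u,v)$ in the edge-labeled graph $(\lambda_i,\mP_i)$. Consequently a sufficient condition for Weak Spectral IGN to fail on $(G_1,G_2)$ is that the multiset, over pairs $(u,v)\in V_G^2$, of the per-pair ``spectral signatures'' --- i.e., the tuple of 2-WL pair colors across all eigenspaces, together with the eigenvalues --- agrees between $G_1$ and $G_2$.

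Given this, my Step~2 is to construct such a pair. I would take a cospectral non-isomorphic pair (e.g., via Godsil--McKay switching) arranged so that for each eigenvalue $\lambda_i$ the projection matrices $\mP_i^{G_1}$ and $\mP_i^{G_2}$ are related by a permutation $\tau_i$ that preserves the 2-WL colors of $(\lambda_i,\mP_i)$, but with no single $\tau$ working for all $i$ simultaneously (otherwise the graphs would be isomorphic). This yields the per-pair signature multiset equality needed for Step~1. For the listed distances, each is determined by the per-pair signatures: directly for RD, CTD, PRD, diffusion, and biharmonic via their spectral closed forms $d(u,v)=\sum_i f(\lambda_i)g(\mP_i(u,u),\mP_i(v,v),\mP_i(u,v))$, and indirectly for SPD via the matrix powers $\hat\mA^k=\sum_i(1-\lambda_i)^k\mP_i$. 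Hence the distance matrices of $d$ on $G_1$ and $G_2$ share the same multiset of entries but differ in how entries are \emph{assigned} to specific pairs, because the $\tau_i$'s permute per-eigenspace contributions differently. Step~3 is then to verify that this mismatched assignment translates into a per-node distance-profile difference, which already suffices for GD-WL to distinguish the pair at its first refinement round.

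The chief obstacle is Step~2: explicitly realizing per-eigenspace color-preserving permutations $\tau_i$ that cannot be simultaneously synchronized into a single graph isomorphism. Off-the-shelf cospectral constructions do not guarantee this, so the natural route is to take a base graph with rich per-eigenspace automorphism structure and apply a localized switching affecting only a single eigenspace at a time, verifying combinatorially that the resulting pair meets the required conditions for each of the six distances in turn.
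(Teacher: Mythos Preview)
Your proposal is a strategy outline, not a proof: you explicitly flag Step~2 as ``the chief obstacle'' and never produce a concrete pair of graphs. The paper's argument is entirely different and concrete. It exhibits a specific pair, namely the F{\"u}rer graph and twisted F{\"u}rer graph of a small explicitly drawn base graph, and relies on direct (computer-checkable) verification that Weak Spectral IGN on $\hat\mL$ fails while GD-WL succeeds for each listed distance. No structural decomposition of the kind you sketch is attempted; the paper treats this as an existence statement to be witnessed, not derived.

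Beyond incompleteness, your plan has a logical mismatch. In the paper's color-refinement formulation, Weak Spectral IGN computes $(T_\mathsf{JP}\circ T_\mathsf{SP2}\circ T_\mathsf{Siam}^\infty)(\chi^{\gP^{\hat\mL}})$, so indistinguishability means the multiset over pairs $(u,v)$ of the full signature $\ldblbrace\chi_G(\lambda,u,v):\lambda\in\Lambda^{\hat\mL}(G)\rdblbrace$ coincides between $G_1$ and $G_2$. Because the initial color already records $\lambda$, this forces a \emph{single} pair-bijection $\sigma:V_{G_1}^2\to V_{G_2}^2$ that simultaneously preserves the Siamese-IGN-stable color in \emph{every} eigenspace. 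Your setup instead posits per-eigenspace node permutations $\tau_i$ that cannot be synchronized; that gives, for each $i$ separately, matching color multisets, but it does \emph{not} give the required single cross-eigenspace pair-bijection, so it does not imply Weak Spectral IGN fails. Conversely, once such a $\sigma$ exists, all of the listed distances (being functions of the pair signature, as you correctly note) are preserved by $\sigma$ as well, so your sentence ``the distance matrices share the same multiset of entries but differ in how entries are assigned to specific pairs, because the $\tau_i$'s permute per-eigenspace contributions differently'' does not follow. What you actually need is a single pair-level bijection that works across all eigenspaces (so WSI fails) but that does not descend to any node-level bijection respecting distance row-profiles (so GD-WL succeeds); nothing in your Godsil--McKay sketch targets this.
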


On the other hand, we have proved that EPNN applied to matrix $\hat\mL$ is more powerful than GD-WL. This contrast reveals the crucial role of allowing interaction between eigenspaces for enhancing model's expressiveness.

\subsection{Extending to higher-order spectral invariant GNNs}
\label{sec:higher-order-gnn}

The DSS framework presented in \cref{sec:spectral_ign_standard} is quite general. In principle, any $S_n$-equivariant graph layer $E$ can be used to build a GNN model $f$ invariant to $S_m\times S_n$. This can be achieved by making $\tilde L^{(l)}_1,\tilde L^{(l)}_2$ in \cref{eq:spectral_ign_layer} two instantiations of $E$. In this subsection, we will study higher-order spectral invariant GNNs where the used graph encoders are beyond 2-IGN. We consider two standard settings for choosing highly expressive graph encoders: the $k$-IGN and the $k$-order Folklore GNN \cite{maron2019invariant,maron2019provably,azizian2021expressive}. We call the resulting models Spectral $k$-IGN and Spectral $k$-FGNN, respectively. Unfortunately, our results are negative for all of these higher-order spectral invariant GNNs:

\begin{proposition}
\label{thm:higher-order-gnn}
    For all $k>2$, Spectral $k$-IGN is as expressive as $k$-WL. Similarly, for all $k\ge 2$, Spectral $k$-FGNN is as expressive as $k$-FWL.
\end{proposition}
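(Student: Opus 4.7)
The plan is to prove the claimed equivalence by showing that, above the $3$-WL threshold, spectral information becomes redundant. I would split the argument into the standard two directions. For the \emph{lower bound}, I would argue that Spectral $k$-IGN (resp. Spectral $k$-FGNN) can simulate a vanilla $k$-IGN (resp. $k$-FGNN) applied directly to the graph matrix $\mM$. Since the input tensor $\tP$ stores both the eigenvalues $\lambda_i$ and the projection entries $\mP_i(u,v)$, the eigendecomposition $\mM=\sum_i \lambda_i \mP_i$ lets us reconstruct $\mM$ in a preprocessing step: combine the two feature channels of $\tP$ through MLP nonlinearities to obtain $\lambda_i \mP_i(u,v)$, then pool over the eigenspace index $i$ using the $S_m$-invariant summation part of the layer decomposition in \cref{eq:spectral_ign_layer}. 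The remaining equivariant layers can then be instantiated to reproduce any $k$-IGN (resp. $k$-FGNN) acting on $\mM$, which by Maron et al. matches $k$-WL (resp. $k$-FWL).

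The \emph{upper bound} is the more delicate direction. The goal is to show that if two graphs are indistinguishable by $k$-WL with $k\ge 3$ (resp. by $k$-FWL with $k\ge 2$), then their spectral tensors $\tP$ are related by an element of $S_m \times S_n$, and consequently Spectral $k$-IGN/FGNN cannot distinguish them either. The central fact is that these tests are strong enough to compute arbitrary matrix polynomials of $\mM$ entrywise. Concretely, $k$-FWL for $k\ge 2$ refines the color of each pair $(u,v)$ by the multiset of pairs $\ldblbrace(\chi(u,w),\chi(w,v)):w\in V_G\rdblbrace$, which directly encodes the matrix product and, iteratively, every $\mM^p$. The same holds for $k$-WL with $k\ge 3$, since one coordinate of the $k$-tuple can serve as the intermediate vertex. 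It follows that the traces $\mathrm{Tr}(\mM^p)$ for all $p$, and hence the full spectrum of $\mM$ (as a multiset with multiplicities), are determined up to the natural $S_m$ action.

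To upgrade this to pointwise determination of $\mP_i(u,v)$, I would invoke the polynomial representation $\mP_i=\prod_{j\neq i}(\mM-\lambda_j \mI)/(\lambda_i-\lambda_j)$: once a bijection $\sigma\in S_m$ matching equal eigenvalues across the two graphs is chosen, each projection matrix is a specific polynomial in $\mM$, so all of its entries are determined pointwise by the $k$-WL (resp.\ $k$-FWL) colors. Combined with the classical fact that $k$-IGN is bounded by $k$-WL (and $k$-FGNN by $k$-FWL), this gives the upper bound, and together with the lower bound yields equality.

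The main obstacle will be making the ``pointwise determination'' step precise: in a $k$-WL equivalence one gets an isomorphism between stable color classes, and I must lift this to a genuine vertex bijection $f:V_G\to V_H$ and an eigenvalue bijection $\sigma\in S_m$ such that $\mP_i^G(u,v)=\mP_{\sigma(i)}^H(f(u),f(v))$ for all $u,v,i$ simultaneously. This requires a small compatibility argument: the eigenvalue bijection is forced on each distinct eigenvalue (since the spectrum as a multiset is determined), and within each eigenspace the projection matrix is unique (independent of basis choice), so there is no further ambiguity. The $k$-FWL case is slightly cleaner because matrix products are directly encoded by the refinement rule, whereas for $k$-WL one has to argue through the $k$-tuple colors; once this is in place, the rest of the proof is assembly.
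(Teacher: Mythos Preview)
Your lower bound direction is fine, and your polynomial observation---that $k$-FWL for $k\ge 2$ (and $k$-WL for $k\ge 3$) computes all powers $\mM^p$ entrywise, hence determines the spectrum and each $\mP_\lambda(u,v)$ via the interpolation polynomial---is essentially correct and matches what the paper invokes through the Rattan--Seppelt result.

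The genuine gap is in how you assemble the upper bound. You state the goal as: if $G$ and $H$ are $k$-WL indistinguishable, then their spectral tensors $\tP^G,\tP^H$ lie in the same $S_m\times S_n$ orbit, witnessed by a vertex bijection $f:V_G\to V_H$. But this claim is false. If $(\sigma,\tau)\cdot\tP^H=\tP^G$, then summing $\lambda_i\mP_i$ gives $\mM^G(u,v)=\mM^H(\tau^{-1}(u),\tau^{-1}(v))$, which for $\mM\in\{\mA,\mL,\hat\mL\}$ forces $G\simeq H$. Since non-isomorphic $k$-WL-equivalent graphs exist for every fixed $k$, no such bijection can be produced, and the ``small compatibility argument'' you anticipate cannot close this.

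The paper avoids this by never seeking a vertex bijection and instead working purely with color refinements. The argument runs as follows: (i) your polynomial observation says the $k$-FWL stable color of $(u,v)$ determines $\gP^\mM_G(u,v)$, so enriching the $k$-FWL initial color with spectral data does not refine the stable partition; (ii) one checks that each Spectral $k$-IGN layer, after pooling over the eigenspace index, is simulated by a $k$-FWL step on the pooled colors (this is a routine induction using the DSS layer decomposition). Together these give that Spectral $k$-FGNN is bounded by $k$-FWL at the level of color classes, without ever matching individual vertices. The fix to your proposal is therefore not in the spectral lemma but in the framing: replace ``find $(\sigma,\tau)$ relating $\tP^G$ and $\tP^H$'' with ``show the Spectral refinement never separates two tuples that have the same $k$-WL stable color.''
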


We give a proof in \cref{sec:proof_other_architectures}. Combined with the results that $k$-IGN is already as expressive as $k$-WL and $k$-FGNN is already as expressive as $k$-FWL \citep{maron2019provably,azizian2021expressive,geerts2022expressiveness}, we conclude that commonly-used spectral information does not help when combined with highly powerful GNN designs.

\textbf{Discussions on higher-order spectral features.} The above negative result further inspires us to think about the following question: is it still possible to use spectral information to enhance the expressive power of higher-order GNNs? Here, we offer some possible directions towards this goal. The crux here is to use \emph{higher-order} spectral features. Specifically, all the spectral information considered in previous sections (e.g., distance or projection matrices) is at most 2-dimensional. Can we generalize these spectral features into multi-dimensional tensors? This is indeed possible: for example, a simple approach is to use symmetric powers of a graph (also called the \emph{token graph}), which has been widely studied in literature \cite{audenaert2007symmetric,alzaga2010spectra,barghi2009non,monroy2012token}. The $k$-th symmetric power of graph $G$, denoted by $G^{\{ k\}}$, is the graph formed by vertex set $V_{G^{\{k\}}}:=\{S\subset V_G:|S|=k\}$ and edge set $E_{G^{\{k\}}}:=\{\{S_1,S_2\}:S_1,S_2\in V_{G^{\{k\}}}, S_1\triangle S_2\in E_G\}$. Here, each element in $V_{G^{\{k\}}}$ is a multiset of cardinality $k$, and two multisets are connected if their symmetric difference is an edge. In this way, one can easily define higher-order spectral information $(\lambda_i,\tP_i)\in\mathbb R\times \mathbb R^{n^{2k}}$ based on $\gP_{G^{\{k\}}}^\mM=\{(\lambda_i,\mP_i)\}_{i=1}^m$ (the eigenspace projection invariant associated with the $k$-th token graph), e.g., by setting $\tP_i(u_1,\cdots,u_k,v_1,\cdots,v_k)=\mP_i(\ldblbrace u_1,\cdots,u_k\rdblbrace,\ldblbrace v_1,\cdots,v_k\rdblbrace)$. The higher-order spectral information can then serve as initial features of any higher-order GNN that computes representations for each vertex tuple, such as $2k$-IGN.

Several works have pointed out the strong expressive power of higher-order spectral features. \citet{audenaert2007symmetric} verified that the spectra of the 3rd symmetric power are already not less expressive than 3-WL. Moreover, \citet{alzaga2010spectra,barghi2009non} upper bounds the expressive power of the spectra of the $k$-th symmetric power by $2k$-FWL. These results imply that using higher-order projection tensors is a promising approach to further boosting the expressive power of higher-order GNNs. We leave the corresponding architectural design and expressiveness analysis as an open direction for future study.


\section{Experiments}

In this section, we empirically evaluate the expressive power of various GNN architectures studied in this paper. We adopt the BREC benchmark \cite{wang2023towards}, a comprehensive dataset for comparing the expressive power of GNNs. We focus on the following GNNs that are closely related to this paper: $(\mathrm{i})$~Graphormer \cite{ying2021transformers} (a distance-based GNN that uses SPD, see \cref{sec:distance_gnn}); $(\mathrm{ii})$~NGNN \cite{zhang2021nested} (a variant of subgraph GNN, see \cref{sec:expressvieness_bound}); $(\mathrm{ii})$~ESAN \cite{bevilacqua2022equivariant} (an advanced subgraph GNN that adds cross-graph aggregations, see \cref{sec:expressvieness_bound}); $(\mathrm{iv})$~PPGN \cite{maron2019provably} (a higher-order GNN, see \cref{sec:higher-order-gnn}); $(\mathrm{v})$~EPNN (this paper). We follow the same setup as in \citet{wang2023towards} in both training and evaluation. For all baseline GNNs, the reported numbers are directly borrowed from \citet{wang2023towards}; For EPNN, we run the model 10 times with different seeds and report the average performance\footnote{Our code can be found in the following github repo:\\ \href{https://github.com/LingxiaoShawn/EPNN-Experiments}{{https://github.com/LingxiaoShawn/EPNN-Experiments}}}.

\begin{table}[ht]
    \centering
    \small
    \setlength{\tabcolsep}{4pt}
    \vspace{-2pt}
    \caption{Empirical performance of different GNNs on BREC.}
    \label{tab:results}
    \vspace{2pt}
    \begin{tabular}{cc|cccc|c}
        \hline
        Model & WL class & Basic & Reg & Ext & CFI & Total \\ \hline
        Graphormer & SPD-WL	& 26.7 & 10.0 & 41.0 & 10.0 & 19.8 \\
        NGNN & SWL & 98.3 & 34.3 & 59.0 & 0 & 41.5 \\
        ESAN & GSWL & 96.7 & 34.3 & 100.0 & 15.0 & 55.2 \\
        PPGN & 3-WL & 100.0 & 35.7 & 100.0 & 23.0 & 58.2 \\ \hline
        EPNN & EPWL & 100.0 & 35.7 & 100.0 & 5.0 & 53.8\\ \hline
    \end{tabular}
    \vspace{-8pt}
\end{table}

The results are presented in \cref{tab:results}. From these results, one can see that the empirical performance of EPNN matches its theoretical expressivity in our established hierarchy. Concretely, EPNN performs much better than Graphormer (SPD-WL) and NGNN (SWL), while underperforming ESAN (GS-WL) and PPGN (3-WL).

\section{Conclusion}
This paper investigates the expressive power of spectral invariant GNNs and related models. It establishes an expressiveness hierarchy between current models using a unifying framework we propose. We also draw a surprising connection to a recently proposed class of highly expressive GNNs, demonstrating that specific instances from this class (e.g., PSWL) upper bound all spectral invariant GNNs. This implies spectral invariant GNNs are strictly less expressive than the 3-WL test. Furthermore, we show spectral projection features and spectral distances do not provide additional expressivity benefits when combined with more powerful high-order architectures. We give a graphical illustration of all of these results in \cref{fig:hierarchy}.

\textbf{Open questions.} There are still several promising directions that are not fully explored in this paper. First, an interesting question lies in how the choice of graph matrix $\mM$ affects the expressive power of spectral invariant GNNs. We suspect that using (normalized) Laplacian matrix can be more beneficial than using the adjacency matrix, and the former is \emph{strictly} more expressive. We have given implicit evidence showing that EPNN with matrix $\hat\mL$ can encode all distances studied in this paper; however, we were unable to demonstrate the same result for other graph matrices. Besides, another important open question is the expressive power of higher-order spectral features, such as the one obtained by using token graphs. It is still unknown about a tight lower/upper bound of their expressive power in relation to higher-order WL tests. Moreover, investigating the \emph{refinements} over higher-order spectral features and the corresponding GNNs could be a fantastic open direction.


\section*{Impact Statement}
This paper presents work whose goal is to advance the field of Machine Learning. There are many potential societal consequences of our work, none of which we feel must be specifically highlighted here.

\section*{Acknowledgements}
HM is the Robert J. Shillman Fellow and is supported by the Israel Science Foundation through a personal grant (ISF 264/23) and an equipment grant (ISF 532/23). BZ would like to thank Jingchu Gai for helpful discussions.

\bibliography{ref}
\bibliographystyle{icml2024}

\newpage
\appendix
\onecolumn

\section{Proofs}
\label{sec:proofs}

This section presents all the missing proofs in this paper. First, \cref{sec:proof_preliminary} defines basic concepts and introduces our proof technique that will be frequently used in subsequent analysis. Then, \cref{sec:proof_epwl_1wl} gives several basic results and proves \cref{thm:epwl_1wl}. The formal proofs of our main theorems, including \cref{thm:epwl_pswl,thm:distance,thm:spectral_ign,thm:basisnet_basic}, are presented in \cref{sec:proof_epwl_sswl,sec:proof_distance,sec:proof_spectral_ign,sec:proof_basisnet}, respectively. Discussion with other architectures, such as GRIT, PEG, Spectral PPGN, and Spectral $k$-IGN are presented in \cref{sec:proof_other_architectures}. Finally, \cref{sec:proof_counterexample} reveals the gaps between each pair of architectures in our paper, leading to the proofs of \cref{thm:siamese_ign_spectral_ign,thm:siamese_ign_distance}.

\subsection{Preliminary}
\label{sec:proof_preliminary}

We first introduce some basic terminologies and concepts for general color refinement algorithms.

\textbf{Color mapping.} Any graph invariant over $\gG_k$ is called a $k$-dimensional color mapping. We use $\gM_k$ to denote the family of all $k$-dimensional color mappings. For a color mapping $\chi\in\gM_k$, our interest lies not in the specific values of the function (say $\chi_G(\vu)$ for some $\vu\in V_G^k$), but in the equivalence relations among different values. Formally, each color mapping $\chi\in\gM_k$ defines an equivalence relation $\stackrel{\chi}{\sim}$ between rooted graphs $G^\vu$, $H^\vv$ marking $k$ vertices, where $G^\vu\stackrel{\chi}{\sim}H^\vv$ iff $\chi_G(\vu)=\chi_H(\vv)$. For any graph $G\in\gG$, the equivalence relation $\stackrel{\chi}{\sim}$ induces a partition $Q_G(\chi)$ over the set $V_G^k$. 

Given two color mappings $\chi_1,\chi_2\in\gM_k$, we say $\chi_1$ is equivalent to $\chi_2$, denoted as $\chi_1\equiv\chi_2$, if $G^\vu\stackrel{\chi_1}{\sim}H^\vv\iff G^\vu\stackrel{\chi_2}{\sim}H^\vv$ for all graphs $G,H\in\gG$ and vertices $\vu\in V_G^k$, $\vv\in V_H^k$. One can see that ``$\equiv$'' forms an equivalence relation over $\gM_k$. We say $\chi_1$ is \emph{finer} than $\chi_2$, denoted as $\chi_1\preceq\chi_2$, if $G^\vu\stackrel{\chi_1}{\sim}H^\vv\implies G^\vu\stackrel{\chi_2}{\sim}H^\vv$ for all graphs $G$, $H$ and vertices $\vu\in V_G^k$, $\vv\in V_H^k$. One can see that ``$\preceq$'' forms a partial relation on $\gM_k$. We say $\chi_1$ is \emph{strictly finer} than $\chi_2$, denoted as $\chi_1\prec\chi_2$, if $\chi_1\preceq\chi_2$ and $\chi_1\not\equiv\chi_2$. 

\textbf{Color refinement.} A function $T:\gM_k\to\gM_{k'}$ that maps from one color mapping to another is called a color transformation. Throughout this paper, we assume that all color transformations are \emph{order-preserving}, i.e., for all $\chi_1,\chi_2\in\gM_k$, $T(\chi_1)\preceq T(\chi_2)$ if $\chi_1\preceq\chi_2$. An order-preserving color transformation $T:\gM_k\to\gM_{k}$ is further called a \emph{color refinement} if $T(\chi)\preceq \chi$ for all $\chi\in \gM_k$. For any color refinement $T$, we denote by $T^t$ the $t$-th function power of $T$, i.e., the function composition $T\circ \cdots\circ T$ with $t$ occurrences of $T$. Note that if $T$ is a color refinement, so is $T^t$ for all $t\ge 0$.

Given two color transformations $T_1,T_2:\gM_k\to\gM_{k'}$, we say $T_1$ is as expressive as $T_2$, denoted by $T_1\equiv T_2$, if $T_1(\chi)\equiv T_2(\chi)$ for all $\chi\in\gM_k$. We say $T_1$ is more expressive than $T_2$, denoted by $T_1\preceq T_2$, if $T_1(\chi)\preceq T_2(\chi)$ for all $\chi\in\gM_k$. We say $T_1$ is strictly more expressive than $T_2$, denoted by $T_1\prec T_2$,  if $T_1$ is more expressive than $T_2$ and not as expressive as $T_2$. As will be clear in our subsequent proofs, the expressive power of GNNs is compared through an examination of their color transformations.

For any color refinement $T:\gM_k\to\gM_k$, we define the corresponding stable refinement $T^\infty:\gM_k\to\gM_k$ as follows. For any $\chi\in\gM_k$, define the color mapping $T^\infty(\chi)$ such that $G^\vu\stackrel{T^\infty(\chi)}{\sim} H^\vv$ iff $G^\vu\stackrel{T^t(\chi)}{\sim} H^\vv$ where $t\ge 0$ is the minimum integer satisfying $Q_G(T^t(\chi))=Q_G(T^{t+1}(\chi))$ and $Q_H(T^t(\chi))=Q_H(T^{t+1}(\chi))$.
Note that $T^\infty$ is well-defined since $t$ always exists (one can see that $Q_G(T^t(\chi))=Q_G(T^{t+1}(\chi))$ and $Q_H(T^t(\chi))=Q_H(T^{t+1}(\chi))$ holds for all $t\ge\max(|V_G|^k,|V_H|^k)$ when $T$ is a color refinement), and it is easy to see that $T^\infty$ is a color refinement. We call $T^\infty$ stable because $(T\circ T^\infty)(\chi)\equiv T^\infty(\chi)$ holds for all $\chi\in\gM_k$, i.e., $T\circ T^\infty\equiv T^\infty$.

A color refinement algorithm $A$ is formed by the composition of a stable refinement  $T^\infty:\gM_k\to\gM_k$ with a color transformation $U:\gM_k\to\gM_0$, called the \emph{pooling} transformation. It can be formally written as $A:=U\circ T^\infty$. Below, we will derive several useful properties for general color refinement algorithms. These properties will be frequently used to compare the expressive power of different algorithms.

\begin{proposition}
\label{thm:refinement1}
    Let $T_1,T_2:\gM_{k_1}\to\gM_{k_2}$, $U_1,U_2:\gM_{k_2}\to\gM_{k_3}$ be color transformations. If $T_1\preceq T_2$ and $U_1\preceq U_2$, then $U_1\circ T_1 \preceq U_2\circ T_2$.
\end{proposition}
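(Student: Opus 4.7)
The plan is to chain together three facts: the hypothesis $T_1 \preceq T_2$, the hypothesis $U_1 \preceq U_2$, and the standing assumption that every color transformation is order-preserving. Fix an arbitrary $\chi \in \gM_{k_1}$; the goal is to show $U_1(T_1(\chi)) \preceq U_2(T_2(\chi))$ in $\gM_{k_3}$.

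First, from $T_1 \preceq T_2$ applied to $\chi$, I obtain $T_1(\chi) \preceq T_2(\chi)$ in $\gM_{k_2}$. Next, because $U_2$ is a color transformation and hence order-preserving by the standing assumption in this subsection, applying $U_2$ to both sides gives $U_2(T_1(\chi)) \preceq U_2(T_2(\chi))$. Separately, the hypothesis $U_1 \preceq U_2$ applied to the specific color mapping $T_1(\chi) \in \gM_{k_2}$ yields $U_1(T_1(\chi)) \preceq U_2(T_1(\chi))$.

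Finally, I would combine the two inequalities $U_1(T_1(\chi)) \preceq U_2(T_1(\chi))$ and $U_2(T_1(\chi)) \preceq U_2(T_2(\chi))$ using the transitivity of $\preceq$ on $\gM_{k_3}$. Transitivity itself is immediate from the definition: if $\alpha \preceq \beta$ and $\beta \preceq \gamma$, then any equality $\alpha_G(\vu) = \alpha_H(\vv)$ implies $\beta_G(\vu) = \beta_H(\vv)$ and hence $\gamma_G(\vu) = \gamma_H(\vv)$. Since $\chi$ was arbitrary, this establishes $U_1 \circ T_1 \preceq U_2 \circ T_2$.

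There is essentially no obstacle here; the statement is a routine monotonicity lemma about composition with respect to the refinement partial order. The only subtle point worth flagging explicitly is the role of order-preservation: I must apply it to the \emph{outer} map $U_2$ (not $U_1$) so that the intermediate term $U_2(T_1(\chi))$ lies between $U_1(T_1(\chi))$ and $U_2(T_2(\chi))$ in the refinement order. Had one instead tried to apply order-preservation to $U_1$, the resulting chain would not close, so this choice of bridging term is the one substantive decision in the argument.
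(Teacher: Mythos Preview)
Your proof is correct and follows exactly the same approach as the paper: use $T_1\preceq T_2$ to get $T_1(\chi)\preceq T_2(\chi)$, apply order-preservation of $U_2$ to obtain $U_2(T_1(\chi))\preceq U_2(T_2(\chi))$, use $U_1\preceq U_2$ at $T_1(\chi)$ to get $U_1(T_1(\chi))\preceq U_2(T_1(\chi))$, and combine by transitivity. Your remark about why the bridging term must be $U_2(T_1(\chi))$ is a nice clarification that the paper leaves implicit.
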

\begin{proof}
    Since $T_1\preceq T_2$, $T_1(\chi)\preceq T_2(\chi)$ holds for all $\chi\in\gM_{k_1}$. Since $U_2$ is order-preserving, $U_2(T_1(\chi))\preceq U_2(T_2(\chi))$ holds for all $\chi\in\gM_{k_1}$. Finally, since $U_1\preceq U_2$, $U_1(T_1(\chi))\preceq U_2(T_1(\chi))$ holds for all $\chi\in\gM_{k_1}$. Combining the above inequalities yields the desired result.
\end{proof}
\begin{proposition}
\label{thm:refinement2}
    Let $T_1:\gM_{k_1}\to\gM_{k_1}$ and $T_2:\gM_{k_2}\to\gM_{k_2}$ be color refinements, and let $U_1:\gM_{k_0}\to\gM_{k_1}$ and $U_2:\gM_{k_1}\to\gM_{k_2}$ be color transformations. If $T_2 \circ U_2\circ T_1^\infty\circ U_1\equiv U_2\circ T_1^\infty\circ U_1$, then $U_2\circ T_1^\infty\circ U_1\preceq T_2^\infty\circ U_2\circ U_1$.
\end{proposition}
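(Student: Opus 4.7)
The plan is to reduce this identity of color transformations to a claim about the individual color mappings they produce, and then run a short monotone-fixed-point induction. Fix an arbitrary input $\chi_0 \in \gM_{k_0}$ and set $\xi := U_2(T_1^\infty(U_1(\chi_0)))$ and $\zeta := U_2(U_1(\chi_0))$; it suffices to prove $\xi \preceq T_2^\infty(\zeta)$ for each such $\chi_0$, since that is exactly what $U_2\circ T_1^\infty\circ U_1\preceq T_2^\infty\circ U_2\circ U_1$ means on the input $\chi_0$.

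The first step is to assemble two preparatory facts about $\xi$ and $\zeta$. Because $T_1^\infty$ is a color refinement we have $T_1^\infty(U_1(\chi_0)) \preceq U_1(\chi_0)$, and then order-preservation of $U_2$ gives $\xi \preceq \zeta$. Next, specializing the hypothesis $T_2 \circ U_2 \circ T_1^\infty \circ U_1 \equiv U_2 \circ T_1^\infty \circ U_1$ at $\chi_0$ yields $T_2(\xi) \equiv \xi$, so $\xi$ is already a fixed point of $T_2$.

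The main step is an induction on $t$ showing $\xi \preceq T_2^t(\zeta)$ for all $t \ge 0$. The base case $t=0$ is $\xi \preceq \zeta$, which was just established. For the inductive step, order-preservation of $T_2$ turns $\xi \preceq T_2^t(\zeta)$ into $T_2(\xi) \preceq T_2^{t+1}(\zeta)$, and combining this with the fixed-point relation $T_2(\xi) \equiv \xi$ yields $\xi \preceq T_2^{t+1}(\zeta)$. By the definition of the stable refinement, for $t$ sufficiently large we have $T_2^t(\zeta) \equiv T_2^\infty(\zeta)$, and therefore $\xi \preceq T_2^\infty(\zeta)$. Since $\chi_0$ was arbitrary, this upgrades to the claimed inequality of color transformations.

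I do not anticipate a genuine obstacle; the content is essentially the standard observation that any stable refinement of a coloring is at least as fine as the iterated stabilization of that coloring, a monotone fixed-point fact. The only point that must not be skipped is to invoke the order-preserving assumptions on both $U_2$ (for the initial refinement $\xi \preceq \zeta$) and $T_2$ (to drive the induction), which are explicitly part of the framework set up in \cref{sec:proof_preliminary}.
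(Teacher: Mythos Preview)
Your argument is correct and is essentially the paper's proof unpacked pointwise: the paper compresses your induction into the single step ``by definition of stable refinement, $T_2^\infty \circ U_2\circ T_1^\infty\circ U_1\equiv U_2\circ T_1^\infty\circ U_1$'' and then invokes \cref{thm:refinement1} (using that $T_1^\infty$ is a refinement) in place of your direct $\xi \preceq \zeta$ plus monotone iteration. One small wording fix: there is no single $t$ with $T_2^t(\zeta)\equiv T_2^\infty(\zeta)$ uniformly over all graphs, but since you have established $\xi \preceq T_2^t(\zeta)$ for \emph{every} $t$, the pairwise definition of $T_2^\infty$ still yields $\xi \preceq T_2^\infty(\zeta)$.
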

\begin{proof}
    If $T_2 \circ U_2\circ T_1^\infty\circ U_1\equiv U_2\circ T_1^\infty\circ U_1$, then by definition of stable refinement, $T_2^\infty \circ U_2\circ T_1^\infty\circ U_1\equiv U_2\circ T_1^\infty\circ U_1$. Since $T_1^\infty$ is a refinement,  $U_2\circ T_1^\infty\circ U_1\preceq T_2^\infty\circ U_2\circ U_1$ according to \cref{thm:refinement1}. We thus obtain the desired result.
\end{proof}
\begin{corollary}
\label{thm:refinement3}
    Let $T_1,T_2:\gM_{k}\to\gM_{k}$ be color refinements. Then, $T_1\preceq T_2$ implies that $T_2\circ T_1^\infty\equiv T_1^\infty$.
\end{corollary}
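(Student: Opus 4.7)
The plan is to derive the two inequalities $T_2 \circ T_1^\infty \preceq T_1^\infty$ and $T_1^\infty \preceq T_2 \circ T_1^\infty$ separately, and then combine them to conclude equivalence. Both directions should follow almost immediately from the definitions laid out just above the corollary, so this is really a short chase through the refinement/stability machinery rather than a substantive argument.

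First I would establish the easy direction. Since $T_2$ is by hypothesis a color refinement, the definition gives $T_2(\chi') \preceq \chi'$ for every $\chi' \in \gM_k$. Applying this with $\chi' = T_1^\infty(\chi)$ for an arbitrary $\chi \in \gM_k$ yields $T_2(T_1^\infty(\chi)) \preceq T_1^\infty(\chi)$, which by definition of the ordering on color transformations means $T_2 \circ T_1^\infty \preceq T_1^\infty$.

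Next I would handle the reverse direction, which is the place where the hypothesis $T_1 \preceq T_2$ enters. From $T_1 \preceq T_2$ we have $T_1(T_1^\infty(\chi)) \preceq T_2(T_1^\infty(\chi))$ for every $\chi$, i.e., $T_1 \circ T_1^\infty \preceq T_2 \circ T_1^\infty$. Stability of $T_1^\infty$, noted in the definition of the stable refinement ($T_1 \circ T_1^\infty \equiv T_1^\infty$), lets me replace the left-hand side by $T_1^\infty$, yielding $T_1^\infty \preceq T_2 \circ T_1^\infty$. Combining both inequalities gives $T_2 \circ T_1^\infty \equiv T_1^\infty$, which is exactly the claim.

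I do not expect any real obstacle here; the entire argument is essentially a two-line application of (i) the fact that any color refinement is below the identity in the $\preceq$ order and (ii) the stability identity $T_1 \circ T_1^\infty \equiv T_1^\infty$ built into the definition of $T_1^\infty$. The only thing worth double-checking is that the order-preserving property of color transformations is what makes the composition $T_2 \circ T_1^\infty$ well-behaved with respect to $\preceq$, but this is precisely the standing assumption made in \cref{sec:proof_preliminary}.
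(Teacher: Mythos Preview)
Your proposal is correct and matches the paper's proof essentially line for line: both directions are obtained exactly as you describe, via the refinement property of $T_2$ and the stability identity $T_1\circ T_1^\infty\equiv T_1^\infty$ combined with $T_1\preceq T_2$. The only cosmetic difference is that the paper cites \cref{thm:refinement1} for the step $T_1\circ T_1^\infty\preceq T_2\circ T_1^\infty$, whereas you invoke the definition of $\preceq$ directly.
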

\begin{proof}
    Since $T_1\preceq T_2$, $T_1\circ T_1^\infty\preceq T_2\circ T_1^\infty$ by \cref{thm:refinement1}. Namely, $T_1^\infty\preceq T_2\circ T_1^\infty$. On the other hand, $T_2\circ T_1^\infty\preceq T_1^\infty$ since $T_2$ is a color refinement. Combined the two directions yields the desired result.
\end{proof}

The above two propositions will play a crucial role in our subsequent proofs. Below, we give a simple example to illustrate how these results can be used to give a proof that a GNN model $M_1$ is more expressive than another model $M_2$. Suppose $T_1,T_2:\gM_k\to\gM_k$ are color refinements corresponding to one GNN layer of $M_1$ and $M_2$, respectively, and let $U:\gM_k\to\gM_0$ be the color transformation corresponding to the final pooling layer in $M_1$ and $M_2$. Then, the color refinement algorithms associated with $M_1$ and $M_2$ can be represented by $U\circ T_1^\infty$ and $U\circ T_2^\infty$, respectively. Concretely, denote by $\chi^{0}$ the initial color mapping in the two algorithms, e.g., the constant mapping where $\chi^0_G(\vu)$ is the same for all $G^\vu\in\gG_k$. It follows that the graph representation of a graph $G$ computed by the two algorithms is $[U(T_1(\chi^0))](G)$ and $[U(T_2(\chi^0))](G)$, respectively. Then, The statement ``$M_1$ is more expressive than $M_2$'' means that $[U(T_1(\chi^0))](G)=[U(T_1(\chi^0))](H)\implies [U(T_2(\chi^0))](G)=[U(T_2(\chi^0))](H)$ for all graphs $G,H\in\gG$.

To prove that $M_1$ is more expressive than $M_2$, it suffices to prove that $T_2\circ T_1^\infty$ is as expressive as $T_1^\infty$. Indeed, this is actually a simple consequence of \cref{thm:refinement1,thm:refinement2}. If $T_2\circ T_1^\infty$ is as expressive as $T_1^\infty$, then $T_1^\infty$ is more expressive than $T_2^\infty$ (\cref{thm:refinement2}), and thus $U\circ T_1^\infty$ is more expressive than $U\circ T_2^\infty$ (\cref{thm:refinement1}), yielding the desired result.

\subsection{Basic results}
\label{sec:proof_epwl_1wl}

This subsection proves several basic results for EPWL. We begin by proving that EPWL is strictly more expressive than 1-WL (\cref{thm:epwl_1wl}). We will first restate \cref{thm:epwl_1wl} using the color refinement terminologies defined in \cref{sec:proof_preliminary}. We need the following color transformations:
\begin{itemize}[topsep=0pt,leftmargin=20pt]
    \setlength{\itemsep}{0pt}
    \item \textbf{EPWL color refinement.} Define $T_{\mathsf{EP},\mM}:\gM_1\to\gM_1$ such that for any color mapping $\chi\in\gM_1$ and rooted graph $G^u$,
    \begin{equation}
        [T_{\mathsf{EP},\mM}(\chi)]_G(u)=\hash(\chi_G(u),\ldblbrace (\chi_G(v),\gP^\mM_G(u,v)):v\in V_G\rdblbrace).
    \end{equation}
    \item \textbf{1-WL color refinement.} Define $T_\mathsf{WL}:\gM_1\to\gM_1$ such that for any color mapping $\chi\in\gM_1$ and rooted graph $G^u$,
    \begin{equation}
    \label{eq:wl}
        [T_\mathsf{WL}(\chi)]_G(u)=\hash(\chi_G(u),\ldblbrace (\chi_G(v),\atp_G(u,v)):v\in V_G\rdblbrace).
    \end{equation}
    \item \textbf{Global pooling.} Define $T_\mathsf{GP}:\gM_1\to\gM_0$ such that for any color mapping $\chi\in\gM_1$ and graph $G$,
    \begin{equation}
        [T_\mathsf{GP}(\chi)](G)=\hash(\ldblbrace \chi_G(u):u\in V_G\rdblbrace).
    \end{equation}
\end{itemize}
Equipped with the above color transformations, \cref{thm:epwl_1wl} is equivalent to the following:
\begin{proposition}
    For any graph matrix $\mM\in\{\mA,\mL,\hat\mL\}$, $T_\mathsf{GP}\circ T_{\mathsf{EP},\mM}^\infty \preceq T_\mathsf{GP}\circ T_\mathsf{WL}^\infty$.
\end{proposition}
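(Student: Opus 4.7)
The plan is to reduce the claim to a one-step refinement comparison, namely that $T_{\mathsf{EP},\mM} \preceq T_\mathsf{WL}$ as color transformations on $\gM_1$, and then invoke the machinery of \cref{thm:refinement1,thm:refinement2,thm:refinement3} developed in \cref{sec:proof_preliminary}. Concretely, once we know $T_{\mathsf{EP},\mM} \preceq T_\mathsf{WL}$, \cref{thm:refinement3} (with $T_1 = T_{\mathsf{EP},\mM}$, $T_2 = T_\mathsf{WL}$) gives $T_\mathsf{WL}\circ T_{\mathsf{EP},\mM}^\infty \equiv T_{\mathsf{EP},\mM}^\infty$. Applying \cref{thm:refinement2} with both $U_1$ and $U_2$ equal to the identity on $\gM_1$ then yields $T_{\mathsf{EP},\mM}^\infty \preceq T_\mathsf{WL}^\infty$, and composing with $T_\mathsf{GP}$ via \cref{thm:refinement1} gives the desired $T_\mathsf{GP}\circ T_{\mathsf{EP},\mM}^\infty \preceq T_\mathsf{GP}\circ T_\mathsf{WL}^\infty$.

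Thus the core of the proof is the key lemma: for every $\mM\in\{\mA,\mL,\hat\mL\}$, the atomic type $\atp_G(u,v)$ is a function of $\gP^\mM_G(u,v)$ (this is precisely the content of \cref{thm:epwl_atp} alluded to in the text). To prove it, I extract two scalars from the multiset $\gP^\mM_G(u,v)=\ldblbrace(\lambda_i,\mP_i(u,v))\rdblbrace_{i\in[m]}$: the sum $\sum_{i\in[m]}\mP_i(u,v)$, which equals $\mI(u,v)$ by the resolution of the identity $\sum_i\mP_i=\mI$, and the weighted sum $\sum_{i\in[m]}\lambda_i\mP_i(u,v)$, which equals $\mM(u,v)$ by the spectral decomposition $\mM=\sum_i\lambda_i\mP_i$. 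The first quantity decides whether $u=v$; if $u\neq v$, the second quantity decides adjacency, since in all three cases ($\mA$, $\mL$, and $\hat\mL$) the off-diagonal entry $\mM(u,v)$ is nonzero if and only if $\{u,v\}\in E_G$ (in particular $\mL(u,v)=-\mA(u,v)$ and $\hat\mL(u,v)=-\mA(u,v)/\sqrt{\deg_G(u)\deg_G(v)}$ for $u\neq v$).

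With the lemma in hand, $T_{\mathsf{EP},\mM}\preceq T_\mathsf{WL}$ follows directly from the definitions: given any $\chi\in\gM_1$, the multiset $\ldblbrace(\chi_G(v),\gP^\mM_G(u,v)):v\in V_G\rdblbrace$ determines $\ldblbrace(\chi_G(v),\atp_G(u,v)):v\in V_G\rdblbrace$ by applying the lemma pointwise, so equality of the EPWL colors implies equality of the WL colors, and the refinement carries the extra root-color coordinate $\chi_G(u)$ along unchanged in both.

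I do not expect a serious obstacle here. The only substantive content is the simple algebraic observation that $\mP_i(u,v)$ together with $\lambda_i$ recovers both $\mI(u,v)$ and $\mM(u,v)$; once that is in place, the \cref{thm:refinement1,thm:refinement2,thm:refinement3} scaffolding automatically lifts a one-step comparison to a comparison of the full pooled, stable algorithms. The only minor thing to double-check is the caveat in \cref{thm:refinement2} on order-preservation of $T_\mathsf{GP}$ and on the identity transformations qualifying as color transformations in the framework; both are immediate from the definitions given in \cref{sec:proof_preliminary}.
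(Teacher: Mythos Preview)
Your proposal is correct and follows essentially the same approach as the paper: reduce to the one-step comparison $T_{\mathsf{EP},\mM}\preceq T_\mathsf{WL}$ via the refinement machinery (\cref{thm:refinement1,thm:refinement2}), then establish $\gP^\mM\preceq\atp$ by reading off $\mI(u,v)=\sum_i\mP_i(u,v)$ and $\mM(u,v)=\sum_i\lambda_i\mP_i(u,v)$ from the multiset. This is exactly the content and proof of \cref{thm:epwl_atp} in the paper.
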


Based on \cref{thm:refinement1,thm:refinement2}, it suffices to prove that $T_{\mathsf{EP},\mM}\preceq T_\mathsf{WL}$.

\begin{lemma}
\label{thm:epwl_atp}
    For any graph matrix $\mM\in\{\mA,\mL,\hat\mL\}$, $\gP^\mM\preceq \atp$. Here, the atomic type operator $\atp\in\gM_2$ is regarded as a 2-dimensional color mapping. This readily implies that $T_{\mathsf{EP},\mM}\preceq T_\mathsf{WL}$.
\end{lemma}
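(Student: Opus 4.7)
The plan is to show that, for every graph $G$ and every pair $(u,v)\in V_G^2$, the multiset $\gP^\mM_G(u,v)$ determines $\atp_G(u,v)$; equivalently, there exists a function $\Phi$ such that $\atp_G(u,v)=\Phi(\gP^\mM_G(u,v))$. Once this is established, $\gP^\mM\preceq\atp$ follows immediately from the definition of the finer relation on $\gM_2$, and the corresponding statement $T_{\mathsf{EP},\mM}\preceq T_\mathsf{WL}$ is then obtained by comparing the refinement rules of EPWL and 1-WL term by term (using the order-preserving property of $\hash$), since the edge feature $\gP^\mM_G(u,v)$ used in EPWL already encodes $\atp_G(u,v)$.

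The key observation is that from the multiset $\gP^\mM_G(u,v)=\ldblbrace(\lambda_1,\mP_1(u,v)),\dots,(\lambda_m,\mP_m(u,v))\rdblbrace$ one can recover two linear combinations of entries, namely
\begin{equation*}
\sum_{i=1}^m \mP_i(u,v)=\mI(u,v),\qquad \sum_{i=1}^m \lambda_i\mP_i(u,v)=\mM(u,v),
\end{equation*}
using the identities $\sum_i\mP_i=\mI$ and $\mM=\sum_i\lambda_i\mP_i$ from the eigen-decomposition. The first expression reveals whether $u=v$, and the second expression reveals the entry $\mM(u,v)$, both as deterministic functions of the multiset.

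It then remains to do a short case analysis on $\mM\in\{\mA,\mL,\hat\mL\}$. If $\mI(u,v)=1$, then $u=v$ and $\atp_G(u,v)$ is the diagonal type. If $\mI(u,v)=0$, then $u\neq v$, and in each of the three cases the off-diagonal entry $\mM(u,v)$ determines adjacency: for $\mM=\mA$ directly, for $\mM=\mL=\mD-\mA$ because $\mL(u,v)=-\mA(u,v)$ off the diagonal, and for $\mM=\hat\mL$ because $\hat\mL(u,v)=-\mA(u,v)/\sqrt{\deg_G(u)\deg_G(v)}$, which is nonzero iff $\{u,v\}\in E_G$ (the degrees are positive by the no-isolated-vertex assumption). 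In all cases $\atp_G(u,v)$ is determined, so $\Phi$ exists.

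There is no real obstacle here — the argument is essentially an unpacking of the spectral resolutions together with a case split; the only mild subtlety is making sure the normalized Laplacian case is handled correctly, which is where the assumption of no isolated vertices is needed so that the denominator $\sqrt{\deg_G(u)\deg_G(v)}$ is well defined and nonzero.
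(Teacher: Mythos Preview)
Your proposal is correct and follows essentially the same argument as the paper: both recover $\mI(u,v)=\sum_i\mP_i(u,v)$ and $\mM(u,v)=\sum_i\lambda_i\mP_i(u,v)$ from the multiset $\gP^\mM_G(u,v)$, use the first to decide $u=v$, and use the second (with the off-diagonal structure of $\mA$, $\mL$, $\hat\mL$) to decide adjacency. Your write-up is in fact slightly more explicit than the paper's about the per-matrix case split and the role of the no-isolated-vertex assumption for $\hat\mL$.
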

\begin{proof}
    It suffices to prove that, for any two graphs $G,H\in\gG$ and vertices $u,v\in V_G$, $x,y\in V_H$, if $\gP^\mM_G(u,v)=\gP^\mM_H(x,y)$, then (a) $u=v\iff x=y$; (b) $\{u,v\}\in E_G\iff \{x,y\}\in E_H$.

    Item (a) simply follows from the fact that $u=v$ iff $\sum_{(\lambda,\mP(u,v))\in\gP^\mM_G(u,v)}\mP(u,v)=1$ (by definition of eigen-decomposition). Item (b) simply follows from the fact that $\{u,v\}\in E_G$ iff $\sum_{(\lambda,\mP(u,v))\in\gP^\mM_G(u,v)}\lambda\mP(u,v)\neq 0$ and $u\neq v$ (which holds for all matrices $\mM\in\{\mA,\mL,\hat\mL\}$).
\end{proof}

\begin{wrapfigure}{r}{0.32\textwidth}
  \begin{center}
   \includegraphics[width=0.3\textwidth]{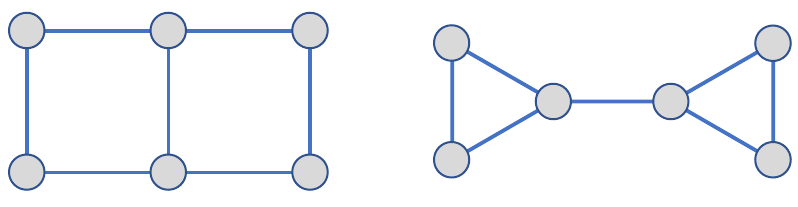}
  \end{center}
  \vspace{-10pt}
  \caption{A pair of counterexample graphs that are indistinguishable by 1-WL but can be distinguished via EPWL.}
  \label{fig:1wl_indistinguishable}
  \vspace{-20pt}
\end{wrapfigure}
\textbf{Strict sepatation.} It is easy to give a pair of counterexample graphs $G,H$ that are indistinguishable by 1-WL but can be distinguished via EPWL. We give such a pair of graphs in \cref{fig:1wl_indistinguishable}. One can check that the two graphs has difference set of eigenvalues no matter what graph matrix $\mM\in\{\mA,\mL,\hat\mL\}$ is used in EPWL.

We next show that EPWL is more expressive than spectral positional encoding using Laplacian eigenvectors. We will prove the following result:
\begin{proposition}
\label{thm:epwl_first_layer_encode_Puu}
    Given any graph matrix $\mM\in\{\mA,\mL,\hat\mL\}$ and any initial color mapping $\chi^0\in\gM$, let $\chi^1=T_{\mathsf{EP},\mM}(\chi^0)$ be the EPWL color mapping after the first iteration. Then, for any graphs $G,H\in\gG$ and vertices $u\in V_G,x\in V_H$, if $\chi^1_G(u)=\chi^1_H(v)$, then $\gP^\mM_G(u,u)=\gP^\mM_H(x,x)$.
\end{proposition}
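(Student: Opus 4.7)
The plan is to unpack the definition of $\chi^1_G(u)$ and exploit the idempotence of projection matrices. By injectivity of the hash, the hypothesis $\chi^1_G(u)=\chi^1_H(x)$ yields the multiset equality
\begin{equation*}
    \ldblbrace (\chi^0_G(v),\gP^\mM_G(u,v)):v\in V_G\rdblbrace=\ldblbrace (\chi^0_H(y),\gP^\mM_H(x,y)):y\in V_H\rdblbrace,
\end{equation*}
and in particular, by projecting onto the second coordinate, $\ldblbrace\gP^\mM_G(u,v):v\in V_G\rdblbrace=\ldblbrace\gP^\mM_H(x,y):y\in V_H\rdblbrace$. Since each individual multiset $\gP^\mM_G(u,v)$ already contains all distinct eigenvalues of $\mM_G$ (as their first coordinates), the eigenvalue sets of $G$ and $H$ must agree; call this common set $\{\lambda_1,\dots,\lambda_m\}$.

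Next I would demultiplex the outer multiset along eigenvalues. For each $i\in[m]$, since the first coordinates within $\gP^\mM_G(u,v)$ are pairwise distinct, one can unambiguously read off the scalar $\mP_i(u,v)$ as a function of $\lambda_i$. Thus from $\ldblbrace\gP^\mM_G(u,v):v\in V_G\rdblbrace$ we recover, for each $i$, the multiset of entries $S_i^G(u):=\ldblbrace\mP_i(u,v):v\in V_G\rdblbrace$; the same recovery on the $H$-side yields $S_i^G(u)=S_i^H(x)$ for every $i$.

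The key identity is now the idempotence and symmetry of $\mP_i$: because $\mP_i=\mP_i^\top$ and $\mP_i^2=\mP_i$, we have
\begin{equation*}
    \mP_i(u,u)=(\mP_i\mP_i)(u,u)=\sum_{v\in V_G}\mP_i(u,v)\mP_i(v,u)=\sum_{v\in V_G}\mP_i(u,v)^2,
\end{equation*}
so $\mP_i(u,u)$ is determined by the multiset $S_i^G(u)$ (via the sum-of-squares functional). Applying this on both sides gives $\mP_i(u,u)=\mP_i(x,x)$ for every eigenvalue, and hence $\gP^\mM_G(u,u)=\ldblbrace(\lambda_i,\mP_i(u,u))\rdblbrace_i=\ldblbrace(\lambda_i,\mP_i(x,x))\rdblbrace_i=\gP^\mM_H(x,x)$, as required.

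I do not anticipate a major obstacle here: once the idempotence identity $\mP_i(u,u)=\sum_v\mP_i(u,v)^2$ is in hand, the rest is a careful but routine bookkeeping argument ensuring that the outer multiset can be sliced along eigenvalues. The only point requiring mild care is confirming that the eigenvalue labels can indeed be extracted intact (so the slicing is well-defined on both sides), which follows from the fact that the distinct eigenvalues appear identically as first coordinates in every $\gP^\mM_G(u,v)$.
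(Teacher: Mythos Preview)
Your proof is correct, but it takes a genuinely different route from the paper's.

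The paper's argument is shorter and more direct: it invokes the earlier lemma that $\gP^\mM$ determines the atomic type, in particular whether $u=v$ (via $\sum_i \mP_i(u,v)=\mathbb I[u=v]$). Thus in the multiset $\ldblbrace(\chi^0_G(v),\gP^\mM_G(u,v)):v\in V_G\rdblbrace$ there is a \emph{unique} element whose second coordinate carries the ``diagonal'' signature, namely the one with $v=u$; the same holds on the $H$-side with $y=x$. Equality of multisets then forces these two singled-out elements to coincide, immediately giving $\gP^\mM_G(u,u)=\gP^\mM_H(x,x)$.

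Your approach instead slices the outer multiset eigenvalue-by-eigenvalue and uses the idempotence identity $\mP_i(u,u)=\sum_v \mP_i(u,v)^2$ to recover each diagonal entry from the corresponding row multiset. This is a bit longer but is self-contained (it does not appeal to the atomic-type lemma) and highlights a different structural property of projection matrices. Both arguments ultimately rest on basic spectral facts---the paper on $\sum_i \mP_i=\mI$, you on $\mP_i^2=\mP_i=\mP_i^\top$---and both work for any symmetric $\mM$, not just $\mA,\mL,\hat\mL$.
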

\begin{proof}
    Let $G,H\in\gG$ and vertices $u\in V_G,x\in V_H$ satisfy that $\chi^1_G(u)=\chi^1_H(v)$. Then,
    \begin{equation}
    \label{eq:proof_epwl_first_layer_encode_Puu}
        \ldblbrace (\chi^0_G(v),\gP^\mM_G(u,v)):v\in V_G\rdblbrace=\ldblbrace (\chi^0_H(y),\gP^\mM_H(x,y)):y\in V_H\rdblbrace.
    \end{equation}
    Note that for any $u,v\in V_G$ and $x,y\in V_H$, if $\gP^\mM_G(u,v)=\gP^\mM_H(x,y)$, then $u=v\iff x=y$ (\cref{thm:epwl_atp}). Therefore, \cref{eq:proof_epwl_first_layer_encode_Puu} implies that $\gP^\mM_G(u,u)=\gP^\mM_H(x,x)$.
\end{proof}

\subsection{Proof of \cref{thm:epwl_pswl}}
\label{sec:proof_epwl_sswl}

This subsection aims to prove \cref{thm:epwl_pswl}. We will first restate \cref{thm:epwl_pswl} using the color refinement terminologies defined in \cref{sec:proof_preliminary}. We need the following color transformations:
\begin{itemize}[topsep=0pt,leftmargin=20pt]
    \setlength{\itemsep}{0pt}
    \item \textbf{EPWL color refinement.} Define $T_{\mathsf{EP},\mM}:\gM_1\to\gM_1$ such that for any color mapping $\chi\in\gM_1$ and rooted graph $G^u$,
    \begin{equation}
        [T_{\mathsf{EP},\mM}(\chi)]_G(u)=\hash(\chi_G(u),\ldblbrace (\chi_G(v),\gP^\mM_G(u,v)):v\in V_G\rdblbrace).
    \end{equation}
    \item \textbf{SWL color refinement.} Define $T_\mathsf{S}:\gM_2\to\gM_2$ such that for any color mapping $\chi\in\gM_2$ and rooted graph $G^{uv}$,
    \begin{equation}
        [T_\mathsf{S}(\chi)]_G(u,v)=\hash(\chi_G(u,v),\ldblbrace (\chi_G(u,w),\atp_G(v,w)):w\in V_G\rdblbrace).
    \end{equation}
    \item \textbf{PSWL color refinement.} Define $T_\mathsf{PS}:\gM_2\to\gM_2$ such that for any color mapping $\chi\in\gM_2$ and rooted graph $G^{uv}$,
    \begin{equation}
        [T_\mathsf{PS}(\chi)]_G(u,v)=\hash(\chi_G(u,v),\chi_G(v,v),\ldblbrace (\chi_G(u,w),\atp_G(v,w)):w\in V_G\rdblbrace).
    \end{equation}
    \item \textbf{Global refinement.} Define $T_\mathsf{Gu},T_\mathsf{Gv}:\gM_2\to\gM_2$ such that for any color mapping $\chi\in\gM_2$ and rooted graph $G^{uv}$,
    \begin{align}
        [T_\mathsf{Gu}(\chi)]_G(u,v)=\hash(\chi_G(u,v),\ldblbrace \chi_G(u,w):w\in V_G\rdblbrace),\\
        [T_\mathsf{Gv}(\chi)]_G(u,v)=\hash(\chi_G(u,v),\ldblbrace \chi_G(w,v):w\in V_G\rdblbrace).
    \end{align}
    \item \textbf{Diagonal refinement.} Define $T_\mathsf{Du},T_\mathsf{Dv}:\gM_2\to\gM_2$ such that for any color mapping $\chi\in\gM_2$ and rooted graph $G^{uv}$,
    \begin{align}
        [T_\mathsf{Du}(\chi)]_G(u,v)=\hash(\chi_G(u,v),\chi_G(u,u)),\\
        [T_\mathsf{Dv}(\chi)]_G(u,v)=\hash(\chi_G(u,v),\chi_G(v,v)).
    \end{align}
    \item \textbf{Node marking refinement.} Define $T_\mathsf{NM}:\gM_2\to\gM_2$ such that for any color mapping $\chi\in\gM_2$ and rooted graph $G^{uv}$,
    \begin{equation}
        [T_\mathsf{NM}(\chi)]_G(u,v)=\hash(\chi_G(u,v),\mathbb I[u=v]).
    \end{equation}
    \item \textbf{Lift transformation.} Define $T_\uparrow:\gM_1\to\gM_2$ such that for any color mapping $\chi\in\gM_1$ and rooted graph $G^{uv}$,
    \begin{equation}
        [T_\uparrow(\chi)]_G(u,v)=\chi_G(v).
    \end{equation}
    \item \textbf{Subgraph pooling.} Define $T_\mathsf{SP}:\gM_2\to\gM_1$ such that for any color mapping $\chi\in\gM_2$ and rooted graph $G^{u}$,
    \begin{equation}
        [T_\mathsf{SP}(\chi)]_G(u)=\hash(\ldblbrace \chi_G(u,v):v\in V_G\rdblbrace).
    \end{equation}
    \item \textbf{Global pooling.} Define $T_\mathsf{GP}:\gM_1\to\gM_0$ such that for any color mapping $\chi\in\gM_1$ and graph $G$,
    \begin{equation}
        [T_\mathsf{GP}(\chi)](G)=\hash(\ldblbrace \chi_G(u):u\in V_G\rdblbrace).
    \end{equation}
\end{itemize}

Note that $T_\mathsf{PS}\preceq T_\mathsf{S}$, $T_\mathsf{PS}\preceq T_\mathsf{Gu}$, and $T_\mathsf{PS}\preceq T_\mathsf{Dv}$. Equipped with the above color transformations, \cref{thm:epwl_pswl} is equivalent to the following:
\begin{theorem}
\label{thm:epwl_pswl_formal}
    For any graph matrix $\mM\in\{\mA,\mL,\hat\mL\}$, $T_\mathsf{GP}\circ T_\mathsf{SP} \circ T_\mathsf{PS}^\infty \circ T_\mathsf{NM}\circ T_\uparrow\preceq T_\mathsf{GP}\circ T_{\mathsf{EP},\mM}^\infty$.
\end{theorem}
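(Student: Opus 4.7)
\textbf{Proof proposal for \cref{thm:epwl_pswl_formal}.} The plan is to invoke \cref{thm:refinement2} with $T_1=T_\mathsf{PS}$, $U_1=T_\mathsf{NM}\circ T_\uparrow$, $U_2=T_\mathsf{SP}$ and $T_2=T_{\mathsf{EP},\mM}$. It then suffices to verify that
\[
T_{\mathsf{EP},\mM}\circ T_\mathsf{SP}\circ T_\mathsf{PS}^\infty\circ T_\mathsf{NM}\circ T_\uparrow \equiv T_\mathsf{SP}\circ T_\mathsf{PS}^\infty\circ T_\mathsf{NM}\circ T_\uparrow,
\]
i.e., that one further EPWL refinement step adds no information to the PSWL pipeline's output. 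Prepending $T_\mathsf{GP}$ to both sides via \cref{thm:refinement1} then yields the claim.

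The heart of the argument is a statement about the \emph{diagonal} stable PSWL color: writing $\xi^l := T_{\mathsf{EP},\mM}^l(\chi^0)$ for the $l$-th EPWL coloring starting from the constant initialization $\chi^0$, I will show by strong induction on $l$ that for every graph $G$ and every vertex $v\in V_G$, the color $\chi^{\mathsf{PS},\infty}_G(v,v)$ determines $\xi^l_G(v)$. The base case $l=0$ is immediate since $\xi^0$ is constant.

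For the inductive step, I combine two structural properties of the pair-level stable color $\chi^{\mathsf{PS},\infty}_G(u,v)$. First, because all three matrices $\mA,\mL,\hat\mL$ are equitable in the sense of \citet{rattan2023weisfeiler}, their result implies that the SWL stable pair-color already determines every entry $\mP_i(u,v)$ and every eigenvalue $\lambda_i$; since $T_\mathsf{PS}$ refines $T_\mathsf{S}$, the same holds for PSWL, and hence $\chi^{\mathsf{PS},\infty}_G(u,v)$ determines the invariant $\gP^\mM_G(u,v)$. Second, the explicit cross-graph slot $\chi_G(v,v)$ in the $T_\mathsf{PS}$ update rule forces $\chi^{\mathsf{PS},\infty}_G(u,v)$ to determine $\chi^{\mathsf{PS},\infty}_G(v,v)$, which by the inductive hypothesis determines $\xi^l_G(v)$. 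Moreover, by stability, $\chi^{\mathsf{PS},\infty}_G(v,v)$ encodes the multiset $\ldblbrace(\chi^{\mathsf{PS},\infty}_G(v,w),\atp_G(v,w)):w\in V_G\rdblbrace$, which the two facts above upgrade to $\ldblbrace (\xi^l_G(w),\gP^\mM_G(v,w)):w\in V_G\rdblbrace$; combined with $\xi^l_G(v)$ this is exactly what is needed to compute $\xi^{l+1}_G(v)$, closing the induction. With the diagonal claim in hand, the subgraph pooling $T_\mathsf{SP}$ produces a vertex color whose multiset contains the diagonal entry, hence is at least as fine as the stable EPWL color; an additional application of $T_{\mathsf{EP},\mM}$ therefore cannot refine it further, verifying the hypothesis of \cref{thm:refinement2}.

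\textbf{Main obstacle.} The delicate part is synchronizing the two refinement processes at different speeds: each EPWL iteration demands both projection data and recursively refined EPWL colors, whereas on the PSWL side these only become available after enough rounds for the cross-graph slot to propagate them. The cleanest way to bypass this bookkeeping is to work with stable colors throughout, letting \citet{rattan2023weisfeiler}'s determination result deliver all projection information in one shot; but one must carefully check that their equitability hypothesis genuinely applies to each of $\mA,\mL,\hat\mL$ at stability, and that the cross-graph slot in $T_\mathsf{PS}$ really transports the inductive hypothesis from diagonal $(w,w)$ entries to off-diagonal $(v,w)$ entries of the same stable coloring.
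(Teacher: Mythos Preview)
Your argument assembles the same three ingredients the paper uses—the fact that the stable PSWL pair color determines $\gP^\mM_G(u,v)$ (this is \cref{thm:ratten_corollary}), the diagonal extraction $\chi\equiv T_\mathsf{Dv}(\chi)$, and the row-multiset stability $\chi\equiv T_\mathsf{Gu}(\chi)$—and your induction on $l$ correctly establishes that $\hat\chi:=T_\mathsf{SP}(\chi^{\mathsf{PS},\infty})$ satisfies $\hat\chi\preceq\xi^\infty$ in $\gM_1$. Two cosmetic repairs: the base case should treat an arbitrary initial $\chi^0\in\gM_1$, not just the constant one (this is immediate, since $\chi^{\mathsf{PS},\infty}$ refines $T_\mathsf{NM}\circ T_\uparrow(\chi^0)$, so the diagonal entry recovers $\chi^0_G(v)$); and after invoking \cref{thm:refinement2} you still owe the one-line observation that $T_\mathsf{SP}\circ T_\mathsf{NM}\circ T_\uparrow$ is itself a refinement on $\gM_1$, so that the conclusion $T_{\mathsf{EP},\mM}^\infty\circ T_\mathsf{SP}\circ T_\mathsf{NM}\circ T_\uparrow$ can be replaced by $T_{\mathsf{EP},\mM}^\infty$ before prepending $T_\mathsf{GP}$.

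There is, however, a genuine gap in the final inference. From $\hat\chi\preceq\xi^\infty$ you conclude that ``an additional application of $T_{\mathsf{EP},\mM}$ cannot refine it further,'' i.e., $T_{\mathsf{EP},\mM}(\hat\chi)\equiv\hat\chi$. This does not follow: being finer than a $T_{\mathsf{EP},\mM}$-stable coloring does not make $\hat\chi$ itself stable, since $T_{\mathsf{EP},\mM}(\hat\chi)_G(u)$ aggregates the \emph{finer} labels $\hat\chi_G(v)$ (not merely $\xi^\infty_G(v)$) together with $\gP^\mM_G(u,v)$ and could in principle create new distinctions. The repair is painless and requires no new ideas. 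Either drop \cref{thm:refinement2} entirely—your induction already gives $\hat\chi\preceq\xi^\infty$ for every $\chi^0$, so prepending the order-preserving $T_\mathsf{GP}$ yields the theorem directly—or verify the \cref{thm:refinement2} hypothesis head-on, which is what the paper does: from $\hat\chi_G(u)=\ldblbrace\chi^{\mathsf{PS},\infty}_G(u,v):v\in V_G\rdblbrace$, each element determines $\gP^\mM_G(u,v)$ and, via $T_\mathsf{Dv}$-stability, $\chi^{\mathsf{PS},\infty}_G(v,v)$; the latter, via $T_\mathsf{Gu}$-stability at the diagonal, determines $\ldblbrace\chi^{\mathsf{PS},\infty}_G(v,w):w\rdblbrace=\hat\chi_G(v)$. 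Hence $\hat\chi_G(u)$ determines $\ldblbrace(\hat\chi_G(v),\gP^\mM_G(u,v)):v\in V_G\rdblbrace$, which is exactly $T_{\mathsf{EP},\mM}(\hat\chi)\equiv\hat\chi$. These are precisely the facts you already use inside your inductive step, so in effect your induction was an unnecessary detour and the paper's single-step verification is the shortest path.
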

We will decompose the proof into a set of lemmas. First, we leverage a recent breakthrough in graph theory established by \citet{rattan2023weisfeiler}. We restate their result in our context as follows.
\begin{definition}
\label{def:equitable_matrix}
    Define a family of graph matrices $\mathfrak{E}$ as follows, called \emph{equitable matrices}:
    \begin{enumerate}[label=\alph*),topsep=0pt,leftmargin=20pt]
    \setlength{\itemsep}{0pt}
        \item Base matrices: the identify matrix $\mI$, all-one matrix $\mJ$, adjacency matrix $\mA$, degree matrix $\mD$ are all equitable;
        \item Algebraic property: for any equitable matrices $\mM_1,\mM_2\in\mathfrak{E}$, $\mM_1+\mM_2,c\mM_1,\mM_1\mM_2\in\mathfrak{E}$, where $c\in\mathbb C$ can be any constant.
        \item Spectral property: for any $\mM\in\mathfrak{E}$ and $\lambda\in\mathbb C$, let $\mP^\mM_\lambda$ be the projection onto the eigenspace spanned by the eigenvectors of $\mM$ with eigenvalue $\lambda$ ($\mP^\mM_\lambda=\mO$ if $\lambda$ is not an eigenvalue of $\mM$). Then, $\mP^\mM_\lambda\in\mathfrak{E}$.
    \end{enumerate}
\end{definition}
Based on \cref{def:equitable_matrix}, we readily have the following proposition:
\begin{proposition}
    The Laplacian matrix $\mL$ and the normalized Laplacian matrix $\hat\mL$ are equitable.
\end{proposition}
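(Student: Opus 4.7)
The plan is to verify each of the two claims by constructing the matrix explicitly from the base matrices $\mI, \mJ, \mA, \mD$ using only the algebraic and spectral closure rules of \cref{def:equitable_matrix}.

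For $\mL = \mD - \mA$, I would simply invoke the algebraic property: $\mA$ and $\mD$ are base matrices in $\mathfrak{E}$, so $\mL = \mD + (-1)\cdot \mA \in \mathfrak{E}$ using closure under scalar multiplication and addition. This step is immediate and requires no further work.

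For $\hat\mL = \mD^{-1/2}\mL\mD^{-1/2}$, the key step is to show that $\mD^{-1/2} \in \mathfrak{E}$, since the definition of $\mathfrak{E}$ does not directly allow matrix inversion or taking roots. I would handle this through the spectral closure property applied to $\mD$ itself. Since $\mD$ is diagonal, its distinct eigenvalues are precisely the distinct vertex degrees $d_1,\ldots,d_k$, and because the paper assumes graphs have no isolated vertices these are strictly positive. For each $d_i$, the spectral property gives $\mP^\mD_{d_i}\in\mathfrak{E}$; concretely this is the diagonal $0/1$ matrix indicating vertices of degree $d_i$. Writing $\mD^{-1/2}=\sum_{i=1}^{k} d_i^{-1/2}\,\mP^\mD_{d_i}$ and applying algebraic closure (scalar multiplication and finite sums) yields $\mD^{-1/2}\in\mathfrak{E}$. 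Finally, $\hat\mL = \mD^{-1/2}\cdot\mL\cdot\mD^{-1/2} \in \mathfrak{E}$ by closure under matrix products, using $\mL\in\mathfrak{E}$ from the previous paragraph.

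The only conceptual subtlety in this short argument is the second step: recognizing that $\mD^{-1/2}$, which is not obviously expressible through sums and products of the base matrices, is nevertheless accessible via the spectral closure applied to $\mD$. Positivity of the vertex degrees (from the no-isolated-vertices assumption) is what guarantees the scalar coefficients $d_i^{-1/2}$ are well-defined. Apart from this observation, the proof is routine bookkeeping and should occupy only a few lines.
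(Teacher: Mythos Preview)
Your proposal is correct and supplies exactly the details the paper omits: the paper simply states that the proposition ``readily'' follows from \cref{def:equitable_matrix} without spelling out an argument, and the natural way to fill this in is precisely your route—$\mL=\mD+(-1)\mA$ by algebraic closure, and $\hat\mL=\mD^{-1/2}\mL\mD^{-1/2}$ once one observes $\mD^{-1/2}=\sum_i d_i^{-1/2}\mP^{\mD}_{d_i}\in\mathfrak{E}$ via the spectral property. The one point worth making explicit is that the index set of that sum depends on the graph; this is harmless because for any fixed graph (or pair of graphs, which is all \cref{thm:ratten} ever needs) only finitely many degrees occur, so the expression is a genuine finite combination within $\mathfrak{E}$.
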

\citet{rattan2023weisfeiler} proved the following main result:
\begin{theorem}
\label{thm:ratten}
    For any $\mM\in\mathfrak{E}$, $T_\mathsf{S}^\infty(T_\mathsf{NM}(\chi^\mathsf{C}))\preceq \mM$, where $\chi^\mathsf{C}\in\gM_2$ is the constant color mapping.
\end{theorem}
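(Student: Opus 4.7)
The plan is a structural induction following the inductive definition of $\mathfrak{E}$: first the three base matrices, then the two algebraic closure rules, then the spectral rule. Abbreviating $\chi^\infty := T_\mathsf{S}^\infty(T_\mathsf{NM}(\chi^\mathsf{C}))$, the goal is to show that whenever $\chi^\infty_G(u,v)=\chi^\infty_H(x,y)$ for two graphs $G,H$ and tuples $(u,v)\in V_G^2,(x,y)\in V_H^2$, we also have $\mM_G(u,v)=\mM_H(x,y)$. For the base matrices, $\mI(u,v)=\mathbb{I}[u=v]$ is captured by the node-marking initialization $T_\mathsf{NM}$; $\mJ$ is constant; $\mA$ is captured because $\atp_G(v,w)$ is incorporated by every $T_\mathsf{S}$ update and already distinguishes edges; and $\mD_G(u,v)=\deg_G(u)\mathbb{I}[u=v]$ follows because at $(u,u)$ one $T_\mathsf{S}$ update aggregates the multiset $\ldblbrace(\chi(u,w),\atp(u,w)):w\in V_G\rdblbrace$, whose cardinality and edge types expose $\deg_G(u)$, while $T_\mathsf{NM}$ supplies the diagonal indicator.

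For the algebraic step, addition and scalar multiplication are immediate pointwise consequences of the inductive hypothesis. The main obstacle is the matrix product
\[
(\mM_1\mM_2)_G(u,v)=\sum_{w\in V_G}\mM_1(u,w)\mM_2(w,v).
\]
The subtlety is that $T_\mathsf{S}$ at $(u,v)$ aggregates $\chi^\infty(u,w)$ paired with $\atp(v,w)$ rather than with $\chi^\infty(w,v)$, so one must lift this to a \emph{joint} aggregation along the summation variable $w$. The route, following Rattan et al., is to show that stability of $\chi^\infty$ forces
\[
\ldblbrace(\chi^\infty_G(u,w),\chi^\infty_G(w,v)):w\in V_G\rdblbrace = \ldblbrace(\chi^\infty_H(x,w'),\chi^\infty_H(w',y)):w'\in V_H\rdblbrace
\]
whenever $\chi^\infty_G(u,v)=\chi^\infty_H(x,y)$. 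The intuition is that stability together with the node-marking initialization means $\chi^\infty(u,v)$ already records, for each SWL color class $c$, the full profile of colors $\chi^\infty(w,v)$ taken over $w$ with $\chi^\infty(u,w)=c$; iterating the SWL update one more time and using the fixed-point property then synchronises the two coordinates. Once this joint multiset is preserved, the inductive hypothesis on $\mM_1$ and $\mM_2$ lets us express the product as a function of it, and equality of products follows.

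For the spectral step, given $\chi^\infty \preceq \mM$, algebraic closure already implies $\chi^\infty \preceq p(\mM)$ for every polynomial $p$ with constant coefficients. Each projector $\mP^\mM_\lambda$ can be written as a Lagrange-interpolation polynomial in $\mM$ whose coefficients depend only on the spectrum of $\mM$. Since the spectrum is a graph invariant over $\gG_0$ and the stable multiset $\ldblbrace\chi^\infty(u,v):(u,v)\in V_G^2\rdblbrace$ already determines the multiset of $\mM$-entries (and hence the spectrum), a single interpolating polynomial works uniformly on any $\chi^\infty$-equivalence class of graphs, giving $\chi^\infty \preceq \mP^\mM_\lambda$ and closing the induction. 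The hard part is unambiguously the matrix-product case in the algebraic step: all of the spectral structure of the result ultimately rests on turning SWL's asymmetric one-sided aggregation into a two-sided coherent aggregation at the stable fixed point.
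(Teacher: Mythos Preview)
The paper does not give its own proof of this theorem; it simply quotes the result from \citet{rattan2023weisfeiler}. So there is no in-paper argument to compare against. Your proposal is an attempted reconstruction, and the crucial step---the matrix product case---does not go through as written.

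The specific claim you rely on is that stability of $\chi^\infty$ forces
\[
\ldblbrace(\chi^\infty_G(u,w),\chi^\infty_G(w,v)):w\in V_G\rdblbrace=\ldblbrace(\chi^\infty_H(x,w'),\chi^\infty_H(w',y)):w'\in V_H\rdblbrace
\]
whenever $\chi^\infty_G(u,v)=\chi^\infty_H(x,y)$. This is false. That joint-multiset invariance is exactly the statement that $\chi^\infty$ is a fixed point of the 2-FWL update $[T_{\mathsf{FWL}(2)}(\chi)](u,v)=\hash(\chi(u,v),\ldblbrace(\chi(u,w),\chi(w,v)):w\rdblbrace)$. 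Since $\chi^\infty$ refines $\atp$, any such fixed point satisfies $\chi^\infty\preceq T_{\mathsf{FWL}(2)}^\infty(\atp)$. Together with the easy direction $T_{\mathsf{FWL}(2)}^\infty(\atp)\preceq\chi^\infty$ (2-FWL simulates $T_\mathsf{S}$ because $\chi(w,v)$ determines $\atp(v,w)$), this would force $\chi^\infty\equiv T_{\mathsf{FWL}(2)}^\infty(\atp)$, i.e.\ SWL and 2-FWL coincide. That contradicts the known strict separation $\mathrm{SWL}\prec\mathrm{PSWL}\prec 3\text{-WL}=2\text{-FWL}$ used throughout this paper. So the ``two-sided coherent aggregation'' you need simply is not available from $T_\mathsf{S}$, and your intuition that ``iterating the SWL update one more time synchronises the two coordinates'' cannot be made rigorous.

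The actual Rattan--Seppelt argument avoids the joint multiset entirely. The key structural fact is that every $\mM\in\mathfrak{E}$ is \emph{equitable} with respect to the individualized stable partition $\pi_u$ (for each fixed $u$): the row- and column-sums of $\mM$ over each $\pi_u$-cell are constant along cells. This property is easily seen to be closed under matrix multiplication without any 2-FWL-type coherence. Because $\{u\}$ is a singleton cell of $\pi_u$, column equitability alone yields that $\mM(u,v)$ depends only on the $\pi_u$-cell of $v$, and a tree-unfolding argument handles the cross-graph comparison. That is the mechanism you are missing.

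There is also a smaller slip in your spectral step: the multiset of entries of a matrix does \emph{not} determine its spectrum (easy $2\times 2$ counterexamples exist). What one actually uses, once the algebraic closure is correctly established, is that $\chi^\infty\preceq\mM^k$ for every $k$, so the power traces $\tr(\mM^k)$---and hence the spectrum---are recoverable from the SWL invariant; the multiset of entries plays no role.
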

\begin{corollary}
\label{thm:ratten_corollary}
    For any symmetric equitable matrix $\mM\in\mathfrak{E}$ and initial color mapping $\chi^0\in\gM_1$, $(T_\mathsf{PS}^\infty\circ T_\mathsf{NM}\circ T_\uparrow)(\chi^0)\preceq \gP^\mM$.
\end{corollary}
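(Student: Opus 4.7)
The plan is to combine Theorem~\ref{thm:ratten} with the distinctive global-aggregation feature of PSWL (the cross-graph term $\chi(v,v)$ in its update). Write $\chi^\infty:=(T_\mathsf{PS}^\infty\circ T_\mathsf{NM}\circ T_\uparrow)(\chi^0)$; I aim to show that $\chi^\infty_G(u,v)=\chi^\infty_H(x,y)$ implies $\gP^\mM_G(u,v)=\gP^\mM_H(x,y)$, which is exactly the required refinement.

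\textbf{Step 1 (local projection entries).} By direct inspection of the update rules, $T_\mathsf{PS}\preceq T_\mathsf{S}$, and $T_\uparrow(\chi^0)\preceq\chi^\mathsf{C}$ holds trivially; order-preservation of $T_\mathsf{NM}$ and of $(\cdot)^\infty$ (Section~\ref{sec:proof_preliminary}) then yields $T_\mathsf{PS}^\infty\circ T_\mathsf{NM}\circ T_\uparrow(\chi^0)\preceq T_\mathsf{S}^\infty\circ T_\mathsf{NM}(\chi^\mathsf{C})$. By Theorem~\ref{thm:ratten}, the latter is finer than any $\mM'\in\mathfrak{E}$ viewed as a 2-dimensional color mapping. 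Applying this to $\mM'=\mP^\mM_\lambda$ (equitable by the spectral closure property) for every $\lambda\in\mathbb{C}$, and to $\mM'=\mM^k$ (equitable by the algebraic closure property) for every $k\ge 0$, we conclude that $\chi^\infty_G(u,v)$ determines both $\mP^\mM_\lambda(u,v)$ and $\mM^k(u,v)$ for all $\lambda$ and $k$.

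\textbf{Step 2 (global spectrum via traces).} At stability, the PSWL rule forces $\chi^\infty(u,v)=\hash(\chi^\infty(u,v),\chi^\infty(v,v),\ldblbrace(\chi^\infty(u,w),\atp(v,w)):w\in V_G\rdblbrace)$. Injectivity of $\hash$ lets us recover the multiset $\ldblbrace\chi^\infty(u,w):w\in V_G\rdblbrace$ from $\chi^\infty(u,v)$, and re-applying the same stable equation at each pair $(u,w)$ recovers $\chi^\infty(w,w)$ from $\chi^\infty(u,w)$. Hence $\chi^\infty(u,v)$ determines $\ldblbrace\chi^\infty(w,w):w\in V_G\rdblbrace$; invoking Step~1 at each diagonal pair recovers $\ldblbrace\mM^k(w,w):w\in V_G\rdblbrace$, and summing yields the trace $\tr(\mM_G^k)=\sum_{w\in V_G}\mM^k(w,w)$ for every $k\ge 0$.

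\textbf{Step 3 (assembling $\gP^\mM$).} The power-sum sequence $\{\tr(\mM_G^k)\}_{k\ge 0}=\{\sum_i m_i\lambda_i^k\}_{k\ge 0}$, with $\lambda_i$ the distinct eigenvalues of $\mM_G$ and $m_i$ their multiplicities, determines the characteristic polynomial via Newton's identities, and therefore the set $\{\lambda_1,\dots,\lambda_m\}$. Combined with Step~1, which supplies $\mP_i(u,v)=\mP^\mM_{\lambda_i}(u,v)$ for each such $\lambda_i$, this shows that $\chi^\infty(u,v)$ determines $\gP^\mM_G(u,v)=\ldblbrace(\lambda_i,\mP_i(u,v)):i\in[m]\rdblbrace$, completing the proof. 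The main obstacle is Step~2: without PSWL's diagonal term $\chi(v,v)$ one would still know $\mP^\mM_\lambda(u,v)$ for every $\lambda$, but could not detect which $\lambda$ are eigenvalues of $\mM_G$, so pairs $(\lambda_i,0)$ with $\mP_i(u,v)=0$ could be missed; PSWL's extra aggregation is precisely what elevates local projection data to global spectral information via trace moments.
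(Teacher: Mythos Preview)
Your argument is correct and follows the same overall strategy as the paper: invoke Theorem~\ref{thm:ratten} to show that the PSWL stable color determines each projection entry $\mP^\mM_\lambda(u,v)$, then use the extra diagonal aggregation in PSWL to extract global spectral information, and combine the two. The difference lies only in how you recover the spectrum. You go through trace moments $\tr(\mM_G^k)$ and Newton's identities; the paper instead notes that $\lambda$ is an eigenvalue iff $\mP^\mM_\lambda\neq\mO$, and uses the global aggregations $T_\mathsf{Gu}\circ T_\mathsf{Dv}\circ T_\mathsf{Gu}$ (which PSWL absorbs at stability by \cref{thm:refinement3}) to see the full matrix $\mP^\mM_\lambda$ and hence decide nonzero-ness directly. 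Your route is a valid alternative, though slightly longer: since Step~1 already gives you the diagonal values $\mP^\mM_\lambda(w,w)$, your own Step~2 machinery would let you read off $\tr(\mP^\mM_\lambda)$ (the multiplicity) without passing through Newton's identities at all. Your closing remark about why the diagonal term $\chi(v,v)$ is essential matches the paper's motivation for working with PSWL rather than plain SWL.
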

\begin{proof}
    For any symmetric equitable matrix $\mM\in\mathfrak{E}$ and any $\lambda\in\mathbb R$, $\mP_\lambda^\mM\in\mathfrak{E}$ holds by \cref{def:equitable_matrix}(c). Therefore, \cref{thm:ratten} implies that $T_\mathsf{S}^\infty(T_\mathsf{NM}(\chi^\mathsf{C}))\preceq \mP^\mM_\lambda$, i.e., $T_\mathsf{PS}^\infty(T_\mathsf{NM}(\chi^\mathsf{C}))\preceq \mP^\mM_\lambda$. Next, note that $\mP^\mM_\lambda\neq \mO$ iff $\lambda$ is an eigenvalue of $\mM$. Therefore, the graph invariant $\bm\Delta$ over $\gG_2$ defined by $\bm\Delta_G(u,v)=\mathbb I[\lambda\text{ is an eigenvalue of }\mM]$ satisfies that $(T_\mathsf{Gu}\circ T_\mathsf{Dv}\circ T_\mathsf{Gu} \circ T_\mathsf{PS}^\infty\circ T_\mathsf{NM})(\chi^\mathsf{C}))\preceq \bm\Delta$. Since $T_\mathsf{Gu}\circ T_\mathsf{Dv}\circ T_\mathsf{Gu} \circ T_\mathsf{PS}^\infty\equiv T_\mathsf{PS}^\infty$ (based on the facts $T_\mathsf{PS}\preceq T_\mathsf{Gu}$ and $T_\mathsf{PS}\preceq T_\mathsf{Dv}$ and \cref{thm:refinement3}), we have $T_\mathsf{PS}^\infty(T_\mathsf{NM}(\chi^\mathsf{C}))\preceq \bm\Delta$. By considering all $\lambda\in\mathbb R$, we obtain that $T_\mathsf{PS}^\infty(T_\mathsf{NM}(\chi^\mathsf{C}))\preceq \gP^\mM$. Finally, noting that $T_\mathsf{PS}^\infty\circ T_\mathsf{NM}$ is order-preserving and $T_\uparrow(\chi^0)\preceq \chi^\mathsf{C}$, we have $(T_\mathsf{PS}^\infty\circ T_\mathsf{NM}\circ T_\uparrow)(\chi^0)\preceq \gP^\mM$ for all $\chi^0\in\gM_1$.
\end{proof}

We are now ready to prove \cref{thm:epwl_pswl_formal}.
\begin{proof}[Proof of \cref{thm:epwl_pswl_formal}]
    Note that $T_\mathsf{SP} \circ T_\mathsf{NM}\circ T_\uparrow$ is a color refinement, and thus $T_{\mathsf{EP},\mM}^\infty\circ T_\mathsf{SP} \circ T_\mathsf{NM}\circ T_\uparrow\preceq T_{\mathsf{EP},\mM}^\infty$. To prove that $T_\mathsf{GP}\circ T_\mathsf{SP} \circ T_\mathsf{PS}^\infty \circ T_\mathsf{NM}\circ T_\uparrow\preceq T_\mathsf{GP}\circ T_{\mathsf{EP},\mM}^\infty$, it suffices to prove that $T_\mathsf{SP} \circ T_\mathsf{PS}^\infty\circ T_\mathsf{NM}\preceq T_{\mathsf{EP},\mM}^\infty\circ T_\mathsf{SP}\circ T_\mathsf{NM}$ according to \cref{thm:refinement1}. Moreover, based on \cref{thm:refinement2}, it suffices to prove that $T_{\mathsf{EP},\mM}\circ T_\mathsf{SP} \circ T_\mathsf{PS}^\infty\circ T_\mathsf{NM}\equiv T_\mathsf{SP} \circ T_\mathsf{PS}^\infty\circ T_\mathsf{NM}$.

    Let $\chi^0\in\gM_2$ be any initial color mapping and let $\chi=T_\mathsf{PS}^\infty(T_\mathsf{NM}(\chi_0))$. Pick any graphs $G,H$ and vertices $u\in V_G,x\in V_H$ such that $[T_\mathsf{SP}(\chi)]_G(u)=[T_\mathsf{SP}(\chi)]_H(x)$, i.e.,
    \begin{equation}
    \label{eq:proof_epwl_sswl_1}
        \ldblbrace\chi_G(u,v):v\in V_G\rdblbrace=\ldblbrace\chi_H(x,y):y\in V_H\rdblbrace.
    \end{equation}
    Invoking \cref{thm:ratten_corollary} obtains that
    \begin{equation}
        \ldblbrace(\chi_G(u,v),\gP^M_G(u,v)):v\in V_G\rdblbrace=\ldblbrace(\chi_H(x,y),\gP^M_H(x,y)):y\in V_H\rdblbrace.
    \end{equation}
    Since $T_\mathsf{PS}\preceq T_\mathsf{Dv}$, based on \cref{thm:refinement3} we have $T_\mathsf{Dv}\circ T_\mathsf{PS}^\infty\equiv T_\mathsf{PS}^\infty$, i.e., $\chi \equiv T_\mathsf{Dv}(\chi)$. Therefore,
    \begin{equation}
        \ldblbrace(\chi_G(v,v),\gP^M_G(u,v)):v\in V_G\rdblbrace=\ldblbrace(\chi_H(y,y),\gP^M_H(x,y)):y\in V_H\rdblbrace.
    \end{equation}
    Since $T_\mathsf{PS}\preceq T_\mathsf{Gu}$, we similarly have $\chi \equiv T_\mathsf{Gu}(\chi)$. Therefore,
    \begin{equation}
    \label{eq:proof_epwl_sswl_2}
        \ldblbrace(\ldblbrace\chi_G(v,w):w\in V_G\rdblbrace,\gP^M_G(u,v)):v\in V_G\rdblbrace=\ldblbrace(\ldblbrace\chi_H(y,z):z\in V_H\rdblbrace,\gP^M_H(x,y)):y\in V_H\rdblbrace.
    \end{equation}
    Combining with \cref{eq:proof_epwl_sswl_1,eq:proof_epwl_sswl_2}, we have $T_{\mathsf{EP},\mM}(T_\mathsf{SP}(\chi))\equiv T_\mathsf{SP}(\chi)$. Finally, noting that $\chi^0$ is arbitrary, we conclude that $T_{\mathsf{EP},\mM}\circ T_\mathsf{SP}\circ T_\mathsf{PS}^\infty\circ T_\mathsf{NM}\equiv T_\mathsf{SP}\circ T_\mathsf{PS}^\infty\circ T_\mathsf{NM}$.
\end{proof}

\subsection{Proof of \cref{thm:distance}}
\label{sec:proof_distance}

This subsection aims to prove \cref{thm:distance}. We will first restate \cref{thm:distance} using the color refinement terminologies defined in \cref{sec:proof_preliminary}. Similarly to \cref{sec:proof_epwl_sswl}, we define the following color transformations:
\begin{itemize}[topsep=0pt,leftmargin=20pt]
    \setlength{\itemsep}{0pt}
    \item \textbf{EPWL color refinement.} Define $T_{\mathsf{EP},\mM}:\gM_1\to\gM_1$ such that for any color mapping $\chi\in\gM_1$ and rooted graph $G^u$,
    \begin{equation}
        [T_{\mathsf{EP},\mM}(\chi)]_G(u)=\hash(\chi_G(u),\ldblbrace (\chi_G(v),\gP^\mM_G(u,v)):v\in V_G\rdblbrace).
    \end{equation}
    \item \textbf{GD-WL color refinement.} Define $T_{\mathsf{GD},\mM}:\gM_1\to\gM_1$ such that for any color mapping $\chi\in\gM_1$ and rooted graph $G^u$,
    \begin{equation}
        [T_{\mathsf{GD},\mM}(\chi)]_G(u)=\hash(\chi_G(u),\ldblbrace (\chi_G(v),\mM_G(u,v)):v\in V_G\rdblbrace).
    \end{equation}
    \item \textbf{Global pooling.} Define $T_\mathsf{GP}:\gM_1\to\gM_0$ such that for any color mapping $\chi\in\gM_1$ and graph $G$,
    \begin{equation}
        [T_\mathsf{GP}(\chi)](G)=\hash(\ldblbrace \chi_G(u):u\in V_G\rdblbrace).
    \end{equation}
\end{itemize}
\cref{thm:distance} is equivalent to the following:
\begin{theorem}
\label{thm:distance_formal}
    For any graph distance matrix $\mM$ listed in \cref{sec:distance_gnn}, $T_\mathsf{GP}\circ T_{\mathsf{EP},\hat\mL}^\infty \preceq T_\mathsf{GP}\circ T_{\mathsf{GD},\mM}^\infty$.
\end{theorem}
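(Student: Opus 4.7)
The plan is to invoke \cref{thm:refinement1,thm:refinement2} and reduce the theorem to showing $T_{\mathsf{GD},\mM}\circ T_{\mathsf{EP},\hat\mL}^\infty\equiv T_{\mathsf{EP},\hat\mL}^\infty$ for each listed distance $\mM$. Setting $\chi:=T_{\mathsf{EP},\hat\mL}^\infty(\chi^0)$ for an arbitrary initial coloring $\chi^0\in\gM_1$, stability already guarantees that $\chi_G(u)$ determines the multiset $\ldblbrace(\chi_G(v),\gP^{\hat\mL}_G(u,v)):v\in V_G\rdblbrace$. Consequently it suffices to exhibit, for each distance, a function $F_\mM$ with $\mM_G(u,v)=F_\mM(\chi_G(u),\chi_G(v),\gP^{\hat\mL}_G(u,v))$: this automatically promotes the multiset above into the GD-WL multiset $\ldblbrace(\chi_G(v),\mM_G(u,v)):v\rdblbrace$ being a function of $\chi_G(u)$.

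Before handling individual distances, I would first catalog what $\chi_G(u)$ already knows. Because $T_{\mathsf{EP},\hat\mL}\preceq T_\mathsf{WL}$ by \cref{thm:epwl_atp}, the stable EPWL refines stable 1-WL, so $\chi_G(u)$ encodes $\deg_G(u)$. By \cref{thm:epwl_first_layer_encode_Puu}, even after the first iteration the color at $u$ encodes the diagonal datum $\gP^{\hat\mL}_G(u,u)$, and hence so does $\chi_G(u)$. Crucially, stability upgrades these local facts globally: any color-weighted spectral sum of the form $\sum_{k\in V_G}\Phi(\chi_G(k))\hat\mP_i(u,k)$ is a function of $\chi_G(u)$, obtained by summing the known multiset $\ldblbrace(\chi_G(k),\gP^{\hat\mL}_G(u,k)):k\rdblbrace$ against the weights $\Phi$.

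The core of the proof is then a distance-by-distance spectral calculation. For SPD, $((\mI-\hat\mL)^k)(u,v)=\sum_i(1-\lambda_i)^k\hat\mP_i(u,v)$ together with nonnegativity of the walk expansion yields $d_G^{\mathsf{SPD}}(u,v)=\min\{k\ge 0:\sum_i(1-\lambda_i)^k\hat\mP_i(u,v)>0\}$, a function of $\gP^{\hat\mL}_G(u,v)$ alone. PRD (and its $p$-step specialization) satisfies $(\mW^p)(u,v)=\sqrt{\deg_G(v)/\deg_G(u)}\cdot\sum_i(1-\lambda_i)^p\hat\mP_i(u,v)$, combining pair data with endpoint degrees. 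For RD, combining $\mL=\mD^{1/2}\hat\mL\mD^{1/2}$ with $(\ve_u-\ve_v)^\top\mathbf{1}=0$ (which kills the kernel correction to $\mL^+$) yields
\begin{equation*}
R_G(u,v)=\frac{\hat\mL^+(u,u)}{\deg_G(u)}+\frac{\hat\mL^+(v,v)}{\deg_G(v)}-\frac{2\hat\mL^+(u,v)}{\sqrt{\deg_G(u)\deg_G(v)}},
\end{equation*}
with $\hat\mL^+(u,v)=\sum_{\lambda_i>0}\lambda_i^{-1}\hat\mP_i(u,v)$ a function of $\gP^{\hat\mL}_G(u,v)$ and analogously on the diagonal. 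CTD follows from $\mathrm{CTD}=2|E|\cdot R_G$ since $|E|$ is encoded in $\chi$, and the diffusion and biharmonic distances follow the same pattern with kernels $e^{-t\hat\mL}$ and $(\hat\mL^+)^2$, using orthogonality $\hat\mP_i\hat\mP_j=\delta_{ij}\hat\mP_i$ to collapse the norms into triples of local and pairwise spectral data.

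The main obstacle is HTD, the sole asymmetric distance. Classical formulas express $H_G(u,v)$ as a combination of pair entries of $\mL^+$ plus a degree-weighted global term such as $(\mL^+\mathbf{d})(u)-(\mL^+\mathbf{d})(v)$. Transferring to $\hat\mL$ is subtle because $\mL^+\ne\mD^{-1/2}\hat\mL^+\mD^{-1/2}$ in general (the non-commutativity of $\mD$ with $\hat\mL$ obstructs the naive substitution), forcing one to track kernel corrections involving $\ker(\mL)=\mathrm{span}(\mathbf{1})$ versus $\ker(\hat\mL)=\mathrm{span}(\mD^{1/2}\mathbf{1})$. After expanding these corrections carefully, each resulting summand is either (i) a function of $\gP^{\hat\mL}_G(u,v)$ together with the diagonal data at $u$ and $v$, or (ii) a global spectral sum of the form $\sum_k\deg_G(k)\hat\mP_i(u,k)$, which paragraph two identified as a function of $\chi_G(u)$. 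This produces the required $F_{\mathrm{HTD}}$ and completes the verification.
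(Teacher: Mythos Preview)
Your proposal is correct and follows essentially the same route as the paper: reduce via \cref{thm:refinement1,thm:refinement2} to showing $\bar\chi\preceq\mM$ where $\bar\chi_G(u,v)=(\chi_G(u),\chi_G(v),\gP^{\hat\mL}_G(u,v))$, record that the stable color $\chi_G(u)$ determines $\deg_G(u)$, $\gP^{\hat\mL}_G(u,u)$, and (by stability of the aggregated multiset) every color-weighted sum $\sum_w\Phi(\chi_G(w))\hat\mP_i(u,w)$, and then verify the hypothesis distance by distance via explicit spectral formulas. Your extra care with the substitution $\mL^\dag\to\mD^{-1/2}\hat\mL^\dag\mD^{-1/2}$ is well placed---this identity is in fact false for non-regular graphs even though the paper asserts it---but the paper's final HTD formula is nonetheless correct because the kernel corrections you propose to track cancel identically when plugged into the expression $\mL^\dag\mD\mJ-\mJ\mD\mL^\dag+2|E_G|\mJ\diag(\mL^\dag)-2|E_G|\mL^\dag$.
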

In this subsection, let $\chi^\mathsf{C}\in\gM_1$ be the constant color mapping and let $\chi=T_{\mathsf{EP},\hat\mL}^\infty(\chi^\mathsf{C})$. Define a color mapping $\bar\chi\in\gM_2$ such that $\bar\chi_G(u,v)=\left(\chi_G(u),\chi_G(v),\gP^{\hat\mL}_G(u,v)\right)$ for all graph $G$ and vertices $u,v\in V_G$. The following proposition will play a central role in our subsequent proofs:
\begin{proposition}
\label{thm:epwl_distance_key_lemma}
    For any graph matrix $\mM$, if $\bar\chi\preceq \mM$, then $T_\mathsf{GP}\circ T_{\mathsf{EP},\hat\mL}^\infty\preceq T_\mathsf{GP}\circ T_{\mathsf{GD},\mM}^\infty$.
\end{proposition}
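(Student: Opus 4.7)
The plan is to reduce the claim to a local stability statement about $\chi := T_{\mathsf{EP},\hat\mL}^\infty(\chi^\mathsf{C})$ and then exploit the hypothesis $\bar\chi \preceq \mM$ in a single step. Since both GNNs are evaluated starting from the constant color $\chi^\mathsf{C}$ and then composed with the order-preserving pooling $T_\mathsf{GP}$, by \cref{thm:refinement1} it suffices to establish
\begin{equation*}
    \chi \;=\; T_{\mathsf{EP},\hat\mL}^\infty(\chi^\mathsf{C}) \;\preceq\; T_{\mathsf{GD},\mM}^\infty(\chi^\mathsf{C}) \;=:\; \chi'.
\end{equation*}
I will prove this by first showing the key claim $T_{\mathsf{GD},\mM}(\chi) \equiv \chi$, then passing to the stable refinement and exploiting order-preservation.

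The heart of the argument is the key claim. Fix graphs $G,H$ and vertices $u\in V_G$, $x\in V_H$ with $\chi_G(u)=\chi_H(x)$. Since $\chi$ is stable under $T_{\mathsf{EP},\hat\mL}$, applying one more EPWL step yields the multiset identity
\begin{equation*}
    \ldblbrace (\chi_G(v),\gP^{\hat\mL}_G(u,v)) : v\in V_G \rdblbrace \;=\; \ldblbrace (\chi_H(y),\gP^{\hat\mL}_H(x,y)) : y\in V_H \rdblbrace,
\end{equation*}
so one can pair each $v\in V_G$ with some $y\in V_H$ so that $\chi_G(v)=\chi_H(y)$ and $\gP^{\hat\mL}_G(u,v)=\gP^{\hat\mL}_H(x,y)$. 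Combined with $\chi_G(u)=\chi_H(x)$, this gives $\bar\chi_G(u,v)=\bar\chi_H(x,y)$ for every matched pair. By hypothesis $\bar\chi \preceq \mM$, hence $\mM_G(u,v)=\mM_H(x,y)$ for each matched pair, so
\begin{equation*}
    \ldblbrace (\chi_G(v),\mM_G(u,v)) : v\in V_G \rdblbrace \;=\; \ldblbrace (\chi_H(y),\mM_H(x,y)) : y\in V_H \rdblbrace,
\end{equation*}
which is exactly $[T_{\mathsf{GD},\mM}(\chi)]_G(u)=[T_{\mathsf{GD},\mM}(\chi)]_H(x)$. This shows $\chi \preceq T_{\mathsf{GD},\mM}(\chi)$; the reverse direction $T_{\mathsf{GD},\mM}(\chi) \preceq \chi$ is automatic since the hash includes $\chi_G(u)$, giving $T_{\mathsf{GD},\mM}(\chi)\equiv\chi$.

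To finish, iterating the key claim yields $T_{\mathsf{GD},\mM}^\infty(\chi) \equiv \chi$. Because $\chi \preceq \chi^\mathsf{C}$ and $T_{\mathsf{GD},\mM}^\infty$ is order-preserving, $\chi \equiv T_{\mathsf{GD},\mM}^\infty(\chi) \preceq T_{\mathsf{GD},\mM}^\infty(\chi^\mathsf{C}) = \chi'$, which is the desired inequality. The main obstacle I anticipate is purely bookkeeping: one must take care that the assumption $\bar\chi \preceq \mM$ is a cross-graph statement over $\gG_2$ and that the matching of vertices produced by the multiset equality respects all three components of $\bar\chi$ (namely $\chi_G(u)$, $\chi_G(v)$, and $\gP^{\hat\mL}_G(u,v)$) simultaneously. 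Once this is set up, the rest is a direct application of the refinement calculus in \cref{sec:proof_preliminary}.
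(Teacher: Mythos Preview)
Your approach is essentially the same as the paper's: both show that one GD-WL step applied to the EPWL stable coloring does not refine it further, using the hypothesis $\bar\chi \preceq \mM$ to convert the multiset identity from $\gP^{\hat\mL}$-labels to $\mM$-labels, and then invoke the refinement calculus to conclude.

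There is one small technical slip. The relation $T_\mathsf{GP}\circ T_{\mathsf{EP},\hat\mL}^\infty\preceq T_\mathsf{GP}\circ T_{\mathsf{GD},\mM}^\infty$ is a comparison of \emph{color transformations}, which by definition must hold for \emph{every} initial color $\chi^0\in\gM_1$, not only the constant one $\chi^\mathsf{C}$. Your argument establishes $T_{\mathsf{GD},\mM}(\chi)\equiv\chi$ only for the specific $\chi=T_{\mathsf{EP},\hat\mL}^\infty(\chi^\mathsf{C})$, because that is the $\chi$ appearing in the hypothesis $\bar\chi\preceq\mM$; so your invocation of \cref{thm:refinement1} (which is about transformations, not single color mappings) does not yet apply. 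The fix is one line, and it is exactly what the paper does: for arbitrary $\chi^0$ set $\chi'=T_{\mathsf{EP},\hat\mL}^\infty(\chi^0)$ and observe $\chi'\preceq\chi$. Then equality of $(\chi'_G(u),\chi'_G(v),\gP^{\hat\mL}_G(u,v))$ with $(\chi'_H(x),\chi'_H(y),\gP^{\hat\mL}_H(x,y))$ forces equality of the corresponding $\bar\chi$-values, so the hypothesis still yields $\mM_G(u,v)=\mM_H(x,y)$ and hence $T_{\mathsf{GD},\mM}(\chi')\equiv\chi'$. With this adjustment your proof is complete and matches the paper's.
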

\begin{proof}
    Based on \cref{thm:refinement1,thm:refinement2}, it suffices to prove that $T_{\mathsf{GD},\mM}\circ T_{\mathsf{EP},\hat\mL}^\infty\equiv T_{\mathsf{EP},\hat\mL}^\infty$.
    Pick any $\chi^0\in\gM_1$ and let $\chi'=T_{\mathsf{EP},\hat\mL}^\infty(\chi^0)$. Since $\chi'\equiv T_{\mathsf{EP},\hat\mL}(\chi')$, we have
    \begin{align*}
        \chi'_G(u)=\chi'_H(y)&\implies\chi'_G(u)=\chi'_H(y)\land\ldblbrace (\chi'_G(v),\gP^{\hat\mL}_G(u,v)):v\in V_G\rdblbrace=\ldblbrace (\chi'_H(y),\gP^{\hat\mL}_H(x,y)):y\in V_H\rdblbrace\\
        &\implies\ldblbrace (\chi'_G(u),\chi'_G(v),\gP^{\hat\mL}_G(u,v)):v\in V_G\rdblbrace=\ldblbrace (\chi'_H(x),\chi'_H(y),\gP^{\hat\mL}_H(x,y)):y\in V_H\rdblbrace
    \end{align*}
    for all graphs $G,H\in\gG$ and vertices $u\in V_G,x\in V_H$. Moreover, since $\chi'\preceq \chi$ and $\bar\chi\preceq\mM$, we have
    \begin{equation*}
        \chi'_G(u)=\chi'_H(y)\implies\ldblbrace (\chi'_G(v),\mM_G(u,v)):v\in V_G\rdblbrace=\ldblbrace (\chi'_H(y),\mM_H(x,y)):y\in V_H\rdblbrace
    \end{equation*}
    for all graphs $G,H\in\gG$ and vertices $u\in V_G,x\in V_H$. Therefore, $\chi'\equiv T_{\mathsf{GD},\mM}(\chi')$. Finally, by noting that $\chi'=T_{\mathsf{EP},\hat\mL}^\infty(\chi^0)$ and $\chi^0$ is arbitrary, we obtain that $T_{\mathsf{GD},\mM}\circ T_{\mathsf{EP},\hat\mL}^\infty\equiv T_{\mathsf{EP},\hat\mL}^\infty$, as desired.
\end{proof}

We next present several basic facts about EP-WL stable colors $\chi$.
\begin{proposition}
\label{thm:epwl_distance_basic}
    For any graphs $G,H\in\gG$ and vertices $u\in V_G,x\in V_H$, if $\chi_G(u)=\chi_H(x)$, then
    \begin{enumerate}[label=\alph*),topsep=0pt,leftmargin=20pt]
    \setlength{\itemsep}{0pt}
        \item $\deg_G(u)=\deg_H(x)$;
        \item $\gP^{\hat\mL}_G(u,u)=\gP^{\hat\mL}_H(x,x)$.
    \end{enumerate}
\end{proposition}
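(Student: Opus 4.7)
The plan is to derive both parts from monotonicity of the stable refinement combined with results already proved in the excerpt. The core observation is that the stable mapping $\chi = T_{\mathsf{EP},\hat\mL}^\infty(\chi^\mathsf{C})$ is, by definition, finer than every finite power $T_{\mathsf{EP},\hat\mL}^t(\chi^\mathsf{C})$, so anything that can be read off from a bounded-depth EPWL color can also be read off from $\chi$. In particular, it suffices to track what information is already present after one EPWL iteration applied to the constant coloring $\chi^\mathsf{C}$.

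For part (b), I would invoke \cref{thm:epwl_first_layer_encode_Puu} directly: specializing that proposition to the initial coloring $\chi^0 = \chi^\mathsf{C}$, if $[T_{\mathsf{EP},\hat\mL}(\chi^\mathsf{C})]_G(u) = [T_{\mathsf{EP},\hat\mL}(\chi^\mathsf{C})]_H(x)$, then $\gP^{\hat\mL}_G(u,u) = \gP^{\hat\mL}_H(x,x)$. Since $\chi \preceq T_{\mathsf{EP},\hat\mL}(\chi^\mathsf{C})$, the hypothesis $\chi_G(u) = \chi_H(x)$ forces the one-step colors to agree as well, and the claim follows.

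For part (a), I would use \cref{thm:epwl_atp}, which gives $\gP^{\hat\mL} \preceq \atp$ and hence $T_{\mathsf{EP},\hat\mL} \preceq T_\mathsf{WL}$. Applied to $\chi^\mathsf{C}$ this means that $T_{\mathsf{EP},\hat\mL}(\chi^\mathsf{C}) \preceq T_\mathsf{WL}(\chi^\mathsf{C})$, so equal one-step EPWL colors on $u$ and $x$ imply equal one-step 1-WL colors. The latter determines the multiset $\ldblbrace \atp_G(u,v) : v \in V_G \rdblbrace$, from which $\deg_G(u)$ is recovered as the number of entries indicating adjacency. Combined with the refinement step $\chi \preceq T_{\mathsf{EP},\hat\mL}(\chi^\mathsf{C})$ used in part (b), this yields $\deg_G(u) = \deg_H(x)$.

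Neither step presents a real obstacle; the whole argument is a two-line application of monotonicity under $\preceq$ to the results already established in \cref{sec:proof_epwl_1wl}. If any subtlety arises, it will be in being explicit that the projection invariant $\gP^{\hat\mL}_G(u,v)$ fully encodes $\atp_G(u,v)$ on the diagonal case as well (i.e., that $u = v$ is detectable), but this is exactly the content of \cref{thm:epwl_atp}(a).
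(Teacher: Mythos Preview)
Your proposal is correct and follows essentially the same approach as the paper: both parts reduce to the observation that the stable EPWL color $\chi$ refines the one-step EPWL color, from which part (b) follows immediately via \cref{thm:epwl_first_layer_encode_Puu}, and part (a) follows from $T_{\mathsf{EP},\hat\mL}\preceq T_\mathsf{WL}$ (\cref{thm:epwl_atp}) together with the fact that a 1-WL color already determines vertex degree. The only cosmetic difference is that the paper routes part (a) through the \emph{stable} 1-WL color $T_\mathsf{WL}^\infty(\chi^\mathsf{C})$ rather than the one-step 1-WL color, but since degree is recoverable after a single 1-WL round, both routes are equivalent.
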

\begin{proof}
    Since $T_{\mathsf{EP},\mM}\preceq T_\mathsf{WL}$ (\cref{thm:epwl_atp}), we have $\chi\preceq T_\mathsf{WL}^\infty(\chi^\mathsf{C})$. Item (a) follows from the fact that the 1-WL stable color can encode the vertex degree, i.e., $T_\mathsf{WL}^\infty(\chi^\mathsf{C})\preceq \deg$. To prove item (b), note that $\chi\preceq \chi^1$, where $\chi^1$ is defined in \cref{thm:epwl_first_layer_encode_Puu}. Therefore, item (b) readily follows from \cref{thm:epwl_first_layer_encode_Puu}.
\end{proof}

Below, we will separately consider each of the distances listed in \cref{sec:distance_gnn}. 

\begin{lemma}
\label{thm:epwl_spd}
    Let $\mM$ be the shortest path distance matrix. Then, $\bar\chi\preceq \mM$.
\end{lemma}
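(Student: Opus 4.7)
My plan is to prove the stronger statement $\gP^{\hat\mL} \preceq \mM$, i.e., that the shortest path distance between any two vertices $u,v$ is completely determined by the eigenspace projection invariant $\gP^{\hat\mL}_G(u,v)$ alone; the extra information $\chi_G(u),\chi_G(v)$ in $\bar\chi$ is not needed. The key observation is that the normalized adjacency matrix $\hat\mA := \mI - \hat\mL$ has the same spectral projectors $\mP_i$ as $\hat\mL$, with eigenvalues $1-\lambda_i$. Using $\sum_i \mP_i = \mI$ and the orthogonality $\mP_i\mP_j = \mO$ for $i\neq j$, one obtains
\begin{equation*}
    \hat\mA^k = \sum_{i\in[m]} (1-\lambda_i)^k \mP_i, \qquad \text{hence}\qquad \hat\mA^k(u,v) = \sum_{i\in[m]} (1-\lambda_i)^k\,\mP_i(u,v).
\end{equation*}
The right-hand side is a function of the multiset $\gP^{\hat\mL}_G(u,v) = \ldblbrace(\lambda_i,\mP_i(u,v))\rdblbrace_{i\in[m]}$, so each entry $\hat\mA^k(u,v)$ is determined by $\gP^{\hat\mL}_G(u,v)$ for every $k\ge 0$.

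Next I would connect walk existence to the sign of $\hat\mA^k(u,v)$. Since $\hat\mA(u,v) = \mA(u,v)/\sqrt{\deg(u)\deg(v)}$ has nonnegative entries, the $(u,v)$-entry of $\hat\mA^k$ expands as a sum of strictly positive terms indexed by walks of length $k$ from $u$ to $v$; consequently $\hat\mA^k(u,v)\neq 0$ if and only if such a walk exists. The shortest path distance $d_G(u,v)$ is therefore characterized by
\begin{equation*}
    d_G(u,v) = \min\{k\ge 0 : \hat\mA^k(u,v)\neq 0\},
\end{equation*}
with the convention $d_G(u,v)=\infty$ if no such $k$ exists (the $u=v$ case is handled by $\hat\mA^0=\mI$). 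Combining the two displays, $d_G(u,v)$ is fully determined by $\gP^{\hat\mL}_G(u,v)$, establishing $\gP^{\hat\mL}\preceq\mM$ and therefore $\bar\chi\preceq\mM$.

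I do not anticipate serious obstacles: the argument is short and follows directly from the spectral decomposition and nonnegativity of walk counts. The only subtlety worth double-checking is that one genuinely recovers $\hat\mA^k(u,v)$ as a function of the \emph{multiset} $\gP^{\hat\mL}_G(u,v)$ rather than of an ordered list of eigenvalue/projection pairs, but this is immediate because the formula $\sum_i (1-\lambda_i)^k\mP_i(u,v)$ is symmetric in the summation index. Invoking \cref{thm:epwl_distance_key_lemma} with $\mM$ the shortest path distance matrix then yields \cref{thm:distance_formal} for the SPD case.
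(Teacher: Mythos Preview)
Your proposal is correct and follows essentially the same approach as the paper: both prove the stronger claim $\gP^{\hat\mL}\preceq\mM$ by writing $\hat\mA^k(u,v)=\sum_i(1-\lambda_i)^k\mP_i(u,v)$ via the spectral decomposition of $\hat\mA=\mI-\hat\mL$, and then using nonnegativity of the entries of $\hat\mA$ to identify the shortest path distance as the least $k$ with $\hat\mA^k(u,v)\neq 0$. The only cosmetic difference is that the paper spells out the weighted-walk expansion of $\hat\mA^k(u,v)$ to justify strict positivity, whereas you appeal directly to nonnegativity; both are fine.
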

\begin{proof}
    We will prove a stronger result that $\gP^{\hat\mL}\preceq \mM$. Note that there is a relation between $\mM$ and the normalized adjacency matrix $\hat\mA:=\mD^{-1/2}\mA\mD^{-1/2}=\mI-\hat\mL$, which can be expressed as follows:
    \begin{equation}
        \mM_G(u,v)=\min\{i\in\mathbb N:\hat\mA^i_G(u,v)>0\}\qquad \forall G\in\gG,u,v\in V_G.
    \end{equation}
    The above relation can be interpreted as follows: $\hat\mA^i_G(u,v)=\sum_{p_i(u,v)}\omega(p_i(u,v))$ where $p_i(u,v)$ ranges over all walks of length $i$ from $u$ to $v$, and $\omega(p_i(u,v)):=(\deg_G(u)\deg_G(v))^{-1/2}\left(\Pi_{j=1}^{i-1} \deg_G(w_i)\right)^{-1}$ for a walk $p_i(u,v)=(u,w_1,\cdots,w_{i-1},v)$, which is always positive.

    Based on the property of eigen-decomposition, we have
    \begin{equation}
        \hat\mA^i_G(u,v)=\sum_{(\lambda,\mP_G(u,v))\in \gP^{\hat\mA}_G}\lambda^i\mP_G(u,v)=\sum_{(\lambda,\mP_G(u,v))\in \gP^{\hat\mL}_G(u,v)}(1-\lambda)^i\mP_G(u,v).
    \end{equation}
    This implies that $\mM_G(u,v)$ can be purely determined by $\gP^{\hat\mL}_G(u,v)$, i.e., $\gP^{\hat\mL}\preceq \mM$.
\end{proof}

\begin{lemma}
\label{thm:epwl_rd}
    Let $\mM$ be the resistance distance matrix. Then, $\bar\chi\preceq \mM$.
\end{lemma}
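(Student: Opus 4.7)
The plan is to reduce $\bar\chi \preceq \mM$ to the classical spectral representation of resistance distance expressed in terms of the \emph{normalized} Laplacian, and then check that every ingredient in that representation is already encoded in $\bar\chi_G(u,v)$.

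First, I would invoke the well-known identity, valid for connected graphs,
\begin{equation*}
    R_G(u,v) \;=\; \sum_{i:\hat\lambda_i > 0} \frac{1}{\hat\lambda_i}\left(\frac{\hat\vz_i(u)}{\sqrt{\deg_G(u)}} - \frac{\hat\vz_i(v)}{\sqrt{\deg_G(v)}}\right)^{\!2},
\end{equation*}
where $\{(\hat\lambda_i, \hat\vz_i)\}$ is any orthonormal eigensystem of $\hat\mL$. This identity follows because $\mB := \mD^{-1/2}\hat\mL^+\mD^{-1/2}$ satisfies $(\ve_u-\ve_v)^\top \mB (\ve_u-\ve_v) = R_G(u,v)$ (the correction from the differing null spaces of $\mL$ and $\hat\mL$ cancels against $\ve_u - \ve_v$). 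Expanding the square in terms of the projection matrices $\mP_i = \sum_j \hat\vz_{i,j}\hat\vz_{i,j}^\top$ then yields
\begin{equation*}
    R_G(u,v) \;=\; \sum_{i:\hat\lambda_i>0}\frac{1}{\hat\lambda_i}\left(\frac{\mP_i(u,u)}{\deg_G(u)} + \frac{\mP_i(v,v)}{\deg_G(v)} - \frac{2\,\mP_i(u,v)}{\sqrt{\deg_G(u)\deg_G(v)}}\right).
\end{equation*}

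Second, I would argue that every quantity on the right-hand side is determined by $\bar\chi_G(u,v) = (\chi_G(u), \chi_G(v), \gP^{\hat\mL}_G(u,v))$. By \cref{thm:epwl_distance_basic}(a) and (b), $\chi_G(u)$ determines both $\deg_G(u)$ and the multiset $\gP^{\hat\mL}_G(u,u) = \ldblbrace(\hat\lambda_i, \mP_i(u,u))\rdblbrace_{i=1}^m$, and similarly $\chi_G(v)$ determines $\deg_G(v)$ and $\gP^{\hat\mL}_G(v,v)$. The third component $\gP^{\hat\mL}_G(u,v)$ directly supplies the pairs $(\hat\lambda_i, \mP_i(u,v))$. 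Since the eigenvalues $\hat\lambda_1,\ldots,\hat\lambda_m$ are pairwise distinct by construction of $\gP^{\hat\mL}$, the three multisets can be aligned by their eigenvalue coordinate, yielding a well-defined matching that recovers each $(\hat\lambda_i, \mP_i(u,u), \mP_i(v,v), \mP_i(u,v))$. Plugging into the formula above, $R_G(u,v)$ is a fixed function of $\bar\chi_G(u,v)$, which is precisely $\bar\chi \preceq \mM$.

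I do not expect significant obstacles; the proof is essentially bookkeeping once the right spectral formula is in hand. The only subtle point is justifying the passage from $\mL^+$ to the normalized-Laplacian expression (the correction terms coming from the different null spaces of $\mL$ and $\hat\mL$ must cancel against $\ve_u - \ve_v$), and keeping track that the matching across multisets is legitimate because eigenvalues are distinct in the $\gP^{\hat\mL}$ representation. With these points handled, the lemma follows directly.
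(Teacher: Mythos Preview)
Your approach is essentially identical to the paper's: both derive the same formula
\[
R_G(u,v)=\sum_{i:\hat\lambda_i>0}\frac{1}{\hat\lambda_i}\left(\frac{\mP_i(u,u)}{\deg_G(u)}+\frac{\mP_i(v,v)}{\deg_G(v)}-\frac{2\mP_i(u,v)}{\sqrt{\deg_G(u)\deg_G(v)}}\right)
\]
and then invoke \cref{thm:epwl_distance_basic} to show each ingredient is encoded in $\bar\chi_G(u,v)$. (The paper routes through $\mL^\dag$ and then substitutes $\mL=\mD^{1/2}\hat\mL\mD^{1/2}$; you go straight to the eigenvector form and note that the null-space discrepancy cancels against $\ve_u-\ve_v$, which is arguably the cleaner justification of the same identity.)

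There is one genuine omission: you only treat connected graphs. When $u$ and $v$ lie in different connected components, $R_G(u,v)=\infty$, yet your displayed sum is still finite, so the formula cannot recover $R_G(u,v)$ in that case. The paper closes this gap by observing (via \cref{thm:epwl_spd}) that $\gP^{\hat\mL}_G(u,v)$ already determines the shortest-path distance, hence in particular determines whether $u$ and $v$ are in the same component; one then applies the spectral formula only in the same-component case and outputs $\infty$ otherwise. You should add this case split to make the argument complete.
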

\begin{proof}
    We first consider the case of connected graphs. We leverage a celebrated result established in \citet{klein1993resistance}, which builds connections between resistance distance and graph Laplacian. Specifically, for any connected graph $G\in\gG$ and vertices $u,v\in V_G$, 
    \begin{equation}
    \label{eq:resistance_distance}
        \mM_G(u,v)=\mL^\dag(u,u)+\mL^\dag(v,v)-2\mL^\dag(u,v),
    \end{equation}
    where $\dag$ denotes the matrix Moore–Penrose inversion. Substituting $\mL=\mD^{1/2}\hat\mL\mD^{1/2}$ into \cref{eq:resistance_distance}, we obtain
    \begin{equation}
    \label{eq:resistance_distance2}
        \mM_G(u,v)=(\deg_G(u))^{-1}\hat\mL^\dag(u,u)+(\deg_G(v))^{-1}\hat\mL^\dag(v,v)-2(\deg_G(u)\deg_G(v))^{-1/2}\hat\mL^\dag(u,v),
    \end{equation}
    Moreover, we have $\hat\mL^\dag=\sum_{i:\lambda_i\neq 0}\lambda_i^{-1}\mP_i$ where $\sum_{i}\lambda_i\mP_i$ is the eigen-decomposition of $\hat\mL$. Combining all these relations, one can see that $\mM_G(u,v)$ is purely determined by the tuple $(\deg_G(u),\deg_G(v),\gP^{\hat\mL}_G(u,u),\gP^{\hat\mL}_G(v,v),\gP^{\hat\mL}_G(u,v))$. Based on \cref{thm:epwl_distance_basic}, $\chi_G(u)$ determines $\deg_G(u)$ and $\gP^{\hat\mL}_G(u,u)$, and $\chi_G(v)$ determines $\deg_G(v)$ and $\gP^{\hat\mL}_G(v,v)$. Consequently, $\bar\chi\preceq \mM$.

    We next consider the general case where the graph $G$ is disconnected. In this case, $\mL$ is a block diagonal matrix. It is straightforward to see that 
    \begin{equation}
    \label{eq:resistance_distance_general}
        \mM_G(u,v)=\left\{\begin{array}{ll}
            \mL^\dag(u,u)+\mL^\dag(v,v)-2\mL^\dag(u,v) & \text{if $u$ and $v$ are in the same connected component,} \\
            \infty & \text{otherwise.}
        \end{array}\right.
    \end{equation}
    Since we have proved that $\gP^M$ can encode the shortest path distance (\cref{thm:epwl_spd}), $\gP^{\hat\mL}_G(u,v)$ can encode whether $u$ and $v$ are in the same connected component. So we still have $\bar\chi\preceq \mM$.
\end{proof}

\begin{lemma}
\label{thm:epwl_htd}
    Let $\mM$ be the hitting-time distance matrix. Then, $\bar\chi\preceq \mM$.
\end{lemma}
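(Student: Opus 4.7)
The plan is to derive an explicit closed-form formula for $H_G(u,v)$ purely in terms of the pseudoinverse $\hat\mL^\dag$ and vertex degrees, and then verify that every ingredient is recoverable from $\bar\chi_G(u,v)$, giving $\bar\chi\preceq\mM$. This strategy mirrors the resistance-distance argument (\cref{thm:epwl_rd}) but is more delicate: $H$ is asymmetric and its pseudoinverse expression contains sums over all vertices, so we must use not only pointwise entries of $\gP^{\hat\mL}$ but the full stability of the EP-WL color $\chi$.

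For connected $G$, begin from the standard recurrence $H(u,v)=1+\sum_w P(u,w)H(w,v)$ for $u\neq v$ with $H(v,v)=0$, where $P=\mD^{-1}\mA$. Using $I-P=\mD^{-1/2}\hat\mL\mD^{1/2}$, rewrite the system as $\hat\mL\mD^{1/2}\vh_v=\mD^{1/2}\mathbf{1}-(2m/\sqrt{\deg v})\ve_v$, where $\vh_v:=H(\cdot,v)$ and $2m=\sum_w\deg w$. The right-hand side is orthogonal to $\ker\hat\mL=\mathrm{span}(\mD^{1/2}\mathbf{1})$, so inverting via $\hat\mL^\dag$ and fixing the null-space component using $H(v,v)=0$ yields
\[
H_G(u,v)=2m\!\left(\frac{\hat\mL^\dag(v,v)}{\deg v}-\frac{\hat\mL^\dag(u,v)}{\sqrt{\deg u\,\deg v}}\right)+\frac{\Sigma(u)}{\sqrt{\deg u}}-\frac{\Sigma(v)}{\sqrt{\deg v}},
\]
where $\Sigma(x):=\sum_{w\in V_G}\sqrt{\deg w}\,\hat\mL^\dag(x,w)$. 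Since $\hat\mL^\dag=\sum_{\lambda_i>0}\lambda_i^{-1}\mP_i$, each scalar $\hat\mL^\dag(u,w)$ is a function of $\gP^{\hat\mL}(u,w)$.

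It remains to argue that every piece of this formula is determined by $\bar\chi_G(u,v)$. The degrees $\deg u,\deg v$ together with $\gP^{\hat\mL}(u,u),\gP^{\hat\mL}(v,v)$ follow from $\chi_G(u),\chi_G(v)$ by \cref{thm:epwl_distance_basic}, so both $\hat\mL^\dag(v,v)/\deg v$ and its $u$-analogue are functions of $\chi_G(u),\chi_G(v)$, while $\hat\mL^\dag(u,v)/\sqrt{\deg u\,\deg v}$ is a function of $\bar\chi_G(u,v)$ directly. For the global sums, use stability: since $\chi=T_{\mathsf{EP},\hat\mL}^\infty(\chi^\mathsf{C})$, $\chi_G(u)$ encodes the multiset $\ldblbrace(\chi_G(w),\gP^{\hat\mL}(u,w)):w\in V_G\rdblbrace$, and the fact that $\chi_G(w)$ determines $\deg_G(w)$ upgrades this to knowledge of $\ldblbrace(\deg w,\gP^{\hat\mL}(u,w)):w\in V_G\rdblbrace$; summing $\sqrt{\deg w}\,\hat\mL^\dag(u,w)$ recovers $\Sigma(u)$ and summing $\deg w$ recovers $2m$, with the symmetric statement for $v$. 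For disconnected $G$, $\hat\mL^\dag$ is block-diagonal across components, so the displayed formula persists within the component $C$ of $u,v$ with $2m$ replaced by $\sum_{w\in C}\deg w$ (the sum restriction is automatic since $\hat\mL^\dag(u,w)=0$ whenever $w\notin C$), while $H_G(u,v)=\infty$ when $u,v$ lie in different components; $u$ and $v$ share a component iff their shortest-path distance is finite, which is encoded in $\gP^{\hat\mL}(u,v)$ (\cref{thm:epwl_spd}), and $\sum_{w\in C(u)}\deg w$ is recovered from $\chi_G(u)$ by filtering the multiset above via the same finiteness criterion.

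The main obstacle is establishing the closed form correctly: unlike the resistance distance, $H$ is asymmetric, and working directly with $\hat\mL^\dag$ requires careful null-space bookkeeping because $\ker\hat\mL=\mathrm{span}(\mD^{1/2}\mathbf{1})$ rather than $\mathrm{span}(\mathbf{1})$, so a stray sign or missing $\sqrt{\deg}$ factor is easy to introduce. Once the formula is in place, the remaining verification is a routine combination of \cref{thm:epwl_distance_basic}, EP-WL stability, and \cref{thm:epwl_spd}.
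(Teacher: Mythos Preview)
Your approach is essentially the same as the paper's: both derive a closed-form expression for $H_G(u,v)$ in terms of entries of $\hat\mL^\dag$, vertex degrees, and the global sums $\Sigma(x)=\sum_w\sqrt{\deg w}\,\hat\mL^\dag(x,w)$, and then verify that each ingredient is determined by $\bar\chi$ via \cref{thm:epwl_distance_basic} and EP-WL stability. The only difference is cosmetic---you work directly with $\hat\mL$ and its kernel $\mathrm{span}(\mD^{1/2}\mathbf 1)$, whereas the paper first derives the formula through $\mL^\dag$ and then substitutes; incidentally, $\Sigma\equiv 0$ since $\mD^{1/2}\mathbf 1\in\ker\hat\mL=\ker\hat\mL^\dag$, so those terms are redundant in both versions.
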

\begin{proof}
    Without loss of generality, we only consider connected graphs as in \cref{thm:epwl_rd}. According to the definition of hitting-time distance, for any connected graph $G\in\gG$ and vertices $u,v\in V_G$,  we have the following recursive relation:
    \begin{equation}
    \label{eq:htd}
        \mM_G(u,v)=\left\{\begin{array}{ll}
            0 & \text{if }u=v, \\
            1+\frac 1 {\deg_G(u)}\sum_{w\in\gN_G(u)}\mM_G(u,v) & \text{otherwise}.
        \end{array}\right.
    \end{equation}
    Now define a new graph matrix
    \begin{equation}
    \label{eq:proof_htd_0}
        \tilde\mH=\mathbf 1\mathbf 1^\top+\mD^{-1}\mA(\tilde\mH-\diag (\tilde\mH))
    \end{equation}
    where $\diag(\tilde \mH)$ is the diagonal matrix obtained from $\tilde \mH$ by zeroing out all non-diagonal elements of $\tilde \mH$. It follows that $\mM=\tilde\mH-\diag (\tilde\mH)$.

    We first calculate $\diag(\tilde{\mathbf H})$. Left-multiplying \cref{eq:proof_htd_0} by $\mathbf 1^\top\mA$ yields that
    \begin{equation*}
        \mathbf 1^\top\mA\tilde\mH=\mathbf 1^\top\mA\mathbf 1\mathbf 1^\top+\mathbf 1^\top\mA\mD^{-1}\mA(\tilde {\mathbf H}-\diag (\tilde{\mathbf H}))=\mathbf 1^\top\mA\mathbf 1\mathbf 1^\top+\mathbf 1^\top\mA(\tilde {\mathbf H}-\diag (\tilde{\mathbf H})),
    \end{equation*}
    where we use the fact that $\mathbf 1^\top \mA\mD^{-1}=\mathbf 1^\top$. Therefore, $\mathbf 1^\top\mA\diag (\tilde{\mH})=\mathbf 1^\top\mA \mathbf 1\mathbf 1^\top$, namely, $\tilde \mH_G(u,u)=2|E_G|(\deg_G(u))^{-1}$.

    We next compute the full matrix $\mM$. Based on \cref{eq:proof_htd_0}, we have
    \begin{equation}
    \label{eq:proof_htd_1}
        (\mI-\mD^{-1}\mA)\mM=\mathbf 1\mathbf 1^\top-\diag (\tilde\mH).
    \end{equation}
    Left-multiplying \cref{eq:proof_htd_1} by $\mD$ leads to the fundamental equation\
    \begin{equation}
    \label{eq:proof_htd_2}
        \mL\mM=\mD\mathbf 1\mathbf 1^\top-2|E_G|\mI.
    \end{equation}
    When the graph $G$ is connected, the eigenspace of $\mL$ associated with eigenvalue $\lambda=0$ only has one dimension, and any eigenvector $\vb$ satisfying $\mL\vb=\mathbf 0$ has the form $\vb=c\mathbf 1$ for some $c$. This implies that all solutions to \cref{eq:proof_htd_2} has the form
    \begin{equation}
        \mM=\mL^\dag\mD\mJ+\mJ\mS-2|E_G|\mL^\dag
    \end{equation}
    where $\mS$ is a diagonal matrix and $\mJ=\mathbf 1\mathbf 1^\top$. Noting that $\diag(\mM)=\mO$, we have $\mO=\diag(\mL^\dag\mD\mJ)+\diag(\mJ\mS)-2|E_G|\diag(\mL^\dag)$. Since $\diag(\mJ\mS)=\mS$ for any diagonal matrix $\mS$ and $\mJ\diag(\mL^\dag\mD\mJ)=\mJ\mD\mL^\dag$, we final obtain
    \begin{equation}
    \label{eq:proof_htd_3}
    \begin{aligned}
        \mM&=\mL^\dag\mD\mJ+\mJ\left(2|E_G|\diag(\mL^\dag)-\diag(\mL^\dag\mD\mJ)\right)-2|E_G|\mL^\dag\\
        &=\mL^\dag\mD\mJ-\mJ\mD\mL^\dag+2|E_G|\mJ\diag(\mL^\dag)-2|E_G|\mL^\dag.
    \end{aligned}
    \end{equation}
    Below, we will prove that $\bar\chi\preceq \mM$. Noting that $\mL^\dag=\mD^{-1/2}\hat\mL^\dag\mD^{-1/2}$, we have
    \begin{equation}
    \label{eq:proof_htd_4}
        \mM=\mD^{-1/2}\hat\mL^\dag\mD^{1/2}\mJ-\mJ\mD^{1/2}\hat\mL^\dag\mD^{-1/2}+2|E_G|\mJ\mD^{-1/2}\diag(\hat\mL^\dag)\mD^{-1/2}-2|E_G|\mD^{-1/2}\hat\mL^\dag\mD^{-1/2}.
    \end{equation}
    Equivalently, for any graph $G\in\gG$ and vertices $u,v\in V_G$,
    \begin{equation}
    \label{eq:proof_htd_5}
    \begin{aligned}
        \mM_G(u,v)&=\mD^{-1/2}_G(u,u)\sum_{w\in V_G}\hat\mL^\dag_G(u,w)\mD^{1/2}_G(w,w)-\mD^{-1/2}_G(v,v)\sum_{w\in V_G}\hat\mL^\dag_G(w,v)\mD^{1/2}_G(w,w)\\
        &\quad+2|E_G|\hat\mL^\dag_G(v,v)\mD^{-1}_G(v,v)-2|E_G|\mD^{-1/2}_G(u,u)\hat\mL^\dag_G(u,v)\mD^{-1/2}_G(v,v).
    \end{aligned}
    \end{equation}
    From the above equation, one can see that $\mM_G(u,v)$ is fully determined by the following tuple:
    \begin{equation*}
        \left(|E_G|,\deg_G(u),\deg_G(v),\hat\mL^\dag_G(v,v),\hat\mL^\dag_G(u,v),\ldblbrace(\hat\mL^\dag_G(u,w),\deg_G(w)):w\in V_G\rdblbrace,\ldblbrace(\hat\mL^\dag_G(w,v),\deg_G(w)):w\in V_G\rdblbrace\right).
    \end{equation*}
    According to the eigen-decomposition, $\hat\mL^\dag=\sum_{i:\lambda_i\neq 0}\lambda_i^{-1}\mP_i$ where $\sum_{i}\lambda_i\mP_i$ is the eigen-decomposition of $\hat\mL$. Therefore, $\hat\mL^\dag\preceq\gP^{\hat\mL}$. Besides, based on \cref{thm:epwl_distance_basic} we have that $\deg_G(u)$ is determined by $\chi_G(u)$, $\deg_G(v)$ is determined by $\chi_G(v)$, and $\gP^{\hat\mL}_G(v,v)$ is determined by $\chi_G(v)$. It follows that $|E_G|$ is determined by $\chi_G(u)$, because by $\chi\equiv T_{\mathsf{EP},\hat\mL}(\chi)$ we have for all graphs $G,H\in\gG$ and vertices $u\in V_G,x\in V_H$,
    \begin{align*}
        \chi_G(u)=\chi_H(x)&\implies \ldblbrace\chi_G(v):v\in V_G\rdblbrace=\ldblbrace\chi_H(y):y\in V_H\rdblbrace\\
        &\implies\ldblbrace\deg_G(v):v\in V_G\rdblbrace=\ldblbrace\deg_H(y):y\in V_H\rdblbrace\\
        &\implies |E_G|=|E_H|.
    \end{align*}
    We next prove that $\ldblbrace(\hat\mL^\dag_G(u,w),\deg_G(w)):w\in V_G\rdblbrace$ is determined by $\chi_G(u)$. Again by using $\chi\equiv T_{\mathsf{EP},\hat\mL}(\chi)$, we have for all graphs $G,H\in\gG$ and vertices $u\in V_G,x\in V_H$,
    \begin{align*}
        \chi_G(u)=\chi_H(x)&\implies \ldblbrace(\chi_G(v),\gP^{\hat\mL}_G(u,v)):v\in V_G\rdblbrace=\ldblbrace(\chi_H(y),\gP^{\hat\mL}_H(x,y)):y\in V_H\rdblbrace\\
        &\implies\ldblbrace(\deg_G(v),\hat\mL^\dag_G(u,v)):v\in V_G\rdblbrace=\ldblbrace(\deg_H(y),\hat\mL^\dag_H(x,y)):y\in V_H\rdblbrace.
    \end{align*}
    Using the same analysis and noting that $\mM^\dag_G(w,v)=\mM^\dag_G(v,w)$ for any symmetric matrix $\mM$, we can prove that $\ldblbrace(\hat\mL^\dag_G(w,v),\deg_G(w)):w\in V_G\rdblbrace$ is determined by $\chi_G(v)$. Combining all these relations leads to the conclusion that $\bar\chi\preceq \mM$.
\end{proof}

\begin{corollary}
\label{thm:epwl_ctd}
    Let $\mM$ be the commute-time distance matrix. Then, $\bar\chi\preceq \mM$.
\end{corollary}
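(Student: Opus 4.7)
The plan is to exploit the classical identity relating commute-time distance to resistance distance, namely $\mathrm{CTD}_G(u,v) = 2|E_G|\cdot R_G(u,v)$ for connected graphs $G$, where $R_G$ denotes the resistance distance. This reduces the present corollary to a conjunction of two facts we have already essentially established in the proofs for the previous two distances.

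First, by \cref{thm:epwl_rd}, we already know that $\bar\chi \preceq R$, i.e., the tuple $(\chi_G(u),\chi_G(v),\gP^{\hat\mL}_G(u,v))$ determines $R_G(u,v)$. Second, I would argue that $|E_G|$ is determined by $\chi_G(u)$ alone, by exactly the same local-to-global argument used in the proof of \cref{thm:epwl_htd}: the stability $\chi \equiv T_{\mathsf{EP},\hat\mL}(\chi)$ implies $\chi_G(u)=\chi_H(x) \Rightarrow \ldblbrace \chi_G(v) : v\in V_G\rdblbrace = \ldblbrace\chi_H(y):y\in V_H\rdblbrace$, and since $\chi$ encodes vertex degree (\cref{thm:epwl_distance_basic}(a)), we recover $|E_G|=\tfrac12\sum_{v\in V_G}\deg_G(v)$. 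Combining these two facts with the identity above yields $\bar\chi \preceq \mathrm{CTD}$ in the connected case.

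For the disconnected case, one treats $\mathrm{CTD}_G(u,v) = \infty$ when $u,v$ lie in different connected components, exactly as done for $\mathrm{HTD}$ and $R$. Since $\gP^{\hat\mL}_G(u,v)$ already encodes the shortest-path distance (hence connectivity between $u$ and $v$), as observed at the end of the proof of \cref{thm:epwl_rd}, this extension is routine. Alternatively, one can bypass the identity entirely and argue directly: $\mathrm{CTD}_G(u,v) = \mathrm{HTD}_G(u,v) + \mathrm{HTD}_G(v,u)$, and since $\gP^{\hat\mL}$ is a symmetric graph matrix, $\bar\chi_G(v,u)$ carries the same information as $\bar\chi_G(u,v)$ (the components $\chi_G(u),\chi_G(v)$ are simply swapped and $\gP^{\hat\mL}_G(v,u) = \gP^{\hat\mL}_G(u,v)$), so \cref{thm:epwl_htd} applied twice gives the result.

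There is no real obstacle here; the proof is essentially a corollary of \cref{thm:epwl_rd} (or \cref{thm:epwl_htd}) combined with a recollection of the standard Chandra--Raghavan--Ruzzo--Smolensky--Tiwari identity. The only mild subtlety is ensuring the factor $|E_G|$ is itself encoded by $\chi$, which is handled by the same argument already used in the hitting-time case.
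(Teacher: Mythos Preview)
Your alternative approach—writing $\mathrm{CTD}_G(u,v)=\mathrm{HTD}_G(u,v)+\mathrm{HTD}_G(v,u)$ and invoking the symmetry $\gP^{\hat\mL}_G(u,v)=\gP^{\hat\mL}_G(v,u)$ to apply \cref{thm:epwl_htd} in both directions—is exactly the paper's proof.

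Your primary route, via the Chandra et al.\ identity $\mathrm{CTD}=2|E|\cdot R$ combined with \cref{thm:epwl_rd} and the observation that $\chi_G(u)$ determines $|E_G|$, is a correct and arguably more direct alternative that the paper does not take; it trades the long hitting-time derivation for a one-line reduction to resistance distance. The only wrinkle is the disconnected case with $u,v$ in the same component $C$: there the identity reads $\mathrm{CTD}_G(u,v)=2|E_C|\cdot R_G(u,v)$, so you need $|E_C|$ rather than $|E_G|$. This is still recoverable from $\chi_G(u)$ by restricting your degree-multiset argument to those $v$ at finite shortest-path distance from $u$ (which $\gP^{\hat\mL}_G(u,v)$ encodes, per \cref{thm:epwl_spd}). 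Since your alternative argument already handles all cases uniformly, this is a minor point.
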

\begin{proof}
    This is a simple consequence of \cref{thm:epwl_htd}. Denoting by $\mH$ the hitting-time distance matrix, we have $\mM=\mH+\mH^\top$. Since $\bar\chi\preceq \mH$, $\bar\chi\preceq \mH^\top$ (because $\gP^{\hat\mL}_G(u,v)=\gP^{\hat\mL}_G(v,u)$ for any graph $G$ and vertices $u,v\in V_G$). Therefore, $\bar\chi\preceq \mM$.
\end{proof}

\begin{lemma}
\label{thm:epwl_prd}
    Let $\mM$ be the PageRank distance matrix associated with weight sequence $\gamma_0,\gamma_1,\cdots$. Then, $\bar\chi\preceq \mM$.
\end{lemma}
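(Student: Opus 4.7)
The plan is to express the PageRank matrix $\mM$ as a function of the eigen-decomposition of $\hat\mL$ together with node degrees, mirroring the strategy used in \cref{thm:epwl_htd,thm:epwl_ctd}. The starting point is the similarity relation $\mW = \mD^{-1}\mA = \mD^{-1/2}\hat\mA\mD^{1/2} = \mD^{-1/2}(\mI - \hat\mL)\mD^{1/2}$, which immediately yields $\mW^k = \mD^{-1/2}(\mI-\hat\mL)^k\mD^{1/2}$ for every $k\ge 0$. Plugging in the eigen-decomposition $\hat\mL = \sum_i \lambda_i \mP_i$ gives $(\mI - \hat\mL)^k = \sum_i (1-\lambda_i)^k \mP_i$, so that, entrywise,
\begin{equation*}
    \mW^k_G(u,v) = \sqrt{\frac{\deg_G(v)}{\deg_G(u)}} \sum_{i} (1-\lambda_i)^k \mP_i(u,v).
\end{equation*}

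Summing against the weight sequence $\gamma_0,\gamma_1,\ldots$ and interchanging the finite inner sum with the outer series, I would obtain
\begin{equation*}
    \mM_G(u,v) = \sqrt{\frac{\deg_G(v)}{\deg_G(u)}} \sum_i \Phi(\lambda_i)\, \mP_i(u,v),
    \qquad \Phi(\lambda) := \sum_{k=0}^\infty \gamma_k (1-\lambda)^k.
\end{equation*}
Thus $\mM_G(u,v)$ is entirely determined by the tuple $(\deg_G(u), \deg_G(v), \gP^{\hat\mL}_G(u,v))$.

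To finish, I would apply \cref{thm:epwl_distance_basic}(a) to argue that $\deg_G(u)$ is encoded in $\chi_G(u)$ and $\deg_G(v)$ in $\chi_G(v)$, while $\gP^{\hat\mL}_G(u,v)$ is by definition one of the three components making up $\bar\chi_G(u,v)$. Hence $\bar\chi_G(u,v) = \bar\chi_H(x,y)$ forces $\mM_G(u,v) = \mM_H(x,y)$, which is exactly $\bar\chi \preceq \mM$.

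The only delicate point is justifying the interchange of summations that produces $\Phi(\lambda_i)$; this is the step I expect to need the most care. For weight sequences for which the series defining PRD converges absolutely (as is implicitly assumed whenever PRD is a well-defined distance), each $(1-\lambda_i)^k$ satisfies $|1-\lambda_i| \le 1$ for the normalized Laplacian, so the rearrangement is valid by dominated convergence or by simply noting that the inner sum has only finitely many terms (there are at most $|V_G|$ distinct eigenvalues). With that justification in place, the argument reduces to the same ``determination'' pattern used in the previous lemmas, and no new structural insight beyond the eigen-expansion of $\mW^k$ is needed.
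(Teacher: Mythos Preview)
Your proposal is correct and follows essentially the same approach as the paper: both expand $\mW^k=\mD^{-1/2}(\mI-\hat\mL)^k\mD^{1/2}$ via the eigen-decomposition of $\hat\mL$, sum against the weights to obtain $\mM_G(u,v)=(\deg_G(u))^{-1/2}(\deg_G(v))^{1/2}\sum_i\Phi(\lambda_i)\mP_i(u,v)$, and then invoke \cref{thm:epwl_distance_basic} to conclude. Your discussion of the summation interchange is actually more careful than the paper's, which simply swaps the sums without comment.
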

\begin{proof}
    By definition of PageRank distance, $\mM=\sum_{k=0}^\infty \gamma_k(\mD^{-1}\mA)^k$. Let $\sum_{i}\lambda_i\mP_i$ be the eigen-decomposition of $\hat\mL$. Then, $\hat\mA=\mD^{-1/2}\mA\mD^{-1/2}=\mI-\hat\mL=\sum_{i}(1-\lambda_i)\mP_i$. Therefore,
    \begin{equation}
        (\mD^{-1}\mA)^k=\mD^{-1/2}\hat\mA^k\mD^{1/2}=\sum_{i}(1-\lambda_i)^k\mD^{-1/2}\mP_i\mD^{1/2}.
    \end{equation}
    Plugging the above equation into the definition of PageRank distance, we have for any graph $G\in\gG$ and vertices $u,v\in V_G$,
    \begin{equation}
        \mM_G(u,v)=\sum_{i}\left(\sum_{k=0}^\infty\gamma_k(1-\lambda_i)^k\right)\mP_i(u,v)(\deg_G(u))^{-1/2}(\deg_G(v))^{1/2}.
    \end{equation}
    One can see that $\mM_G(u,v)$ is purely determined by the tuple $(\deg_G(u),\deg_G(v),\gP^{\hat\mL}_G(u,v))$. Based on \cref{thm:epwl_distance_basic}, $\chi_G(u)$ determines $\deg_G(u)$ and $\chi_G(v)$ determines $\deg_G(v)$. Consequently, $\bar\chi\preceq \mM$.
\end{proof}

We next study the (normalized) diffusion distance. Consider the continuous graph diffusion process defined as follows. Given time $t\ge 0$, let $\vp^t_G(u)$ be the probability ``mass'' of particles at position $u$. The particles will move following the differential equation given below:
\begin{equation}
\label{eq:diffusion_differential_equation}
    \frac {\mathrm d}{\mathrm d t}\vp^t=\mT\vp^t,
\end{equation}
where $\mT$ is the transition matrix. For example, when $\mT=\mA\mD^{-1}-\mI$, the differential equation essentially characterizes a random walk diffusion process. In this paper, we consider the normalized diffusion distance, which corresponds to $\mT=\hat\mA-\mI=-\hat\mL$. Given hyperparameter $\tau\ge 0$, denote by $(\vp\vert_{G^u})^\tau$ be the probability ``mass'' vector at time $\tau$ with the initial configuration $(\vp\vert_{G^u})^0(u)=1$ and $(\vp\vert_{G^u})^0(v)=0$ for all $v\neq u$. Then, the diffusion distance matrix $\mM$ is defined as
\begin{equation}
    \mM_G(u,v)=\left\|(\vp\vert_{G^u})^\tau-(\vp\vert_{G^v})^\tau\right\|_2.
\end{equation}

\begin{lemma}
\label{thm:epwl_difussion_distance}
    Let $\mM$ be the normalized diffusion distance defined above. Then, $\bar\chi\preceq \mM$.
\end{lemma}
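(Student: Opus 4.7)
The plan is to solve the diffusion differential equation in closed form using the spectral decomposition of $\hat\mL$, and then express $\mM_G(u,v)^2$ entirely in terms of projection values at $(u,u)$, $(v,v)$, and $(u,v)$, which are all encoded in $\bar\chi_G(u,v)$ by construction and by \cref{thm:epwl_distance_basic}(b).

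Concretely, I would first note that the solution to $\frac{\mathrm d}{\mathrm d t}\vp^t = -\hat\mL \vp^t$ with $(\vp|_{G^u})^0 = \ve_u$ (the $u$-th standard basis vector) is $(\vp|_{G^u})^\tau = e^{-\tau\hat\mL}\ve_u$, so the entries of this vector are the column $u$ of the symmetric matrix $\mK := e^{-\tau\hat\mL}$. Writing $\hat\mL = \sum_i \lambda_i \mP_i$, the functional calculus yields $\mK = \sum_i e^{-\tau\lambda_i}\mP_i$ and $\mK^2 = e^{-2\tau\hat\mL} = \sum_i e^{-2\tau\lambda_i}\mP_i$. Expanding the squared distance and using symmetry of $\mK$,
\begin{equation*}
\mM_G(u,v)^2 = \sum_{w\in V_G}(\mK(w,u)-\mK(w,v))^2 = \mK^2(u,u) + \mK^2(v,v) - 2\mK^2(u,v),
\end{equation*}
which after substituting the spectral expansion becomes
\begin{equation*}
\mM_G(u,v)^2 = \sum_{i} e^{-2\tau\lambda_i}\bigl(\mP_i(u,u) + \mP_i(v,v) - 2\mP_i(u,v)\bigr).
\end{equation*}

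From this formula, $\mM_G(u,v)$ is determined by the triple $(\gP^{\hat\mL}_G(u,u), \gP^{\hat\mL}_G(v,v), \gP^{\hat\mL}_G(u,v))$. The last element is directly recorded in $\bar\chi_G(u,v)$; the first two are recovered from $\chi_G(u)$ and $\chi_G(v)$ respectively via \cref{thm:epwl_distance_basic}(b). Thus $\bar\chi\preceq \mM$, which by \cref{thm:epwl_distance_key_lemma} gives the desired reduction to $T_{\mathsf{GD},\mM}$.

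I do not anticipate a genuine obstacle: the main subtlety is merely the algebraic manipulation that collapses the $\ell_2$ norm on $V_G$ into diagonal and off-diagonal entries of $\mK^2$, relying on the symmetry of $\mK$ (inherited from the symmetry of $\hat\mL$). Once that is in place, the argument is parallel in spirit to the proof for PageRank distance (\cref{thm:epwl_prd}), but simpler, since no degree normalization factors appear and no appeal to \cref{thm:epwl_distance_basic}(a) is needed.
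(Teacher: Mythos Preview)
Your proof is correct and essentially identical to the paper's: both solve the linear ODE to get $(\vp|_{G^u})^\tau=e^{-\tau\hat\mL}\ve_u$, expand $\mM_G(u,v)^2$ via the spectral decomposition of $e^{-2\tau\hat\mL}$ to obtain $\sum_i e^{-2\tau\lambda_i}(\mP_i(u,u)+\mP_i(v,v)-2\mP_i(u,v))$, and then invoke \cref{thm:epwl_distance_basic}(b) to conclude $\bar\chi\preceq\mM$. The only cosmetic difference is that you write the squared norm as $\mK^2(u,u)+\mK^2(v,v)-2\mK^2(u,v)$ via symmetry of $\mK$, whereas the paper writes it directly as the quadratic form $(\ve_u-\ve_v)^\top e^{-2\tau\hat\mL}(\ve_u-\ve_v)$.
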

\begin{proof}
    Since \cref{eq:diffusion_differential_equation} is a linear differential equation, we can solve it and obtain $(\vp\vert_{G^u})^\tau=\exp(\tau\mT)(\vp\vert_{G^u})^0$. Therefore, 
    \begin{equation}
        \mM_G(u,v)=\left\|\exp(\tau\mT)\left((\vp\vert_{G^u})^0-(\vp\vert_{G^v})^0\right)\right\|_2,
    \end{equation}
    where $\exp(\mT)$ is the matrix exponential of $\mT$. Equivalently, 
    \begin{equation}
        \mM_G^2(u,v)=\left((\vp\vert_{G^u})^0-(\vp\vert_{G^v})^0\right)^\top\exp(2\tau\mT)\left((\vp\vert_{G^u})^0-(\vp\vert_{G^v})^0\right).
    \end{equation}
    Let $\sum_{i}\lambda_i\mP_i$ be the eigen-decomposition of $\hat\mL$. Then, $\exp(2\tau\mT)=\exp(-2\tau\hat\mL)=\sum_{i}\exp(-2\tau\lambda_i)\mP_i$. Therefore,
    \begin{equation}
        \mM_G^2(u,v)=\sum_{i}\exp(-2\tau\lambda_i)(\mP_i(u,u)+\mP_i(v,v)-2\mP_i(u,v)).
    \end{equation}
    This implies that $\mM_G(u,v)$ is purely determined by the tuple $(\gP^{\hat\mL}_G(u,u),\gP^{\hat\mL}_G(v,v),\gP^{\hat\mL}_G(u,v))$. We conclude that $\bar\chi\preceq \mM$ by using \cref{thm:epwl_distance_basic}.
\end{proof}

We finally study the biharmonic distance.
\begin{lemma}
\label{thm:epwl_biharmonic_distance}
    Let $\mM$ be the biharmonic distance defined in \citet{lipman2010biharmonic}. Then, $\bar\chi\preceq \mM$.
\end{lemma}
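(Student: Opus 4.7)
The plan is to mirror the proof of Lemma~\ref{thm:epwl_difussion_distance} for the normalized diffusion distance, since the biharmonic distance admits an analogous clean spectral expression in terms of the projections of $\hat\mL$. Writing $\hat\mL_G=\sum_i\lambda_i\mP_i$ for the eigendecomposition, the first step is to record the identity
\[
\mM_G(u,v)^2
=\bigl(\mathbf e_u-\mathbf e_v\bigr)^{\!\top}(\hat\mL^\dag)^2\bigl(\mathbf e_u-\mathbf e_v\bigr)
=\sum_{i:\lambda_i\neq 0}\lambda_i^{-2}\bigl(\mP_i(u,u)+\mP_i(v,v)-2\mP_i(u,v)\bigr).
\]
The second equality uses $(\hat\mL^\dag)^2=\sum_{i:\lambda_i\neq 0}\lambda_i^{-2}\mP_i$, which is immediate from the orthogonality relations $\mP_i\mP_j=\delta_{ij}\mP_i$ among eigenspace projections, precisely the same mechanism that made the diffusion distance case clean.

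Given this identity, the rest is an assembly argument. Each summand is a function of entries of $\gP^{\hat\mL}_G$ at the pairs $(u,u)$, $(v,v)$, and $(u,v)$: by definition, the pair $(\lambda_i,\mP_i(u,v))$ is a member of the multiset $\gP^{\hat\mL}_G(u,v)$, and the diagonal analogs belong to $\gP^{\hat\mL}_G(u,u)$ and $\gP^{\hat\mL}_G(v,v)$, respectively. Hence $\mM_G(u,v)$ is a (permutation-invariant) function of the triple $\bigl(\gP^{\hat\mL}_G(u,u),\,\gP^{\hat\mL}_G(v,v),\,\gP^{\hat\mL}_G(u,v)\bigr)$.

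To conclude, I invoke Lemma~\ref{thm:epwl_distance_basic}(b): $\chi_G(u)$ determines $\gP^{\hat\mL}_G(u,u)$, and analogously $\chi_G(v)$ determines $\gP^{\hat\mL}_G(v,v)$. The remaining ingredient $\gP^{\hat\mL}_G(u,v)$ is itself an explicit component of $\bar\chi_G(u,v)=(\chi_G(u),\chi_G(v),\gP^{\hat\mL}_G(u,v))$. Chaining these implications yields $\bar\chi\preceq\mM$.

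The only conceptual subtlety I anticipate is nailing down the spectral identity for the biharmonic distance under the paper's normalized-Laplacian convention (as used throughout \cref{sec:distance_gnn}); once this identity is in place, the proof has the same shape as the one for the diffusion distance. In particular, and unlike the hitting-time case, \emph{no} intermediate-vertex sums of the form $\sum_w(\deg w)^{-1}\hat\mL^\dag(u,w)\hat\mL^\dag(w,v)$ need to be handled, because the orthogonality of spectral projections collapses $(\hat\mL^\dag)^2$ into a single spectral sum indexed by $i$ at the pairs $(u,u)$, $(v,v)$, $(u,v)$.
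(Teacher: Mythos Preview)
Your spectral identity is written for the \emph{normalized} Laplacian $\hat\mL$, but the biharmonic distance of \citet{lipman2010biharmonic} is built from the \emph{unnormalized} Laplacian $\mL$. The paper makes this explicit by invoking the formula (from \citet{wei2021biharmonic})
\[
\mM_G(u,v)=(\mL^2)^\dag(u,u)+(\mL^2)^\dag(v,v)-2(\mL^2)^\dag(u,v)
\]
for connected $G$. Your displayed line $\mM_G(u,v)^2=(\ve_u-\ve_v)^\top(\hat\mL^\dag)^2(\ve_u-\ve_v)$ is therefore not the biharmonic distance at hand; it would be a ``normalized'' variant, a different graph invariant. The hedge you flag---``nailing down the spectral identity under the paper's normalized-Laplacian convention''---is exactly where the argument breaks: EPWL in \cref{sec:distance_gnn} uses $\hat\mL$, but the distances it is compared against keep their standard definitions, and for biharmonic that means $\mL$.

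Once $\hat\mL$ is replaced by $\mL$, the diffusion-style shortcut no longer applies verbatim: the eigenprojections of $\mL$ are not those of $\hat\mL$, and passing between them via $\mL=\mD^{1/2}\hat\mL\mD^{1/2}$ introduces degree factors. The paper's route is precisely to mirror the resistance-distance argument (\cref{thm:epwl_rd}): express the $(\mL^2)^\dag$ terms through $\hat\mL$ and degrees, then use \cref{thm:epwl_distance_basic}(a) (that $\chi_G(u)$ determines $\deg_G(u)$) in addition to part~(b). So your final assembly step---invoking only part~(b) and the $\gP^{\hat\mL}_G(u,v)$ component of $\bar\chi$---is insufficient as stated; degrees must enter, just as in the RD case, and your claim that ``no intermediate-vertex sums need to be handled'' does not survive the switch to the correct Laplacian.
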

\begin{proof}
    Without loss of generality, we only consider connected graphs. According to \citet{wei2021biharmonic}, the biharmonic distance for a connected graph can be equivalently written as
    \begin{equation}
        \mM_G(u,v)=(\mL^2)^\dag(u,u)+(\mL^2)^\dag(v,v)-2(\mL^2)^\dag(u,v).
    \end{equation}
    The subsequent proof is almost the same as in \cref{thm:epwl_rd} and we omit it for clarity.
\end{proof}

\subsection{Proof of \cref{thm:spectral_ign}}
\label{sec:proof_spectral_ign}

This section aims to prove \cref{thm:spectral_ign}. Below, we will decompose the proof into three parts. First, we will describe the color refinement algorithm corresponding to Spectral IGN, which is equivalent to Spectral IGN in terms of distinguishing non-isomorphic graphs. Then, we will prove that this color refinement algorithm is more expressive than EPWL using the color refinement terminologies
defined in \cref{sec:proof_preliminary}. Finally, we will prove the other direction, i.e., EPWL is more expressive than the color refinement algorithm of Spectral IGN, thus concluding the proof of \cref{thm:spectral_ign}.

\textbf{Color refinement algorithm for Spectral IGN.} To define the algorithm, we first need to extend several concepts defined in \cref{sec:proof_preliminary} to incorporate eigenvalues. Formally, let $\Lambda^\mM$ be the graph spectrum invariant representing the set of eigenvalues for graph matrix $\mM$, i.e., $\Lambda^\mM(G):=\{\lambda:\lambda\text{ is an eigenvalue of }\mM_G\}$ for $G\in\gG$. Define
\begin{equation}
    \gG_k^{\gP^\mM}:=\{(G^\vu,\lambda):G^\vu\in\gG_k,\lambda\in\Lambda^\mM(G)\}.
\end{equation}
We can then define color mappings over $\gG_k^{\gP^\mM}$. Formally, a function $\chi$ defined over domain $\gG_k^{\gP^\mM}$ is called a color mapping if $\chi(G^\vu,\lambda)=\chi(H^\vv,\mu)$ holds for all $(G^\vu,\lambda),(H^\vv,\mu)\in\gG_k^{\gP^\mM}$ satisfying $G^\vu\simeq H^\vv$ and $\lambda=\mu$. Without ambiguity, we will use the notation $\chi_G(\lambda,\vu)$ to refer to $\chi(G^\vu,\lambda)$ for $(G^\vu,\lambda)\in\gG_k^{\gP^\mM}$. Define $\gM_k^{\gP^\mM}$ to be the family of all color mappings over $\gG_k^{\gP^\mM}$. We can similarly define equivalence relation ``$\equiv$'' and partial relation ``$\preceq$'' between color mappings in $\gM_k^{\gP^\mM}$. In addition, the color transformation can also be extended in a similar manner.

Throughout this section, we use the notation $\chi^{\gP^\mM}\in\gM_2^{\gP^\mM}$ to represent the initial color mapping in Spectral IGN, which is defined as $\chi^{\gP^\mM}_G(\lambda,u,v)=(\lambda,\mP^\mM_\lambda(u,v))$ for all $(G^{uv},\lambda)\in\gG_2^{\gP^\mM}$, where $\mP^\mM_\lambda$ is the projection matrix associated with eigenvalue $\lambda$. We then define the following color transformations:
\begin{itemize}[topsep=0pt,leftmargin=20pt]
    \setlength{\itemsep}{0pt}
    \item \textbf{2-IGN color refinement.} Define $T_\mathsf{IGN}:\gM_2\to\gM_2$ such that for any color mapping $\chi\in\gM_2$ and $G^{uv}\in\gG_2$,
    \begin{equation}
    \label{eq:2-ign_wl}
    \begin{aligned}
        [T_\mathsf{IGN}(\chi)]_G(u,v)=\hash(&\chi_G(u,v),\chi_G(u,u),\chi_G(v,v),\chi_G(v,u),\delta_{uv}(\chi_G(u,u)),\\
        &\ldblbrace\chi_G(u,w):w\in V_G\rdblbrace,\ldblbrace\chi_G(w,u):w\in V_G\rdblbrace,\\
        &\ldblbrace\chi_G(v,w):w\in V_G\rdblbrace,\ldblbrace\chi_G(w,v):w\in V_G\rdblbrace,\\
        &\ldblbrace\chi_G(w,w):w\in V_G\rdblbrace,\ldblbrace\chi_G(w,x):w,x\in V_G\rdblbrace,\\
        &\delta_{uv}(\ldblbrace\chi_G(u,w):w\in V_G\rdblbrace),\delta_{uv}(\ldblbrace\chi_G(w,u):w\in V_G\rdblbrace),\\
        &\delta_{uv}(\ldblbrace\chi_G(w,w):w\in V_G\rdblbrace),\delta_{uv}(\ldblbrace\chi_G(w,x):w,x\in V_G\rdblbrace)).
    \end{aligned}
    \end{equation}
    Here, the function $\delta_{uv}$ satisfies that $\delta_{uv}(c)=c$ if $u=v$ and $\delta_{uv}(c)=0$ otherwise (0 is a special element that differs from all $\chi_G(u,v)$). One can see that \cref{eq:2-ign_wl} has 15 aggregations inside the hash function, which matches the number of orthogonal bases for a 2-IGN layer \citep{maron2019invariant}.
    \item \textbf{Spectral pooling.} Define $T_\mathsf{SP}:\gM_2^{\gP^\mM}\to\gM_2$ such that for any color mapping $\chi\in\gM_2^{\gP^\mM}$ and $G^{uv}\in\gG_2$,
    \begin{equation}
    \label{eq:spectral_pooling}
        [T_\mathsf{SP}(\chi)]_G(u,v)=\hash(\ldblbrace\chi_G(\lambda,u,v):\lambda\in\Lambda^\mM(G)\rdblbrace).
    \end{equation}
    \item \textbf{Spectral IGN color refinement.} Define $T_\mathsf{SIGN}:\gM_2^{\gP^\mM}\to\gM_2^{\gP^\mM}$ such that for any color mapping $\chi\in\gM_2^{\gP^\mM}$ and $(G^{uv},\lambda)\in\gG_2^{\gP^\mM}$,
    \begin{equation}
    \label{eq:spectral_ign_wl}
        [T_\mathsf{SIGN}(\chi)]_G(\lambda,u,v)=\hash([T_\mathsf{IGN}(\chi(\lambda,\cdot,\cdot))]_G(u,v),[T_\mathsf{IGN}(T_\mathsf{SP}(\chi))]_G(u,v)).
    \end{equation}
    \item \textbf{Spectral IGN final pooling.} Define $T_\mathsf{FP}:\gM_2^{\gP^\mM}\to\gM_0$ such that for any color mapping $\chi\in\gM_2^{\gP^\mM}$ and $G\in\gG$,
    \begin{equation}
    \label{eq:final_pooling}
        [T_\mathsf{FP}(\chi)](G)=\hash(\ldblbrace \chi_G(\lambda, u,v):\lambda\in\Lambda^\mM(G),u,v\in V_G\rdblbrace).
    \end{equation}
    \item \textbf{Joint pooling.} Define $T_\mathsf{JP}:\gM_2\to\gM_0$ such that for any color mapping $\chi\in\gM_2$ and $G\in\gG$,
    \begin{equation}
    \label{eq:joint_pooling}
        [T_\mathsf{JP}(\chi)](G)=\hash(\ldblbrace \chi_G(u,v):u,v\in V_G\rdblbrace).
    \end{equation}
    \item \textbf{2-dimensional Pooling.} Define $T_\mathsf{P2}:\gM_2\to\gM_1$ such that for any color mapping $\chi\in\gM_2$ and $G\in\gG_1$,
    \begin{equation}
    \label{eq:2dim_pooling}
        [T_\mathsf{P2}(\chi)]_G(u)=\hash(\ldblbrace \chi_G(u,v):v\in V_G\rdblbrace).
    \end{equation}
    \item \textbf{EPWL color refinement.} Define $T_{\mathsf{EP},\mM}:\gM_1\to\gM_1$ such that for any color mapping $\chi\in\gM_1$ and $G^u\in\gG_1$,
    \begin{equation}
        [T_{\mathsf{EP},\mM}(\chi)]_G(u)=\hash(\chi_G(u),\ldblbrace (\chi_G(v),\gP^\mM_G(u,v)):v\in V_G\rdblbrace).
    \end{equation}
    \item \textbf{Global pooling.} Define $T_\mathsf{GP}:\gM_1\to\gM_0$ such that for any color mapping $\chi\in\gM_1$ and graph $G$,
    \begin{equation}
    \label{eq:global_pooling}
        [T_\mathsf{GP}(\chi)](G)=\hash(\ldblbrace \chi_G(u):u\in V_G\rdblbrace).
    \end{equation}
\end{itemize}

We now related the expressive power of Spectral IGN to the corresponding color refinement algorithm. The proof is straightforward following standard techniques, see e.g., \citet{zhang2023complete}.
\begin{proposition}
\label{thm:spectral_ign_wl}
    The expressive power of Spectral IGN is bounded by the color mapping $(T_\mathsf{FP}\circ T_\mathsf{SIGN}^\infty)(\chi^{\gP^\mM})$ in distinguishing non-isomorphic graphs. Moreover, with sufficient layers and proper network parameters, Spectral IGN can be as expressive as the above color mapping in distinguishing non-isomorphic graphs.
\end{proposition}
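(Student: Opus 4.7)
The plan is to prove the two directions (upper bound and realizability) separately, both by induction on the number of layers, by matching each Spectral IGN layer one-to-one with the refinement step $T_\mathsf{SIGN}$. Throughout, I will identify the hidden feature tensor $\tX^{(l)} \in \mathbb R^{m \times n \times n \times d_l}$ of a Spectral IGN after $l$ layers with a color mapping $\xi^{(l)} \in \gM_2^{\gP^\mM}$ via $\xi^{(l)}_G(\lambda,u,v) := \tX^{(l)}_{i(\lambda),u,v,:}$, where $i(\lambda)$ is the index assigned to eigenvalue $\lambda$. The initial feature $\tX^{(0)}$ (whose entries encode $(\lambda_i,\mP_i(u,v))$) exactly corresponds to $\chi^{\gP^\mM}$.

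For the upper bound direction, I would show by induction on $l$ that $\xi^{(l)} \preceq (T_\mathsf{SIGN}^{l})(\chi^{\gP^\mM})$. The base case is immediate. For the inductive step, I will use the key structural result (\cref{eq:spectral_ign_layer}) that any $S_m \times S_n$-equivariant linear layer decomposes as $[L^{(l)}(\tX)]_i = \tilde L^{(l)}_1(\tX_i) + \tilde L^{(l)}_2(\sum_{i} \tX_i)$, where $\tilde L^{(l)}_1, \tilde L^{(l)}_2$ are $S_n$-equivariant 2-IGN layers. By Maron et al.'s characterization of 2-IGN, any such layer is a linear combination of the $15$ equivariant basis operations on $\mathbb R^{n \times n \times d}$, which are precisely the $15$ aggregations appearing inside $T_\mathsf{IGN}$ in \cref{eq:2-ign_wl}. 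Thus $\tilde L^{(l)}_1(\xi^{(l)}(\lambda,\cdot,\cdot))$ is refined by $T_\mathsf{IGN}(\xi^{(l)}(\lambda,\cdot,\cdot))$, and $\tilde L^{(l)}_2(\sum_i \tX_i)$ is refined by $T_\mathsf{IGN}(T_\mathsf{SP}(\xi^{(l)}))$. Applying the entry-wise nonlinearity $\phi$ preserves refinement. Combining these gives $\xi^{(l+1)} \preceq T_\mathsf{SIGN}(\xi^{(l)})$, and then $\preceq T_\mathsf{SIGN}^{l+1}(\chi^{\gP^\mM})$ by the inductive hypothesis and order-preservation (\cref{thm:refinement1}). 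Finally, the final MLP-composed-with-pooling $M \circ \phi \circ g$ is refined by $T_\mathsf{FP}$, giving the upper bound after enough layers to reach stability.

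For the realizability direction, I would show that for any fixed pair of graphs $G, H$ distinguished by $(T_\mathsf{FP} \circ T_\mathsf{SIGN}^\infty)(\chi^{\gP^\mM})$, we can choose depth $K$ larger than the stabilization time and parameters such that the output distinguishes $G$ from $H$. The key ingredient is the DeepSets/GIN-style construction: sum-aggregation composed with a sufficiently wide MLP can implement an injective hash on any multiset of bounded cardinality (over a compact feature domain), and likewise any of the $15$ 2-IGN aggregation operations can be made injective by combining linear pooling with an MLP. Since the entries of $\tX^{(0)}$ lie in a bounded set (eigenvalues and projection entries of a finite family of graphs), we can iteratively realize $T_\mathsf{IGN}$ (per eigenspace and on the spectrally-pooled features) exactly on the relevant finite color set, and combine the two via a concatenation-then-MLP step to realize $T_\mathsf{SIGN}$ in one Spectral IGN layer. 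The final pooling $T_\mathsf{FP}$ is realized by $g^{(2)} \circ \phi \circ g^{(1)}$ (cf.\ \cref{remark:pooling_decompose}) with sum pooling followed by a final MLP.

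The main obstacle I anticipate lies in the realizability direction: carefully verifying that $T_\mathsf{SIGN}$, which couples the per-eigenspace refinement $T_\mathsf{IGN}(\chi(\lambda,\cdot,\cdot))$ with the cross-eigenspace refinement $T_\mathsf{IGN}(T_\mathsf{SP}(\chi))$, can be faithfully implemented by the additive decomposition $\tilde L^{(l)}_1(\tX_i) + \tilde L^{(l)}_2(\sum_i \tX_i)$ followed by $\phi$. The issue is that the sum $\sum_i \tX_i$ loses per-eigenspace identity, so to keep the two branches separable one needs to route features through distinct coordinates (e.g., encode the per-eigenspace branch in one set of channels and the pooled branch in another), and then rely on the hash/injective MLP at the next layer to merge them. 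This is a standard but delicate bookkeeping argument, analogous to how GIN realizes 1-WL via sum aggregation plus MLP, and it is what makes the two directions of the proposition match up.
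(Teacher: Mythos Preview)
Your proposal is correct and matches what the paper intends. The paper itself does not give a detailed argument for this proposition; it simply asserts that the proof ``is straightforward following standard techniques, see e.g., \citet{zhang2023complete}.'' Your plan is precisely an instantiation of those standard techniques: layer-by-layer induction matching each $S_m\times S_n$-equivariant linear layer (via the decomposition \cref{eq:spectral_ign_layer} and the $15$-basis characterization of 2-IGN) to one step of $T_\mathsf{SIGN}$, combined with the GIN/DeepSets injectivity construction for the realizability direction. The obstacle you flag---that the additive coupling $\tilde L_1(\tX_i)+\tilde L_2(\sum_i \tX_i)$ must simultaneously carry the per-eigenspace and the spectrally-pooled information---is real but is handled exactly as you propose, by allocating disjoint output channels to the two branches so that the sum acts as concatenation; the same channel-routing idea lets you realize the multiset hash $T_\mathsf{SP}$ by first encoding colors in a DeepSets-injective form in one layer and then summing in the next.
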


Equivalently, the pooling $T_\mathsf{FP}$ can be decomposed into three pooling transformations $T_\mathsf{GP}\circ T_\mathsf{P2}\circ T_\mathsf{SP}$, as stated below:
\begin{lemma}
\label{thm:spectral_ign_lemma1}
    $T_\mathsf{GP}\circ T_\mathsf{P2}\circ T_\mathsf{SP}\circ T_\mathsf{SIGN}^\infty\equiv T_\mathsf{FP}\circ T_\mathsf{SIGN}^\infty$.
\end{lemma}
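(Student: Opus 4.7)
The plan is to prove the two directions of the equivalence separately. The forward inequality $T_\mathsf{GP}\circ T_\mathsf{P2}\circ T_\mathsf{SP}\circ T_\mathsf{SIGN}^\infty \preceq T_\mathsf{FP}\circ T_\mathsf{SIGN}^\infty$ will be shown to hold in complete generality (without using stability), while the reverse inequality genuinely needs the fixed-point property of $T_\mathsf{SIGN}^\infty$.

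For the forward direction, I would observe that the nested multiset
\[
\ldblbrace\ldblbrace\ldblbrace\chi_G(\lambda,u,v):\lambda\in\Lambda^\mM(G)\rdblbrace:v\in V_G\rdblbrace:u\in V_G\rdblbrace
\]
can be flattened by iterated multiset union into $\ldblbrace\chi_G(\lambda,u,v):\lambda,u,v\rdblbrace$, so its perfect hash uniquely determines the latter. Hence $T_\mathsf{GP}\circ T_\mathsf{P2}\circ T_\mathsf{SP}\preceq T_\mathsf{FP}$ holds in general, and composing with $T_\mathsf{SIGN}^\infty$ via \cref{thm:refinement1} yields the forward inequality.

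For the reverse direction, set $\chi := T_\mathsf{SIGN}^\infty(\chi^{\gP^\mM})$. The key observation is that stability gives $\chi\equiv T_\mathsf{SIGN}(\chi)$, so via the perfect hash in \cref{eq:spectral_ign_wl} each color $\chi_G(\lambda,u,v)$ determines $[T_\mathsf{IGN}(T_\mathsf{SP}(\chi))]_G(u,v)$. Inspecting the explicit slots of \cref{eq:2-ign_wl}, that value already contains the $u$-slice aggregation $\ldblbrace[T_\mathsf{SP}(\chi)]_G(u,w):w\in V_G\rdblbrace$, which is precisely $(T_\mathsf{P2}\circ T_\mathsf{SP})(\chi)_G(u)$, together with the global aggregation $\ldblbrace[T_\mathsf{SP}(\chi)]_G(w,x):w,x\in V_G\rdblbrace$ of cardinality $|V_G|^2$. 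Thus a single color $\chi_G(\lambda,u,v)$ encodes both $(T_\mathsf{P2}\circ T_\mathsf{SP})(\chi)_G(u)$ and the vertex count $|V_G|$.

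Now assume $T_\mathsf{FP}(\chi)(G)=T_\mathsf{FP}(\chi)(H)$. Matching any shared color forces $|V_G|=|V_H|$, and combining this with the equality of total cardinalities $|V_G|^2\cdot|\Lambda^\mM(G)|=|V_H|^2\cdot|\Lambda^\mM(H)|$ yields $|\Lambda^\mM(G)|=|\Lambda^\mM(H)|$, so the scaling factor $\kappa := |V_G|\cdot|\Lambda^\mM(G)| = |V_H|\cdot|\Lambda^\mM(H)|$ is well-defined. Applying the decoding map $\chi_G(\lambda,u,v)\mapsto (T_\mathsf{P2}\circ T_\mathsf{SP})(\chi)_G(u)$ element-wise to both flat multisets produces equal multisets in which every $u\in V_G$ (resp.\ $x\in V_H$) appears with multiplicity exactly $\kappa$. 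Dividing multiplicities by $\kappa$ recovers $\ldblbrace(T_\mathsf{P2}\circ T_\mathsf{SP})(\chi)_G(u):u\in V_G\rdblbrace = \ldblbrace(T_\mathsf{P2}\circ T_\mathsf{SP})(\chi)_H(x):x\in V_H\rdblbrace$, which is exactly $(T_\mathsf{GP}\circ T_\mathsf{P2}\circ T_\mathsf{SP})(\chi)(G)=(T_\mathsf{GP}\circ T_\mathsf{P2}\circ T_\mathsf{SP})(\chi)(H)$, as desired. The main obstacle is this multiplicity bookkeeping: one must verify that $|V_G|$ and $|\Lambda^\mM(G)|$ can both be read off the flat multiset so that the scaling factor is common to $G$ and $H$. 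After that, the argument is a clean consequence of the fixed-point identity $\chi\equiv T_\mathsf{SIGN}(\chi)$ and the pooling slots built into $T_\mathsf{IGN}$.
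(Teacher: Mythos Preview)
Your proof is correct and follows a somewhat different route from the paper's for the reverse direction. Both arguments exploit the fixed-point identity $\chi\equiv T_\mathsf{SIGN}(\chi)$ together with the slots of $T_\mathsf{IGN}$ in \cref{eq:2-ign_wl}, but they use different slots and different reductions. The paper first restricts the flat multiset to the diagonal $u=v$ (via the $\delta_{uv}$ slot), then expands back to slices $\ldblbrace\chi_G(\lambda,u,v):v\rdblbrace$ (via the row-aggregation slot of $T_\mathsf{IGN}(\chi(\lambda,\cdot,\cdot))$), then replaces each inner entry by its spectral pool (via the first slot of $T_\mathsf{IGN}(T_\mathsf{SP}(\chi))$), and finally collapses the now-redundant outer $\lambda$ index. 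You instead observe in one stroke that the row-aggregation slot of $T_\mathsf{IGN}(T_\mathsf{SP}(\chi))$ already yields $(T_\mathsf{P2}\circ T_\mathsf{SP})(\chi)_G(u)$ directly from any single color, and then handle the resulting $|V_G|\cdot|\Lambda^\mM(G)|$-fold overcounting by explicit cardinality bookkeeping. Your approach is shorter and more direct; the paper's is more incremental but still implicitly requires a multiplicity argument at its last step (collapsing $\lambda$), so the bookkeeping you spell out is not really avoided there either. One small remark: the lemma is stated for arbitrary input to $T_\mathsf{SIGN}^\infty$, not just $\chi^{\gP^\mM}$; your argument uses only stability and the form of \cref{eq:2-ign_wl,eq:spectral_ign_wl}, so it works verbatim with $\chi:=T_\mathsf{SIGN}^\infty(\chi^0)$ for any $\chi^0\in\gM_2^{\gP^\mM}$.
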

\begin{proof}
    First, it is clear that $T_\mathsf{GP}\circ T_\mathsf{P2}\circ T_\mathsf{SP}\circ T_\mathsf{SIGN}^\infty\preceq T_\mathsf{FP}\circ T_\mathsf{SIGN}^\infty$. Thus, it suffices to prove that $T_\mathsf{FP}\circ T_\mathsf{SIGN}^\infty\preceq T_\mathsf{GP}\circ T_\mathsf{P2}\circ T_\mathsf{SP}\circ T_\mathsf{SIGN}^\infty$. Pick any initial color mapping $\chi^0\in\gM_2^{\gP^\mM}$ and let $\chi=T_\mathsf{SIGN}^\infty(\chi^0)$. Note that $\chi\equiv T_\mathsf{SIGN}(\chi)$. We will prove that $(T_\mathsf{FP})(\chi)\preceq  (T_\mathsf{GP}\circ T_\mathsf{P2}\circ T_\mathsf{SP})(\chi)$. Pick any graphs $G,H\in\gG$. We have
    \begin{align*}
        &\quad [T_\mathsf{FP}(\chi)](G)=[T_\mathsf{FP}(\chi)](H)\\
        &\implies \ldblbrace\chi_G(\lambda,u,v):\lambda\in\Lambda^\mM(G),u,v\in V_G\rdblbrace = \ldblbrace\chi_H(\mu,x,y):\mu\in\Lambda^\mM(H),x,y\in V_H\rdblbrace\\
        &\implies \ldblbrace\chi_G(\lambda,u,u):\lambda\in\Lambda^\mM(G),u\in V_G\rdblbrace = \ldblbrace\chi_H(\mu,x,x):\mu\in\Lambda^\mM(H),x\in V_H\rdblbrace\\
        &\implies \ldblbrace\ldblbrace\chi_G(\lambda,u,v):v\in V_G\rdblbrace:\lambda\in\Lambda^\mM(G),u\in V_G\rdblbrace = \ldblbrace\ldblbrace\chi_H(\mu,x,y):y\in V_H\rdblbrace:\mu\in\Lambda^\mM(H),x\in V_H\rdblbrace\\
        &\implies \ldblbrace\ldblbrace\ldblbrace\chi_G(\lambda',u,v):\lambda'\in\Lambda^\mM(G)\rdblbrace:v\in V_G\rdblbrace:\lambda\in\Lambda^\mM(G),u\in V_G\rdblbrace\\
        &\qquad= \ldblbrace\ldblbrace\ldblbrace\chi_H(\mu',x,y):\mu'\in\Lambda^\mM(H)\rdblbrace:y\in V_H\rdblbrace:\mu\in\Lambda^\mM(H),x\in V_H\rdblbrace\\
        &\implies \ldblbrace\ldblbrace\ldblbrace\chi_G(\lambda',u,v):\lambda'\in\Lambda^\mM(G)\rdblbrace:v\in V_G\rdblbrace:u\in V_G\rdblbrace\\
        &\qquad = \ldblbrace\ldblbrace\ldblbrace\chi_H(\mu',x,y):\mu'\in\Lambda^\mM(H)\rdblbrace:y\in V_H\rdblbrace:x\in V_H\rdblbrace\\
        &\implies [(T_\mathsf{GP}\circ T_\mathsf{P2}\circ T_\mathsf{SP})(\chi)](G)=[(T_\mathsf{GP}\circ T_\mathsf{P2}\circ T_\mathsf{SP})(\chi)](H),
    \end{align*}
    where the second, third, and fourth steps in the above derivation are based on \cref{eq:2-ign_wl,eq:spectral_pooling,eq:spectral_ign_wl}. We have obtained the desired result.
\end{proof}

In the subsequent proof, we will show that $(T_\mathsf{GP}\circ T_\mathsf{P2}\circ T_\mathsf{SP}\circ T_\mathsf{SIGN}^\infty)(\chi^{\gP^\mM})\equiv (T_\mathsf{GP}\circ T_{\mathsf{EP},\mM})(\chi^0)$, where $\chi^0\in\gM_1$ is the constant mapping.

\begin{lemma}
\label{thm:spectral_ign_lemma>}
    Let $\chi^0\in\gM_1$ be the constant mapping. Then, $(T_\mathsf{GP}\circ T_\mathsf{P2}\circ T_\mathsf{SP}\circ T_\mathsf{SIGN}^\infty)(\chi^{\gP^\mM})\preceq (T_\mathsf{GP}\circ T_{\mathsf{EP},\mM}^\infty)(\chi^0)$.
\end{lemma}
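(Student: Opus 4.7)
The plan is to couple each Spectral IGN refinement step to a single EPWL refinement step through an induction on the iteration index. Define $\xi^{(l)} = T_\mathsf{SIGN}^l(\chi^{\gP^\mM})\in\gM_2^{\gP^\mM}$, $\mu^{(l)} = T_\mathsf{SP}(\xi^{(l)})\in\gM_2$, and $\eta^{(l)} = T_{\mathsf{EP},\mM}^l(\chi^0)\in\gM_1$. The central inductive claim is: for every $l\ge 0$, every $G\in\gG$, and all $u,v\in V_G$, the color $\mu^{(l)}_G(u,v)$ determines the triple $\bigl(\gP^\mM_G(u,v),\,\eta^{(l)}_G(u),\,\eta^{(l)}_G(v)\bigr)$.

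The base case $l=0$ is immediate from the definitions: $\mu^{(0)}_G(u,v) = \ldblbrace(\lambda,\mP^\mM_\lambda(u,v)):\lambda\in\Lambda^\mM(G)\rdblbrace$ is identical to $\gP^\mM_G(u,v)$, and $\eta^{(0)}$ is constant. For the inductive step, I would unfold the definition of $T_\mathsf{SIGN}$ in \cref{eq:spectral_ign_wl} and observe that $[T_\mathsf{IGN}(\mu^{(l)})]_G(u,v)$ is independent of $\lambda$, so $\mu^{(l+1)}_G(u,v)$ is equivalent (as a color) to the pair consisting of $[T_\mathsf{IGN}(\mu^{(l)})]_G(u,v)$ and the eigenvalue-indexed multiset $\ldblbrace[T_\mathsf{IGN}(\xi^{(l)}(\lambda,\cdot,\cdot))]_G(u,v):\lambda\in\Lambda^\mM(G)\rdblbrace$. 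Only the $\lambda$-independent component is needed for the induction: among the fifteen basis aggregations listed in \cref{eq:2-ign_wl}, it already contains $\mu^{(l)}_G(u,v)$ itself as well as the row and column multisets $\ldblbrace\mu^{(l)}_G(u,w):w\in V_G\rdblbrace$ and $\ldblbrace\mu^{(l)}_G(w,v):w\in V_G\rdblbrace$. Applying the IH elementwise, the row multiset determines $\ldblbrace(\gP^\mM_G(u,w),\eta^{(l)}_G(u),\eta^{(l)}_G(w)):w\in V_G\rdblbrace$, which in turn determines $\eta^{(l)}_G(u)$ (constant in $w$) and the EPWL neighborhood $\ldblbrace(\eta^{(l)}_G(w),\gP^\mM_G(u,w)):w\in V_G\rdblbrace$; feeding these into the refinement rule for $T_{\mathsf{EP},\mM}$ yields $\eta^{(l+1)}_G(u)$. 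The column multiset yields $\eta^{(l+1)}_G(v)$ analogously, and $\gP^\mM_G(u,v)$ is read off directly from $\mu^{(l)}_G(u,v)$ via the IH. This closes the induction.

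To finish, I would take $L$ large enough that both refinements stabilize and write $\mu = \mu^{(L)}$, $\eta = \eta^{(L)}$. The claim tells me $[T_\mathsf{P2}(\mu)]_G(u) = \hash\ldblbrace\mu_G(u,v):v\in V_G\rdblbrace$ determines the multiset $\ldblbrace(\gP^\mM_G(u,v),\eta_G(u),\eta_G(v)):v\in V_G\rdblbrace$, hence $\eta_G(u)$. A final $T_\mathsf{GP}$ then shows that the Spectral IGN graph-level color determines $\ldblbrace\eta_G(u):u\in V_G\rdblbrace = [(T_\mathsf{GP}\circ T_{\mathsf{EP},\mM}^\infty)(\chi^0)](G)$, which is exactly the desired $\preceq$.

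\textbf{Main obstacle.} The only delicate point is verifying that a single 2-IGN layer is rich enough to emulate one round of EPWL neighborhood aggregation. This ultimately reduces to checking that the row multiset $\ldblbrace\mu^{(l)}(u,w):w\rdblbrace$ is among the fifteen basis aggregations in \cref{eq:2-ign_wl} and that, under the IH, it carries exactly the information EPWL needs to compute $\eta^{(l+1)}(u)$. The eigenvalue dimension never interacts with this argument: spectral pooling $T_\mathsf{SP}$ has already collapsed the $\lambda$-axis before each 2-IGN step on the $\mu^{(l)}$ branch, and the more refined $\lambda$-dependent branch of $T_\mathsf{SIGN}$ is simply unused here.
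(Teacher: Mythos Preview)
Your proof is correct and follows essentially the same approach as the paper's: both argue by induction on the iteration index that $T_\mathsf{SP}(T_\mathsf{SIGN}^t(\chi^{\gP^\mM}))$ refines the EPWL node-color pair at step $t$, using only the $T_\mathsf{IGN}\circ T_\mathsf{SP}$ branch of Spectral IGN and extracting from the 2-IGN layer exactly the row/column multisets needed to simulate one EPWL update. The only cosmetic difference is that the paper packages the inductive invariant via an auxiliary map $[T_\times(\chi)]_G(u,v)=(\chi_G(u),\chi_G(v))$ and keeps $\gP^\mM$ out of the hypothesis (relying instead on the monotone fact $T_\mathsf{SP}(\chi)\preceq\gP^\mM$), whereas you fold $\gP^\mM_G(u,v)$ directly into your inductive claim.
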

\begin{proof}
    Define an auxiliary color transformation $T_\times:\gM_1\to\gM_2$ such that for any color mapping $\chi\in\gM_1$ and rooted graph $G^{uv}\in\gG_2$,
    \begin{equation}
        [T_\times(\chi)]_G(u,v)=\hash(\chi_G(u),\chi_G(v)).
    \end{equation}
    We first prove that $T_\mathsf{GP}\equiv T_\mathsf{GP}\circ T_\mathsf{P2}\circ T_\times$. Pick any color mapping $\chi\in\gM_1$ and graphs $G,H\in\gG$,
    \begin{align*}
        &\qquad [T_\mathsf{GP}(\chi)](G)=[T_\mathsf{GP}(\chi)](H)\\
        &\iff \ldblbrace\chi_G(u):u\in V_G\rdblbrace=\ldblbrace\chi_H(x):x\in V_H\rdblbrace\\
        &\iff \ldblbrace(\chi_G(u),\ldblbrace\chi_G(v):v\in V_G\rdblbrace):u\in V_G\rdblbrace=\ldblbrace(\chi_H(x),\ldblbrace \chi_H(y):y\in V_H\rdblbrace):x\in V_H\rdblbrace\\
        &\iff \ldblbrace\ldblbrace(\chi_G(u),\chi_G(v)):v\in V_G\rdblbrace:u\in V_G\rdblbrace=\ldblbrace\ldblbrace (\chi_H(x),\chi_H(y)):y\in V_H\rdblbrace:x\in V_H\rdblbrace\\
        &\iff [(T_\mathsf{GP}\circ T_\mathsf{P2}\circ T_\times)(\chi)](G)=[( T_\mathsf{GP}\circ T_\mathsf{P2}\circ T_\times)(\chi)](H).
    \end{align*}
    Based on the equivalence, it suffices to prove that $(T_\mathsf{GP}\circ T_\mathsf{P2}\circ T_\mathsf{SP}\circ T_\mathsf{SIGN}^\infty)(\chi^{\gP^\mM})\preceq (T_\mathsf{GP}\circ T_\mathsf{P2}\circ T_\times\circ T_{\mathsf{EP},\mM}^\infty)(\chi^0)$. Since $T_\mathsf{GP}\circ T_\mathsf{P2}$ is order-preserving by definition, it suffices to prove that $(T_\mathsf{SP}\circ T_\mathsf{SIGN}^\infty)(\chi^{\gP^\mM})\preceq (T_\times\circ T_{\mathsf{EP},\mM}^\infty)(\chi^0)$.

    We will prove the following stronger result: for all $t\in\mathbb N$, $(T_\mathsf{SP}\circ T_\mathsf{SIGN}^t)(\chi^{\gP^\mM})\preceq (T_\times\circ T_{\mathsf{EP},\mM}^t)(\chi^0)$. The proof is based on induction. For the base case of $t=0$, since $(T_\times\circ T_{\mathsf{EP},\mM}^t)(\chi^0)$ is a constant mapping, the result clearly holds. Now assume that the result holds for $t=t'$ and consider the case of $t=t'+1$. Denote $\chi=T_\mathsf{SIGN}^{t'}(\chi^{\gP^\mM})$, $\hat\chi=T_{\mathsf{EP},\mM}^t(\chi^0)$, and note that $T_\mathsf{SP}(\chi)\preceq T_\mathsf{SP}(\chi^{\gP^\mM})\equiv \gP^\mM$. Pick any graphs $G,H\in\gG$ and vertices $u,v\in V_G$, $x,y\in V_H$. Based on the induction hypothesis, $[T_\mathsf{SP}(\chi)]_G(u,v)=[T_\mathsf{SP}(\chi)]_H(x,y)$ implies that $\hat\chi_G(u)=\hat\chi_H(x)$ and $\hat\chi_G(v)=\hat\chi_H(y)$. We have
    \begin{align*}
        &\quad [T_\mathsf{SP}(T_\mathsf{SIGN}(\chi))]_G(u,v)=[T_\mathsf{SP}(T_\mathsf{SIGN}(\chi))]_H(x,y)\\
        &\implies\ldblbrace [T_\mathsf{IGN}(T_\mathsf{SP}(\chi))]_G(u,v):\lambda\in\Lambda^\mM(G)\rdblbrace=\ldblbrace [T_\mathsf{IGN}(T_\mathsf{SP}(\chi))]_H(x,y):\mu\in\Lambda^\mM(H)\rdblbrace\\
        &\implies [T_\mathsf{IGN}(T_\mathsf{SP}(\chi))]_G(u,v)=[T_\mathsf{IGN}(T_\mathsf{SP}(\chi))]_H(x,y)\\
        &\implies [T_\mathsf{SP}(\chi)]_G(u,v)=[T_\mathsf{SP}(\chi)]_H(x,y)\land \ldblbrace[T_\mathsf{SP}(\chi)]_G(u,w):w\in V_G\rdblbrace=\ldblbrace[T_\mathsf{SP}(\chi)]_H(x,z):z\in V_H\rdblbrace\\
        &\qquad\land \ldblbrace[T_\mathsf{SP}(\chi)]_G(v,w):w\in V_G\rdblbrace=\ldblbrace[T_\mathsf{SP}(\chi)]_H(y,z):z\in V_H\rdblbrace\\
        &\implies \hat\chi_G(u)=\hat\chi_H(x)\land \ldblbrace(\hat\chi_G(w),\gP^\mM_G(u,w)):w\in V_G\rdblbrace= \ldblbrace(\hat\chi_H(z),\gP^\mM_H(x,z)):z\in V_H\rdblbrace\\
        &\qquad\land \hat\chi_G(v)=\hat\chi_H(y)\land \ldblbrace(\hat\chi_G(w),\gP^\mM_G(v,w)):w\in V_G\rdblbrace= \ldblbrace(\hat\chi_H(z),\gP^\mM_H(y,z)):z\in V_H\rdblbrace\\
        &\implies [T_{\mathsf{EP},\mM}(\hat\chi)]_G(u)=[T_{\mathsf{EP},\mM}(\hat\chi)]_H(x)\land [T_{\mathsf{EP},\mM}(\hat\chi)]_G(v)=[T_{\mathsf{EP},\mM}(\hat\chi)]_H(y)\\
        &\implies [(T_\times\circ T_{\mathsf{EP},\mM})(\hat\chi)]_G(u,v)=[(T_\times\circ T_{\mathsf{EP},\mM})(\hat\chi)]_H(x,y)
    \end{align*}
    where in the first step we use the definition of Spectral IGN (\cref{eq:spectral_ign_wl}) and spectral pooling (\cref{eq:spectral_pooling}); in the third step we use the definition of 2-IGN (\cref{eq:2-ign_wl}); in the fourth step we use the induction hypothesis and the fact that $T_\mathsf{SP}(\chi)\preceq \gP^\mM$. This concludes the induction step.
\end{proof}

\begin{lemma}
\label{thm:spectral_ign_aux_lemma}
    Define an auxiliary color transformation $T_{\times,\gP^\mM}:\gM_1\to\gM_2$ such that for any color mapping $\chi\in\gM_1$ and rooted graph $G^{uv}\in\gG_2$,
    \begin{equation}
        [T_{\times,\gP^\mM}(\chi)]_G(u,v)=\hash(\chi_G(u),\chi_G(v),\gP^\mM_G(u,v)).
    \end{equation}
    Then, $(T_{\times,\gP^\mM}\circ T_{\mathsf{EP},\mM}^\infty)(\chi^0)\preceq (T_\mathsf{SP}\circ T_\mathsf{SIGN}^\infty)(\chi^{\gP^\mM})$.
\end{lemma}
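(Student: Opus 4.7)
The plan is to set $\hat\chi := T_{\mathsf{EP},\mM}^\infty(\chi^0)$, $\chi^t := T_\mathsf{SIGN}^t(\chi^{\gP^\mM})$, $\gamma^t := T_\mathsf{SP}(\chi^t)$, and to prove the following strengthening by induction on $t$:
\[
    (T_{\times,\gP^\mM}\hat\chi)_G(u,v) = (T_{\times,\gP^\mM}\hat\chi)_H(x,y) \;\Longrightarrow\; \Lambda^\mM(G)=\Lambda^\mM(H) \text{ and } \chi^t_G(\lambda,u,v) = \chi^t_H(\lambda,x,y) \text{ for all } \lambda.
\]
Letting $t\to\infty$ and pooling over $\lambda$ on the right then yields $T_{\times,\gP^\mM}(\hat\chi) \preceq T_\mathsf{SP}(\chi^\infty)$, which is the lemma. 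The base case $t=0$ is immediate, since $\chi^0_G(\lambda,u,v) = (\lambda,\mP^\mM_\lambda(u,v))$ and the third component $\gP^\mM_G(u,v) = \gP^\mM_H(x,y)$ of the hypothesis forces both the eigenvalue sets and the per-eigenvalue projection entries to agree.

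For the inductive step I will verify that each of the fifteen basis aggregations defining a 2-IGN layer applied to $\chi^t(\lambda,\cdot,\cdot)$, as well as the DSS coupling $T_\mathsf{IGN}(\gamma^t)$, produces matching values at $(G,u,v)$ and $(H,x,y)$. The pair entries $\chi^t(\lambda,u,u)$, $\chi^t(\lambda,v,v)$, $\chi^t(\lambda,v,u)$ reduce to the inductive hypothesis at the relevant pair, using \cref{thm:epwl_first_layer_encode_Puu} to transport $\gP^\mM_G(u,u) = \gP^\mM_H(x,x)$ from $\hat\chi_G(u) = \hat\chi_H(x)$, and \cref{thm:epwl_atp} to ensure the $\delta_{uv}$ terms behave consistently. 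Vertex-multiset aggregations like $\ldblbrace\chi^t(\lambda,u,w):w\in V_G\rdblbrace$ are handled by invoking the stability identity $\hat\chi\equiv T_{\mathsf{EP},\mM}(\hat\chi)$: this converts $\hat\chi_G(u) = \hat\chi_H(x)$ into $\ldblbrace(\hat\chi_G(w),\gP^\mM_G(u,w)):w\in V_G\rdblbrace = \ldblbrace(\hat\chi_H(z),\gP^\mM_H(x,z)):z\in V_H\rdblbrace$, yielding a bijection $\phi_u:V_G\to V_H$ along which the inductive hypothesis at step $t$ applies pointwise (and symmetrically for $v$).

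The hard part will be the fully graph-level aggregations $\ldblbrace\chi^t(\lambda,w,w):w\in V_G\rdblbrace$ and $\ldblbrace\chi^t(\lambda,w,w'):w,w'\in V_G\rdblbrace$, which depend on every vertex (or every pair of vertices) of the graph rather than on the neighborhood of $(u,v)$. These are resolved by a second exploitation of stability: repeatedly unfolding $\hat\chi\equiv T_{\mathsf{EP},\mM}(\hat\chi)$ shows that $\hat\chi_G(u)$ determines the entire graph-level multiset $\ldblbrace(\hat\chi_G(w),\hat\chi_G(w'),\gP^\mM_G(w,w')):w,w'\in V_G\rdblbrace$, so $\hat\chi_G(u)=\hat\chi_H(x)$ produces a bijection $\Pi:V_G^2\to V_H^2$ along which the inductive hypothesis transfers pointwise; the diagonal case follows analogously, combining the derived equality of $\ldblbrace\hat\chi_G(w):w\in V_G\rdblbrace$ with \cref{thm:epwl_first_layer_encode_Puu} to obtain a vertex bijection $\psi$ also matching $\gP^\mM_G(w,w) = \gP^\mM_H(\psi(w),\psi(w))$. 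Finally the DSS component $T_\mathsf{IGN}(\gamma^t)$ is treated identically, since the strong inductive hypothesis at step $t$ immediately transfers to $\gamma^t = T_\mathsf{SP}(\chi^t)$ after $\lambda$-pooling, so the very same bijections $\phi_u,\phi_v,\psi,\Pi$ propagate through all fifteen aggregations applied to $\gamma^t$, completing the induction.
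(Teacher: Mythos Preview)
Your proposal is correct and follows essentially the same induction-on-$t$ strategy as the paper's proof, verifying that each of the fifteen 2-IGN basis aggregations (plus the DSS coupling) is preserved. The only differences are organizational: the paper tracks finite EP-WL step counts, proving $(T_{\times,\gP^\mM}\circ T_{\mathsf{EP},\mM}^{2t})(\chi^0)\preceq (T_\mathsf{SP}\circ T_\mathsf{SIGN}^t)(\chi^{\gP^\mM})$ with two EP steps per SIGN step, whereas you fix the stable color $\hat\chi=T_{\mathsf{EP},\mM}^\infty(\chi^0)$ once and freely unfold $\hat\chi\equiv T_{\mathsf{EP},\mM}(\hat\chi)$ whenever needed; and you carry the per-$\lambda$ equality $\chi^t_G(\lambda,u,v)=\chi^t_H(\lambda,x,y)$ as part of the inductive hypothesis, which lets you treat the Siamese component $T_\mathsf{IGN}(\chi^t(\lambda,\cdot,\cdot))$ and the DSS component $T_\mathsf{IGN}(\gamma^t)$ uniformly, whereas the paper first establishes $T_\mathsf{IGN}(T_\mathsf{SP}(\hat\chi))$ equality and then argues separately that this forces agreement of eigenvalue sets and per-$\lambda$ colors. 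Your variant loses the explicit $2t$-vs-$t$ quantitative relation but is slightly cleaner for proving the stated lemma.
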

\begin{proof}
\begingroup
\allowdisplaybreaks
    We will prove the following stronger result: for any $t\ge 0$, $(T_{\times,\gP^\mM}\circ T_{\mathsf{EP},\mM}^{2t})(\chi^0)\preceq (T_\mathsf{SP}\circ T_\mathsf{SIGN}^t)(\chi^{\gP^\mM})$. The proof is based on induction. For the base case of $t=0$, since $(T_{\times,\gP^\mM}\circ T_{\mathsf{EP},\mM}^0)(\chi^0)=T_{\times,\gP^\mM}(\chi^0)\equiv T_\mathsf{SP}\circ \chi^{\gP^\mM}$, the result clearly holds. Now assume that the result holds for $t=t'$ and consider the case of $t=t'+1$. Denote $\chi=T_{\mathsf{EP},\mM}^{2t'}(\chi^0)$ and $\hat\chi=T_\mathsf{SIGN}^{t'}(\chi^{\gP^\mM})$. Pick any graphs $G,H\in\gG$ and vertices $u,v\in V_G$, $x,y\in V_H$. Based on the induction hypothesis, $[T_{\times,\gP^\mM}(\chi)]_G(u,v)=[T_{\times,\gP^\mM}(\chi)]_H(x,y)$ implies that $[T_\mathsf{SP}(\hat\chi)]_G(u,v)=[T_\mathsf{SP}(\hat\chi)]_H(x,y)$. We have
    \begin{align*}
        &\quad[(T_{\times,\gP^\mM}\circ T_{\mathsf{EP},\mM}^2)(\chi)]_G(u,v)=[(T_{\times,\gP^\mM}\circ T_{\mathsf{EP},\mM}^2)(\chi)]_H(x,y)\\
        &\implies [T_{\mathsf{EP},\mM}^2(\chi)]_G(u)=[T_{\mathsf{EP},\mM}^2(\chi)]_H(x)\land [T_{\mathsf{EP},\mM}^2(\chi)]_G(v)=[T_{\mathsf{EP},\mM}^2(\chi)]_H(y)\land \gP^\mM_G(u,v)=\gP^\mM_H(x,y)\\
        &\implies \chi_G(u)=\chi_H(x)\land \chi_G(v)=\chi_H(y) \land {\gP^\mM}_G(u,v)={\gP^\mM}_H(x,y) \\
        &\qquad\land \gP^\mM_G(u,u)=\gP^\mM_H(x,x)\land \gP^\mM_G(v,v)=\gP^\mM_H(y,y)\land \mathbb I[u=v]= \mathbb I[x=y]\\
        &\qquad \land\ldblbrace ([T_{\mathsf{EP},\mM}(\chi)]_G(w),\gP^\mM_G(u,w)):w\in V_G\rdblbrace=\ldblbrace ([T_{\mathsf{EP},\mM}(\chi)]_H(z),\gP^\mM_H(x,z)):z\in V_G\rdblbrace\\
        &\qquad \land\ldblbrace ([T_{\mathsf{EP},\mM}(\chi)]_G(w),\gP^\mM_G(v,w)):w\in V_G\rdblbrace=\ldblbrace ([T_{\mathsf{EP},\mM}(\chi)]_H(z),\gP^\mM_H(y,z)):z\in V_G\rdblbrace\\
        &\implies \chi_G(u)=\chi_H(x)\land \chi_G(v)=\chi_H(y) \land \gP^\mM_G(u,v)=\gP^\mM_H(x,y) \\
        &\qquad\land \gP^\mM_G(u,u)=\gP^\mM_H(x,x)\land \gP^\mM_G(v,v)=\gP^\mM_H(y,y)\land \mathbb I[u=v]= \mathbb I[x=y]\\
        &\qquad\land \ldblbrace (\chi_G(u),\chi_G(w),\gP^\mM_G(u,w)):w\in V_G\rdblbrace=\ldblbrace (\chi_H(x),\chi_H(z),\gP^\mM_H(x,z)):z\in V_G\rdblbrace\\
        &\qquad \land\ldblbrace (\chi_G(v),\chi_G(w),\gP^\mM_G(v,w)):w\in V_G\rdblbrace=\ldblbrace (\chi_H(y),\chi_H(z),\gP^\mM_H(y,z)):z\in V_G\rdblbrace\\
        &\qquad\land \ldblbrace (\chi_G(w),\gP^\mM_G(w,w)):w\in V_G\rdblbrace=\ldblbrace (\chi_H(z),\gP^\mM_H(z,z)):z\in V_G\rdblbrace\\
        &\qquad \land \ldblbrace (\chi_G(w),\chi_G(w'),\gP^\mM_G(w,w')):w,w'\in V_G\rdblbrace=\ldblbrace (\chi_H(z),\chi_H(z'),\gP^\mM_H(z,z')):z,z'\in V_G\rdblbrace\\
        &\implies [T_\mathsf{SP}(\hat\chi)]_G(u,v)=[T_\mathsf{SP}(\hat\chi)]_H(x,y)\land[T_\mathsf{SP}(\hat\chi)]_G(u,u)=[T_\mathsf{SP}(\hat\chi)]_H(x,x)\\
        &\qquad\land [T_\mathsf{SP}(\hat\chi)]_G(v,v)=[T_\mathsf{SP}(\hat\chi)]_H(y,y)\land[T_\mathsf{SP}(\hat\chi)]_G(v,u)=[T_\mathsf{SP}(\hat\chi)]_H(y,x)\land \mathbb I[u=v]= \mathbb I[x=y]\\
        &\qquad\land \ldblbrace[T_\mathsf{SP}(\hat\chi)]_G(u,w):w\in V_G\rdblbrace=\ldblbrace[T_\mathsf{SP}(\hat\chi)]_H(x,z):z\in V_H\rdblbrace\\
        &\qquad\land \ldblbrace[T_\mathsf{SP}(\hat\chi)]_G(v,w):w\in V_G\rdblbrace=\ldblbrace[T_\mathsf{SP}(\hat\chi)]_H(y,z):z\in V_H\rdblbrace\\
        &\qquad\land \ldblbrace[T_\mathsf{SP}(\hat\chi)]_G(w,u):w\in V_G\rdblbrace=\ldblbrace[T_\mathsf{SP}(\hat\chi)]_H(z,x):z\in V_H\rdblbrace\\
        &\qquad\land \ldblbrace[T_\mathsf{SP}(\hat\chi)]_G(w,v):w\in V_G\rdblbrace=\ldblbrace[T_\mathsf{SP}(\hat\chi)]_H(z,y):z\in V_H\rdblbrace\\
        &\qquad\land \ldblbrace[T_\mathsf{SP}(\hat\chi)]_G(w,w):w\in V_G\rdblbrace=\ldblbrace[T_\mathsf{SP}(\hat\chi)]_H(z,z):z\in V_H\rdblbrace\\
        &\qquad\land \ldblbrace[T_\mathsf{SP}(\hat\chi)]_G(w,w'):w,w'\in V_G\rdblbrace=\ldblbrace[T_\mathsf{SP}(\hat\chi)]_H(z,z'):z,z'\in V_H\rdblbrace\\
        &\implies [(T_\mathsf{IGN}\circ T_\mathsf{SP})(\hat\chi)]_G(u,v)=[(T_\mathsf{IGN}\circ T_\mathsf{SP})(\hat\chi)]_H(x,y)
    \end{align*}
    where in the second step we use the definition of $T_{\mathsf{EP},\mM}$ and also \cref{thm:epwl_atp,thm:epwl_first_layer_encode_Puu}; in the third step we use the definition of $T_{\mathsf{EP},\mM}$ again; in the fourth step we use the induction hypothesis and the fact that $\gP^\gM_G(u,v)=\gP^\gM_G(v,u)$ for all $G\in\gG$ and $u,v\in V_G$; in the last step we use the definition of 2-IGN (\cref{eq:2-ign_wl}).
    
    We next prove that $[(T_\mathsf{IGN}\circ T_\mathsf{SP})(\hat\chi)]_G(u,v)=[(T_\mathsf{IGN}\circ T_\mathsf{SP})(\hat\chi)]_H(x,y)\implies [(T_\mathsf{SP}\circ T_\mathsf{SIGN})(\hat\chi)]_G(u,v)=[(T_\mathsf{SP}\circ T_\mathsf{SIGN})(\hat\chi)]_H(x,y)$. This is because
    \begin{align*}
        &\quad[(T_\mathsf{IGN}\circ T_\mathsf{SP})(\hat\chi)]_G(u,v)=[(T_\mathsf{IGN}\circ T_\mathsf{SP})(\hat\chi)]_H(x,y)\\
        &\implies \Lambda^\mM(G)=\Lambda^\mM(H)\land [T_\mathsf{IGN}(\hat\chi(\lambda,\cdot,\cdot))]_G(u,v)=[T_\mathsf{IGN}(\hat\chi(\lambda,\cdot,\cdot))]_H(x,y) \quad \forall \lambda\in\Lambda^\mM(G)\\
        &\implies [(T_\mathsf{SP}\circ T_\mathsf{SIGN})(\hat\chi)]_G(u,v)=[(T_\mathsf{SP}\circ T_\mathsf{SIGN})(\hat\chi)]_H(x,y),
    \end{align*}
    where in the first step we use the following observations: $(\mathrm{i})$ $\hat\chi\preceq\chi^{\gP^\mM}$, which implies that $\hat\chi_G(\lambda,u,v)=\hat\chi_H(\mu,x,y)\implies \lambda=\mu$; $(\mathrm{ii})$  $(T_\mathsf{IGN}\circ T_\mathsf{SP})(\hat\chi)\preceq T_\mathsf{SP}\circ \chi^{\gP^\mM}\equiv \gP^\mM$ and thus $\Lambda^\mM(G)=\Lambda^\mM(H)$. We thus conclude the induction step.
\endgroup
\end{proof}

\begin{lemma}
\label{thm:spectral_ign_lemma<}
    Let $\chi^0\in\gM_1$ be the constant mapping. Then, $(T_\mathsf{GP}\circ T_{\mathsf{EP},\mM}^\infty)(\chi^0)\preceq (T_\mathsf{GP}\circ T_\mathsf{P2}\circ T_\mathsf{SP}\circ T_\mathsf{SIGN}^\infty)(\chi^{\gP^\mM})$.
\end{lemma}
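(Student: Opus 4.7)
The plan is to reduce the claim to the preceding Lemma~\ref{thm:spectral_ign_aux_lemma}, which states
\[(T_{\times,\gP^\mM}\circ T_{\mathsf{EP},\mM}^\infty)(\chi^0)\preceq (T_\mathsf{SP}\circ T_\mathsf{SIGN}^\infty)(\chi^{\gP^\mM}).\]
Since $T_\mathsf{GP}\circ T_\mathsf{P2}$ is order-preserving, applying it to both sides (via \cref{thm:refinement1}) yields
\[(T_\mathsf{GP}\circ T_\mathsf{P2}\circ T_{\times,\gP^\mM}\circ T_{\mathsf{EP},\mM}^\infty)(\chi^0)\preceq (T_\mathsf{GP}\circ T_\mathsf{P2}\circ T_\mathsf{SP}\circ T_\mathsf{SIGN}^\infty)(\chi^{\gP^\mM}).\]
So it suffices to establish the intermediate inequality $(T_\mathsf{GP}\circ T_{\mathsf{EP},\mM}^\infty)(\chi^0)\preceq (T_\mathsf{GP}\circ T_\mathsf{P2}\circ T_{\times,\gP^\mM}\circ T_{\mathsf{EP},\mM}^\infty)(\chi^0)$, after which chaining completes the proof.

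To prove this remaining comparison, set $\chi=T_{\mathsf{EP},\mM}^\infty(\chi^0)$, which satisfies $\chi\equiv T_{\mathsf{EP},\mM}(\chi)$ by stability. Take any $G,H\in\gG$ with $[T_\mathsf{GP}(\chi)](G)=[T_\mathsf{GP}(\chi)](H)$, i.e., $\ldblbrace\chi_G(u):u\in V_G\rdblbrace=\ldblbrace\chi_H(x):x\in V_H\rdblbrace$, and let $\pi:V_G\to V_H$ be a color-preserving bijection witnessing this multiset equality. The crux is the observation that whenever $\chi_G(u)=\chi_H(x)$ (in particular when $x=\pi(u)$), stability gives
\[\ldblbrace(\chi_G(v),\gP^\mM_G(u,v)):v\in V_G\rdblbrace=\ldblbrace(\chi_H(y),\gP^\mM_H(x,y)):y\in V_H\rdblbrace,\]
which immediately forces $[T_\mathsf{P2}(T_{\times,\gP^\mM}(\chi))]_G(u)=[T_\mathsf{P2}(T_{\times,\gP^\mM}(\chi))]_H(x)$ by definition of $T_{\times,\gP^\mM}$ and $T_\mathsf{P2}$. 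Applying this to each pair $(u,\pi(u))$ lifts the pointwise equalities to the multiset equality $\ldblbrace[T_\mathsf{P2}(T_{\times,\gP^\mM}(\chi))]_G(u):u\in V_G\rdblbrace=\ldblbrace[T_\mathsf{P2}(T_{\times,\gP^\mM}(\chi))]_H(x):x\in V_H\rdblbrace$, which is precisely what $T_\mathsf{GP}\circ T_\mathsf{P2}\circ T_{\times,\gP^\mM}(\chi)$ agreeing on $G$ and $H$ means.

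I do not anticipate any serious obstacle: once Lemma~\ref{thm:spectral_ign_aux_lemma} is in hand, the remaining content is just a short stability argument combined with the standard ``apply an order-preserving transformation to both sides'' idiom used throughout the preceding proofs. The only step requiring care is keeping the direction of $\preceq$ straight, since this lemma establishes the converse inequality to Lemma~\ref{thm:spectral_ign_lemma>}; together the two lemmas upgrade the main theorem from a one-sided bound to an equivalence of expressive power.
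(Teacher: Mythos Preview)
Your proposal is correct and follows essentially the same approach as the paper: both reduce to Lemma~\ref{thm:spectral_ign_aux_lemma} via the order-preserving transformation $T_\mathsf{GP}\circ T_\mathsf{P2}$, then establish the intermediate comparison $(T_\mathsf{GP}\circ T_{\mathsf{EP},\mM}^\infty)(\chi^0)\preceq (T_\mathsf{GP}\circ T_\mathsf{P2}\circ T_{\times,\gP^\mM}\circ T_{\mathsf{EP},\mM}^\infty)(\chi^0)$ using stability of $\chi$ under $T_{\mathsf{EP},\mM}$. The only cosmetic difference is that you phrase the multiset argument via an explicit color-preserving bijection $\pi$, whereas the paper manipulates the nested multisets directly; both are equivalent and standard.
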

\begin{proof}
    Based on \cref{thm:spectral_ign_aux_lemma}, it suffices to prove that $(T_\mathsf{GP}\circ T_{\mathsf{EP},\mM}^\infty)(\chi^0)\equiv (T_\mathsf{GP}\circ  T_\mathsf{P2}\circ T_{\times,\gP^\mM}\circ T_{\mathsf{EP},\mM}^\infty)(\chi^0)$, where $T_{\times,\gP^\mM}$ is defined in \cref{thm:spectral_ign_aux_lemma}. Denote $\chi=T_{\mathsf{EP},\mM}^\infty(\chi^0)$. Let $G,H\in\gG$ be any graphs such that  $[T_\mathsf{GP}(\chi)](G)=[T_\mathsf{GP}(\chi)](H)$. By definition of $T_\mathsf{GP}$,
    \begin{equation}
        \ldblbrace\chi_G(u):u\in V_G\rdblbrace=\ldblbrace\chi_H(x):x\in V_H\rdblbrace.
    \end{equation}
    Since $\chi\equiv T_{\mathsf{EP},\mM}\circ \chi$,
    \begin{equation}
        \ldblbrace(\chi_G(u),\ldblbrace(\chi_G(v),\gP^\mM_G(u,v)):v\in V_G\rdblbrace): u\in V_G\rdblbrace=\ldblbrace(\chi_H(x),\ldblbrace (\chi_H(y),\gP^\mM_H(x,y)):y\in V_H\rdblbrace):x\in V_H\rdblbrace.
    \end{equation}
    Equivalently,
    \begin{equation}
        \ldblbrace\ldblbrace(\chi_G(u),\chi_G(v),\gP^\mM_G(u,v)):v\in V_G\rdblbrace: u\in V_G\rdblbrace=\ldblbrace\ldblbrace(\chi_H(x),\chi_H(y),\gP^\mM_H(x,y)):y\in V_H\rdblbrace:x\in V_H\rdblbrace.
    \end{equation}
    This implies that $[(T_\mathsf{GP}\circ  T_\mathsf{P2}\circ T_{\times,\gP^\mM})(\chi)](G)=[(T_\mathsf{GP}\circ  T_\mathsf{P2}\circ T_{\times,\gP^\mM})(\chi)](H)$, concluding the proof.
\end{proof}

Combining \cref{thm:spectral_ign_wl,thm:spectral_ign_lemma1,thm:spectral_ign_lemma>,thm:spectral_ign_lemma<}, we conclude the proof of \cref{thm:spectral_ign}.

\subsection{Proof of \cref{thm:basisnet_basic,thm:spe}}
\label{sec:proof_basisnet}

This sections aims to prove \cref{thm:basisnet_basic,thm:spe}. We will first give a brief introduction of BasisNet. The BasisNet architecture is composed of two parts: an eigenspace encoder $\Phi:\mathbb R^{m\times n\times n}\to \mathbb R^{n\times d}$ and a top graph encoder $\rho:\mathbb R^{n\times d}\to \mathbb R^{d'}$. Here, we consider the standard setting where $\rho$ is a messgage-passing GNN that takes node features as inputs and outputs a graph representation invariant to node permutation. We assume that the expressive power of $\rho$ is bounded by the classic 1-WL test.

We next describe the design of the eigenspace encoder $\Phi$. Given graph matrix $\mM$ and graph $G$, let $\mM_G=\sum_{i=1}^m \lambda_i\mP_i$ be the eigen-decomposition of $\mM_G$, where $\lambda_1<\cdots<\lambda_m$ are eigenvalues of $\mM_G$. For each eigenspace, BasisNet processes the projection matrix $\mP_i$ using a 2-IGN $\mathsf{IGN}^{(d_i)}:\mathbb R^{n\times n}\to\mathbb R^{n}$, where $d_i$ is the multiplicity of eigenvalue $\lambda_i$. The output of $\Phi$ is then defined as
\begin{equation}
    \Phi(G)=[\mathsf{IGN}^{(d_1)}(\mP_1),\cdots,\mathsf{IGN}^{(d_m)}(\mP_m),\mathbf 0,\cdots,\mathbf 0]\in\mathbb R^{n\times d},
\end{equation}
where $[\ ]$ denotes the concatenation. When the number of eigenspaces is less than the output dimension $d$, zero-padding is applied. Note that BasisNet processes different projections using different IGNs if their multiplicities differ.

\textbf{Color refinement algorithms for Siamese IGN and BasisNet.} Similar to Spectral IGN, we can write the corresponding color refinement algorithms for the two GNN architectures. We first define the initial color mapping $\chi^{\mathsf{Basis},\gP^\mM}\in\gM_2^{\gP^\mM}$ as follows: $\chi^{\mathsf{Basis},\gP^\mM}_G(\lambda,u,v)=(d_\lambda,\mP^\mM_\lambda(u,v))$ for all $(G^{uv},\lambda)\in\gG_2^{\gP^\mM}$, where $\mP^\mM_\lambda$ is the projection matrix associated with eigenvalue $\lambda$ and $d_\lambda$ is the multiplicity of eigenvalue $\lambda$. Here, we encode the multiplicity in $\chi^{\mathsf{Basis},\gP^\mM}$ because BasisNet uses different IGNs for different eigenvalue multiplicities. We then define several color transformations:
\begin{itemize}[topsep=0pt,leftmargin=20pt]
    \setlength{\itemsep}{0pt}
    \item \textbf{Siamese IGN color refinement.} Define $T_\mathsf{Siam}:\gM_2^{\gP^\mM}\to\gM_2^{\gP^\mM}$ such that for any color mapping $\chi\in\gM_2^{\gP^\mM}$ and $(G^{uv},\lambda)\in\gG_2^{\gP^\mM}$,
    \begin{equation}
    \label{eq:siamese_ign_wl}
        [T_\mathsf{Siam}(\chi)]_G(\lambda,u,v)=[T_\mathsf{IGN}(\chi(\lambda,\cdot,\cdot))]_G(u,v),
    \end{equation}
    where $T_\mathsf{IGN}$ is defined in \cref{eq:2-ign_wl}.
    \item \textbf{BasisNet pooling.} Define $T_\mathsf{BP}:\gM_2^{\gP^\mM}\to\gM_1^{\gP^\mM}$ such that for any color mapping $\chi\in\gM_2^{\gP^\mM}$ and $(G^{u},\lambda)\in\gG_1^{\gP^\mM}$,
    \begin{equation}
    \label{eq:basisnet_pooling_wl}
    \begin{aligned}
        [T_\mathsf{BP}(\chi)]_G(\lambda,u)=\hash(&\chi_G(\lambda,u,u),\ldblbrace\chi_G(\lambda,u,v):v\in V_G\rdblbrace,\ldblbrace\chi_G(\lambda,v,u):v\in V_G\rdblbrace,\\
        &\ldblbrace\chi_G(\lambda,v,v):v\in V_G\rdblbrace,\ldblbrace\chi_G(\lambda,v,w):v,w\in V_G\rdblbrace).
    \end{aligned}
    \end{equation}
    One can see that \cref{eq:basisnet_pooling_wl} has 5 aggregations inside the hash function, which matches the number of orthogonal bases in \citet{maron2019invariant}.
    \item \textbf{Spectral pooling.} Define $T_\mathsf{SP1}:\gM_1^{\gP^\mM}\to\gM_1$ such that for any color mapping $\chi\in\gM_1^{\gP^\mM}$ and $G^{u}\in\gG_1$,
    \begin{equation}
    \begin{aligned}
        [T_\mathsf{SP1}(\chi)]_G(u)=\hash(\ldblbrace\chi_G(\lambda,u):\lambda\in \Lambda^\mM(G)\rdblbrace).
    \end{aligned}
    \end{equation}
    Similarly, define $T_\mathsf{SP2}:\gM_2^{\gP^\mM}\to\gM_2$ such that for any color mapping $\chi\in\gM_2^{\gP^\mM}$ and $G^{uv}\in\gG_2$,
    \begin{equation}
    \begin{aligned}
        [T_\mathsf{SP2}(\chi)]_G(u,v)=\hash(\ldblbrace\chi_G(\lambda,u,v):\lambda\in \Lambda^\mM(G)\rdblbrace).
    \end{aligned}
    \end{equation}
    \item \textbf{Joint pooling.} This has been defined in \cref{eq:joint_pooling}.
    \item \textbf{Diagonal pooling.} Define $T_\mathsf{D}:\gM_2\to\gM_1$ such that for any color mapping $\chi\in\gM_2$ and rooted graph $G^{u}$,
    \begin{equation}
    \label{eq:diagonal_pooling}
        [T_\mathsf{D}(\chi)]_G(u)=\chi_G(u,u).
    \end{equation}
    \item \textbf{1-WL refinement.} This has been defined in \cref{eq:wl}.
    \item \textbf{Global pooling.} This has been defined in \cref{eq:global_pooling}.
\end{itemize}
We are ready to define the color mappings corresponding to the whole algorithms:
\begin{itemize}[topsep=0pt,leftmargin=20pt]
    \setlength{\itemsep}{0pt}
    \item \textbf{Weak Spectral IGN}: the color mapping is defined as $(T_\mathsf{JP}\circ T_\mathsf{SP2}\circ T_\mathsf{Siam}^\infty)(\chi^{\gP^\mM})$.
    \item \textbf{BasisNet}: the color mapping is defined as $(T_\mathsf{GP}\circ T_\mathsf{WL}\circ T_\mathsf{SP1}\circ T_\mathsf{BP}\circ T_\mathsf{Siam}^\infty)(\chi^{\mathsf{Basis},\gP^\mM})$.
\end{itemize}
Similar to the previous analysis, we can prove that the above two color mappings upper bound the expressive power of the corresponding GNN models. Below, it suffices to prove the following key lemma:
\begin{lemma}
    For any graph matrix $\mM$, $(T_\mathsf{JP}\circ T_\mathsf{SP2}\circ T_\mathsf{Siam}^\infty)(\chi^{\gP^\mM})\preceq (T_\mathsf{GP}\circ T_\mathsf{WL}\circ T_\mathsf{SP1}\circ T_\mathsf{BP}\circ T_\mathsf{Siam}^\infty)(\chi^{\mathsf{Basis},\gP^\mM})$.
\end{lemma}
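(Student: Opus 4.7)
Writing $\chi_1 := T_\mathsf{Siam}^\infty(\chi^{\gP^\mM})$ and $\chi_2 := T_\mathsf{Siam}^\infty(\chi^{\mathsf{Basis},\gP^\mM})$, I would first prove $\chi_1 \preceq \chi_2$ by checking $T_\mathsf{Siam}(\chi^{\gP^\mM}) \preceq \chi^{\mathsf{Basis},\gP^\mM}$ and then applying $T_\mathsf{Siam}^\infty$ to both sides (using $T_\mathsf{Siam}\circ T_\mathsf{Siam}^\infty \equiv T_\mathsf{Siam}^\infty$). This reduction is immediate: the multiplicity $d_\lambda = \Tr(\mP^\mM_\lambda)$ is recovered from the 2-IGN basis aggregation $\ldblbrace \mP^\mM_\lambda(w,w) : w \in V_G\rdblbrace$, so after one Siamese IGN step the refined color carries the information $(\lambda, d_\lambda, \mP^\mM_\lambda(u,v))$, which dominates the $(d_\lambda, \mP^\mM_\lambda(u,v))$ of $\chi^{\mathsf{Basis},\gP^\mM}$.

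\textbf{Step 2 (core): compare the poolings on any Siamese IGN stable color.} The plan is to show that for every $\chi$ with $T_\mathsf{Siam}(\chi)\equiv\chi$, $(T_\mathsf{JP}\circ T_\mathsf{SP2})(\chi) \preceq (T_\mathsf{GP}\circ T_\mathsf{WL}\circ T_\mathsf{SP1}\circ T_\mathsf{BP})(\chi)$; combined with Step 1 and the order-preservingness of the BasisNet pooling, this gives the lemma. Set $\tilde\chi := T_\mathsf{SP2}(\chi) \in \gM_2$. I would establish three intermediate facts. (a) $\tilde\chi$ is itself 2-IGN stable: every $T_\mathsf{IGN}$ aggregation of $\tilde\chi$ at $(u,v)$ (for example $\tilde\chi(u,u)$ or $\ldblbrace \tilde\chi(u,w):w\rdblbrace$) is a multiset over $\lambda$ whose elements are determined color-by-color from $\tilde\chi(u,v)$, because each component color $\chi(\lambda,u,v)$ identifies $\lambda$ (via $\chi \preceq \chi^{\gP^\mM}$) and, by Siamese IGN stability on the $\lambda$-slice, already encodes $\chi(\lambda,u,u)$ together with the needed neighborhood multisets. (b) $T_\mathsf{SP1}(T_\mathsf{BP}(\chi)) \equiv T_\mathsf{D}(\tilde\chi)$: applying 2-IGN stability at the diagonal $(u,u)$ shows that $\chi(\lambda,u,u)$ encodes all five aggregations comprising $T_\mathsf{BP}(\chi)(\lambda,u)$, while $T_\mathsf{BP}(\chi)(\lambda,u)$ conversely contains $\chi(\lambda,u,u)$ explicitly; taking multisets over $\lambda$ yields the equivalence.

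The third and crucial fact is: \emph{for any 2-IGN stable $\tilde\chi\in\gM_2$}, $T_\mathsf{JP}(\tilde\chi) \preceq T_\mathsf{GP}(T_\mathsf{WL}(T_\mathsf{D}(\tilde\chi)))$. The key observation is that a single color $\tilde\chi(u,v)$ already determines $T_\mathsf{WL}(T_\mathsf{D}(\tilde\chi))(u) = \hash(\tilde\chi(u,u), \ldblbrace(\tilde\chi(w,w),\atp_G(u,w)):w\in V_G\rdblbrace)$: 2-IGN stability of $\tilde\chi$ gives access to $\tilde\chi(u,u)$ and to the row multiset $\ldblbrace \tilde\chi(u,w):w\rdblbrace$, and each row entry $\tilde\chi(u,w)$ in turn determines $\tilde\chi(w,w)$ (stability again) together with $\atp_G(u,w)$ (the embedded projection data, via \cref{thm:epwl_atp}). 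Applying this color-to-color map elementwise to the joint-pooling multiset $\ldblbrace \tilde\chi(u,v):u,v\in V_G\rdblbrace$ produces a multiset containing exactly $|V_G|$ copies of each $T_\mathsf{WL}(T_\mathsf{D}(\tilde\chi))(u)$; since $|V_G|$ is recovered as the square root of the total cardinality, dividing multiplicities yields BasisNet's graph invariant.

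The only genuine obstacle is to verify carefully that each per-element map used above is well-defined at the level of colors, i.e. independent of the graph or the specific tuple realizing a given color. This is the standard consequence of 2-IGN/Siamese IGN stability formalized by \cref{thm:refinement3}: if two tuples receive the same color under a stable refinement, all derived basis aggregations must also match. Beyond this careful bookkeeping I do not anticipate any further difficulty.
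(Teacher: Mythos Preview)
Your Step 1 and Step 2(b) match the paper's first and third steps exactly: the trace argument recovers the multiplicity $d_\lambda$, and Siamese IGN stability on each slice collapses the five aggregations in $T_\mathsf{BP}$ down to the diagonal, giving $T_\mathsf{SP1}\circ T_\mathsf{BP}\circ T_\mathsf{Siam}^\infty \equiv T_\mathsf{D}\circ T_\mathsf{SP2}\circ T_\mathsf{Siam}^\infty$.

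The gap is in claim (a). You assert that $\tilde\chi := T_\mathsf{SP2}(\chi)$ is 2-IGN stable, in particular that the row multiset $\ldblbrace\tilde\chi(u,w):w\rdblbrace$ is determined by $\tilde\chi(u,v)$, and justify this by saying the aggregation ``is a multiset over $\lambda$ whose elements are determined color-by-color''. But the row multiset is indexed by $w$, not by $\lambda$. What your per-slice argument actually produces is, for each $\lambda$, the marginal $\ldblbrace\chi(\lambda,u,w):w\rdblbrace$; to obtain $\ldblbrace\tilde\chi(u,w):w\rdblbrace = \ldblbrace\,\ldblbrace\chi(\lambda,u,w):\lambda\rdblbrace : w\,\rdblbrace$ you would have to reconstruct the cross-$\lambda$ alignment at each $w$ from the per-$\lambda$ marginals, and in general a family of marginals $\big(\ldblbrace a_{\lambda,w}:w\rdblbrace\big)_\lambda$ does not determine $\ldblbrace (a_{\lambda,w})_\lambda : w\rdblbrace$. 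Since your Step 2(c) relies precisely on access to this row multiset (to extract $\tilde\chi(w,w)$ and $\atp_G(u,w)$ for each $w$), the argument as written does not go through. The Siamese refinement $T_\mathsf{Siam}$ processes each eigenspace independently, so there is no mechanism that aligns the slices; this is exactly what distinguishes it from the full $T_\mathsf{SIGN}$ refinement, where each color additionally encodes $T_\mathsf{SP2}(\chi)$ and the swap becomes trivial.

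The paper handles the corresponding step ($T_\mathsf{JP}\circ T_\mathsf{SP2}\circ T_\mathsf{Siam}^\infty \preceq T_\mathsf{GP}\circ T_\mathsf{WL}\circ T_\mathsf{D}\circ T_\mathsf{SP2}\circ T_\mathsf{Siam}^\infty$) differently: it never claims 2-IGN stability of $\tilde\chi$, and in particular never invokes a row multiset of $\tilde\chi$. Instead it works directly with the graph-level joint multiset $\ldblbrace\tilde\chi_G(u,v):u,v\rdblbrace$ and applies only the \emph{pointwise} maps $\tilde\chi(u,v)\mapsto\tilde\chi(v,v)$ and $\tilde\chi(u,v)\mapsto\atp_G(u,v)$, both of which \emph{do} follow elementwise from Siamese stability on each slice (no swap needed, since the output at a single $\lambda$ depends only on the input at that same $\lambda$). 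To salvage your outline you should drop the general 2-IGN-stability claim and instead mimic the paper's elementwise reduction at the level of the joint-pooling multiset.
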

\begin{proof}
    The proof will be decomposed into a series of steps. We first prove that $T_\mathsf{Siam}^\infty(\chi^{\gP^\mM})\preceq T_\mathsf{Siam}^\infty(\chi^{\mathsf{Basis},\gP^\mM})$. It suffices to prove that $T_\mathsf{Siam}(\chi^{\gP^\mM})\preceq \chi^{\mathsf{Basis},\gP^\mM}$. Pick any graphs $G,H\in\gG$, eigenvalues $\lambda\in\Lambda^\mM(G), \mu\in\Lambda^\mM(H)$, and vertices $u,v\in V_G$, $x,y\in V_H$. Then, by definition of $T_\mathsf{Siam}$, $[T_\mathsf{Siam}(\chi^{\gP^\mM})]_G(\lambda,u,v)=[T_\mathsf{Siam}(\chi^{\gP^\mM})]_H(\mu,x,y)$ implies that
    \begin{equation}
        \chi^{\gP^\mM}_G(\lambda,u,v)=\chi^{\gP^\mM}_H(\mu,x,y)\land \ldblbrace\chi^{\gP^\mM}_G(\lambda,w,w):w\in V_G\rdblbrace=\ldblbrace\chi^{\gP^\mM}_H(\mu,z,z):z\in V_H\rdblbrace.
    \end{equation}
    Therefore, $\tr([\mP^\mM_\lambda]_G)=\tr([\mP^\mM_\mu]_H)$, where $\tr(\cdot)$ denotes the matrix trace. Noting that $\tr([\mP^\mM_\lambda]_G)$ is exactly the multiplicity of eigenvalue $\lambda$ for graph matrix $\mM_G$, we have $\chi^{\mathsf{Basis},\gP^\mM}_G(\lambda,u,v)=\chi^{\mathsf{Basis},\gP^\mM}_H(\mu,x,y)$.

    We then prove that $T_\mathsf{JP}\circ T_\mathsf{SP2}\circ T_\mathsf{Siam}^\infty\preceq T_\mathsf{GP}\circ T_\mathsf{WL}\circ T_\mathsf{D}\circ T_\mathsf{SP2}\circ T_\mathsf{Siam}^\infty$. Pick any initial color mapping $\chi^0\in\gM_2^{\gP^\mM}$ and let $\chi=T_\mathsf{Siam}^\infty(\chi^0)$. Note that $\chi\equiv T_\mathsf{Siam}(\chi)$. We will prove that $(T_\mathsf{JP}\circ T_\mathsf{SP2})(\chi)\preceq (T_\mathsf{GP}\circ T_\mathsf{WL}\circ T_\mathsf{D}\circ T_\mathsf{SP2})(\chi)$. Pick any graphs $G,H\in\gG$. We have
    \begin{align*}
        &\quad [(T_\mathsf{JP}\circ T_\mathsf{SP2})(\chi)](G)=[(T_\mathsf{JP}\circ T_\mathsf{SP2})(\chi)](H)\\
        &\implies \ldblbrace\ldblbrace\chi_G(\lambda,u,v):\lambda\in\Lambda^\mM(G)\rdblbrace: u,v\in V_G\rdblbrace = \ldblbrace\ldblbrace\chi_H(\mu,x,y):\mu\in\Lambda^\mM(H)\rdblbrace:x,y\in V_H\rdblbrace\\
        &\implies \ldblbrace(\ldblbrace\chi_G(\lambda,u,v):\lambda\in\Lambda^\mM(G)\rdblbrace,\atp_G(u,v)): u,v\in V_G\rdblbrace\\
        &\qquad = \ldblbrace(\ldblbrace\chi_H(\mu,x,y):\mu\in\Lambda^\mM(H)\rdblbrace,\atp_H(x,y)):x,y\in V_H\rdblbrace\\
        &\implies \ldblbrace(\ldblbrace\chi_G(\lambda,v,v):\lambda\in\Lambda^\mM(G)\rdblbrace,\atp_G(u,v)):u,v\in V_G\rdblbrace\\
        &\qquad= \ldblbrace(\ldblbrace\chi_H(\mu,y,y):\mu\in\Lambda^\mM(H)\rdblbrace,\atp_H(x,y)):x,y\in V_H\rdblbrace\\
        &\implies [(T_\mathsf{GP}\circ T_\mathsf{WL}\circ T_\mathsf{D}\circ T_\mathsf{SP2})(\chi)](G)=[(T_\mathsf{GP}\circ T_\mathsf{WL}\circ T_\mathsf{D}\circ T_\mathsf{SP2})(\chi)](H),
    \end{align*}
    where the second step is based on \cref{thm:epwl_atp}, and the third step is based on the definition of $T_\mathsf{Siam}$. This proves that $T_\mathsf{JP}\circ T_\mathsf{SP2}\circ T_\mathsf{Siam}^\infty\preceq T_\mathsf{GP}\circ T_\mathsf{WL}\circ T_\mathsf{D}\circ T_\mathsf{SP2}\circ T_\mathsf{Siam}^\infty$.

    We next prove that $T_\mathsf{D}\circ T_\mathsf{SP2}\circ T_\mathsf{Siam}^\infty\preceq T_\mathsf{SP1}\circ T_\mathsf{BP}\circ T_\mathsf{Siam}^\infty$. Pick any graphs $G,H\in\gG$ and vertices $u\in V_G$, $x\in V_H$. We have
    \begin{align*}
        &\quad [(T_\mathsf{D}\circ T_\mathsf{SP2})(\chi)](G)=[(T_\mathsf{D}\circ T_\mathsf{SP2})(\chi)](H)\\
        &\implies \ldblbrace\chi_G(\lambda,u,u):\lambda\in\Lambda^\mM(G)\rdblbrace= \ldblbrace\chi_H(\mu,x,x):\mu\in\Lambda^\mM(H)\rdblbrace\\
        &\implies \ldblbrace\ldblbrace\chi_G(\lambda,u,v):v\in V_G\rdblbrace:\lambda\in\Lambda^\mM(G)\rdblbrace = \ldblbrace\ldblbrace\chi_H(\mu,x,y):y\in V_H\rdblbrace:\mu\in\Lambda^\mM(H)\rdblbrace\\
        &\implies [(T_\mathsf{SP1}\circ T_\mathsf{BP})(\chi)](G)=[(T_\mathsf{SP1}\circ T_\mathsf{BP})(\chi)](H),
    \end{align*}
    where the second step is based on the definition of $T_\mathsf{Siam}$. This proves that $T_\mathsf{D}\circ T_\mathsf{SP2}\circ T_\mathsf{Siam}^\infty\preceq T_\mathsf{SP1}\circ T_\mathsf{BP}\circ T_\mathsf{Siam}^\infty$.

    We conclude the proof by combining the above relations with \cref{thm:refinement1,thm:refinement2}.
\end{proof}

We next turn to the proof of \cref{thm:spe}, which is almost the same as the case of BasisNet. Below, we will define the equivalent color refinement algorithm for SPE. The initial color mapping associated with SPE is simply $\gP^\mM$. Then, the architecture refines $\gP^\mM$ by using color transformation $T_\mathsf{IGN}$ defined in \cref{eq:2-ign_wl}. The remaining procedure is the same as BasisNet. Combined these together, the color mapping corresponding to the whole algorithm can be written as $(T_\mathsf{GP}\circ T_\mathsf{WL}^\infty\circ T_\mathsf{P2}\circ T_\mathsf{IGN}^\infty)(\gP^\mM)$. Then, it suffices to prove the following two equivalence relations: $(\mathrm{i})$ $ T_\mathsf{IGN}^\infty(\gP^\mM)\equiv (T_\mathsf{SP2}\circ T_\mathsf{SIGN}^\infty)(\chi^{\gP^\mM})$; $(\mathrm{ii})$ $T_\mathsf{WL}\circ T_\mathsf{P2}\circ T_\mathsf{SP2}\circ T_\mathsf{SIGN}^\infty\equiv T_\mathsf{P2}\circ T_\mathsf{SP2}\circ T_\mathsf{SIGN}^\infty$. The proof procedure is alomst the same as in \cref{sec:proof_spectral_ign,sec:proof_basisnet} and we omit it here.

\subsection{Discussions with other architectures}
\label{sec:proof_other_architectures}

\textbf{Graphormer \citep{ying2021transformers}, Graphormer-GD \citep{zhang2023rethinking}, and GraphiT \citep{mialon2021graphit}.} \citet{zhang2023rethinking} has shown that the expressive power of these architectures is inherently bounded by GD-WL with different distances. Here, Graphormer uses SPD, Graphormer-GD uses both SPD and RD, while the distance used in GraphiT has the form $d_G(u,v)=\sum_{i=1}^m \phi(\lambda_i)\mP_i(u,v)$, where $\mP_i$ is the projection matrix associated with eigenvalue $\lambda$ for graph matrix $\hat\mL$, and $\phi:\mathbb R\to\mathbb R$ is a general function. Therefore, based on \cref{thm:distance} and the proof, it is straightforward to see that all these architectures are bounded by EPWL.
 
\textbf{PEG \citep{wang2022equivariant}}. Given graph $G$, PEG maintains a feature vector $\vh^{(l)}(u)\in\mathbb R^d$ for each node $u\in V_G$ in each layer $l$, and the feature is updated by the following formula:
\begin{equation}
    \vh^{(l+1)}(u)=\psi\left(\sum_{v\in N_G(u)}\phi(\|\vz_G(u)-\vz_G(v)\|)\mW\vh^{(l)}(v)\right),
\end{equation}
where $\phi:\mathbb R\to\mathbb R$ and $\psi:\mathbb R^d\to\mathbb R^d$ are arbitrary functions, $\mW\in\mathbb R^{d\times d}$ is a parameterized weight matrix, and $\vz_G(u)\in\mathbb R^k$ is the positional encoding corresponding to the top $k$ eigenvectors at node $u$. Here, we assume that the number $k$ is chosen such that $\|\vz_G(u)-\vz_G(v)\|$ is unique for all graphs $G$ of interest (i.e., no ambiguity problem).

We will show that the expressive power of PEG is bounded by EPWL. To obtain this result, note that $\|\vz_G(u)-\vz_G(v)\|^2=\sum_{i=1}^k (\vz_{G,i}(u))^2+(\vz_{G,i}(v))^2-2\vz_{G,i}(u)\vz_{G,i}(v)$. Since $\|\vz_G(u)-\vz_G(v)\|$ is unique, the span of top $k$ eigenvectors must be equivalent to the direct sum of the eigenspaces corresponding to top $k'$ eigenvalues $\lambda_1>\lambda_2>\cdots>\lambda_{k'}$ for some $k'\le k$. It follows that $\|\vz_G(u)-\vz_G(v)\|^2=\sum_{i=1}^{k'}\mP_i(u,u)+\mP_i(v,v)-2\mP_i(u,v)$ where $\mP_i$ is the projection onto the eigenspace corresponding to eigenvalue $\lambda_i$. Therefore, the expressive power of PEG is bounded by the color refinement algorithm $(T_\mathsf{GP}\circ T_\mathsf{PEG}^\infty)(\chi^0)$ with $\chi^0$ the initial color mapping and $T_\mathsf{PEG}$ the color transformation defined below:
\begin{equation}
    [T_\mathsf{PEG}(\chi)]_G(u)=\hash\left(\ldblbrace(\chi_G(v),\gP^\mM_G(u,u),\gP^\mM_G(v,v),\gP^\mM_G(u,v)):v\in V_G\rdblbrace\right).
\end{equation}
We will prove that $(T_\mathsf{GP}\circ T_{\mathsf{EP},\mM}^\infty)(\chi^0)\preceq (T_\mathsf{GP}\circ T_\mathsf{PEG}^\infty)(\chi^0)$. Based on \cref{thm:refinement1,thm:refinement2}, it suffices to prove that $T_{\mathsf{EP},\mM}^\infty\preceq T_\mathsf{PEG}\circ T_{\mathsf{EP},\mM}^\infty$. Denote $\chi=T_{\mathsf{EP},\mM}^\infty(\chi^0)$ and note that $\chi\equiv T_{\mathsf{EP},\mM}^\infty(\chi)$. Pick any graphs $G,H\in\gG$ and vertices $u\in V_G$, $x\in V_H$. We have
\begin{align*}
    &\quad\chi_G(u)=\chi_H(x)\\
    &\implies \gP^\mM_G(u,u)=\gP^\mM_H(x,x)\land \ldblbrace(\chi_G(v),\gP^\mM_G(u,v)):v\in V_G\rdblbrace=\ldblbrace(\chi_H(y),\gP^\mM_H(x,y)):y\in V_H\rdblbrace\\
    &\implies \ldblbrace(\chi_G(v),\gP^\mM_G(u,u),\gP^\mM_G(v,v),\gP^\mM_G(u,v)):v\in V_G\rdblbrace=\ldblbrace(\chi_H(y),\gP^\mM_H(x,x),\gP^\mM_H(y,y),\gP^\mM_H(x,y)):y\in V_H\rdblbrace\\
    &\implies [T_\mathsf{PEG}(\chi)]_G(u)=[T_\mathsf{PEG}(\chi)]_H(x),
\end{align*}
where in the first and second steps we use \cref{thm:epwl_distance_basic}. This concludes the proof that the expressive power of PEG is bounded by EPWL.

\textbf{GIRT \citep{ma2023graph}}. Given graph $G$, GIRT maintains a feature vector for both vertices and vertex pairs. Denote by $\vh^{(l)}(u)\in\mathbb R^d$ the feature of node $u\in V_G$ in layer $l$, and denote by $\vh^{(l)}(u,v)\in\mathbb R^{d'}$ the feature of node pair $(u,v)\in V_G^2$ in layer $l$. The features are updated by the following formula:
\begin{align}
    &\vh^{(l+1)}_G(u,v)=\sigma\left(\rho\left(\left(\mW_{\mathsf{Q}}\vh_G(u)+\mW_{\mathsf{K}}\vh_G(v)\right)\odot\mW_\mathsf{Ew}\vh^{(l)}_G(u,v)\right)+\mW_{\mathsf{Eb}} \vh^{(l)}_G(u,v)\right),\\
    &\alpha^{(l+1)}_G(u,v)=\text{Softmax}_{j\in V_G}(\mW_\mathsf{A}\vh^{(l+1)}_G(u,v)), \\
    &\vh^{(l+1)}_G(u)=\sum_{v\in V_G}\alpha^{(l+1)}_G(u,v)\cdot(\mW_\mathsf{V}\vh^{(l)}_G(v)+\mathbf{W}_{\mathsf{Ev}}\vh^{(l+1)}_G(u,v)),
\end{align}
where the initial feature is defined as
\begin{align*}
    &\vh^{(0)}_G(u,v)=[(\mD^{-1}\mA)^0_G(u,v),(\mD^{-1}\mA)^1_G(u,v),\cdots,(\mD^{-1}\mA)^{K}_G(u,v)],\\
    &\vh^{(0)}_G(u)=\vh^{(0)}_G(u,u).
\end{align*}
One can easily write the corresponding color refinement algorithm that upper bounds of the expressive power of GIRT. Formally, it can be expressed as $(T_\mathsf{GP}\circ T_\mathsf{D}\circ T_\mathsf{GIRT}^\infty)(\chi^\mathsf{GIRT})$, where $T_\mathsf{D}$ is defined in \cref{eq:diagonal_pooling}, the initial color mapping $\chi^\mathsf{GIRT}$ is simply the multi-dimensional PageRank distance, and $T_\mathsf{GIRT}:\gM_2\to\gM_2$ is the color refinement defined below:
\begin{equation}
    [T_\mathsf{GIRT}(\chi)]_G(u,v)=\left\{\begin{array}{ll}
        \hash(\chi_G(u,v),\chi_G(u,u),\chi_G(v,v)) & \text{if }u\neq v, \\
        \hash(\chi_G(u,u), \ldblbrace(\chi_G(u,v),\chi_G(v,v)):v\in V_G\rdblbrace &\text{if }u=v. 
    \end{array}\right.
\end{equation}
We will prove that $(T_\mathsf{GP}\circ T_{\mathsf{EP},\mM}^\infty)(\chi^0)\preceq (T_\mathsf{GP}\circ T_\mathsf{D}\circ T_\mathsf{GIRT}^\infty)(\chi^\mathsf{GIRT})$. Define color transformation $T_{\times,\gP^\mM}:\gM_1\to\gM_2$ such that for any color mapping $\chi\in\gM_1$ and rooted graph $G^{uv}\in\gG_2$,
\begin{equation}
    [T_{\times,\gP^\mM}(\chi)]_G(u,v)=\hash(\chi_G(u),\chi_G(v),\gP^\mM_G(u,v),\gP^\mM_G(u,u),\gP^\mM_G(v,v)).
\end{equation}
Note that $T_\mathsf{GP}\circ T_{\mathsf{EP},\mM}^\infty\equiv T_\mathsf{GP}\circ T_\mathsf{D}\circ T_{\times,\gP^\mM} \circ T_{\mathsf{EP},\mM}^\infty$ by \cref{thm:epwl_distance_basic}. Also, $T_{\times,\gP^\mM}(\chi^0)\preceq \chi^\mathsf{GIRT}$ due to \cref{thm:epwl_rd}. Therefore, it suffices to prove that $T_{\times,\gP^\mM}\circ T_{\mathsf{EP},\mM}^\infty\preceq T_\mathsf{GIRT}^\infty\circ T_{\times,\gP^\mM}$. Based on \cref{thm:refinement2}, it suffices to prove that $T_\mathsf{GIRT}\circ T_{\times,\gP^\mM}\circ T_{\mathsf{EP},\mM}^\infty\equiv T_{\times,\gP^\mM}\circ T_{\mathsf{EP},\mM}^\infty$. Denote $\chi=T_{\mathsf{EP},\mM}^\infty(\chi^0)$, where $\chi^0\in\gM_1$ is any initial color mapping. Pick any graphs $G,H\in\gG$ and vertices $u,v\in V_G$, $x,y\in V_H$ such that $[T_{\times,\gP^\mM}(\chi)]_G(u,v)=[T_{\times,\gP^\mM}(\chi)]_H(x,y)$. We have:
\begin{itemize}[topsep=0pt,leftmargin=20pt]
    \setlength{\itemsep}{0pt}
    \item If $u=v$, then $x=y$. 
    \begin{align*}
        &\quad[T_{\times,\gP^\mM}(\chi)]_G(u,u)=[T_{\times,\gP^\mM}(\chi)]_H(x,x)\\
        &\implies \chi_G(u)=\chi_H(x),\gP^\mM_G(u,u)=\gP^\mM_H(x,x)\\
        &\implies\ldblbrace(\chi_G(u),\chi_G(v),\gP^\mM_G(u,v)):v\in V_G\rdblbrace=\ldblbrace(\chi_H(x),\chi_H(y),\gP^\mM_H(x,y)):y\in V_H\rdblbrace\\
        &\implies\ldblbrace(\chi_G(u),\gP^\mM_G(u,u),\chi_G(v),\gP^\mM_G(v,v),\gP^\mM_G(u,v)):v\in V_G\rdblbrace\\
        &\qquad=\ldblbrace(\chi_H(x),\gP^\mM_H(x,x),\chi_H(y),\gP^\mM_H(y,y),\gP^\mM_H(x,y)):y\in V_H\rdblbrace\\
        &\implies\ldblbrace ([T_{\times,\gP^\mM}(\chi)]_G(u,v),[T_{\times,\gP^\mM}(\chi)]_G(v,v)):v\in V_G\rdblbrace\\
        &\qquad =\ldblbrace ([T_{\times,\gP^\mM}(\chi)]_H(x,y),[T_{\times,\gP^\mM}(\chi)]_H(y,y)):y\in V_H\rdblbrace\\
        &\implies[(T_\mathsf{GIRT}\circ T_{\times,\gP^\mM})(\chi)]_G(u,u)=[(T_\mathsf{GIRT}\circ T_{\times,\gP^\mM})(\chi)]_G(x,x).
    \end{align*}
    \item If $u\neq v$, then $x\neq y$.
    \begin{align*}
        &\quad[T_{\times,\gP^\mM}(\chi)]_G(u,v)=[T_{\times,\gP^\mM}(\chi)]_H(x,y)\\
        &\implies \chi_G(u)=\chi_H(x),\chi_G(v)=\chi_H(y),\gP^\mM_G(u,v)=\gP^\mM_H(x,y),\gP^\mM_G(u,u)=\gP^\mM_H(x,x),\gP^\mM_G(v,v)=\gP^\mM_H(y,y)\\
        &\implies [(T_\mathsf{GIRT}\circ T_{\times,\gP^\mM})(\chi)]_G(u,v)=[(T_\mathsf{GIRT}\circ T_{\times,\gP^\mM})(\chi)]_G(x,y).
    \end{align*}
\end{itemize}
Here, in the above derivations we use \cref{thm:epwl_distance_basic}. We have concluded the proof that the expressive power of GIRT is bounded by EPWL.

\textbf{Spectral PPGN and Spectral $k$-IGN.} Based on \citet{maron2019provably}, PPGN can mimic the 2-FWL test \citep{cai1992optimal}, and Spectral $k$-IGN
can mimic the $k$-WL test \citep{grohe2017descriptive}. Let $T_{\mathsf{WL}(k)}:\gM_k\to\gM_k$ and $T_{\mathsf{FWL}(k)}:\gM_k\to\gM_k$ be the color refinements associated with $k$-WL and $k$-FWL, respectively, and let $\chi^0_k\in\gM_k$ be the initial color mapping in $k$-WL and $k$-FWL. The color refinement algorithms corresponding to $k$-WL and $k$-FWL can then be described as $(T_{\mathsf{JP}(k)}\circ T_{\mathsf{WL}(k)}^\infty)(\chi^0_k)$ and $(T_{\mathsf{JP}(k)}\circ T_{\mathsf{FWL}(k)}^\infty)(\chi^0_k)$, respectively, where $T_{\mathsf{JP}(k)}$ is defined in \cref{eq:k-joint_pooling}. \citet{maron2019provably} proved that with sufficiently layers, the features of vertex $k$-tuples computed by $k$-IGN is finer than $T_{\mathsf{WL}(k)}^\infty(\chi^0_k)$, and the features of vertex pairs computed by PPGN is finer than $T_{\mathsf{FWL}(2)}^\infty(\chi^0_2)$. Later, \citet{azizian2021expressive} proved that the features of vertex pairs computed by PPGN is also bounded by (and thus as fine as) $T_{\mathsf{FWL}(2)}^\infty(\chi^0_2)$ (see Lemma 12 in their paper). Finally, \citet{geerts2022expressiveness} proved that the features of vertex $k$-tuples computed by $k$-IGN is bounded by (and thus as fine as) $T_{\mathsf{WL}(k)}^\infty(\chi^0_k)$ (see Lemma E.1 in their paper). 

We now define the color refinement algorithms for Spectral PPGN and Spectral $k$-IGN, which are as expressive as the corresponding GNN architectures based on the results of \citet{maron2019provably,azizian2021expressive,geerts2022expressiveness}. First define the following color transformations:
\begin{itemize}[topsep=0pt,leftmargin=20pt]
    \setlength{\itemsep}{0pt}
    \item \textbf{Spectral $k$-IGN color refinement.} Define $T_{\mathsf{SIGN}(k)}:\gM_k^{\gP^\mM}\to\gM_k^{\gP^\mM}$ such that for any color mapping $\chi\in\gM_k^{\gP^\mM}$ and $(G^{\vu},\lambda)\in\gG_k^{\gP^\mM}$,
    \begin{equation}
        [T_{\mathsf{SIGN}(k)}(\chi)]_G(\lambda,\vu)=\hash([T_{\mathsf{WL}(k)}(\chi(\lambda,\cdots))]_G(\vu),[T_{\mathsf{WL}(k)}(T_{\mathsf{SP}(k)}(\chi))]_G(\vu)).
    \end{equation}
    \item \textbf{Spectral PPGN color refinement.} Define $T_{\mathsf{PPGN}}:\gM_2^{\gP^\mM}\to\gM_2^{\gP^\mM}$ such that for any color mapping $\chi\in\gM_2^{\gP^\mM}$ and $(G^{\vu},\lambda)\in\gG_2^{\gP^\mM}$,
    \begin{equation}
        [T_{\mathsf{SPPGN}}(\chi)]_G(\lambda,\vu)=\hash([T_{\mathsf{FWL}(2)}(\chi(\lambda,\cdots))]_G(\vu),[T_{\mathsf{FWL}(2)}(T_{\mathsf{SP}(2)}(\chi))]_G(\vu)).
    \end{equation}
    \item \textbf{Spectral pooling.} Define $T_{\mathsf{SP}(k)}:\gM_k^{\gP^\mM}\to\gM_k$ such that for any color mapping $\chi\in\gM_k^{\gP^\mM}$ and $G^{\vu}\in\gG_k$,
    \begin{equation}
    \label{eq:spectral_pooling_k}
        [T_{\mathsf{SP}(k)}(\chi)]_G(\vu)=\hash(\ldblbrace\chi_G(\lambda,\vu):\lambda\in\Lambda^\mM(G)\rdblbrace).
    \end{equation}
    \item \textbf{Joint pooling.} Define $T_{\mathsf{JP}(k)}:\gM_k\to\gM_0$ such that for any color mapping $\chi\in\gM_k$ and $G\in\gG$,
    \begin{equation}
    \label{eq:k-joint_pooling}
        [T_{\mathsf{JP}(k)}(\chi)](G)=\hash(\ldblbrace\chi_G(\vu):\vu\in V_G^k \rdblbrace).
    \end{equation}
\end{itemize}
Define the initial color mapping $\chi_k^{\gP^\mM}\in\gM_k^{\gP^\mM}$ such that for any graph $G$, vertices $\vu\in V_G^k$, and $\lambda\in \Lambda^\mM(G)$,
\begin{equation}
    [\chi_k^{\gP^\mM}]_G(\lambda,\vu)=\hash(\lambda,[\mP^\mM_\lambda]_G(u_1,u_1),\cdots,[\mP^\mM_\lambda]_G(u_1,u_k),\cdots,[\mP^\mM_\lambda]_G(u_k,u_1),\cdots,[\mP^\mM_\lambda]_G(u_k,u_k)),
\end{equation}
where $\mP^\mM_\lambda$ is the projection onto eigenspace associated with eigenvalue $\lambda$ for graph matrix $\mM$. The color refinement algorithm for Spectral PPGN is then defined as $(T_{\mathsf{JP}(2)}\circ T_{\mathsf{SP}(2)}\circ T_{\mathsf{SPPGN}}^\infty)(\chi_2^{\gP^\mM})$. Similarly, the color refinement algorithm for Spectral $k$-IGN is then defined as $(T_{\mathsf{JP}(k)}\circ T_{\mathsf{SP}(k)}\circ T_{\mathsf{SIGN}(k)}^\infty)(\chi_2^{\gP^\mM})$. We aim to prove that following two results:
\begin{proposition}
\label{thm:spectral_ppgn_2fwl}
    $(T_{\mathsf{JP}(2)}\circ T_{\mathsf{SP}(2)}\circ T_{\mathsf{SPPGN}}^\infty)(\chi_2^{\gP^\mM})\equiv (T_{\mathsf{JP}(2)}\circ  T_{\mathsf{FWL}(2)}^\infty)(\chi^0_2)$.
\end{proposition}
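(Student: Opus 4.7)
The plan is to establish the two inclusions separately, combining the easy simulation of 2-FWL by Spectral PPGN with a simulation in the reverse direction that relies on the Rattan et al. equitable-matrix theorem (\cref{thm:ratten}).

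For the ``easy'' direction $(T_{\mathsf{JP}(2)}\circ T_{\mathsf{SP}(2)}\circ T_{\mathsf{SPPGN}}^\infty)(\chi_2^{\gP^\mM}) \preceq (T_{\mathsf{JP}(2)}\circ T_{\mathsf{FWL}(2)}^\infty)(\chi^0_2)$, I would unpack the definition of $T_{\mathsf{SPPGN}}$: its ``aggregated'' branch is literally $T_{\mathsf{FWL}(2)}(T_{\mathsf{SP}(2)}(\chi))$, so $T_{\mathsf{SP}(2)}(T_{\mathsf{SPPGN}}(\chi)) \preceq T_{\mathsf{FWL}(2)}(T_{\mathsf{SP}(2)}(\chi))$ for every $\chi$. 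A straightforward induction on $t$ then gives $T_{\mathsf{SP}(2)}(T_{\mathsf{SPPGN}}^t(\chi_2^{\gP^\mM})) \preceq T_{\mathsf{FWL}(2)}^t(T_{\mathsf{SP}(2)}(\chi_2^{\gP^\mM}))$. Since the pooled spectral initialization refines the bare initialization, i.e.\ $T_{\mathsf{SP}(2)}(\chi_2^{\gP^\mM}) \preceq \chi^0_2$, passing to the stable refinement via \cref{thm:refinement1,thm:refinement2} and composing with $T_{\mathsf{JP}(2)}$ yields the claim.

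For the ``hard'' direction $(T_{\mathsf{JP}(2)}\circ T_{\mathsf{FWL}(2)}^\infty)(\chi^0_2) \preceq (T_{\mathsf{JP}(2)}\circ T_{\mathsf{SP}(2)}\circ T_{\mathsf{SPPGN}}^\infty)(\chi_2^{\gP^\mM})$, the key lemma I would prove is that 2-FWL already encodes the full spectral invariant: $T_{\mathsf{FWL}(2)}^\infty(\chi^0_2) \preceq \gP^\mM$ for every equitable $\mM$. This follows from \cref{thm:ratten} combined with the fact that 2-FWL with joint initialization is strictly finer than SWL with node marking (a known consequence of how 2-FWL stably encodes marked-vertex subgraph colorings). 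Applied eigenvalue by eigenvalue (as in the proof of \cref{thm:ratten_corollary}), this shows $T_{\mathsf{FWL}(2)}^\infty(\chi^0_2) \preceq T_{\mathsf{SP}(2)}(\chi_2^{\gP^\mM})$. Then, since $T_{\mathsf{FWL}(2)}$ fixes its own stable color, the standard trick from \cref{thm:refinement2}, applied to the composite refinement $T_{\mathsf{SPPGN}}$, shows that the 2-FWL stable color also refines $T_{\mathsf{SP}(2)}\circ T_{\mathsf{SPPGN}}^t(\chi_2^{\gP^\mM})$ for every $t$, hence for the stable limit. Composing with $T_{\mathsf{JP}(2)}$ finishes.

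\textbf{Main obstacle.} The chief subtlety lies in the hard direction: one has to simulate not only the pooled branch of $T_{\mathsf{SPPGN}}$ but also the per-eigenvalue branch $T_{\mathsf{FWL}(2)}(\chi(\lambda,\cdot,\cdot))$, which requires recovering the eigenvalue-indexed family $\{\mP^\mM_\lambda\}_{\lambda \in \Lambda^\mM(G)}$ from the 2-FWL stable color rather than just their pooled sum. This is where the eigenvalue labels $\lambda$ play a crucial bookkeeping role: one must check that any two graphs with matching 2-FWL stable colors have matching eigenvalue-indexed projections, which reduces to applying \cref{thm:ratten} separately for each $\lambda$ (equitability of $\mP^\mM_\lambda$) and using the fact that the spectrum is determined by the graph. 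Once this indexed reconstruction is in place, the iterative simulation of $T_{\mathsf{SPPGN}}$ inside 2-FWL proceeds cleanly.
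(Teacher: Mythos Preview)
Your proposal is correct and follows essentially the same two-direction strategy as the paper. For the easy direction you use induction on $t$ where the paper instead invokes \cref{thm:refinement2} at the stable color directly, and for the hard direction you route through SWL whereas the paper routes through PSWL (citing \citet{zhang2023complete} for $T_{\mathsf{FWL}(2)}^\infty\preceq T_{\mathsf{PS}}^\infty\circ T_{\mathsf{NM}}$), but both intermediates sit below 2-FWL so the argument goes through either way; your identification of the per-eigenvalue bookkeeping as the main obstacle matches exactly what the paper handles in its inductive step.
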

\begin{proposition}
\label{thm:spectral_kign_kwl}
    $(T_{\mathsf{JP}(k)}\circ T_{\mathsf{SP}(k)}\circ T_{\mathsf{SIGN}(k)}^\infty)(\chi_k^{\gP^\mM})\equiv (T_{\mathsf{JP}(k)}\circ  T_{\mathsf{WL}(k)}^\infty)(\chi^0_k)$.
\end{proposition}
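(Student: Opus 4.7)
The proof establishes the equivalence by showing both $\preceq$ inclusions between the color mappings $(T_{\mathsf{JP}(k)}\circ T_{\mathsf{SP}(k)}\circ T_{\mathsf{SIGN}(k)}^\infty)(\chi_k^{\gP^\mM})$ and $(T_{\mathsf{JP}(k)}\circ T_{\mathsf{WL}(k)}^\infty)(\chi^0_k)$. The direction saying that Spectral $k$-IGN is at least as expressive as plain $k$-WL is immediate from the design of $T_{\mathsf{SIGN}(k)}$; the reverse direction, that the spectral information buys nothing on top of $k$-WL for $k \geq 3$, is the surprising content of the proposition and hinges on the fact that $k$-WL already subsumes PSWL at the tuple level, and therefore encodes $\gP^\mM$ via \cref{thm:ratten_corollary}. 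The same plan handles \cref{thm:spectral_ppgn_2fwl} with $T_{\mathsf{FWL}(2)}$ in place of $T_{\mathsf{WL}(k)}$.

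\textbf{Easy inclusion.} By definition, $T_{\mathsf{SIGN}(k)}$ hashes $T_{\mathsf{WL}(k)}(\chi(\lambda,\cdots))(\vu)$ together with $T_{\mathsf{WL}(k)}(T_{\mathsf{SP}(k)}(\chi))(\vu)$, and the second component does not depend on $\lambda$. Hence it can be read off after applying $T_{\mathsf{SP}(k)}$, giving $T_{\mathsf{SP}(k)}(T_{\mathsf{SIGN}(k)}(\chi)) \preceq T_{\mathsf{WL}(k)}(T_{\mathsf{SP}(k)}(\chi))$ for every $\chi \in \gM_k^{\gP^\mM}$. A straightforward induction on $t$ then gives $T_{\mathsf{SP}(k)}(T_{\mathsf{SIGN}(k)}^t(\chi_k^{\gP^\mM})) \preceq T_{\mathsf{WL}(k)}^t(T_{\mathsf{SP}(k)}(\chi_k^{\gP^\mM}))$, and since $T_{\mathsf{SP}(k)}(\chi_k^{\gP^\mM}) \preceq \chi^0_k$ — the pooled projection entries on every pair $(u_i,u_j)$ within a tuple determine its atomic type by \cref{thm:epwl_atp} — the easy inclusion follows by applying \cref{thm:refinement1,thm:refinement2} and composing with $T_{\mathsf{JP}(k)}$.

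\textbf{Hard inclusion.} The key lemma is $T_{\mathsf{WL}(k)}^\infty(\chi^0_k) \preceq T_{\mathsf{SP}(k)}(\chi_k^{\gP^\mM})$, i.e., the stable $k$-WL color on a tuple $\vu$ already refines the multiset of projection data $\ldblbrace(\lambda, \mP^\mM_\lambda(u_i,u_j))_{i,j\in[k]} : \lambda\in\Lambda^\mM(G)\rdblbrace$. This combines two ingredients: (i) for $k\geq 3$, plain $k$-WL's stable color on $\vu$ determines PSWL's stable color on each pair $(u_i,u_j)$, a tuple-level lifting of the graph-level hierarchy PSWL $\prec$ 3-WL $\equiv$ 2-FWL used before \cref{thm:epwl_3wl}; and (ii) PSWL stable color refines $\gP^\mM$ by \cref{thm:ratten_corollary}. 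Given the lemma, I would prove by induction on $t$ that $T_{\mathsf{WL}(k)}^\infty(\chi^0_k) \preceq T_{\mathsf{SP}(k)}(T_{\mathsf{SIGN}(k)}^t(\chi_k^{\gP^\mM}))$: each of the two hashed components inside $T_{\mathsf{SIGN}(k)}$ is a $T_{\mathsf{WL}(k)}$ refinement of a color already dominated by $T_{\mathsf{WL}(k)}^\infty(\chi^0_k)$ (by the lemma and the inductive hypothesis), and the stable $k$-WL color is a fixed point of $T_{\mathsf{WL}(k)}$. Composing with $T_{\mathsf{JP}(k)}$ yields the hard inclusion.

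\textbf{Main obstacle.} The principal difficulty is the key lemma above, specifically lifting the graph-level bound PSWL $\prec$ 3-WL $\equiv$ 2-FWL to the tuple-level statement that $T_{\mathsf{WL}(k)}^\infty(\chi^0_k)_G(\vu)$ determines PSWL's stable color on every pair $(u_i,u_j)$. This step requires the standard tuple-level correspondence between $k$-WL and $(k-1)$-FWL (e.g., \citet{geerts2022expressiveness}) together with a careful embedding that simulates pair-level PSWL refinements inside tuple-level $k$-WL refinements. Once this lemma is settled, the remaining induction and the order-preservation tools of \cref{sec:proof_preliminary} make the argument routine.
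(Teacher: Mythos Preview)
Your proposal is correct and follows essentially the same route as the paper. The paper only spells out the $2$-FWL case (\cref{thm:spectral_ppgn_2fwl}) and declares the $k$-WL case ``almost the same''; your two inclusions, the key lemma $T_{\mathsf{WL}(k)}^\infty(\chi^0_k)\preceq T_{\mathsf{SP}(k)}(\chi_k^{\gP^\mM})$, and the subsequent induction all mirror the paper's argument. One small simplification: for the key lemma the paper directly invokes the \emph{tuple-level} bound $T_{\mathsf{FWL}(2)}^\infty\preceq T_{\mathsf{PS}}^\infty\circ T_{\mathsf{NM}}$ from \citet{zhang2023complete} (together with $T_{\mathsf{PS}}^\infty\equiv T_{\mathsf{Du}}\circ T_{\mathsf{Dv}}\circ T_{\mathsf{PS}}^\infty$ and \cref{thm:ratten_corollary}), so you do not need to pass through a separate $k$-WL/$(k{-}1)$-FWL correspondence---the needed pairwise PSWL information is already available once you know $k$-WL tuples refine $2$-FWL on each coordinate pair, which is standard for $k\ge 3$.
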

We will only prove \cref{thm:spectral_ppgn_2fwl}, as the proof of \cref{thm:spectral_kign_kwl} is almost the same.
\begin{proof}[Proof of \cref{thm:spectral_ppgn_2fwl}]
    We first prove that $(T_{\mathsf{JP}(2)}\circ T_{\mathsf{SP}(2)}\circ T_{\mathsf{SPPGN}}^\infty)(\chi_2^{\gP^\mM})\preceq (T_{\mathsf{JP}(2)}\circ  T_{\mathsf{FWL}(2)}^\infty)(\chi^0_2)$. Since $T_{\mathsf{SP}(2)}(\chi_2^{\gP^\mM}) \preceq \chi^0_2$ (\cref{thm:epwl_atp}), it suffices to prove that $T_{\mathsf{SP}(2)}\circ T_{\mathsf{SPPGN}}^\infty\preceq T_{\mathsf{FWL}(2)}^\infty\circ T_{\mathsf{SP}(2)}$. Based on \cref{thm:refinement2}, it suffices to prove that $T_{\mathsf{FWL}(2)}\circ T_{\mathsf{SP}(2)}\circ T_{\mathsf{SPPGN}}^\infty\equiv T_{\mathsf{SP}(2)}\circ T_{\mathsf{SPPGN}}^\infty$. Denote $\chi=T_{\mathsf{SPPGN}}^\infty(\hat\chi^0)$, where $\hat\chi^0\in\gM_2^{\gP^\mM}$ is any initial color mapping. Pick any graphs $G,H\in\gG$ and vertices $u,v\in V_G$, $x,y\in V_H$ such that $[T_{\mathsf{SP}(2)}(\chi)]_G(u,v)=[T_{\mathsf{SP}(2)}(\chi)]_H(x,y)$. We have:
    \begin{align*}
        &\quad[T_{\mathsf{SP}(2)}(\chi)]_G(u,v)=[T_{\mathsf{SP}(2)}(\chi)]_H(x,y)\\
        &\implies \ldblbrace\chi_G(\lambda,u,v):\lambda\in \Lambda^\mM(G)\rdblbrace = \ldblbrace\chi_H(\mu,x,y):\mu\in \Lambda^\mM(H)\rdblbrace\\
        &\implies \ldblbrace[(T_{\mathsf{FWL}(2)}\circ T_{\mathsf{SP}(2)})(\chi)]_G(u,v):\lambda\in \Lambda^\mM(G)\rdblbrace = \ldblbrace[(T_{\mathsf{FWL}(2)}\circ T_{\mathsf{SP}(2)})(\chi)]_H(x,y):\mu\in \Lambda^\mM(H)\rdblbrace\\
        &\implies [(T_{\mathsf{FWL}(2)}\circ T_{\mathsf{SP}(2)})(\chi)]_G(u,v)=[(T_{\mathsf{FWL}(2)}\circ T_{\mathsf{SP}(2)})(\chi)]_H(x,y).
    \end{align*}
    where in the second step we use the definition of $T_{\mathsf{SPPGN}}$ and the fact that $\chi\equiv T_{\mathsf{SPPGN}}(\chi)$.
    
    We next prove that $(T_{\mathsf{JP}(2)}\circ  T_{\mathsf{FWL}(2)}^\infty)(\chi^0_2)\preceq (T_{\mathsf{JP}(2)}\circ T_{\mathsf{SP}(2)}\circ T_{\mathsf{SPPGN}}^\infty)(\chi_2^{\gP^\mM})$. Based on \citet{zhang2023complete}, we have $T_{\mathsf{FWL}(2)}^\infty\preceq T_{\mathsf{PS}}^\infty\circ T_{\mathsf{NM}}$ and $T_{\mathsf{PS}}^\infty\equiv T_{\mathsf{Du}}\circ T_{\mathsf{Dv}} \circ T_{\mathsf{PS}}^\infty$, where $T_{\mathsf{PS}}$, $T_{\mathsf{NM}}, T_{\mathsf{Du}}, T_{\mathsf{Dv}}$ are defined in \cref{sec:proof_epwl_sswl}. On the other hand, we have proved in \cref{thm:ratten_corollary} that $(T_{\mathsf{PS}}^\infty\circ T_{\mathsf{NM}})(\chi^0_2)\preceq \gP^\mM$. Therefore, $(T_{\mathsf{PS}}^\infty\circ T_{\mathsf{NM}})(\chi^0_2)\preceq (T_{\mathsf{Du}}\circ T_{\mathsf{Dv}})(\gP^\mM)\preceq T_{\mathsf{SP}(2)}(\chi_2^{\gP^\mM})$. This finally implies that $T_{\mathsf{FWL}(2)}^\infty(\chi^0_2)\preceq T_{\mathsf{SP}(2)}(\chi_2^{\gP^\mM})$.

    It thus suffices to prove that $(T_{\mathsf{FWL}(2)}^\infty\circ T_{\mathsf{SP}(2)})(\chi_2^{\gP^\mM})\preceq (T_{\mathsf{SP}(2)}\circ T_{\mathsf{SPPGN}}^\infty)(\chi_2^{\gP^\mM})$. We will prove the following stronger result: for any $t\ge 0$, $(T_{\mathsf{FWL}(2)}^\infty\circ T_{\mathsf{SP}(2)})(\chi_2^{\gP^\mM}))\preceq (T_{\mathsf{SP}(2)}\circ T_{\mathsf{SPPGN}}^t)(\chi_2^{\gP^\mM})$. The proof is based on induction. For the base case of $t=0$, the result clearly holds. Now assume that the result holds for $t=t'$ and consider the case of $t=t'+1$. Denote $\chi=T_{\mathsf{SPPGN}}^t(\chi_2^{\gP^\mM})$. Pick any graphs $G,H\in\gG$ and vertices $u,v\in V_G$, $x,y\in V_H$. We have
    \begin{align*}
        &\quad[(T_{\mathsf{FWL}(2)}^\infty\circ T_{\mathsf{SP}(2)})(\chi_2^{\gP^\mM}))]_G(u,v)=[(T_{\mathsf{FWL}(2)}^\infty\circ T_{\mathsf{SP}(2)})(\chi_2^{\gP^\mM}))]_H(x,y)\\
        &\implies[(T_{\mathsf{FWL}(2)}\circ T_{\mathsf{FWL}(2)}^\infty\circ T_{\mathsf{SP}(2)})(\chi_2^{\gP^\mM}))]_G(u,v)=[(T_{\mathsf{FWL}(2)}\circ T_{\mathsf{FWL}(2)}^\infty\circ T_{\mathsf{SP}(2)})(\chi_2^{\gP^\mM}))]_H(x,y)\\
        &\implies[(T_{\mathsf{FWL}(2)}\circ T_{\mathsf{SP}(2)})(\chi)]_G(u,v)=[(T_{\mathsf{FWL}(2)}\circ T_{\mathsf{SP}(2)})(\chi)]_H(x,y)\\
        &\implies \Lambda^\mM(G)=\Lambda^\mM(H) \land [(T_{\mathsf{FWL}(2)}(\chi(\lambda,\cdots))]_G(u,v)=[(T_{\mathsf{FWL}(2)}(\chi(\lambda,\cdots))]_H(x,y)\quad \forall \lambda\in \Lambda^\mM(G).
    \end{align*}
    Combining the last two steps implies that $[(T_{\mathsf{SP}(2)}\circ T_{\mathsf{SPPGN}})(\chi)]_G(u,v)=[(T_{\mathsf{SP}(2)}\circ T_{\mathsf{SPPGN}})(\chi)]_H(x,y)$. We thus conclude the induction step.
\end{proof}

\subsection{Counterexamples}
\label{sec:proof_counterexample}

In this subsection, we aim to reveal the expressivity gaps between different GNN models. This is achieved by constructing a pair of counterexample graphs $G,H\in\gG$ such that one GNN can distinguish while the other cannot. Here, we will leverage an important theoretical tool to construct counterexamples, known as F{\"u}rer graphs \citet{furer2001weisfeiler}. An in-depth introduction of F{\"u}rer graphs can be found in \citet{zhang2023complete}.

\begin{definition}[F{\"u}rer graphs]
    Given any connected graph $F=(V_F,E_F)$, the F{\"u}rer graph $G(F)=(V_{G(F)},E_{G(F)})$ is constructed as follows:
    \begin{align*}
        &V_{G(F)}=\{(x,X):x\in V_F,X\subset N_F(x),|X|\bmod 2 = 0\},\\
        &E_{G(F)}=\{\{(x,X),(y,Y)\}\subset V_G:\{x,y\}\in E_F,(x\in Y\leftrightarrow y\in X)\}.
    \end{align*}
    Here, $x\in Y\leftrightarrow y\in X$ holds when either ($x\in Y$ and $y\in X$) or ($x\notin Y$ and $y\notin X$) holds. For each $x\in V_F$, denote the set
    \begin{equation}
        \meta_F(x):=\{(x,X):X\subset N_F(x),|X|\bmod 2 = 0\},
    \end{equation}
    which is called the meta vertices of $G(F)$ associated to $x$. Note that $V_{G(F)}=\bigcup_{x\in V_F}\meta_F(x)$.
\end{definition}

We next define an operation called ``twist'':
\begin{definition}[Twisted F{\"u}rer graphs]
    Let $G(F)=(V_{G(F)},E_{G(F)})$ be the F{\"u}rer graph of $F=(V_F,E_F)$, and let $\{x,y\}\in E_F$ be an edge of $F$. The \emph{twisted} F{\"u}rer graph of $G(F)$ for edge $\{x,y\}$, is constructed as follows: $\twist(G(F),\{x,y\}):=(V_{G(F)},E_{\twist(G(F),\{x,y\})})$, where
    \begin{align*}
        E_{\twist(G(F),\{x,y\})}:=E_{G(F)}\triangle\{\{\xi,\eta\}:\xi\in\meta_F(x),\eta\in\meta_F(y)\},
    \end{align*}
    and $\triangle$ is the symmetric difference operator, i.e., $A\triangle B=(A\backslash B)\cup(B\backslash A)$. For an edge set $S=\{e_1,\cdots,e_k\}\subset E_F$, we further define
    \begin{align}
    \label{eq:twist}
        \twist(G(F), S):=\twist(\cdots\twist(G(F),e_1)\cdots,e_k).
    \end{align}
    Note that \cref{eq:twist} is well-defined as the resulting graph does not depend on the order of edges $e_1,\cdots,e_k$ for twisting.
\end{definition}

The following result is well-known \citep[see e.g., ][Corollary I.5 and Lemma I.7]{zhang2023complete}):
\begin{theorem}
    For any graph $F$ and any set $S_1,S_2\subset E_F$, $\twist(G(F), S_1)\simeq \twist(G(F), S_2)$ iff $|S_1|\bmod 2 = |S_2|\bmod 2$.
\end{theorem}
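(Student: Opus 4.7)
The plan is to prove both directions separately: a direct $\mathbb{F}_2$-linear-algebra construction for the forward direction, and a fiber-recovery argument for the reverse.

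\textbf{Forward direction.} Assuming $|S_1|\equiv|S_2|\pmod 2$, set $T=S_1\triangle S_2$; then $|T|$ is even and $\twist(G(F),S_2)\simeq\twist(\twist(G(F),S_1),T)$, so it suffices to prove that for any even-sized $T\subset E_F$, $G(F)\simeq\twist(G(F),T)$. I would look for an isomorphism of the form $\phi(x,X)=(x,X\triangle B_x)$, where $B_x\subset N_F(x)$ has $|B_x|$ even so that $\phi$ sends meta-vertices to meta-vertices. A direct check of the edge condition shows $\phi$ is an isomorphism $G(F)\to\twist(G(F),T)$ if and only if $[y\in B_x]\oplus[x\in B_y]=[\{x,y\}\in T]$ for every $\{x,y\}\in E_F$. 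Viewing this as an $\mathbb{F}_2$-linear system on the half-edge variables $b_{xy}=[y\in B_x]$, with vertex constraints $\sum_{y\in N_F(x)} b_{xy}\equiv 0\pmod 2$ encoding the parity condition on $|B_x|$, a rank-nullity calculation (leveraging connectivity of $F$) shows that the image of the induced map $(b_{xy})\mapsto T$ has $\mathbb{F}_2$-dimension $|E_F|-1$. Moreover a parity computation $|T(B)|\equiv\sum_x|B_x|\equiv 0\pmod 2$ shows the image lies in the even-parity subspace, which also has dimension $|E_F|-1$; the two spaces thus coincide, so a valid $B$ exists for any even $T$.

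\textbf{Reverse direction.} Given an isomorphism $\psi:\twist(G(F),S_1)\to\twist(G(F),S_2)$, the strategy is to show that $\psi$ factors as a graph automorphism of $F$ composed with a local relabeling of the form above. First, I would show that $\psi$ preserves the fiber partition $\{\meta_F(x):x\in V_F\}$ of the vertex set — this partition can be recovered from the graph structure as equivalence classes under an appropriate combinatorial relation (e.g., meta-vertices $(x,X_1),(x,X_2)$ at the same base vertex share a rigid pattern of common neighbors across all fibers adjacent to $\meta_F(x)$, a pattern preserved under twisting). Once fiber preservation is established, $\psi$ induces a bijection $\sigma:V_F\to V_F$ that must be an automorphism of $F$, since edges of $F$ are witnessed by the existence of edges between the corresponding fibers. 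Composing with $\sigma^{-1}$ yields a fiber-preserving isomorphism between $\twist(G(F),S_1)$ and $\twist(G(F),\sigma^{-1}(S_2))$, which must be a local relabeling $\phi$ and therefore realizes an even twist $S_1\triangle\sigma^{-1}(S_2)$ by the forward analysis. Since $\sigma$ is a bijection on $E_F$, $|\sigma^{-1}(S_2)|=|S_2|$, so $|S_1|\equiv|S_2|\pmod 2$.

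\textbf{Main obstacle.} The principal challenge is the fiber-recovery step in the reverse direction: proving that any graph isomorphism between twisted F\"urer graphs must respect the fiber partition. For generic connected $F$ this follows from intrinsic structural features (meta-vertices at a common base share a rigid neighborhood pattern robust to any twist), but corner cases — small $F$, or $F$ with unusual symmetries that allow alternative legitimate partitions — may require additional combinatorial care. The forward direction, by contrast, reduces cleanly to linear algebra over $\mathbb{F}_2$ and is the more routine of the two.
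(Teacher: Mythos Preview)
The paper does not prove this statement itself; it quotes the result as known and points to Corollary~I.5 and Lemma~I.7 of the cited reference. So there is no in-paper argument to compare against, and I will assess your sketch on its own.

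Your forward direction is correct and clean: the $\mathbb{F}_2$ rank--nullity computation on half-edge variables is a standard and efficient way to show that any even twist is realized by a fiber-preserving isomorphism.

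The reverse direction has a genuine gap, and it is structural rather than a corner case. Take $F=K_3$. Then $G(F)$ is the disjoint union of two triangles, and the intended fibers are the three ``antipodal'' pairs. But \emph{any} bijective pairing of the two triangles is an equally valid F\"urer fiber structure over $K_3$: between any two such pairs the edges form a perfect matching, exactly as required. An automorphism of $2K_3$ that cyclically rotates one triangle while fixing the other does not respect the original fiber partition, so $\psi$ need not send fibers to fibers. Your proposed diagnostic (``meta-vertices at the same base share a rigid common-neighbor pattern'') also fails here: in $2K_3$ every pair of non-adjacent vertices has zero common neighbors, whether or not they lie in the same fiber.

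What the reverse direction actually needs is not that the fiber partition is recoverable, but that the \emph{parity} of the twist is independent of which valid fiber structure one imposes on the graph. Once that is established, you can push the fiber partition of $\twist(G(F),S_1)$ forward along $\psi$ to obtain some (possibly different) valid fiber structure on $\twist(G(F),S_2)$; your forward analysis then forces equal parities. Proving this well-definedness is the substantive content of the reverse implication, and your sketch does not address it. The cited reference supplies this argument; if you want a self-contained route, one option is to show that the parity equals the $\mathbb{F}_2$-class of the edge set of $H$ in an appropriate quotient that is manifestly an isomorphism invariant, or equivalently to exhibit an intrinsic parity via counting sections with even defect and showing the count does not depend on the chosen fibering.
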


Below, we will prove \cref{thm:counterexamples} using F{\"u}rer graphs. Note that it can be easily checked via a computer program whether a pair of graphs can be distinguished by a given color refinement algorithm. However, an in-depth understanding of why a given color refinement algorithm can/cannot distinguish these (twisted) F{\"u}rer graphs is beyond the scope of this paper and is left for future work.
\begin{proposition}
\label{thm:counterexamples}
    The following hold:
    \begin{enumerate}[label=\alph*),topsep=0pt,leftmargin=20pt]
    \setlength{\itemsep}{0pt}
        \item There exists a pair of graphs $G,H$ such that SWL cannot distinguish them but Siamese IGN, Spectral IGN, and EPWL with any graph matrix $\mM\in\{\mA,\mL,\hat\mL\}$ can distinguish them;
        \item For any $\mM\in\{\mA,\mL,\hat\mL\}$, there exists a pair of graphs $G,H$ such that Weak Spectral IGN cannot distinguish them but Spectral IGN can distinguish them;
        \item There exists a pair of graphs $G,H$ such that Weak Spectral IGN cannot distinguish them with any $\mM\in\{\mA,\mL,\hat\mL\}$, but GD-WL with any distance listed in \cref{sec:distance_gnn} can distinguish them;
        \item There exists a pair of graphs $G,H$ such that Spectral IGN and EPWL with any graph matrix $\mM\in\{\mA,\mL,\hat\mL\}$ cannot distinguish them, but SWL can distinguish them;
        \item There exists a pair of graphs $G,H$ such that Siamese IGN with any graph matrix $\mM\in\{\mA,\mL,\hat\mL\}$ cannot distinguish them, but Weak Spectral IGN with any graph matrix $\mM\in\{\mA,\mL,\hat\mL\}$ can distinguish them.
    \end{enumerate}
\end{proposition}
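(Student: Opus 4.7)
The strategy is to exhibit explicit small graph pairs that witness each of the five separations. Since every relevant color refinement algorithm (SWL, EPWL, Siamese IGN, Weak Spectral IGN, Spectral IGN, GD-WL) is algorithmically computable, once a candidate pair is identified the verification reduces to a direct computation. The conceptual work is in finding constructions that separate the architectures in the claimed direction, and my primary source of candidates is (twisted) F\"urer graphs $\twist(G(F),S)$, since they simultaneously control WL-indistinguishability and spectral structure.

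For parts (a) and (d), I would establish the two halves of the incomparability between SWL and EPWL. For (a), I would seek a non-cospectral pair $G,H$ that is SWL-indistinguishable; since the spectra differ as multisets, Siamese IGN already separates them through the eigenvalues alone, and a fortiori so do Weak Spectral IGN, Spectral IGN, and EPWL. A natural candidate is a small modification of a strongly regular graph pair in which a local edge perturbation shifts the spectrum while preserving the 1-WL stable partition of every rooted subgraph. For (d), I would use a CFI-style twisted F\"urer pair $\twist(G(F),\emptyset)$ versus $\twist(G(F),\{e\})$ over a base $F$ chosen so that both graphs share the same eigenspace projection invariant $\gP^\mM$ for every $\mM\in\{\mA,\mL,\hat\mL\}$ yet are separated by SWL. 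The verification hinges on showing that the involution swapping the twisted meta-vertices acts identically on the adjacency, Laplacian, and normalized Laplacian, so the projection invariants match simultaneously.

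For parts (b), (c), and (e), the separations lie within the hierarchy of \cref{thm:siamese_ign_spectral_ign} and exploit the same mechanism used there: graphs whose per-eigenspace projections are individually indistinguishable to 2-IGN but whose cross-eigenspace arrangement differs. For (b), I would construct $G,H$ whose multiset of projection-matrix outputs under 2-IGN coincides (defeating Weak Spectral IGN, whose pooling is decoupled across eigenspaces) but whose weighted sum $\sum_i \psi_j(\lambda_i)\mP_i$ differs for some filter $\psi_j$, so Spectral IGN separates them via its cross-eigenspace layer $\tilde L_2^{(l)}(\sum_i \tX_i)$. For (e), the same template runs with roles swapped: Weak Spectral IGN breaks the wreath-product symmetry $S_m\wr S_n$ via its decomposed graph-level pooling, while Siamese IGN retains it, so graphs differing only by an $S_m\wr S_n$-inner relabelling of eigenspace projections suffice.

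The hardest case is (c), where a single pair must defeat Weak Spectral IGN under every $\mM\in\{\mA,\mL,\hat\mL\}$ yet be distinguished by \emph{every} distance listed in \cref{sec:distance_gnn}. The tension is that natural constructions defeating Weak Spectral IGN (matching projection multisets across eigenspaces) also tend to equalize all spectral distances, since each listed distance is a symmetric function of $(\lambda_i,\mP_i)_i$. Resolving this requires a construction in which the per-eigenspace projections admit an inner relabelling invisible to Weak Spectral IGN but whose specific weighted sums $\sum_i \phi(\lambda_i)\mP_i(u,v)$ do not collapse; a promising candidate is a F\"urer construction over an asymmetric base with at least two non-conjugate edge orbits, twisted so that per-eigenspace pooling signatures are preserved while every global spectral sum is perturbed. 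Verifying distinguishability across all six distance families (SPD, RD, PRD, CTD, diffusion, biharmonic) then follows from the explicit formulas derived in \cref{sec:proof_distance}, and the full argument is cross-checked by computer on the chosen small instances.
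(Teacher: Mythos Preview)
Your overall strategy---exhibit explicit (twisted) F\"urer pairs and verify by direct computation---matches the paper exactly; the paper's own proof is in fact terser than your proposal, simply specifying one base graph $F$ per item in a figure and deferring all verification to a computer check, with no mechanistic explanation.

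There is, however, a genuine flaw in your plan for part~(a). You propose to find a \emph{non-cospectral} pair that SWL cannot distinguish. This is impossible. By the Rattan--Seppelt result (\cref{thm:ratten}), the SWL stable color of $(u,v)$ determines $\mP_\lambda^{\mM}(u,v)$ for every equitable $\mM$ and every $\lambda\in\mathbb R$. Hence if $G$ and $H$ are SWL-indistinguishable at the graph level, the multisets of diagonal projection values coincide, so $\tr(\mP_\lambda^{\mM_G})=\tr(\mP_\lambda^{\mM_H})$ for all $\lambda$, i.e.\ $G$ and $H$ are cospectral for each $\mM\in\{\mA,\mL,\hat\mL\}$. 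Your suggested ``local edge perturbation of a strongly regular pair'' therefore cannot work: either it breaks SWL-indistinguishability or it preserves the spectrum. The actual mechanism in~(a) is subtler: the F\"urer pair is cospectral with matching projection values under the SWL bijection, yet EPWL separates them because its \emph{global} message-passing aggregation (each node $u$ sees $\ldblbrace(\chi(v),\gP^\mM(u,v)):v\in V_G\rdblbrace$) exploits cross-subgraph structure that SWL's per-rooted-subgraph processing cannot---this is precisely the gap between SWL and PSWL that drives \cref{thm:epwl_pswl}. So for~(a) you should stay with the F\"urer template you already identified as primary, and drop the non-cospectral heuristic.

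Your analyses for (b), (c), (d), (e) are plausible and more detailed than anything the paper offers; just be aware that the paper itself does not attempt to explain \emph{why} any of the five F\"urer pairs work, so your conceptual scaffolding is additional content you would need to substantiate independently if you want more than a computer-verified existence claim.
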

\begin{figure}[t]
  \begin{center}
  \begin{tabular}{ccccc}
      \includegraphics[height=0.1\textwidth]{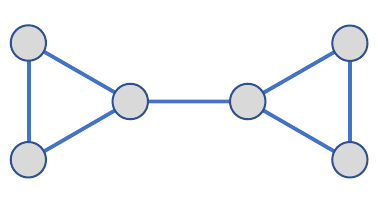} & \includegraphics[height=0.1\textwidth]{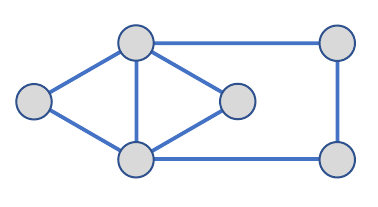} & \includegraphics[height=0.1\textwidth]{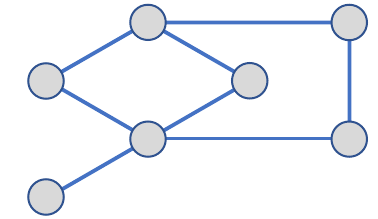} & \includegraphics[height=0.1\textwidth]{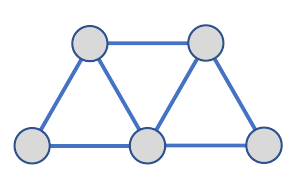} & \includegraphics[height=0.1\textwidth]{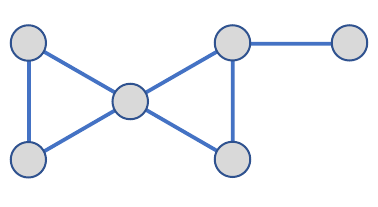} \\
      (a) & (b) & (c) & (d) & (e)
  \end{tabular}
  \end{center}
  \caption{Illustrations of base graphs used to construct F{\"u}rer graph and twisted F{\"u}rer graph for proving \cref{thm:counterexamples}.}
  \label{fig:counterexample}
\end{figure}
\begin{proof}
    For \cref{thm:counterexamples}(a,b,c,d,e), the counterexample graphs are the F{\"u}rer graph and twisted F{\"u}rer graph for base graph $F$ defined in \cref{fig:counterexample}(a,b,c,d,e), respectively.
\end{proof}

\end{document}